\newtheorem*{theorem*}{Theorem}
\newtheorem{theorem}{Theorem}
\newtheorem{lemma}{Lemma}
\newtheorem{remark}{Remark}
\newenvironment{proofoutline}
 {\proof[Proof outline]}
 {\endproof}
\def \dh {\mathsf{d_H}}
\def \dH {\mathsf{d_H}}
\def \bPitrue {\bPi^{\natural}}
\def \bBtrue {\bB^{\natural}}
\def \bbetatrue {\bbeta^{\natural}}
\def \betatrue {\beta^{\natural}}
\def \bPiopt {\bPi^{\opt}}
\def \bBopt {\bB^{\opt}}
\def \term {\mathsf{Term}}
\def\srank#1{\textup{srank}(#1)}
\def\wtminus#1#2{\wt{#1}_{\setminus (#2)}}
\def\sampminus#1#2{#1_{\setminus (#2)}}
\definecolor{celadon}{rgb}{0.67, 0.88, 0.69}
\definecolor{chromeyellow}{rgb}{1.0, 0.65, 0.0}
\definecolor{columbiablue}{rgb}{0.61, 0.87, 1.0}
\begin{document}

\title{\bf Optimal Estimator for Linear Regression \\ with Shuffled Labels\vspace{0.3in}}

\author{
  \textbf{Hang Zhang, \ \  Ping Li} \vspace{0.1in}\\
  Cognitive Computing Lab\\
  Baidu Research\\
  10900 NE 8th St. Bellevue, WA 98004, USA\\
  \texttt{ \{zhanghanghitomi,\ pinli98\}@gmail.com}
}
\date{\vspace{0.1in}}
\maketitle

\begin{abstract}
\noindent\footnote{Preliminary results appeared in Proceedings of the 37th International Conference on Machine Learning (ICML'20).}This paper considers the task of linear regression with shuffled labels, i.e.,
$\bY = \bPitrue \bX \bBtrue + \bW$, where $\bY \in \RR^{n\times m}, \bPitrue \in \RR^{n\times n}, \bX\in \RR^{n\times p}, \bBtrue \in \RR^{p\times m}$,
and $\bW\in \RR^{n\times m}$, respectively, represent the
sensing results, (unknown or missing) corresponding information, sensing matrix, signal of interest, and additive sensing noise.
Given the observation $\bY$ and sensing matrix $\bX$, we propose
a one-step estimator to reconstruct $(\bPitrue, \bBtrue)$.
From the computational perspective, our estimator's complexity
is $O(n^3 + np^2m)$, which is no greater than the maximum complexity of a
linear assignment algorithm (e.g., $O(n^3)$) and a least square algorithm (e.g., $O(np^2 m)$).
From the statistical perspective, we divide the minimum
$\snr$ requirement into four regimes, e.g.,
unknown, hard, medium, and easy regimes; and
present sufficient conditions for the correct permutation
recovery under each regime:
$(i)$ $\snr \geq \Omega(1)$ in the easy regime;
$(ii)$ $\snr \geq \Omega(\log n)$ in the medium regime;
and $(iii)$ $\snr \geq \Omega((\log n)^{c_0}\cdot n^{\nfrac{c_1}{\srank{\bBtrue}}})$ in the hard regime ($c_0, c_1$ are some positive constants and $\srank{\bBtrue}$ denotes the stable rank of $\bBtrue$).
In the end, we also provide numerical experiments to confirm
the above claims.

\end{abstract}

\vspace{0.2in}


\newpage

\section{Introduction}\label{sec:intro}
The recent years  have witnessed a renaissance of permuted linear regression, or ``unlabeled sensing", due to its broad spectrum of applications ranging from database merging, to privacy, to communications, to computer vision, to robotics, to sensor networks, etc \citep{pananjady2018linear, unnikrishnan2015unlabeled, slawski2020two, slawski2017linear, pananjady2017denoising, zhang2022permutation}.
Here, we briefly review several use cases of permuted linear regression, including record linkage, data de-anonymization,
and header-free communications. For more information about other applications, we refer interested readers to the references thereof.

\begin{itemize}
\item
\textbf{Record linkage}. Given multiple databases containing
information about the same entities, the objective is to merge them
into one comprehensive database. However,
these data may not be well-aligned due to the data formatting or data quality issues.
Modeling the mismatches as a permutation is investigated as a mitigation strategy.

\item
\textbf{Data de-anonymization.}	
This task can be regarded as the opposite side of privacy protection.
The intruders aim to infer the hidden identities/labels in certain private networks with public information. One commonly used method is to compare the correlation between this information and to find matching pairs with
maximum correlation sum.
Here, the permuted linear regression
arises as a natural generalization.

\item
\textbf{Header-free communication.}
Another potential application of permuted linear regression is in the \emph{internet of things} (IOT) network, where communications happen frequently while the transmitted messages are usually in short length. To improve the bandwidth efficiency, the sensor identity
is omitted during transmission and thus the signal decoding involves
first restoring the correspondence.
\end{itemize}

In this paper, we consider the canonical setting for a
permuted linear sensing, which is written as
\begin{align}
\label{eq:sys_model}
\bY = \bPitrue\bX\bBtrue + \bW,
\end{align}
where $\bPitrue \in \{0, 1\}^{n\times n}$ denotes the unknown permutation matrix,
$\bX \in \RR^{n\times p}$ represents the design (sensing) matrix,
$\bBtrue \in \RR^{p\times m}$
is the signal of interest, and $\bW \in \RR^{n\times m}$ denotes the additive noise.
Compared with the canonical model of linear regression with well-aligned data, our task is to infer both the signal $\bBtrue$ and the missing correspondence information $\bPitrue$ from the pair $(\bX, \bY)$.

\subsection{Related work}

The study on permuted linear regression has a long history and
can be at least traced back to 1970s under the name of ``broken sample problem''~\citep{degroot1976matching, degroot1980estimation, goel1975re, bai2005broken}.
In recent years, we have witnessed a revival of its study and
can broadly divide these research works into
two categories: $(i)$ single observation model
and $(ii)$ multiple observations model.

We first discuss the work on the single observation model. In~\citet{unnikrishnan2015unlabeled}, they focus on the
noiseless single observation model, namely,
$\bW = \bZero$ and $m = 1$.
Assuming the entries in the sensing matrix $\bX$
are drawn from a continuous distribution and
$\bBtrue\in \RR^p$ is an arbitrary vector residing
within a linear space with dimension $p$,
\citet{unnikrishnan2015unlabeled} establish the necessary condition $n\geq 2p$
for the correct recovery.
Similar results have been discovered by~\citet{domankic2018permutations} but
with different approaches.
In~\citet{pananjady2018linear}, they investigate the noisy case under
the single observation model, i.e., $m=1$.
After obtaining the statistical limit of the minimum $\snr$
required for permutation recovery, e.g., $\Omega(n^c)$ ($c>0$ is some positive constant), they analyze the
\emph{maximum likelihood} (ML) estimator and show its performance
matches the order of the statistical limits thereof.
However, the ML estimator is NP-hard in general except for the special case when $p=1$, in other words, $\bBtrue$ is a scalar.
Later, an approximation algorithm for permutation recovery is presented in~\cite{hsu2017linear}; and in~\citet{slawski2017linear}, the permutation recovery is studied
from the viewpoint of denoising. We will give a more technical discussion of these works ~\citep{pananjady2018linear, hsu2017linear, slawski2017linear}, as their settings have a huge overlap with ours.
The content is deferred to Section~\ref{sec:single_obs_related_work} until we have collected the required facts. An independent line of research can be found in~\citet{tsakiris2019homomorphic, peng2021homomorphic}, where the
permuted linear regression is studied from the viewpoint of algebraic geometry.

Next, we survey the work on the multiple observations model~\citep{pananjady2017denoising, slawski2020two, zhang2022permutation, zhang2023greed, zhang2023one}.
In~\citet{pananjady2017denoising}, the authors study
the problem of
reconstructing the product $\bPitrue\bX \bBtrue$.
In~\citet{zhang2022permutation}, the focus is shifted
to reconstruct the individual values of
$\bPitrue$ and $\bBtrue$.
Statistical limits are presented as well as an investigation
of the ML estimator. In~\citet{slawski2020two},
they take a similar viewpoint of~\citet{slawski2017linear}
and use a denoising-based method for correspondence
recovery.
In \cite{zhang2023one}, a follow-up of this work, they consider the sparse matrix setting (i.e., each column of $\bBtrue$ is $k$-sparse) and design the estimator based on \citet{zhang2020optimal}.
A more detailed comparison between our works and the existing literature
can be found in Section~\ref{sec:multi_obs_related_work}.

In addition to the works mentioned above, other research, e.g.,
~\cite{haghighatshoar2018signal, tang2021low, jeong2020recovering, slawski2022permuted,fang2023regression}, are also related to our
 works. A detailed discussion is omitted due to their relatively loose connection.

\subsection{Contributions}
We define the \emph{signal-to-noise-ratio} ($\snr$)
as $\fnorm{\bBtrue}^2/(m\cdot \sigma^2)$
before describing our contributions.
\begin{itemize}
\item
We propose a one-step estimator for permuted linear regression, which consists of two sub-parts:
one for the permutation recovery and the other
for the signal recovery. We show the first sub-part is
with computational complexity $O(n^3)$, which is the
same as the oracle estimator (i.e., a linear assignment algorithm) where $\bBtrue$ is given a prior. For the second sub-part, our estimator's computational cost is
$O(np^2 m)$, which is the same as the least square algorithm generally associated with classical linear regression.

\item
We prove that our estimator almost reaches the
statistical limits for permutation recovery.
First, we study the single observation model (i.e., $m=1$) and show
our estimator can yield the ground truth permutation
matrix when
$\log \snr \gsim \log n$ and signal length $p$ is one, in other words, $\bBtrue$ is a scalar.
This bound matches the statistical limit up to some
positive constant.
Second, we investigate the multiple observations model, namely, $m > 1$. We divide the required
$\snr$ into four regimes, i.e., unknown, hard, medium, and easy regimes. In the medium and hard regimes, we show that our $\snr$ requirement for correct permutation recovery matches the
statistical limits up to a multiplicative
polynomial of $\log n$; while in the
easy regime, our $\snr$ requirement matches
the limits up to a multiplicative factor of
certain positive constant. Compared with
previous works, our work has a lead in both
the computational and statistical perspectives.
A detailed summary is put in Table~\ref{tab:compare}.
\end{itemize}

\newpage

\subsection{Notations}

Denote $c$, $c^{'}$, $c_i$ as some positive constants,
whose values are not necessarily the same even
for those with the same notations.
We denote $a\lsim b$ if there exists
some positive constants $c_0 > 0$ such that
$a\leq c_0 b$. Similarly, we define
$a\gsim b$ provided $a\geq c_0 b$ for
some positive constant $c_0$.
We write $a\asymp b$ when
$a\lsim b$ and $a\gsim b$ hold simultaneously.

We call  $z$ a
sub-gaussian \emph{random variable} (RV) with $\|z\|_{\psi_2} \leq K$
if it satisfies
$\Expc e^{\nfrac{z^2}{K^2}} \leq 2$
 (Section~$2.5.2$ in~\citet{vershynin2018high}).
A centered isotropic random vector $\bx$ is
defined such that $\Expc \bx = \bZero$ and
$\Expc \bx\bx^{\rmt}= \bI$ (Definition~$3.2.1$ in~\citet{vershynin2018high}).

For an arbitrary matrix $\bM$,
we denote
$\bM_{i, :}$ as its $i$th row,
$\bM_{: , i}$ as its $i$th column, and
$\bM_{ij}$ as its $(i,j)$th element.
The Frobenius norm of $\bM$ is defined as
$\fnorm{\bM}$ while the operator
norm is denoted as $\opnorm{\bM}$, whose definitions can be found
in  Section~$2.3$ of~\citet{golub2012matrix}.
Its stable rank
is defined as $\srank{\cdot} \defequal \fnorm{\cdot}^2/\opnorm{\cdot}^2$
(Section~$2.1.15$ in~\citet{tropp2015introduction}).
The inner product between matrices are
denoted as $\La \cdot, \cdot\Ra$; while the inner product
between vectors are denoted as $\la \cdot, \cdot \ra$.

Associate with each permutation matrix $\bPi$, we define
the operator $\pi(\cdot)$ that transforms index $i$
to $\pi(i)$ under $\bPi$.
The Hamming distance
$\dH(\bPi_1, \bPi_2)$ between permutation matrix
$\bPi_1$ and $\bPi_2$ is defined as
$\dH\bracket{\bPi_1, \bPi_2} = \sum_{i=1}^n \Ind\bracket{\pi_1(i) \neq \pi_2(i)}$.
The \emph{signal-to-noise-ratio} ($\snr$) is defined as
$\snr = \fnorm{\bBtrue}^2/(m\cdot \sigma^2)$.
More notations are in the supplementary material.

\begin{table*}[!t]
\centering
\caption{Comparison with the prior art.
All results are presented in their best orders, which may
only hold true in certain regimes.
The computational cost refers to the number of iterations.
Moreover, the logarithmic term is omitted in the notation $\wt{\Omega}(\cdot)$
and $\wt{O}(\cdot)$.
Notation $\rank(\cdot)$ denotes the rank of the corresponding matrix, $n_{\textup{min}}$ denotes the minimum sample number,
and $h_{\textup{max}}$ denotes the maximum allowed number of permuted rows.
Notation $\boldcheckmark$ means the requirement is met;
$\bigtimes$ means the requirement is not met;
and $\textbf{N/A}$ means not applied.
}
\vsp
\label{tab:compare}
\resizebox{6.7in}{!}{%
{
\begin{tabular}{@{}l|ccccccccccc@{}}\toprule
 & \multicolumn{2}{c}{\textup{Comput. Optim.}}
&& \multicolumn{2}{c}{\textup{Statis. Optim.}}
&& \multicolumn{2}{c}{${n_{\textup{min}}}/{p}~(\geq)$}
&& \multicolumn{2}{c}{${h_{\textup{max}}}/{n}~(\leq)$}  \\
\cmidrule{2-3} \cmidrule{5-6} \cmidrule{8-9} \cmidrule{11-12}
&   $m=1$ & $m\gg 1$ && $m=1$ & $m\gg 1$ && $m=1$ & $m\gg 1$  &&  $m=1$ & $m\gg 1$ \\
\midrule
\citep{pananjady2018linear} & $\boldcheckmark$ & $\textbf{N/A}$ && $\boldcheckmark$ & $\textbf{N/A}$ && $\wt{\Omega}(1)$  & $\textbf{N/A}$ & & $\wt{O}(1)$ & $\textbf{N/A}$ \\ \\
\citep{hsu2017linear}  & $\bigtimes$ & $\textbf{N/A}$ &&  $\bigtimes$ & $\textbf{N/A}$  && $\wt{\Omega}(1)$  & $\textbf{N/A}$ & & $\wt{O}(1)$  & $\textbf{N/A}$ \\ \\
\citep{slawski2017linear}    &$\bigtimes$ & $\textbf{N/A}$ &&  $\boldcheckmark$ & $\textbf{N/A}$  && $\wt{\Omega}(1)$  & $\textbf{N/A}$ & & $\wt{O}(\log^{-1}n)$ & $\textbf{N/A}$   \\ \\
\citep{zhang2022permutation} & $\textbf{N/A}$ & $\bigtimes$ & &  $\textbf{N/A}$ & $\boldcheckmark$ && $\textbf{N/A}$ & $\wt{\Omega}(1)$ && $\textbf{N/A}$ & $\wt{O}\bracket{\log^{-1} \rank(\bBtrue)}$  \\ \\
\citep{slawski2020two} & $\textbf{N/A}$ & $\bigtimes$ & & $\textbf{N/A}$ & $\boldcheckmark$ &&  $\textbf{N/A}$
& $\wt{\Omega}(p)$ && $\textbf{N/A}$ & $\wt{O}(\log^{-1} n)$\\ \\
\textbf{This work} & $\boldcheckmark$ & $\boldcheckmark$ & & $\boldcheckmark$ & $\boldcheckmark$   && $\wt{\Omega}(1)$ & $\wt{\Omega}(1)$ && $\wt{O}(1)$ & $\wt{O}(1)$\\
\bottomrule
\end{tabular}
}
}
\end{table*}

\subsection{Road map}
The organization of this paper is as follows.
In Section~\ref{sec:alg_descrip}, we formally state our
problem setting, present our estimator and its design insight, and review the minimax lower bounds. Then, we separately investigate our estimator's statistical properties
under the single observation model ($m = 1$)
and multiple observations model ($m > 1$).
Corresponding discussions are put in
 Section~\ref{sec:single_obser} and Section~\ref{sec:multi_observe}, respectively.
Simulation results are presented in Section~\ref{sec:simul} and the
conclusions are drawn in  Section~\ref{sec:conclusion}. The technical
details are deferred to the Appendix.

\section{Problem Setting}\label{sec:alg_descrip}
We start the discussion with a formal restatement of the
sensing model
\vspace{-0.05in}
\begin{align}
\label{eq:sys_model_unlabel_sense}
\bY = \bPitrue \bX \bBtrue + \bW,
\vspace{-0.05in}
\end{align}
where $\bY \in \RR^{n\times m}$ denotes the observation,
$\bPitrue \in \{0, 1\}^{n\times n}$ is the unknown permutation matrix
such that $\sum_i \bPitrue_{i, j} = \sum_{j}\bPitrue_{i,j} = 1$,
$\bX\in \RR^{n\times p}$ denotes the
sensing matrix such that
 each entry $\bX_{ij}$ are i.i.d. centered isotropic sub-gaussian RV with
 $\norm{\bX_{ij}}{\psi_2} \lsim 1$, i.e.,
 $\Expc \bX_{ij} = 0$ and $\Expc \bX_{ij}^2 = 1$,
$\bBtrue\in \RR^{p\times m}$ denotes the signal of interests,
and $\bW \in \RR^{n\times m}$ represents the additive Gaussian noise
with each entry $\bW_{ij}$ being Gaussian RV with zero mean
and $\sigma^2$ variance, namely,
$\bW_{ij}\stackrel{\textup{i.i.d}}{\sim} \normdist(0, \sigma^2)$.
\footnote{We call $n$ sample number, $p$ length of signal, and $m$ measurement number.}

Our goal is to reconstruct the pair
$(\bPitrue, \bBtrue)$ from observations $\bY$ and the
sensing matrix $\bX$. In the following context, we put our major focus on the
permutation recovery:
on one hand, this problem reduces to the classical setting of linear regression once ground truth permutation is obtained; on the other hand,
no meaningful bound on $\fnorm{\wh{\bB} - \bBtrue}$ can be obtained with incorrect correspondence information (e.g., $\bPitrue$).

\subsection{Estimator and its design insight}

\begin{algorithm}[h]
\caption{One-Step Estimator.}
\label{alg:one_step_estim}
\begin{algorithmic}
\STATE {\bfseries Input:} observation $\bY$ and sensing matrix $\bX$.
\STATE {\bfseries Output:} pair $(\bPiopt,~\bBopt)$, which
is written as
\begin{align}
\label{eq:optim_estim_pi}
\bPiopt &= \argmax_{\bPi \in \calP_n} \
\La \bPi, \bY\bY^{\rmt}\bX\bX^{\rmt}\Ra, \\
\bBopt &= \bX^{\dagger} \wh{\bPi}^{\rmt}\bY,
\label{eq:optim_estim_B}
\end{align}
where $\bX^{\dagger} = (\bX^{\rmt}\bX)^{-1}\bX^{\rmt}$ denotes the pseudo-inverse of
$\bX$ (Section~$5.5.2$ in~\citet{golub2012matrix}) and
$\calP_n$ is the set of all possible permutation matrices.
\end{algorithmic}
\end{algorithm}

We propose a one-step estimator, whose details are summarized
in Algorithm~\ref{alg:one_step_estim}.
Before a thorough investigation of
our estimator's properties, we first present its underlying
design insight, which is quite straightforward.
Considering the oracle situation where $\bBtrue$
is given a prior,
we can reconstruct the  permutation matrix $\bPitrue$ via
\begin{align}
\label{eq:one_step_oracle}
\bPiopt = \argmax_{\bPi\in \calP_n} \La \bPi, \bY (\nfrac{\bB^{\natural \rmt}}{\alpha})\bX^{\rmt}\Ra,
\end{align}
where $\alpha > 0$ is an arbitrary scaling constant.
Back to our case, we can see that the major difficulty
comes from the lack of knowledge about $\bBtrue$, to put
more precisely, the direction of $\bBtrue$, since the
solution to \eqref{eq:one_step_oracle} remains the same
up to some positive scaling factor.

Inspired by the recent progress in non-convex optimization
\cite{candes2010matrix, chi2019nonconvex, balakrishnan2017statistical}, we would like to approximate $\bBtrue$'s direction
with $\bX$ and $\bY$. Note that $\Expc \bX^{\rmt}\bY
=(n-h)\bBtrue$, which is parallel to $\bBtrue$ if $h < n$ ($h$ denotes the number of permuted rows by $\bPitrue$, e.g.,
$h \defequal \dH(\bI, \bPitrue)$).
Assuming that $\bX^{\rmt}\bY$ is close
to $\Expc \bX^{\rmt}\bY$,
we design our estimator
by substituting $\bBtrue$ in \eqref{eq:one_step_oracle} with
$\bX^{\rmt}\bY$, which then leads to the permutation estimation
in \eqref{eq:optim_estim_pi}. Once the permutation is obtained,
we can restore \eqref{eq:sys_model_unlabel_sense} to
the classical setting of linear regression and then estimate
$\bBtrue$ with a \emph{least-square} estimator~\citep{golub2012matrix}.

Although at first glance this idea looks simple, if not naive,
in the following context, we will show that our estimator can reach statistical optimality in a broad regime.
In addition, our estimator is tuning-free, to put more specifically, no estimation of the noise variance $\sigma^2$ nor the number of permuted rows $h$ is required.

\subsection{Computational cost}
This subsection concerns our estimator's computational cost.
We consider two types of oracle estimators as benchmarks.
\begin{itemize}
\item
\textbf{Oracle estimator I.}
We consider the oracle scenario where $\bBtrue$ is given a prior.
Then, we can recover $\bPiopt$ as in \eqref{eq:one_step_oracle},
which is with computational cost $O(n^3)$.

\item
\textbf{Oracle estimator II.}	
We consider the oracle scenario when $\bPitrue$ is known in advance.
The sensing relation in \eqref{eq:sys_model_unlabel_sense}
reduces to the classical multivariate linear regression
and least-square algorithm for $\bBopt$
takes up to $O(np^2m)$ time.	
\end{itemize}

Then we study our estimator's computational
cost. In the first step \eqref{eq:optim_estim_pi}, we only sacrifice one matrix multiplication, i.e.,
replacing $\bBtrue$ by the product $\bX^{\rmt}\bY$.
Since the computational bottleneck
lies in solving the linear assignment problem
\citep{kuhn1955hungarian, bertsekas1992forward},
one additional matrix multiplication does not
change the computational complexity, which is
also of order $O(n^3)$.
Similarly, in the second step
\eqref{eq:optim_estim_B} our estimator sacrifices another
matrix multiplication, whose cost
is negligible when compared
with the total cost of \textbf{Oracle Estimator II}.

With the relation
$n^3 + np^2 m\leq 2(n^3 \vcup np^2m)$, we conclude our estimator's computational
cost $O(n^3 + np^2m)$ is in the same order as the maximum computational costs of the above-mentioned two oracle estimators.

\subsection{Mini-max lower bounds}
Before studying the statistical properties of
our algorithm, we review the mini-max lower bounds on the
permutation recovery.
First, we consider the single observation model, i.e.,
$m =1$. We have
\begin{theorem}[Theorem~$2$ in~\citet{pananjady2018linear}]
\label{thm:single_statis_lb}
For any estimator $\wh{\bPi}$, we have the
error probability $\Prob(\wh{\bPi}\neq \bPi^{\natural})$
exceed $1- c_0 \cdot e^{-c_1n\delta}$ provided that
$2 + \log(1 + \snr) \leq (2-\delta)\log n~,0 < \delta < 2$.
\end{theorem}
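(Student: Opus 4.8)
The plan is to prove this as an information-theoretic (minimax) lower bound via Fano's inequality, specializing to $m=1$ so that $\bBtrue=\bbetatrue\in\RR^p$ and $\bY,\bW\in\RR^n$. First I would convert estimation into a Bayesian testing problem: fix a signal with $\fnorm{\bbetatrue}^2=\snr\cdot\sigma^2$, draw the hidden permutation $\bPi$ uniformly from $\calP_n$ (so $|\calP_n|=n!$), and set $\bv\defequal\bX\bbetatrue\in\RR^n$, giving $\bY=\bPi\bv+\bW$ with $\bW$ i.i.d.\ $\normdist(0,\sigma^2)$. Because the worst-case error of any estimator is at least its Bayes error under this prior, it suffices to lower bound the latter, and the Bayes-optimal rule is the MAP$=$ML decoder $\wh\bPi=\argmax_{\bPi}\la\bY,\bPi\bv\ra$. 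Fano's inequality gives $\Prob(\wh\bPi\neq\bPi)\ge 1-\dfrac{I(\bPi;\bY\mid\bX)+\log 2}{\log n!}$, so everything reduces to a sharp upper bound on the mutual information, combined with the Stirling estimate $\log n!=n\log n-n+O(\log n)$.

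The technical heart is the mutual-information bound, which I expect to take the clean form $I(\bPi;\bY\mid\bX)\le\frac{n-1}{2}\log(1+\snr)\,(1+o(1))$, and which is exactly what pins the threshold at $\log(1+\snr)\asymp 2\log n$. I would obtain it from the max-entropy principle: conditioned on $\bX$, $H(\bY\mid\bPi,\bX)=H(\bW)=\frac n2\log(2\pi e\sigma^2)$, whereas $H(\bY\mid\bX)\le\frac12\log\det\!\big(2\pi e\,\Sigma\big)$ with $\Sigma=\operatorname{Cov}(\bY\mid\bX)=\operatorname{Cov}(\bPi\bv)+\sigma^2\bI$. The covariance of a uniformly permuted fixed vector is $\operatorname{Cov}(\bPi\bv)=\widehat s^2\,\tfrac{n}{n-1}\big(\bI-\tfrac1n\bm{1}\bm{1}^{\rmt}\big)$ with $\widehat s^2=\frac1n\fnorm{\bv}^2-\bar v^2$, so it annihilates the all-ones direction (permutation preserves $\bm{1}^{\rmt}\bv$) and has eigenvalue $\tfrac{n}{n-1}\widehat s^2$ on its complement. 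Hence $\det\Sigma=\sigma^2\big(\sigma^2+\tfrac{n}{n-1}\widehat s^2\big)^{n-1}$, and the two entropies combine to $I\le\frac{n-1}{2}\log\!\big(1+\tfrac{n}{n-1}\widehat s^2/\sigma^2\big)$. Finally I would use standard concentration of $\widehat s^2$ around $\fnorm{\bbetatrue}^2=\snr\cdot\sigma^2$ (working on the high-probability event where $\bX$ realizes this) to replace $\widehat s^2/\sigma^2$ by $\snr$, yielding the displayed bound.

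Substituting into Fano and invoking the hypothesis $2+\log(1+\snr)\le(2-\delta)\log n$ gives $\frac n2\log(1+\snr)\le(1-\tfrac\delta2)n\log n-n$, so $\frac{I+\log2}{\log n!}\le 1-\tfrac\delta2+o(1)$ and $\Prob(\wh\bPi\neq\bPi)\gtrsim \delta/2$. The obstacle is that this vanilla Fano step yields only a constant lower bound, whereas the statement asks for the exponentially sharp $1-c_0 e^{-c_1 n\delta}$; upgrading it is the real work, and a union bound is useless here because it bounds the failure probability from the wrong side. I would therefore pass to a direct competitor-counting converse. Under the truth $\bPi$, the log-likelihood ratio toward a competitor $\bPi'$ is the linear statistic $\la\bY,(\bPi'-\bPi)\bv\ra/\sigma^2$, which is Gaussian with mean $-\fnorm{(\bPi'-\bPi)\bv}^2/(2\sigma^2)$ and variance $\fnorm{(\bPi'-\bPi)\bv}^2/\sigma^2$; hence $\bPi'$ overtakes $\bPi$ with probability $\Phi\!\big(-\fnorm{(\bPi'-\bPi)\bv}/(2\sigma)\big)$.

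The payoff of this viewpoint is that competitors built from disjoint transpositions decouple. For any two coordinates whose values satisfy $|v_a-v_b|=O(\sigma)$, swapping the two observations that the truth matches to them is a competitor with $\fnorm{(\bPi'-\bPi)\bv}=\sqrt2\,|v_a-v_b|=O(\sigma)$, so it overtakes the truth with probability bounded below by a universal constant; and because disjoint value-pairs touch disjoint noise coordinates, these overtaking events are genuinely independent. Thus if the coordinates of $\bv$ contain $\Omega(n\delta)$ disjoint pairs within $O(\sigma)$ of one another, the truth survives all of them only with probability at most $(\text{const})^{\Omega(n\delta)}=e^{-c_1 n\delta}$, giving $\Prob(\wh\bPi\neq\bPi)\ge 1-c_0e^{-c_1 n\delta}$. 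The step I expect to be most delicate is precisely this last count: showing, when $\snr\le n^{2-\delta}$ forces the Gaussian values $v_i$ (standard deviation $\sqrt{\snr}\,\sigma$) into a range of width $\approx n^{1-\delta/2}\sigma$, that an order-statistics/occupancy argument extracts $\Omega(n\delta)$ disjoint $O(\sigma)$-close pairs with the correct linear-in-$\delta$ scaling near the threshold. This count is the concrete bridge that turns the Fano ``capacity margin'' $\tfrac\delta2\log n$ into the exponent $c_1 n\delta$, and it is where the hypothesis on $\snr$ is consumed.
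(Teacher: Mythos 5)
The paper never proves this statement: it is imported verbatim as Theorem~2 of \citet{pananjady2018linear}, so there is no in-paper argument to compare against; I can only judge your proposal on its own terms, and on those terms it is essentially sound --- indeed the second half is the same mechanism as the cited converse. Your Fano computation is correct as far as it goes (the covariance $\operatorname{Cov}(\bPi\bv)=\widehat{s}^{2}\tfrac{n}{n-1}(\bI-\tfrac1n\mathbf{1}\mathbf{1}^{\rmt})$ and the bound $I\le\tfrac{n-1}{2}\log(1+\tfrac{n}{n-1}\widehat{s}^{2}/\sigma^{2})$ both check out), and you rightly diagnose that Fano can only ever deliver a constant-order lower bound, so that whole passage is a dead end for the $1-c_0e^{-c_1n\delta}$ form and could be deleted. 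The competitor-counting converse is the right route: reduce to the Bayes/MAP decoder under the uniform prior on $\calP_n$, observe that MAP errs whenever some transposition has strictly larger likelihood, that the transposition exchanging the observations matched to $a,b$ overtakes the truth with probability $\Phi(-|v_a-v_b|/(\sqrt2\sigma))$, and that these events are conditionally independent given $\bX$ across disjoint pairs because they touch disjoint coordinates of $\bW$.

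The one step you flag as delicate --- extracting enough close pairs --- is actually easier than you fear, and getting it right is what remains. Since $v_i=\la\bX_{i,:},\bbetatrue\ra$ are i.i.d.\ $\normdist(0,\snr\,\sigma^2)$ and the hypothesis gives $\sqrt{1+\snr}\le e^{-1}n^{1-\delta/2}\le e^{-1}n$, at least $0.9n$ of the $v_i$ fall in $[-2\sqrt{\snr}\,\sigma,\,2\sqrt{\snr}\,\sigma]$ with probability $1-e^{-cn}$; partitioning that interval into bins of width $2\sigma$ leaves at most $(2/e)n\le 0.8n$ bins, so pigeonhole produces at least $(0.9n-0.8n)/2=0.05n$ \emph{disjoint} same-bin pairs, uniformly over $\delta\in(0,2)$. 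You therefore get $\Omega(n)$ --- not merely $\Omega(n\delta)$ --- independent constant-probability failure opportunities, hence error probability at least $1-e^{-cn}-e^{-c'n}\ge 1-c_0e^{-c_1n\delta}$ since $\delta<2$; the $\delta$ in the stated exponent is only a weakening, and the bad-design event is absorbed at rate $e^{-cn}$, which is likewise dominated. With that occupancy count and the bad-event bookkeeping written out explicitly (and the reduction ``any estimator $\Rightarrow$ Bayes error $\Rightarrow$ MAP error'' stated once), your outline becomes a complete proof.
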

\noindent
This theorem suggests that we need $\snr$ to be at least
of order $\Omega(n^c)$ to avoid construction failure of
$\bPitrue$.
Then we move on to the multiple observations model, i.e.,
$m > 1$. The corresponding lower bound is summarized as
\begin{theorem}[Theorem~$1$ in~\citet{zhang2022permutation}]
\label{thm:multi_statis_lb}	
For any estimator $\wh{\bPi}$, we have the
error probability $\Prob(\wh{\bPi}\neq \bPi^{\natural})$ exceed
$1/2$, provided that
$\logdet\bracket{\bI + \nfrac{\bB^{\natural\rmt}\bBtrue}{\sigma^2}}  <
\frac{\log n! - 2}{n}$.
\end{theorem}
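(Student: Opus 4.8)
The plan is to establish this converse through Fano's inequality applied to a uniform prior over the permutation group, thereby reducing the worst-case error probability to a Bayes risk and then controlling the mutual information between the hidden permutation and the data. First I would place the uniform prior on $\bPitrue$ over $\calP_n$, so the hypothesis class has cardinality $|\calP_n| = n!$; since the minimax error dominates the Bayes error under any prior, it suffices to lower bound the latter. Fano's inequality then gives, for any (possibly randomized) estimator $\wh{\bPi}$ that is a function of $(\bX,\bY)$,
\begin{align*}
\Prob\bracket{\wh{\bPi} \neq \bPitrue} \;\geq\; 1 - \frac{I\bracket{\bPitrue; (\bX,\bY)} + \log 2}{\log n!},
\end{align*}
where $I(\cdot\,;\cdot)$ denotes mutual information. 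Hence the error exceeds $1/2$ as soon as $I(\bPitrue;(\bX,\bY)) + \log 2 < \tfrac12\log n!$, so the whole task reduces to an upper bound on this mutual information.

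Next I would bound that mutual information. Since $\bPitrue$ is independent of $\bX$, the chain rule yields $I(\bPitrue;(\bX,\bY)) = I(\bPitrue;\bY\mid\bX)$, so I may condition on a fixed realization $\bX=\bx$ and average afterwards. Writing $h(\cdot)$ for differential entropy, for each fixed $\bx$,
\begin{align*}
I(\bPitrue;\bY\mid \bX=\bx) \;=\; h(\bY\mid \bx) - h(\bY\mid \bPitrue, \bx) \;=\; h(\bY\mid \bx) - \tfrac{nm}{2}\log(2\pi e\sigma^2),
\end{align*}
because conditioned on both $\bPitrue$ and $\bx$ the observation is a deterministic shift of the Gaussian noise $\bW$. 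The output entropy is then controlled by sub-additivity across rows, $h(\bY\mid\bx)\le\sum_{i=1}^n h(\bY_{i,:}\mid\bx)$, and each row is treated by the maximum-entropy principle: among distributions with a prescribed covariance, the Gaussian maximizes differential entropy.

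Finally I would compute the per-row covariances. Under the uniform prior the position $\pi^{\natural}(i)$ is uniform on $[n]$, so $\bY_{i,:}$ is a mixture of the shifted rows $(\bx\bBtrue)_{j,:}$ plus independent noise, with conditional covariance $\tfrac1n\sum_j(\bx\bBtrue)_{j,:}^{\rmt}(\bx\bBtrue)_{j,:}$ (less a mean-correction term) plus $\sigma^2\bI$. Averaging over the sub-gaussian design, using $\Expc\,\bX_{i,:}^{\rmt}\bX_{i,:}=\bI_p$ so that $\Expc\,(\bx\bBtrue)_{j,:}^{\rmt}(\bx\bBtrue)_{j,:}=\bB^{\natural\rmt}\bBtrue$, and invoking concavity of $\logdet$ via Jensen, collapses the per-row bound to $\tfrac12\logdet(\bBtrue^{\rmt}\bBtrue+\sigma^2\bI)$. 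Subtracting the noise entropy and summing over the $n$ rows gives
\begin{align*}
I\bracket{\bPitrue;(\bX,\bY)} \;\le\; \frac{n}{2}\,\logdet\bracket{\bI + \frac{\bB^{\natural\rmt}\bBtrue}{\sigma^2}}.
\end{align*}
Plugging this into the Fano bound and noting $2\log 2 < 2$, the hypothesis $\logdet(\bI + \bB^{\natural\rmt}\bBtrue/\sigma^2) < (\log n! - 2)/n$ forces $I(\bPitrue;(\bX,\bY)) + \log 2 < \tfrac12\log n!$, whence $\Prob(\wh{\bPi}\neq\bPitrue) > 1/2$.

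I expect the main obstacle to be the mutual-information bound rather than the Fano step: one must push the maximum-entropy/Gaussian comparison through the randomness of the sub-gaussian design $\bX$ (the signal rows are \emph{not} themselves Gaussian), control the mean-correction term in the mixture covariance so it does not inflate the determinant, and justify the Jensen exchange $\Expc_{\bX}\logdet(\cdot)\le\logdet\Expc_{\bX}(\cdot)$ cleanly. Keeping these estimates tight is precisely what yields the exact coefficient $\tfrac n2\logdet$ and hence the stated threshold.
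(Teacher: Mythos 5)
This statement is not proved in the paper at all: Theorem~\ref{thm:multi_statis_lb} is imported verbatim as Theorem~1 of \citet{zhang2022permutation}, so there is no internal proof to compare your argument against. On its own merits, your proposal is the standard (and, as far as the cited reference goes, essentially the same) argument: Fano's inequality over the uniform prior on $\calP_n$, the reduction $I(\bPitrue;(\bX,\bY))=I(\bPitrue;\bY\mid\bX)$, row-wise sub-additivity plus the Gaussian maximum-entropy bound, and Jensen with $\Expc\,\bX_{j,:}\bX_{j,:}^{\rmt}=\bI_p$ to arrive at $I\le\tfrac n2\logdet(\bI+\bB^{\natural\rmt}\bBtrue/\sigma^2)$; the final arithmetic with the threshold $(\log n!-2)/n$ and $\log 2<1$ checks out. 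The one point worth making explicit is that dropping the mean-correction term in the mixture covariance is harmless because it only increases the matrix in the positive semidefinite order and $\logdet$ is monotone there; with that noted, the sketch is sound.
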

\begin{remark}
Theorem~\ref{thm:single_statis_lb} can be
regarded as a special case of Theorem~\ref{thm:multi_statis_lb}, since the former
theorem can be obtained from the latter theorem by letting
$m = 1$. 	
\end{remark}
\noindent
Denote $\lambda_i(\cdot)$ as the
$i$th singular value,
we can rewrite
$\logdet\bracket{\bI + \nfrac{\bB^{\natural\rmt}\bBtrue}{\sigma^2}}$
as  $\sum_i \log[1 + \nfrac{\lambda_i^2(\bBtrue)}{\sigma^2}]$
and approximately $\logdet\bracket{\bI + \nfrac{\bB^{\natural\rmt}\bBtrue}{\sigma^2}}$
as $\srank{\bBtrue}\cdot \log(1 + \snr)$.
Then, we conclude that no estimator can reliably recover
$\bPitrue$ if $\log \snr \lsim \frac{\log n}{\srank{\bBtrue}}$.

The following context studies our algorithm's statistical properties under the single observation model ($m = 1$) and
multiple observations model ($m > 1$), respectively.
We will show that the required
$\snr$s for correct permutation recovery are close to the statistical
limits thereof.

\section{Single Observation Model}
\label{sec:single_obser}
This section considers the single observation model, namely,
$m = 1$. To distinguish this case with multiple observations
model, i.e., $m > 1$, we rewrite the
sensing relation in \eqref{eq:sys_model_unlabel_sense} as
\begin{align}
\label{eq:single_obser_sense}
\by = \bPitrue\bX\bbetatrue + \bw,
\end{align}
where $\by,~\bw \in \RR^n$ and $\bbetatrue \in \RR^p$
all reduce to vectors.
Based on whether $p \geq 2$, we
find that our estimator exhibits vastly
different behaviors.

\subsection{A warm-up example: $p = 1$}
\label{subsec:warm_up}
First, we consider a warm-up example
where $m = 1$ and $p = 1$.
In such situation, we have $\bX$ to be a vector of length $n$
and $\bbetatrue$ be a scalar. Then we have
\vsp

\begin{theorem}
\label{thm:warm_up}
Assume $\bx$ be an isotropic log-concave random vector
with zero mean and $\norm{\bx}{\psi_2} \lsim 1$.
Consider the large-system limit where $n$ is sufficiently
large and assume
$(i)$ $h = \dH(\bI, \bPitrue) \leq \nfrac{n}{4}$,
and $(ii)$ $n\geq 2p$.
Our estimator in \eqref{eq:optim_estim_pi}
can return the correct
permutation matrix, i.e., $\bPiopt = \bPitrue$,
with probability at least $1-c_1 n^{-1}$
provided that
$\log \snr \geq c\cdot \log n$.
\end{theorem}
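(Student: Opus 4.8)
The plan is to use the rank-one structure of \eqref{eq:optim_estim_pi} to collapse the linear-assignment problem to one dimension, and then to show that a large $\snr$ forbids any order-reversal. Write $g_j \defequal \la \bX_{j,:}, \bbetatrue\ra = (\bX\bbetatrue)_j$ for the noiseless scores, so that \eqref{eq:single_obser_sense} becomes $\by = \bPitrue(\bX\bbetatrue) + \bw$ and $\by_{\pi^{\natural}(j)} = g_j + \bw_{\pi^{\natural}(j)}$. Since $m = 1$ we have $\bY = \by$, and a direct computation gives $\by\by^{\rmt}\bX\bX^{\rmt} = \by\,\wh{\by}^{\rmt}$ with the proxy $\wh{\by}\defequal \bX\bX^{\rmt}\by$; hence the objective equals $\La \bPi, \by\,\wh{\by}^{\rmt}\Ra = \sum_j \by_{\pi(j)}\,\wh{\by}_j$, a one-dimensional linear assignment between $\by$ and $\wh{\by}$. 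Because the coordinates are almost surely distinct, the rearrangement inequality shows this assignment is uniquely maximized by pairing the order statistics of $\by$ with those of $\wh{\by}$. Therefore $\bPiopt = \bPitrue$ holds exactly when the orderings induced by $(\by_{\pi^{\natural}(j)})_j$ and by $(\wh{\by}_j)_j$ coincide, and the theorem reduces to a pairwise ``no order-reversal'' statement relative to the common scores $(g_j)_j$.

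Next I would pin both orderings to that of $(g_j)_j$. The $\by$-side is immediate, since $\by_{\pi^{\natural}(j)} = g_j + \bw_{\pi^{\natural}(j)}$ is monotone in $g_j$ up to independent Gaussian noise. For the $\wh{\by}$-side I would first establish that the data-driven direction $\bX^{\rmt}\by$ concentrates around $(n-h)\bbetatrue$ through the decomposition
\[
\bX^{\rmt}\by = (n-h)\bbetatrue + \bracket{\bX^{\rmt}\bPitrue\bX - (n-h)\bI}\bbetatrue + \bX^{\rmt}\bw,
\]
where the centered Gram term obeys $\opnorm{\bX^{\rmt}\bPitrue\bX - (n-h)\bI}\lsim \sqrt{np}$ and the noise term obeys $\fnorm{\bX^{\rmt}\bw}\lsim \sigma\sqrt{np}$ conditionally on $\bX$ by standard sub-gaussian matrix concentration. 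This is where both hypotheses act: (i) $h\leq n/4$ forces $n-h\geq 3n/4>0$, fixing the correct positive scaling and ensuring a clear majority of indices are fixed points of $\bPitrue$; and (ii) $n\geq 2p$, the identifiability threshold of \citet{unnikrishnan2015unlabeled}, keeps $\bX^{\rmt}\bX\approx n\bI$ well-conditioned so that $\bX^{\rmt}\by$ faithfully recovers the direction of $\bbetatrue$. Consequently $\wh{\by}_i$ is a positive multiple of $g_i$ plus a perturbation that I will control below; in the warm-up $p=1$ this is exact, since $\wh{\by} = (\bx^{\rmt}\by)\,\bx$ is literally parallel to the design and no $\wh{\by}$-reversal can occur.

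It then remains to bound the order-reversal probabilities. For the $\by$-ordering, a pair $(i,j)$ with $g_i>g_j$ reverses only if $\bw_{\pi^{\natural}(j)}-\bw_{\pi^{\natural}(i)}\geq g_i-g_j$, and since $\bw_{\pi^{\natural}(j)}-\bw_{\pi^{\natural}(i)}\sim\normdist(0,2\sigma^2)$ the conditional reversal probability is at most $\exp\bracket{-(g_i-g_j)^2/(4\sigma^2)}$. Integrating over the design and using that $g_i-g_j$, as a linear image of an isotropic log-concave vector, is a one-dimensional log-concave variable whose density is bounded by $\lsim 1/\fnorm{\bbetatrue}$ (a consequence of Pr\'ekopa's theorem, and the one place where log-concavity is indispensable), I obtain the per-pair bound $\Expc\exp\bracket{-(g_i-g_j)^2/(4\sigma^2)}\lsim \sigma/\fnorm{\bbetatrue}=\snr^{-1/2}$. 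A union bound over the $\binom{n}{2}$ pairs yields total reversal probability $\lsim n^2\snr^{-1/2}$, which drops below $c_1 n^{-1}$ once $\snr\gsim n^{6}$, i.e.\ $\log\snr\geq c\log n$. The $\wh{\by}$-reversals are treated identically, and in fact with a larger effective exponent because of the $(n-h)$ scaling multiplying the score gap.

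The hard part will be justifying the $\wh{\by}$-ordering claim, because $\wh{\by}=\bX\bX^{\rmt}\by$ is self-referential: $\by$ already contains $\bX$, so
\[
\wh{\by}_i = \la\bX_{i,:},\bX_{i,:}\ra\,\by_i + \la \bX_{i,:}, \textstyle\sum_{a\neq i}\bX_{a,:}^{\rmt}\by_a\ra
\]
is correlated with its own row $\bX_{i,:}$, and a crude operator-norm bound on the perturbation is too lossy to beat the $n^{-1}$ target. The remedy is a leave-one-out decoupling: the second summand equals $\la \bX_{i,:}, \bX^{\rmt}\by-\bX_{i,:}^{\rmt}\by_i\ra$, whose inner vector is independent of $\bX_{i,:}$ and concentrates at $(n-h)\bbetatrue$, exposing $(n-h)g_i$ as the benign, order-preserving part of $\wh{\by}_i$ while the self-term only reinforces it on the fixed-point majority guaranteed by (i). Pushing this decoupling uniformly across all $\binom{n}{2}$ pairs, while preserving the $\snr^{-1/2}$ per-pair decay, is the crux; for $p=1$ it is vacuous, which is precisely the sharp $p=1$ versus $p\geq 2$ dichotomy anticipated in the text.
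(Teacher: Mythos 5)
Your proposal is correct for the statement as it actually stands --- the $p=1$ warm-up --- and it takes a genuinely different route from the paper. The paper's proof works with the objective gap directly: it expands $\langle \bPitrue, \by\by^{\rmt}\bx\bx^{\rmt}\rangle - \langle \bPi, \by\by^{\rmt}\bx\bx^{\rmt}\rangle$ into signal, cross, and noise terms, lower-bounds the signal part by $\tfrac{(\betatrue)^2}{2}\langle \bx,\bPitrue\bx\rangle\,\|\bx-\bPi^{\natural\rmt}\bPi\bx\|_2^2$ using the vector-only identity, controls the minimum coordinate gap via the crude anti-concentration bound $\min_{s\neq t}|\bx_s-\bx_t|^2\gsim n^{-20}$ (this is where log-concavity enters for the paper, through small-ball estimates), and then union-bounds the noise terms over all $n!$ permutations with Hanson--Wright, paying a large but unspecified polynomial $\snr$ exponent. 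You instead exploit the rank-one structure to collapse \eqref{eq:optim_estim_pi} to a one-dimensional assignment between $\by$ and $\wh{\by}=\langle\bx,\by\rangle\,\bx$, observe that for $p=1$ the $\wh{\by}$-ordering coincides exactly with the $\bx$-ordering once the scalar $\langle\bx,\by\rangle$ is positive, and then only need to rule out the $\binom{n}{2}$ pairwise order reversals of $\by$ against the scores $g_j=\betatrue x_j$; log-concavity enters for you through the bounded density of $g_i-g_j$ (Pr\'ekopa), giving the per-pair bound $\snr^{-1/2}$ and the explicit threshold $\snr\gsim n^6$ after the union bound. This avoids the $n!$-fold union bound entirely and yields a sharper, more transparent constant; both routes deliver $\log\snr\gsim\log n$ as required. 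Two remarks. First, you should state explicitly that the sign event $\{\langle\bx,\by\rangle>0\}$ (for $\betatrue>0$) holds with high probability --- if the scalar went negative, the sort would pair the orderings in reverse and return the reversing permutation --- but your concentration decomposition of $\bX^{\rmt}\by$ around $(n-h)\bbetatrue$ together with $h\le n/4$ supplies exactly this; it is the analogue of the paper's event $\calE_1$ plus a routine Gaussian bound on $\langle\bx,\bw\rangle$. Second, the ``crux'' you defer for general $p$ is not merely hard but unattainable: the perturbation $\langle\bX_{i,:},\,\bX^{\rmt}\by-(n-h)\bbetatrue\rangle$ is of order $\sqrt{np}\,\fnorm{\bbetatrue}$ (up to logarithms) while the relevant rescaled score gaps are only of order $\fnorm{\bbetatrue}$, so the $\wh{\by}$-ordering is swamped; the paper's Section~3.2 confirms the estimator fails for $p=2$ even noiselessly. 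Your instinct to make the complete argument only for $p=1$ is therefore essential to the theorem's truth, not a shortcut, and the space spent sketching the general-$p$ decoupling is misdirected rather than incomplete.
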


\begin{proofoutline}
To emphasize the fact that sensing matrix
$\bX$ reduces to a vector when $p = 1$, we denote it as $\bx$.
To start with, we consider the noiseless case, where
$\snr$ is infinite.
With some simple algebraic manipulations, we can
expand the inner product
$\langle \bPi, \by\by^{\rmt}\bx\bx^{\rmt}\rangle$
in \eqref{eq:optim_estim_pi} as
\begin{align}
\label{eq:warm_up_noiseless_expand}
\langle \bPi, \by\by^{\rmt}\bx\bx^{\rmt} \rangle =
(\betatrue)^2 \langle \bx, \bPitrue\bx\rangle \cdot \langle \bPi\bx, \bPitrue\bx\rangle.
\end{align}
Since the first term $(\betatrue)^2 \langle \bx, \bPitrue\bx\rangle$
is positive with high probability, we conclude that
the maximum is reached when $\bPiopt = \bPitrue$.
Back to the noisy case,
we interpret the observation $\by$ as
a perturbed version of product $\bPitrue\bx\betatrue$. Provided
the perturbation is significantly small,
our estimator can obtain the correct permutation
matrix $\bPitrue$ with high likelihood.
For the technical details, we refer
the interested readers to the appendix.
\end{proofoutline}

First, we comment on the tightness of Theorem~\ref{thm:warm_up}.
According to Theorem~$2$ in~\citet{pananjady2018linear},
which is restated as Theorem~\ref{thm:single_statis_lb},
correct permutation recovery
requires $\log \snr\gsim \log n$ at least.
Easily, we can see that our estimator matches this statistical limit with a difference up to some multiplicative constant and hence concludes its tightness.

In addition, we would like to mention the
general case where $p\geq 2$ can be transformed
to the warm-up case where $p = 1$ if the direction
$\be \defequal \bbetatrue/\fnorm{\bbetatrue}$
is known. The detailed construction method comes as follows.
With the Gram-Schmidt process (Section~$5.2.7$ in~\citet{golub2012matrix}), we can construct an orthonormal matrix $\bQ \in \RR^{p\times p}$  whose first column is $\be$. Then we rewrite
\eqref{eq:single_obser_sense} as
\[
\by = \fnorm{\bbetatrue}\bPitrue\bracket{\bX\bQ}_{:, 1} + \bw.
\]
Easily, we can show
$\bracket{\bX\bQ}_{:, 1}$ has similar statistical
properties as $\bX_{:, 1}$
and can restore it to the model thereof, to which
Theorem~\ref{thm:warm_up} also applies.

At last, we should stress that our proof of
Theorem~\ref{thm:warm_up} relies
heavily on the expansion in \eqref{eq:warm_up_noiseless_expand},
which only holds true when $p =1$ or $\be \defequal \bbetatrue/\fnorm{\bbetatrue}$ is known.
In fact, we will see shortly that our estimator
fails to obtain correct permutation matrix--even with infinite $\snr$--when $p$ exceeds over $2$.

\subsection{The general case: $p \geq 2$}
This subsection studies a more general case where
$m = 1$ and $p \geq 2$. We consider the noiseless
case ($\sigma = 0$) and set $\bbetatrue$ as $[1000;~1000]^{\rmt}$.
The sample number $n$ is picked to be $1000$.
We evaluate the permutation recovery by Hamming distance
$\dh(\cdot, \bPitrue)$.

First, we obtain $(\wh{\bPi}^{(0)}, \bbeta^{(0)})$ with Algorithm~\ref{alg:one_step_estim}.
The error $\dh(\wh{\bPi}^{(0)}, \bPitrue)$ is approximately
$900$, which means almost all correspondence
$\pi^{\natural}(\cdot)$ are incorrectly detected. To refine the permutation recovery, we carry out the following alternative minimization
\[
\wh{\bPi}^{(t+1)} &= \argmax_{\bPi \in \calP_n} \La  \bPi, \bY \wh{\bB}^{(t)\rmt}\bX^{\rmt}\Ra;  \\
\wh{\bbeta}^{(t+1)} &= \bX^{\dagger}\wh{\bPi}^{(t+1)\rmt}\bY,
\]
where $\wh{\bPi}^{(t)}$ and $\wh{\bbeta}^{(t)}$ denote the
reconstructed value of $\bPi$ and $\bbeta$ in the $t$th iteration, respectively. Numerical experiment suggests that the correct permutation matrix remains out of reach even after $100$ iterations. An illustration is put in Figure~\ref{fig:fail_example}.

\begin{figure}[!h]
\centering
\includegraphics[width = 3.6in]{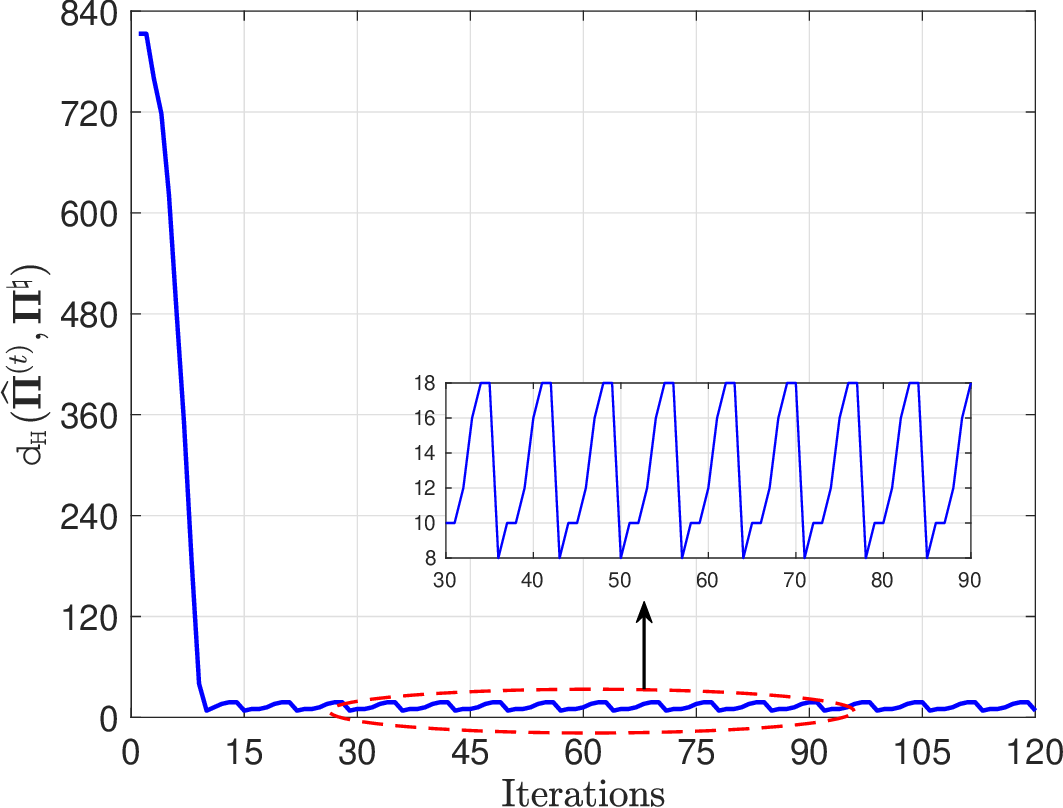}
\caption{Hamming distance $\dh(\wh{\bPi}^{(t)}, \bPitrue)$ when
$n = 1000$, $p = 2$, $m = 1$, $\bbeta^{\natural} = [1000;~1000]^{\rmt}$,
and $\sigma = 0$.
We only plot the behavior of $\dh(\wh{\bPi}^{(t)}, \bPitrue)$
in the first $100$ iterations since
it remains almost the same in the following $400$ iterations.
}
\label{fig:fail_example}
\end{figure}

The underlying reason is the low stable rank $\srank{\bBtrue}$,
which is one in the single observation model.
In the next section, we will show that the ground truth
$\bBtrue$ can be obtained in an almost effortless way
once $\srank{\bBtrue}$ exceeds certain threshold.

\subsection{Discussion of related work}
\label{sec:single_obs_related_work}
This subsection discusses the relation between our estimator
and prior work focusing on the single observation model (e.g., $m=1$)~\citep{slawski2017linear, pananjady2018linear, hsu2017linear, abid2017linear}. In~\citet{pananjady2018linear}, the ML estimator is investigated, which is only computable for the special case $(m, p) = (1,1)$ and NP-hard for the rest cases.
Their estimator gets the same $\snr$ requirement as ours, namely,
$\log\snr\gsim \log n$.

To handle the computational issue of the ML estimator,
\citet{hsu2017linear} propose an approximation algorithm with
polynomial complexity. Their $\snr$ requirement is $\snr \geq c\min\bracket{1, p/\log \log n}$,
which has a gap with the mini-max lower bound.
In addition, they focus on the
recovery of $\bBtrue$ rather than the permutation matrix $\bPitrue$.

Later,~\citet{slawski2017linear} study the problem
for the viewpoint of denoising. By putting a
sparse constraint on $h$ (e.g., the number of permuted rows),
they view the term
$(\bI - \bPitrue)\bX\bbetatrue$ as an additive sparse outlier.
After obtaining an estimate of $\bbetatrue$, they restore
the permutation information with a linear assignment algorithm.
Compared with Algorithm~\ref{alg:one_step_estim}, the estimator
in~\citet{slawski2017linear} has a more stringent requirement
on $h$ (i.e., $h\ll \nfrac{n}{\log n}$) but at the same time enjoys a broader class for $\bbetatrue$, to put more specific, they
allow $p>1$ under the single observation model.

A parallel line of work can be found in~\citet{abid2017linear},
where a consistent estimator is proposed based on the method-of-moments.
However, their analysis is only for the special case
$(m, p) = (1, 1)$ and focuses on $\fnorm{\wh{\bB}-\bBtrue}$.
Direct comparison with ours can be difficult.

\section{Multiple Observations Model}\label{sec:multi_observe}

The previous section considers the
single observation model ($m=1$) and this section extends
the discussion to the multiple observations model ($m > 1$).
We $(i)$ discover a much richer
behavior inherent in our estimator and $(ii)$ show that correct permutation can be obtained
with $\snr$ being a positive constant.

\begin{figure}[h]	
\centering
\includegraphics[width = 5.2in]{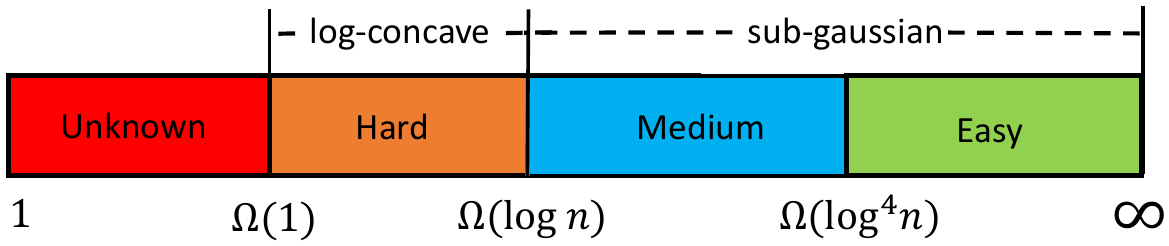}
\caption{Illustration of different regimes of the permutation recovery.
In \textbf{regime hard}, we require $\snr \gsim n^{\frac{c}{\srank{\bBtrue}}}$;
in \textbf{regime medium difficulty}, we require
$\snr \gsim \log^{c} n$; and
in \textbf{regime easy}, we require $\snr \geq c$.}
\label{fig:regime_illustrate}
\end{figure}

Based on the requirement on $\snr$ for correct permutation recovery,
we divide the estimator's performance into four
regimes in terms of $\srank{\bBtrue}$, which is illustrated in Figure~\ref{fig:regime_illustrate}.

\begin{itemize}
\item
\textbf{Unknown regime: $\srank{\bBtrue} \in [1, c_0)$}.
In this regime, our estimator's behavior remains a mystery. To the best of our knowledge,
no estimator's performance has been thoroughly studied in this regime.

\item
\textbf{Hard regime: $\srank{\bBtrue} \in [c_0, c_1 \log n)$}.
We assume $\bX_{ij}$ to be i.i.d. sub-gaussian random variables with log-concavity.
For the correct permutation recovery, we require $\snr$ satisfying $\snr \gsim n^{\frac{c}{\srank{\bBtrue}}}$.

\item
\textbf{Medium regime: $\srank{\bBtrue} \in [c_1\log n, c_2 \log^4 n)$}.
We only need $\bX_{ij}$ to be i.i.d. sub-gaussian random variables without enforcing log-concavity.
In addition, we require $\snr$ satisfying $\snr \gsim \log^c n$ to obtain the ground truth permutation matrix.

\item
\textbf{Easy regime: $\srank{\bBtrue} \in [c_2 \log^4 n, \infty)$}.
Still, we only assume $\bX_{ij}$ to be i.i.d. sub-gaussian random variables.
Here, we can relax the requirement on $\snr$ to be above some positive constant
for correct permutation recovery.
\end{itemize}

The formal statement of results is put in Theorem~\ref{thm:multi_snr_require_general_sub_gauss}
and Theorem~\ref{thm:multi_snr_require_log_concave}.

\begin{theorem}
\label{thm:multi_snr_require_general_sub_gauss}
Consider the sensing matrix $\bX$ with its entries
$\bX_{i, j}$ being sub-gaussian RV with zero mean and unit variance
$(1\leq i \leq n, 1\leq j \leq p)$.
Assuming that $(i)$ $n\gg p\cdot \log^3 n\cdot \log^2(n^2 p^3)$
and $(ii)$ $h\leq c_0\cdot  n$,
we can reliably obtain the permutation matrix with Algorithm~\ref{alg:one_step_estim}, namely,
$\Prob(\bPiopt = \bPitrue) \geq 1 - c_1 n^{-c_2} -c_3p^{-c_4}$,
in the following situations:
\begin{itemize}
\item
\textbf{Easy regime $(\srank{\bBtrue} \gg \log^4 n)$}: we have $\snr \geq c$;

\item
\textbf{Medium regime $(\log n \ll \srank{\bBtrue} \ll \log^4 n)$}:
we have $\log \snr \gsim \log \log n$.

\end{itemize}

\end{theorem}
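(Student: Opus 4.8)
\begin{proofoutline}
The plan is to show that, on a high-probability event, the objective in \eqref{eq:optim_estim_pi} is \emph{strictly} maximized at the ground truth, i.e.\
\[
\Delta(\bPi)\defequal\La\bPitrue-\bPi,\ \bY\bY^{\rmt}\bX\bX^{\rmt}\Ra>0\qquad\text{for every }\bPi\in\calP_n\setminus\{\bPitrue\}.
\]
Fixing such a $\bPi$, I set $d\defequal\dH(\bPi,\bPitrue)$ and $\bR\defequal\bPi^{\rmt}\bPitrue$ (a permutation with exactly $n-d$ fixed points), and substitute $\bY=\bPitrue\bX\bBtrue+\bW$. Using the identity $(\bPitrue-\bPi)^{\rmt}\bPitrue=\bI-\bR$ and writing $\bM\defequal\bBtrue\bBtrue^{\rmt}$, I expand $\Delta(\bPi)=T_1+T_2+T_3+T_4$, where
\[
T_1=\mathrm{tr}\big(\bX^{\rmt}(\bI-\bR)\bX\,\bM\,\bX^{\rmt}\bPitrue^{\rmt}\bX\big),\qquad
T_4=\mathrm{tr}\big((\bPitrue-\bPi)^{\rmt}\bW\bW^{\rmt}\bX\bX^{\rmt}\big),
\]
$T_1$ is the quartic-in-$\bX$ signal term, $T_2,T_3$ are the two cross terms linear in $\bW$, and $T_4$ is the pure-noise term quadratic in $\bW$. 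Everything then reduces to showing that the deterministic part of $T_1$ dominates the fluctuations of all four terms, uniformly over $\bPi$.

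The first step pins down the signal gap. Sub-gaussian matrix concentration (justified by assumption $(i)$, which also forces $\bX^{\rmt}\bX\approx n\bI$) lets me write $\bX^{\rmt}(\bI-\bR)\bX=d\,\bI+\bE$ and $\bX^{\rmt}\bPitrue^{\rmt}\bX=(n-h)\bI+\bF$, so the leading term of $T_1$ is $d(n-h)\,\mathrm{tr}(\bM)=d(n-h)\fnorm{\bBtrue}^2$, which is strictly positive once $h\le c_0 n<n$. The delicate point is the remainder $d\,\mathrm{tr}(\bM\bF)+(n-h)\,\mathrm{tr}(\bE\bM)+\mathrm{tr}(\bE\bM\bF)$: the crude bound $|\mathrm{tr}(\bM\bE)|\le\fnorm{\bBtrue}^2\opnorm{\bE}$ is far too lossy for small $d$. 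Instead I expand $\mathrm{tr}(\bE\bM)=\sum_{k}\big(\bX_{k,:}\bM\bX_{k,:}^{\rmt}-\mathrm{tr}\,\bM-\bX_{\rho(k),:}\bM\bX_{k,:}^{\rmt}\big)$ over the $d$ indices $k$ moved by $\bR$ (with $\rho$ the map induced by $\bR$) as a sum of weakly dependent centered quadratic forms, decouple, and apply Hanson--Wright. Its size relative to the gap is governed by $\fnorm{\bM}/\mathrm{tr}(\bM)\le\srank{\bBtrue}^{-1/2}$ and $\opnorm{\bM}/\mathrm{tr}(\bM)=\srank{\bBtrue}^{-1}$, giving a relative signal fluctuation of order $\sqrt{\log n/\srank{\bBtrue}}+\log n/\srank{\bBtrue}$; this is precisely why $\srank{\bBtrue}\gg\log n$ (in force in both regimes) makes $T_1$ concentrate with no help from $\snr$.

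The next step treats the noise-dependent terms, which carry the $\snr$ dependence. Conditionally on $\bX$, $T_2$ and $T_3$ are centered Gaussians, e.g.\ $\mathrm{Var}(T_2\mid\bX)=\sigma^2\fnorm{\bX\bX^{\rmt}(\bI-\bR)\bX\bBtrue}^2$, so a Gaussian tail bound gives deviations that, after dividing by the gap $d(n-h)\fnorm{\bBtrue}^2=d(n-h)m\sigma^2\,\snr$, reduce to a mild lower bound on $\snr$; $T_4$ is a Gaussian chaos treated by Hanson--Wright through $\fnorm{\bX\bX^{\rmt}(\bPitrue-\bPi)^{\rmt}}\lsim\sqrt{d\,n\,p}$ and its operator norm, while its mean $m\sigma^2\La\bPitrue-\bPi,\bX\bX^{\rmt}\Ra$ is of order $m\sigma^2 p\,d$ and hence negligible against the gap. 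Tracking the polylogarithmic factors that multiply $\srank{\bBtrue}$ in these ratios is what separates the two regimes: it yields $\snr\ge c$ once $\srank{\bBtrue}\gg\log^4 n$ (easy) and $\log\snr\gsim\log\log n$ once $\log n\ll\srank{\bBtrue}\ll\log^4 n$ (medium).

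Finally I close with a stratified union bound. The number of permutations at Hamming distance $d$ from $\bPitrue$ is at most $\binom{n}{d}d!\le n^{d}$, so each per-$\bPi$ tail estimate must be driven below $e^{-c\,d\log n}$; summing the resulting geometric series over $d=2,\dots,n$ leaves the advertised failure probability $c_1 n^{-c_2}+c_3 p^{-c_4}$, the $p^{-c_4}$ piece absorbing the events on which the $\bX$-concentration and the stable-rank inequalities fail. I expect the main obstacle to be exactly the small-Hamming-distance regime $d\lsim p$: there the gap $d(n-h)\fnorm{\bBtrue}^2$ is thinnest while the union bound is most demanding, the operator-norm controls fail, and one is forced into the $\srank{\bBtrue}$-sensitive Hanson--Wright bookkeeping that both fixes the regime boundaries $\log n$ and $\log^4 n$ and ties the admissible $\snr$ to $\srank{\bBtrue}$.
\end{proofoutline}
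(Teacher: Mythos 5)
Your route is genuinely different from the paper's, and as written it has a gap at exactly the point you flag as ``the main obstacle.'' The paper never performs a union bound over permutations stratified by Hamming distance. Instead it relaxes the error event $\{\bPiopt\neq\bPitrue\}$ to the event that some \emph{pairwise} row comparison fails --- in the easy regime, $\la\bY_{i,:},\wh{\bB}^{\rmt}\bX_{\pi^{\natural}(i),:}\ra\leq\la\bY_{i,:},\wh{\bB}^{\rmt}\bX_{j,:}\ra$ for some $j\neq\pi^{\natural}(i)$, and in the medium regime the analogous comparison of the residuals $\|\bY_{i,:}-\wh{\bB}^{\rmt}\bX_{j,:}\|_2^2$ --- which costs only $n^2$ events, drops the moment order of $\bX$ from four to three, and localizes the dependence of $\wh{\bB}=(n-h)^{-1}\bX^{\rmt}\bY$ on $\bX$ to the two rows $\bX_{\pi^{\natural}(i),:}$ and $\bX_{j,:}$, which is then removed by a leave-one-out/leave-two-out construction. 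That relaxation is the load-bearing idea of the proof (and the fact that there are two structurally different relaxations is precisely what produces the two bullet points of the theorem); your proposal replaces it with the global decomposition $\Delta(\bPi)=T_1+\cdots+T_4$ and a $\sum_d n^{d}e^{-cd\log n}$ union bound, i.e.\ the ML-estimator-style analysis of \citet{pananjady2018linear} and \citet{zhang2022permutation}.

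The concrete gap is the quartic cross remainder in $T_1$, namely $\trace\bracket{(\bX^{\rmt}(\bI-\bPi^{\rmt}\bPitrue)\bX-d\,\bI)\,\bM\,(\bX^{\rmt}\bPitrue^{\rmt}\bX-(n-h)\bI)}$ with $\bM=\bBtrue\bB^{\natural\rmt}$: you list it in the remainder but never control it, and this is where the fourth-order moments and the dependence bite. The two fluctuation matrices are built from overlapping rows of $\bX$, so Hanson--Wright does not apply to their product without a decoupling device, and your outline contains no leave-one-out step anywhere --- the paper needs one even for its milder third-order terms ($\calF_6$, $\calF_7$, $\calF_8$ and Lemmas~\ref{lemma:xb_single_idx}--\ref{lemma:beta_perturb}). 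Likewise, your per-permutation target tail $e^{-cd\log n}$ for the signal fluctuation requires concentration of a sum of $d$ mutually dependent quadratic forms (index $k$ and its image share rows); ``decouple and apply Hanson--Wright'' needs the three-way partition of the moved indices (Lemma~\ref{lemma:permute_decomp}) made explicit, and it must be checked that the resulting exponent really scales linearly in $d$ at deviation level proportional to the gap $d(n-h)\fnorm{\bBtrue}^2$. Finally, the regime boundaries $\log n$ and $\log^4 n$ and the two $\snr$ thresholds are asserted to fall out of the polylog bookkeeping rather than derived; in the paper the easy-regime conclusion ($\snr\geq c$ but stronger stable-rank condition) and the medium-regime conclusion ($\srank{\bBtrue}\gg\log n$ but $\log\snr\gsim\log\log n$) come from two \emph{different} relaxed events, so there is no reason to expect a single global decomposition to reproduce both without further ideas.
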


Note that the above result only applies to the easy and medium regime, i.e.,
$\srank{\bBtrue} \gg \log n$. By enforcing additional
constraints, to put more specifically, $\bX_{ij}$ is log-concave,
we can generalize the above result to the hard regime, which is formally stated as

\begin{theorem}
\label{thm:multi_snr_require_log_concave}
Consider the sensing matrix $\bX$ with its entries
$\bX_{i, j}$ being log-concave sub-gaussian RV with zero mean and unit variance
$(1\leq i \leq n, 1\leq j \leq p)$.
Assuming that $(i)$ $n\gg p^{1 + \varepsilon}\cdot\log^{3(1 +\varepsilon)} n\cdot \log^{2(1+\varepsilon)}(n^2 p^3)$
and  $(ii)$ $h\leq c_0 \cdot n$,
we can reliably obtain the permutation matrix with Algorithm~\ref{alg:one_step_estim}, namely,
$\Prob(\bPiopt = \bPitrue) \geq 1 - c_1 n^{-c_2} -c_3p^{-c_4}$,
in the following situations:
\begin{itemize}
\item
\textbf{Easy regime $(\srank{\bBtrue} \gg \log^4 n)$}: we have $\snr \geq c$;

\item
\textbf{Medium regime $(\log n \ll \srank{\bBtrue} \ll \log^4 n)$}:
we have $\log \snr \gsim \log \log n$;

\item
\textbf{Hard regime $(c(\varepsilon)\lsim \srank{\bBtrue} \ll \log n)$}:
we have
\begin{align}
\label{eq:multi_snr_require_logconcave}	
\log \snr \gsim \frac{\log n}{\srank{\bBtrue}} + \log \log n,
\end{align}
where $\varepsilon > 0$ is an arbitrary positive constant and
$c(\varepsilon)$ is a positive constant depending on $\varepsilon$.
\end{itemize}
\end{theorem}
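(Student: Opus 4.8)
The plan is to prove that the gap
\[
\Delta(\bPi)\defequal \La \bPitrue-\bPi,\ \bY\bY^{\rmt}\bX\bX^{\rmt}\Ra
\]
is strictly positive for every $\bPi\neq\bPitrue$ simultaneously, with probability at least $1-c_1 n^{-c_2}-c_3p^{-c_4}$. Since a log-concave sub-gaussian variable is in particular sub-gaussian, the easy and medium regimes are inherited verbatim from Theorem~\ref{thm:multi_snr_require_general_sub_gauss}; the entire effort is the hard regime $\srank{\bBtrue}\ll\log n$, where log-concavity is used to sharpen the tails. Writing $\La \bPi,\bY\bY^{\rmt}\bX\bX^{\rmt}\Ra=\La \bX^{\rmt}\bPi^{\rmt}\bY,\ \bX^{\rmt}\bY\Ra$ and inserting $\bY=\bPitrue\bX\bBtrue+\bW$, I would split $\Delta(\bPi)$ into a signal--signal term, two signal--noise cross terms, and a pure-noise term. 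The backbone is the identity $\bX^{\rmt}\bX-\bX^{\rmt}(\bPi^{\rmt}\bPitrue)\bX=\sum_{k\in F}\bX_{k,:}^{\rmt}(\bX_{k,:}-\bX_{\sigma(k),:})$, where $\sigma$ is the permutation with matrix $\bPi^{\rmt}\bPitrue$ and $F$ is its set of $d\defequal\dH(\bPi,\bPitrue)$ non-fixed points. Since $\sigma$ is a permutation, the leading part of the signal--signal term regroups into $\tfrac{n}{2}M$ with
\[
M\defequal \sum_{k\in F}\fnorm{(\bX_{k,:}-\bX_{\sigma(k),:})\bBtrue}^2 ,
\]
so the drift that must stay positive is exactly $\tfrac{n}{2}M$, whose expectation for a fixed $\bPi$ is $d\,n\,\fnorm{\bBtrue}^2$, up to a lower-order remainder $\La(\bX^{\rmt}\bX-\bX^{\rmt}(\bPi^{\rmt}\bPitrue)\bX)\bBtrue,\ (\bX^{\rmt}\bPitrue\bX-n\bI)\bBtrue\Ra$ that I defer to the perturbation bookkeeping.

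The crux is a \emph{uniform} lower bound on $M$ over all $\bPi$ at a fixed distance $d$. For each individual $\bPi$ one has $M\geq0$, so $\bPitrue$ maximizes the noiseless objective term by term; the threat---made concrete by the $\srank{\bBtrue}=1$ collapse in Figure~\ref{fig:fail_example}---is that an adversarial $\sigma$ may nearly \emph{sort} the projected rows $\bX_{k,:}\bBtrue$ and force $M$ towards $0$. I would bound $\min_{\sigma}M$ from below by a packing estimate: the rows $\bX_{k,:}\bBtrue$ are i.i.d.\ zero-mean log-concave vectors with covariance $\bBtrue^{\rmt}\bBtrue$, hence effectively spread over $\asymp\srank{\bBtrue}$ directions, and the minimal cost of a derangement-matching of $d$ such points obeys $\min_{\sigma}M\gsim \fnorm{\bBtrue}^2\,d^{\,1-2/\srank{\bBtrue}}$. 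Log-concavity is indispensable precisely here: thin-shell concentration pins $\fnorm{\bX_{k,:}\bBtrue}$ near $\fnorm{\bBtrue}$, while the small-ball (anti-concentration) estimates available for log-concave measures lower bound the pairwise separations, turning the heuristic nearest-neighbour scaling $d^{-1/\srank{\bBtrue}}$ into a rigorous high-probability bound. The exponent $1-2/\srank{\bBtrue}$ is the analytic origin of the $n^{c/\srank{\bBtrue}}$ threshold.

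Next I would bound the perturbations. The decisive observation is that the dominant signal--noise term $\La \bX^{\rmt}(\bPitrue-\bPi)^{\rmt}\bW,\ \bX^{\rmt}\bPitrue\bX\bBtrue\Ra$ is \emph{correlated with the very same differences} $(\bX_{k,:}-\bX_{\sigma(k),:})\bBtrue$ that define $M$: conditionally on $\bX$ it is Gaussian with standard deviation $\asymp n\sigma\sqrt{M}$. Hence the adversarial $\sigma$ that shrinks the drift $\tfrac{n}{2}M$ automatically shrinks this noise term as well, and a union bound over the $\lsim n^{d}$ permutations at distance $d$ (demanding a tail $\exp(-\Omega(d\log n))$) reduces the requirement to $M\gsim \sigma^2 d\log n$. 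Combining with the packing floor gives $\fnorm{\bBtrue}^2 d^{-2/\srank{\bBtrue}}\gsim \sigma^2\log n$ for every $d\leq c_0 n$; the worst case $d\asymp n$ yields $\fnorm{\bBtrue}^2/\sigma^2\gsim n^{2/\srank{\bBtrue}}\log n$, i.e.\ $\log\snr\gsim \tfrac{\log n}{\srank{\bBtrue}}+\log\log n$, matching the minimax bound of Theorem~\ref{thm:multi_statis_lb}. The signal--signal remainder above, the secondary signal--noise term, and the pure-noise term (the last equal to $\La \bPitrue-\bPi,\ \bW\bW^{\rmt}\bX\bX^{\rmt}\Ra$, with $\srank{\bBtrue}$-free mean $\lsim m\sigma^2 pd$) are all lower order; the extra $\varepsilon$ and the polylogarithmic slack in $n\gg p^{1+\varepsilon}\log^{3(1+\varepsilon)}n\cdot\log^{2(1+\varepsilon)}(n^2p^3)$ are exactly what is needed to dominate these $p$-dependent contributions and to furnish the additive $\log\log n$.

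The main obstacle is the uniform packing bound on $\min_{\sigma}M$ up to $d\asymp n$, i.e.\ over $\asymp n!$ permutations, in the regime where $\srank{\bBtrue}$ is merely a constant. A crude per-permutation sub-exponential tail for the self-overlap $\sum_{k\in F}\La \bX_{\sigma(k),:}\bBtrue,\ \bX_{k,:}\bBtrue\Ra$ decays only like $\exp(-c\,\srank{\bBtrue}\,d)$, which cannot defeat the $n^{d}$ count once $\srank{\bBtrue}\ll\log n$---this is the analytic shadow of the genuine failure at $\srank{\bBtrue}=1$. Overcoming it hinges on two things working in tandem: the correlation identity noted above, which lets large $\snr$ lower the barrier that the drift must clear, and a sharp control of the moment growth of log-concave quadratic forms (a Hanson--Wright/Paouris-type estimate) that delivers the exponent $1-2/\srank{\bBtrue}$ with the correct constant rather than the lossy sub-exponential split. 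By contrast, the deterministic regularity of $\bX^{\rmt}\bX$ and $\bX^{\rmt}\bPitrue\bX$ under assumption $(i)$ is routine. I expect essentially all the difficulty---and the necessity of the log-concavity hypothesis---to be concentrated in this uniform matching estimate.
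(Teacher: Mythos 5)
Your route is genuinely different from the paper's. You attack the full combinatorial event directly, grouping permutations by Hamming distance $d$, paying a union bound of order $n^{d}$, and compensating with a uniform lower bound on the separation statistic $M=\sum_{k\in F}\fnorm{(\bX_{k,:}-\bX_{\sigma(k),:})\bBtrue}^2$ --- the classical ML-estimator analysis. The paper instead relaxes $\{\bPiopt\neq\bPitrue\}$ to the pairwise event \eqref{eq:medium_hard_error_event}, so that only $n^2$ comparisons need to be controlled and the permutation combinatorics disappear entirely; the price is a leave-one-out construction ($\wtminus{\bB}{s}$, $\wtminus{\bB}{s,t}$) to decouple $\wh{\bB}=(n-h)^{-1}\bX^{\rmt}\bY$ from the two rows entering each comparison. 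The key probabilistic ingredient is the same in both: the small-ball estimate for log-concave projections (Lemma~\ref{lemma:small_ball_log_concave}) applied to $\|(\bX_{k,:}-\bX_{j,:})^{\rmt}\bBtrue\|_2$ at scale $\varepsilon=n^{-c/\srank{\bBtrue}}$, which is exactly where the $n^{c/\srank{\bBtrue}}$ threshold originates.

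That said, the proposal has a genuine gap at the step you yourself flag as the crux: the uniform bound $\min_\sigma M\gsim\fnorm{\bBtrue}^2 d^{\,1-2/\srank{\bBtrue}}$ is asserted, and the route you sketch for it (moment growth of log-concave quadratic forms delivering that exponent ``with the correct constant'') is not a proof and is, I believe, harder than necessary. You do not need the sharp packing exponent: since every $\sigma$ is a derangement on its support, $M\geq\sum_{k\in F}\min_{j\neq k}\|(\bX_{k,:}-\bX_{j,:})^{\rmt}\bBtrue\|_2^2\geq d\cdot\min_{k\neq j}\|(\bX_{k,:}-\bX_{j,:})^{\rmt}\bBtrue\|_2^2$, and a union bound over the $n^2$ pairs with the log-concave small ball at $\varepsilon=n^{-c_0/\srank{\bBtrue}}$ gives $\min_\sigma M\gsim d\,\fnorm{\bBtrue}^2 n^{-2c_0/\srank{\bBtrue}}$ on a single event of probability $1-n^{-c}$; no per-permutation tail is needed, so the $\exp(-c\,\srank{\bBtrue}\,d)$ versus $n^{d}$ tension you worry about does not arise for this term, and this weaker bound already yields $\log\snr\gsim\log n/\srank{\bBtrue}+\log\log n$. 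The complementary underestimate is the ``perturbation bookkeeping'': the signal--signal remainder $\La(\bX^{\rmt}\bX-\bX^{\rmt}\bPi^{\rmt}\bPitrue\bX)\bBtrue,\ (\bX^{\rmt}\bPitrue\bX-(n-h)\bI)\bBtrue\Ra$ and the cross term $\La(\bX^{\rmt}\bX-\bX^{\rmt}\bPi^{\rmt}\bPitrue\bX)\bBtrue,\ \bX^{\rmt}\bW\Ra$ are fourth- and third-order polynomials in the rows of $\bX$ that must be controlled uniformly in $\sigma$ with tails $n^{-cd}$; a naive triangle inequality over the $d$ non-fixed points loses a factor $\sqrt{d}$ that destroys the budget at $d\asymp n$ unless one passes to a Gram-matrix/operator-norm argument, and this is precisely the part of the analysis that the paper's pairwise relaxation plus leave-one-out is designed to avoid. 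So the skeleton is viable, but the two places you label ``routine'' and ``the main obstacle'' are, respectively, where most of the real work lies and where a much simpler argument suffices.
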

Family of log-concave sub-gaussian distributions include
standard Gaussian distribution and uniform distribution among $[-c,~c]$, where
$c$ denotes a certain positive constant.
However, not all sub-gaussian RVs are with log-concavity.
One example is that $\bX_{ij}$ is a Rademacher RV, i.e.,
$\Prob(\bX_{ij} = \pm 1) = \nfrac{1}{2}$.
We can see that
Theorem~\ref{thm:multi_snr_require_log_concave} requires log-concavity and
can allow a much broader range of
$\srank{\bBtrue}$ than Theorem~\ref{thm:multi_snr_require_general_sub_gauss}, to put more specific, a positive constant $\srank{\bBtrue}$ is permitted.
In the following,
we will see some numerical results
implying that the log-concavity may be inseparable
from the positive constant $\srank{\bBtrue}$,
in other words, $\srank{\bBtrue}$ cannot be
$O(1)$ if the log-concavity assumption is violated.

\subsection{Results discussion}
\paragraph{Comparison with single observation model.}
The most noticeable implications of our theorems are
that the $\snr$ requirement for correct permutation recovery
can be greatly reduced by making multiple measurements, i.e., from $\Omega(n^{c})$ to
$\Omega(n^{\frac{c\cdot \srank{\bBtrue}}{n}}\vcup \log^c n)$ or even $\Omega(1)$.

Additionally, we notice that the constraint on signal length $p$ has been removed
when multiple measurements are made. According to Theorem~\ref{thm:warm_up} which
concerns the single observation model $(m = 1)$,
our estimator only works when $p = 1$.
While in Theorem~\ref{thm:multi_snr_require_general_sub_gauss} and Theorem~\ref{thm:multi_snr_require_log_concave} where $\srank{\bBtrue} \gg 1$,
a much broader range of $p$ is permitted.

In summary, the take-home message is that
\textbf{diversity, i.e., large $\srank{\bBtrue}$,
helps in the permutation recovery}.

\paragraph{Statistical optimality.}
First, we argue that our estimator in Algorithm~\ref{alg:one_step_estim} is almost
mini-max optimal w.r.t. $\snr$ requirement.
For an easy comparison, we consider
the special case where $\srank{\bBtrue}$
is same as $\rank(\bBtrue)$, which
corresponds to the situation where
$\bBtrue$'s signal strength is uniformly spread
among all its eigenvalues. Theorem~\ref{thm:multi_statis_lb}
suggests that wrong permutation matrix $\wh{\bPi}$
will be returned with a high probability if
\[
\srank{\bBtrue} \cdot  \log\snr  \approx \
\srank{\bBtrue} \cdot \log(1 + \snr)
\lsim \log n,
\]
which holds regardless of the estimator form.
Comparing with Theorem~\ref{thm:multi_snr_require_log_concave}, we conclude that
our estimator has optimal $\snr$ requirement in the following two
regimes: $(i)$ $\srank{\bBtrue} \lsim \frac{\log n}{\log\log n}$
and $(ii)$ $\srank{\bBtrue} \gg \log^4 n$. Only in the
regime $\nfrac{\log n}{\log \log n} \ll \srank{\bBtrue} \ll \log^4 n$
our estimator experiences a loss in the $\snr$ requirement, which is up to $O(\log^c n)$.

Second, our requirement on sample number $n$ is almost optimal:~\cite{unnikrishnan2015unlabeled} claims
that $n\gsim p$ is required for correct permutation recovery while our estimator
only needs $n\gg p\cdot \log^3 n \cdot \log^2(n^2p^3)$.
In addition, we have almost minimal constraint on permuted rows' number, i.e.,
we allow $h_{\textup{max}}\asymp n$, where
 $h_{\textup{max}}$ is the maximum
allowed number of permuted rows.

\subsection{Proof outline}
To make the proof more digestible, we first outline the proof strategies
and technical challenges before delving into the technical details.
The rigorous proof is attached in the supplementary material, including all supporting Lemmas.
Here, we would like to explain the main technical challenges in the proof of
Theorem~\ref{thm:multi_snr_require_general_sub_gauss} and Theorem~\ref{thm:multi_snr_require_log_concave}, which lies in the proof that
\begin{align}
\label{eq:error_event}
\set{\la \bPi, \bY \bY^{\rmt}\bX\bX^{\rmt} \ra \geq  \langle \bPitrue, \bY \bY^{\rmt}\bX\bX^{\rmt}\rangle,~~\exists~\bPi \neq \bPitrue}
\end{align}
holds with probability near zero given the assumptions in Theorem~\ref{thm:multi_snr_require_general_sub_gauss} and Theorem~\ref{thm:multi_snr_require_log_concave}. Two noticeable challenges in analyzing the above events are
$(i)$ combinatorial nature of the optimization problem; and $(ii)$ high-order moments of $\bX$
in the product $\bY\bY^{\rmt}\bX\bX^{\rmt}$. To address these challenges, we construct our solutions
with two building blocks: $(i)$ relaxations of error event and $(ii)$ modified leave-one-out techniques.
The following context presents a more detailed explanation.

To begin with, we define $\wh{\bB}$ and $\wt{\bB}$ respectively as
\[
\wt{\bB} &= \bracket{n-h}^{-1}\bX^{\rmt}\bPitrue\bX\bBtrue, \\
\wh{\bB} &= \bracket{n-h}^{-1}\bX^{\rmt}\bY = \wt{\bB} + \
\bracket{n-h}^{-1}\bX^{\rmt}\bW,
\]
where $h$ is denoted as the Hamming distance between identity matrix $\bI$ and
the ground truth permutation matrix $\bPitrue$,
i.e., $h = \dH(\bI, \bPitrue)$.

\paragraph{Stage I. Relaxation of error event.}
To combat the combinatorial nature of \eqref{eq:error_event},
we first relax the error event to make it amenable for analysis.
Associated with different regimes comes
different forms of relaxations.

\begin{itemize}
\item
\textbf{Easy regime}.
We relax the error event $\{\bPiopt\neq \bPitrue\}$ as
\begin{align}
\label{eq:easy_error_event}	
\set{ \la \bY_{i, :},  \wh{\bB}^{\rmt}\bX_{\pi^{\natural}(i), :} \ra
\leq \la \bY_{i, :}, \wh{\bB}^{\rmt} \bX_{j, :} \ra,~\exists~1\leq \pi^{\natural}(i) \neq  j \leq n}.
\end{align}
With this relaxation method, we find the constraint
$\srank{\bBtrue} \gg \log^2 n$ to be inevitable. However, as compensation, we can show $\snr\geq c$ is sufficient for the correct permutation reconstruction.

\item
\textbf{Medium $\&$ hard regime}.
To get rid of the constraint on $\srank{\bBtrue}$, we first exploit the energy-preserving property permutation matrix, i.e., $\Fnorm{\bPi \bM} = \Fnorm{\bM}$ where $\bM \in \RR^{n\times (\cdot)}$ is an arbitrary matrix.
Then we adopt the relaxation
\begin{align}
\label{eq:medium_hard_error_event}	
\set{\big\|\bY_{i, :} - \wh{\bB}^{\rmt}\bX_{\pi^{\natural}(i), :} \big\|_{2}^2 \geq \big\|\bY_{i, :} - \wh{\bB}^{\rmt}\bX_{j, :} \big\|_{2}^2,~~\exists~i, j}.
\end{align}
In this way, we can relax the constraint on $\srank{\bBtrue}$ from
$\srank{\bBtrue} \gg \log^2 n$ to $\srank{\bBtrue}\gg \log n$ if
$\bX_{ij}$ is i.i.d. isotropic sub-gaussian RV. If the log-concavity assumption is further put on $\bX_{ij}$'s distribution, we can relax the constraint to $\srank{\bBtrue}\gg c$.

However, compared with the above relaxation method, this method will experience some
loss in the $\snr$ requirement. For instance, in the \textbf{Easy Regime},
this approach will require $\snr$ to satisfy  $\snr \gsim \log^c n$
for the correct permutation recovery while the previous approach only needs
$\snr\geq c$.
\end{itemize}

Bonuses of the above relaxations include $(i)$ we reduce the moments of $\bX$
from fourth order in \eqref{eq:error_event} to third order, see \eqref{eq:easy_error_event} and \eqref{eq:medium_hard_error_event}; and $(ii)$ the dependences between $\wh{\bB}$ and other
terms in \eqref{eq:easy_error_event} and \eqref{eq:medium_hard_error_event} are restricted to the rows $\bX_{\pi^{\natural}(i), :}$ and $\bX_{j, :}$. The latter bonus is essential for the leave-one-out technique.

\paragraph{Stage II. Dependence decoupling with leave-one-out technique.}
To further reduce the moments of $\bX$ involved in the analysis,
we  modify the \emph{leave-one-out}
technique~\citep{karoui2013asymptotic, el2013robust, karoui2018impact, chen2020noisy, sur2019likelihood, zhang2020optimal}.

As mentioned above, $\wh{\bB}$ is weakly correlated with
rows $\bX_{\pi^{\natural}(i), :}, \bX_{j, :}$,
$\bY_{i, :}$, and $\bY_{j, :}$.
The basic idea of the leave-one-out technique is to
replace the correlated rows in $\wh{\bB}$ with their i.i.d. copies, by which the dependence is decoupled.
Since only a limited number of rows are replaced, we expect its statistical properties should remain almost identical. A detailed explanation comes as follows.

First, we draw an independent copy $\bX_{s, :}^{'}$ for each row
$\bX_{s, :}$ ($s$th row of the sensing matrix $\bX$). With these
independent copies, we construct leave-one-out samples $\wtminus{\bB}{s}$ $(1\leq s \leq n)$
by replacing the $s$th row in $\wt{\bB}$ with its independent copy $\bX_{s, :}^{'}$.
In formulae:
\[
\wtminus{\bB}{s} =
(n-h)^{-1}\bigg(
\sum_{\substack{k \neq s \\ \pi^{\natural}(k) \neq s}} \bX_{\pi^{\natural}(k), :}\bX_{k, :}^{\rmt}
+ \sum_{\substack{k = s \textup{ or} \\  \pi^{\natural}(k) = s}} {\bX}^{'}_{\pi^{\natural}(k), :}{\bX}^{'\rmt}_{k, :}\bigg)\bBtrue.
\]
Easily we can verify that $\wtminus{\bB}{s}$ is independent of
$\bX_{s, :}$. Similarly, we construct matrices
$\{\wtminus{\bB}{s, t}\}_{1\leq s\neq t \leq n}$ as
\[
\wtminus{\bB}{s, t} =
(n-h)^{-1}\bigg(
\sum_{\substack{k \neq s, t\\ \pi^{\natural}(k) \neq s,t }} \bX_{\pi^{\natural}(k), :}\bX_{k, :}^{\rmt}
+ \sum_{\substack{k = s \textup{ or } k = t \textup{ or} \\ \pi^{\natural}(k) =s
  \textup{ or }\pi^{\natural}(k) = t}} {\bX}^{'}_{\pi^{\natural}(k), :}{\bX}^{'\rmt}_{k, :}\bigg)\bBtrue,
\]
and can verify the independence between $\wtminus{\bB}{s, t}$ and
the rows $\bX_{s, :}$ and $\bX_{t, :}$.

\newpage

\subsection{Discussion of related work}
\label{sec:multi_obs_related_work}
This subsection compares our results with
the previous works on multiple observations model
($m > 1$)~\citep{pananjady2017denoising, zhang2022permutation, slawski2020two}.
\citet{pananjady2017denoising} consider a similar
setting as ours while their focus is on
the product $\bPitrue\bX\bBtrue$ rather than the individual values of
$\bPitrue$ and $\bBtrue$.

In~\citet{zhang2022permutation}, statistical
limits w.r.t. the $\snr$ are presented and
the ML estimator is revisited under the setting of multiple observations model.
It is suggested that the ML estimator can reach the statistical limits
under certain regimes. In addition, a projected gradient descent-based algorithm with monotonic descent property is proposed for practical use.
However, whether this algorithm will yield the ground truth $\bPitrue$ still remains a mystery.

In~\citet{slawski2020two}, they follow a similar
idea of~\citet{slawski2017linear} and take the
viewpoint of denoising for permutation recovery. Assuming only a limited proportion of rows are permuted, they first
obtain an estimate of $\bBtrue$ and
plug it
into the ML estimator in \eqref{eq:optim_estim_pi} to reconstruct $\bPitrue$.
Compared with our estimator in Algorithm~\ref{alg:one_step_estim}, their estimator can be applied to a broader
family of matrices (i.e., selection matrices);
but $(i)$ allow a smaller number of permuted rows and $(ii)$ require a larger sample number $n$.
A detailed comparison is referred to Table~\ref{tab:compare}.

\section{Simulation Results}
\label{sec:simul}
This section presents the numerical results. Since our estimator cannot guarantee the correct permutation recovery under the single observation model, our simulations focus on the multiple observations model, i.e., $m > 1$.

We investigate the impact of the $\nfrac{p}{n}$ ratio and
number of permuted rows $h$ on the permutation recovery
when $\bX_{ij}$ is $(i)$ Gaussian distributed,
$(ii)$ uniformly distributed within $[-1, 1]$, and
$(iii)$ Rademacher distributed such that
$\Prob(\bX_{ij} = \pm 1) = \nfrac{1}{2}$.

First, we present our experiment settings. We set the $i$th column
$\bBtrue_{:, i}$ $(1\leq i \leq \min(m,p))$
to be the $i$th canonical basis, which has $1$ on the
$i$th entry and $0$ elsewhere.
One benefit of this setting is that
the stable rank $\srank{\bBtrue}$
can be easily calculated, i.e., $m\vcap p$.
In the following context, we will use $\srank{\bBtrue}$ and
$m$ $(m < p)$ interchangeably.

\begin{figure}[!h]

\begin{center}
\mbox{
\includegraphics[width=2.2in]{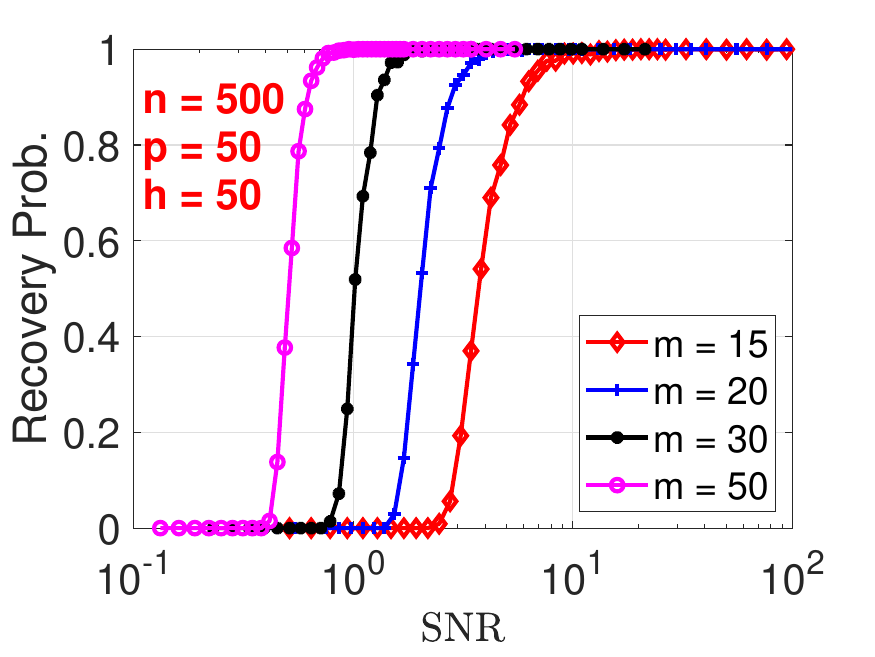}
\includegraphics[width=2.2in]{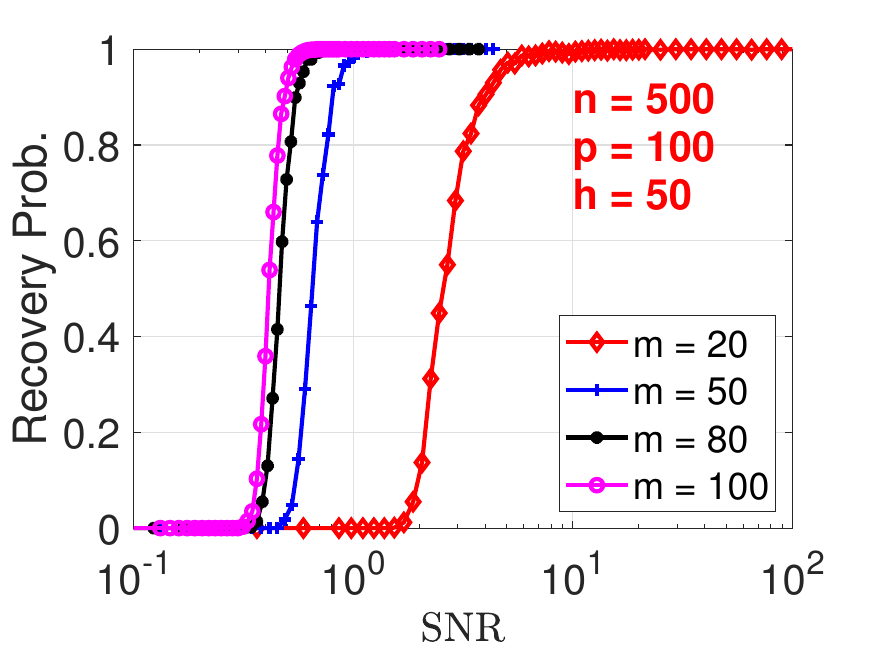}	
\includegraphics[width=2.2in]{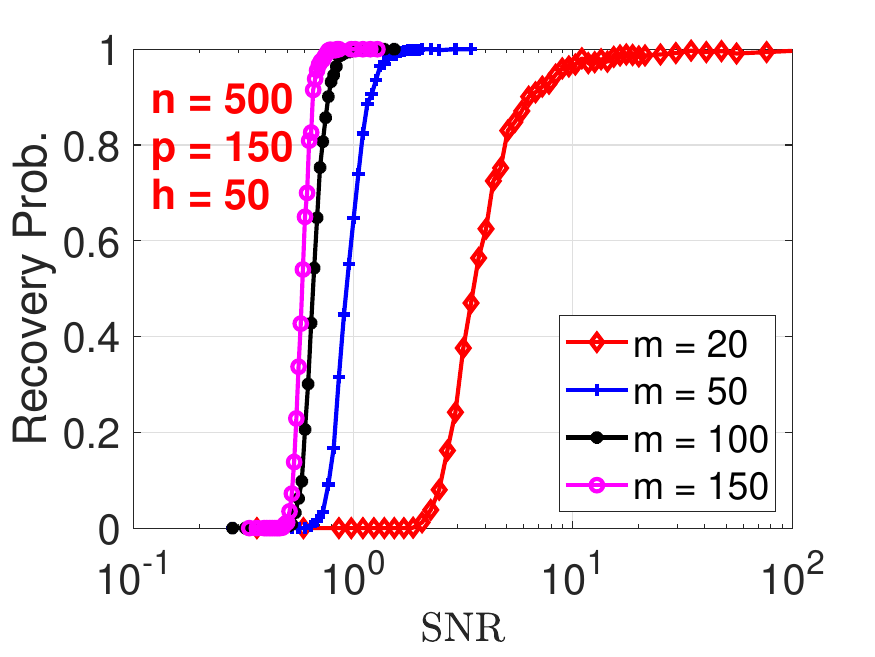}
}

\mbox{	
\includegraphics[width=2.2in]{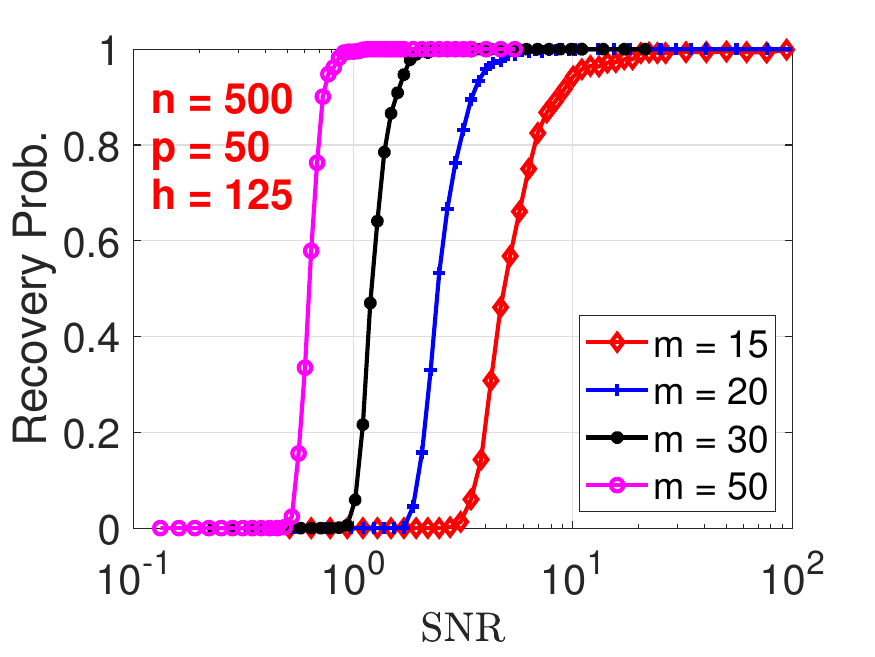}
\includegraphics[width=2.2in]{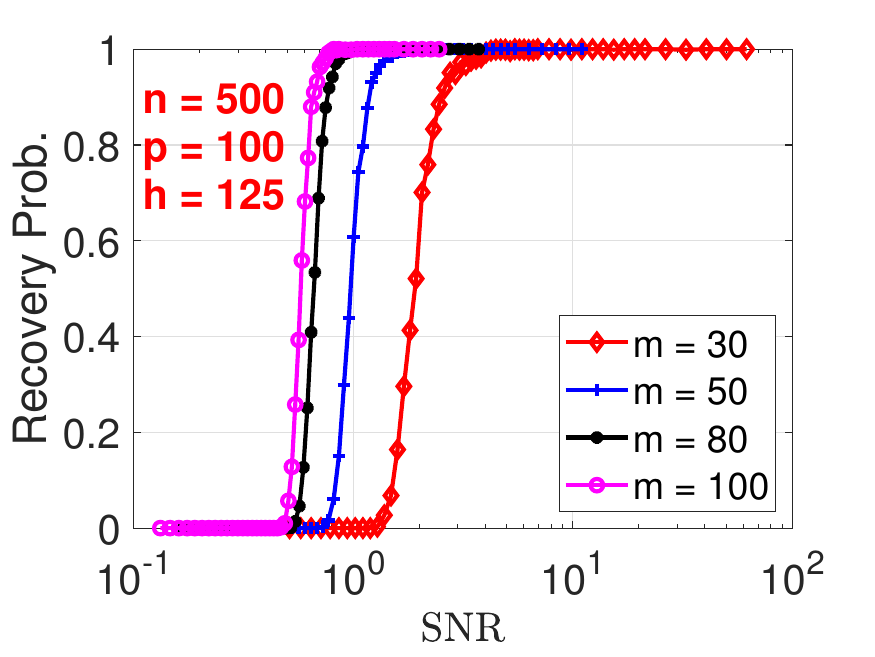}	
\includegraphics[width=2.2in]{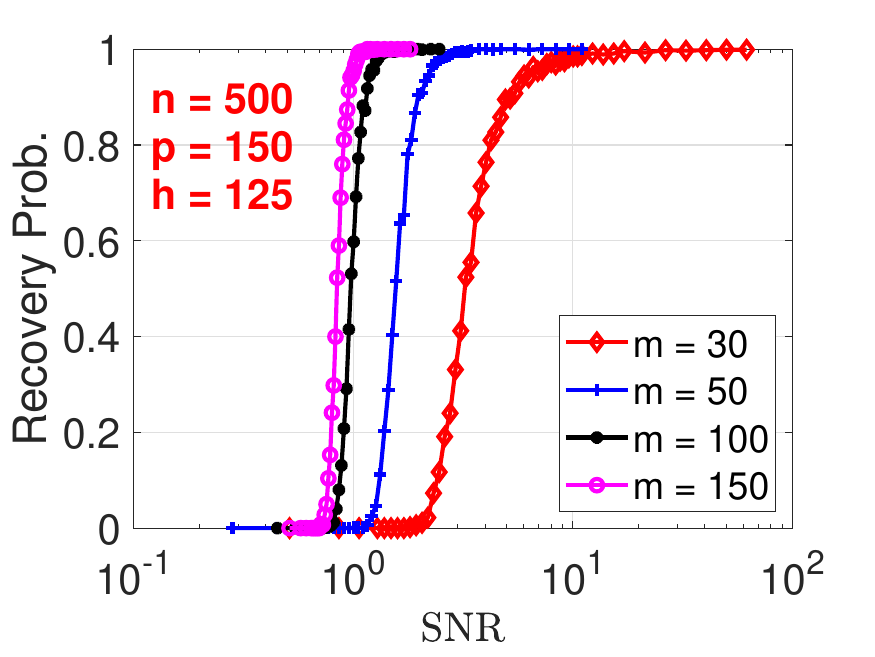}	
}

\mbox{
\includegraphics[width=2.2in]{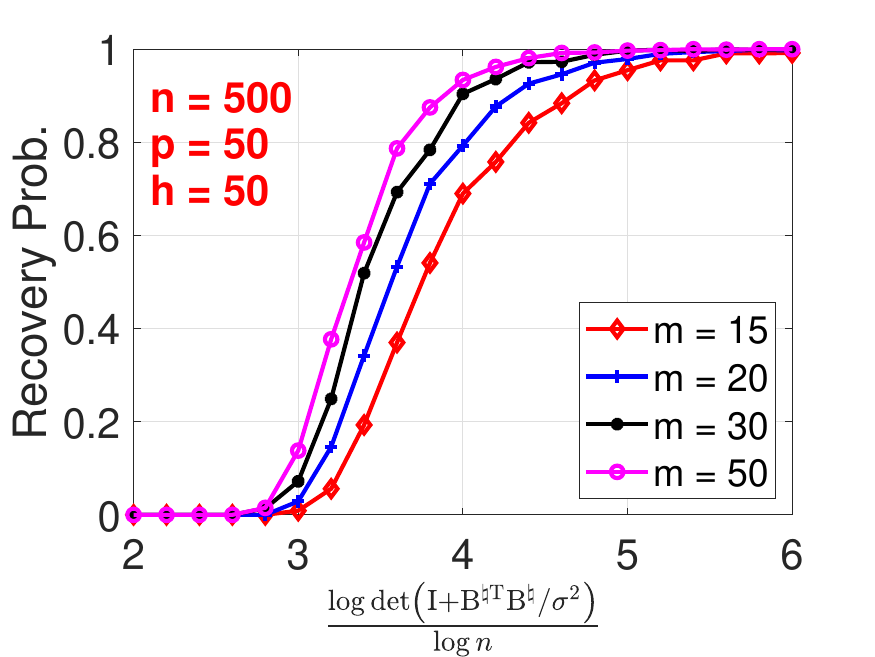}
\includegraphics[width=2.2in]{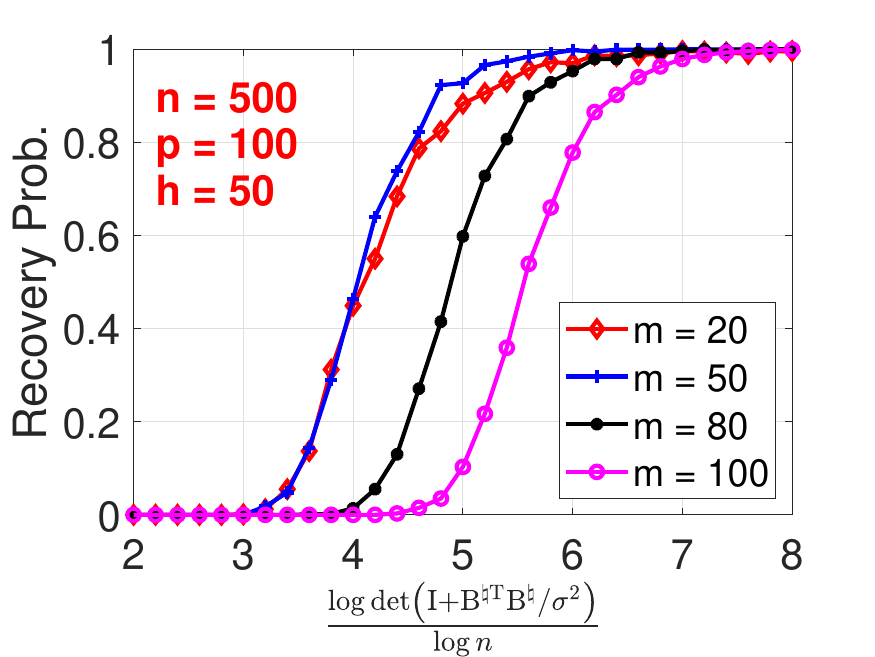}	
\includegraphics[width=2.2in]{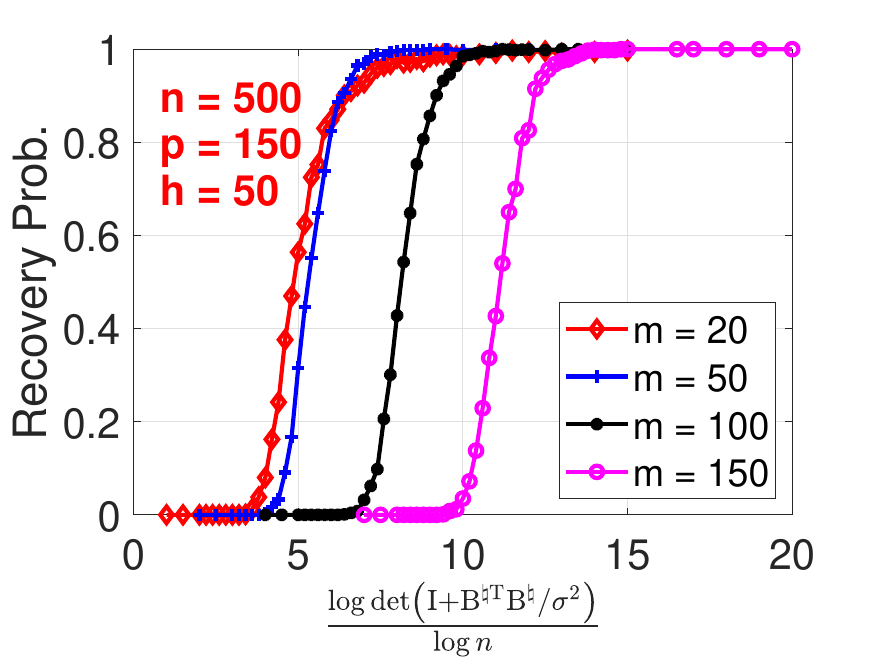}	
}

\mbox{
\includegraphics[width=2.2in]{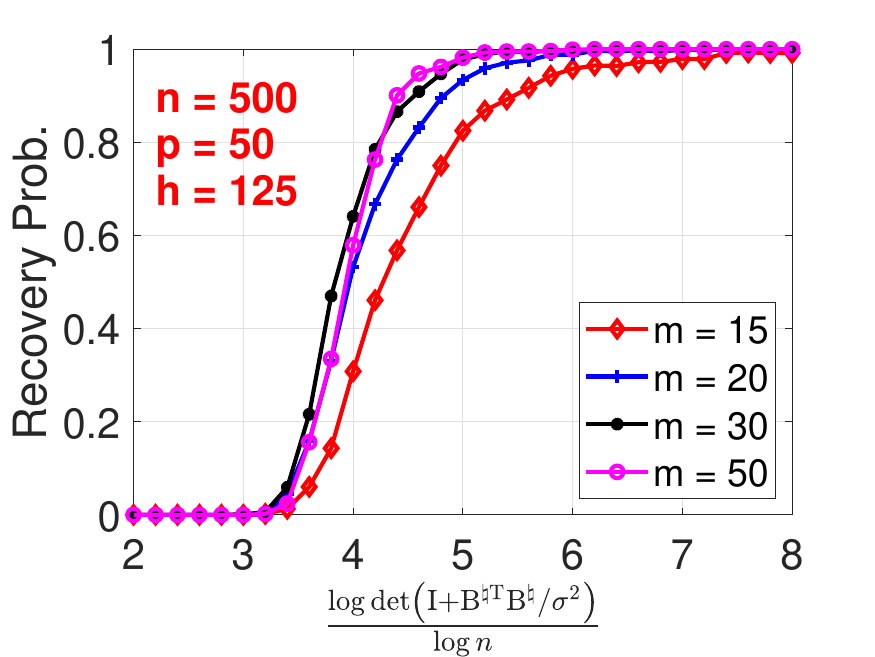}
\includegraphics[width=2.2in]{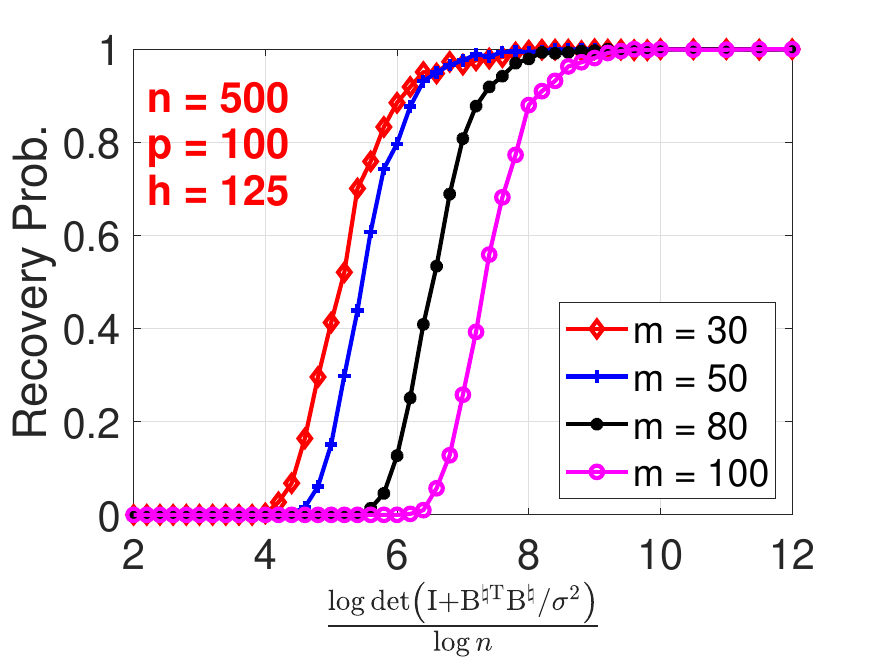}	
\includegraphics[width=2.2in]{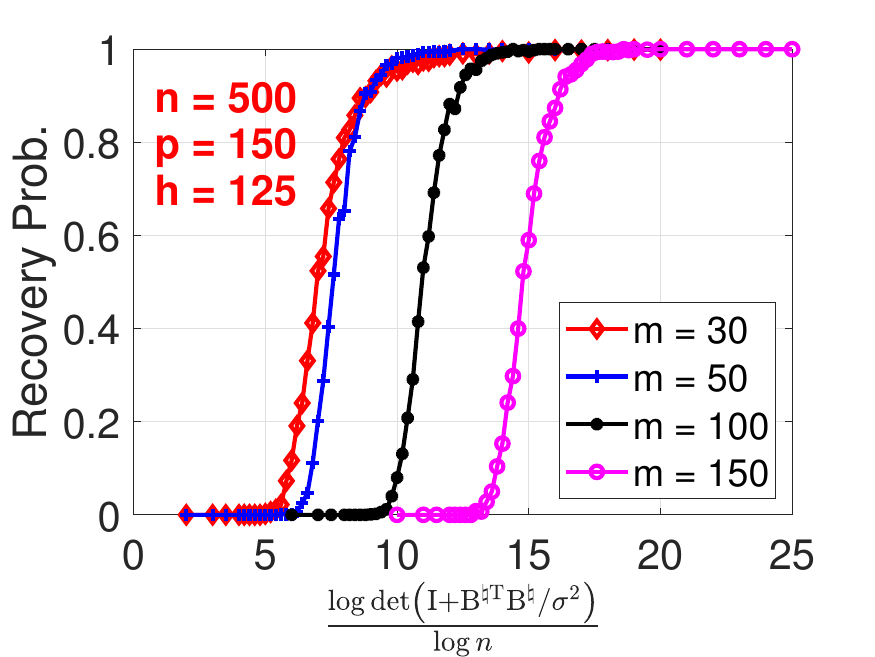}	
}

\end{center}
\caption{Simulated recovery rate $\Prob(\wh{\bPi} = \bPitrue)$, with $n = 500$, $p \in\{50, 100, 150\}$, and $h \in\{50, 125\}$,
versus $\snr$ (\textbf{upper panels}) and $\frac{\logdet\bracket{\bI + \bB^{\natural\rmt}\bBtrue/\sigma^2}}{\log n}$ (\textbf{lower panels}).}
\label{fig:recover_n500}
\end{figure}

\subsection{Gaussian distribution}
We assume $\bX_{ij}$ to be i.i.d. standard normal distribution $\normdist(0, 1)$.
We fix $\nfrac{p}{n}$ to be $\set{0.1, 0.2, 0.3}$ and $\nfrac{h}{n}$ to be
$\set{0.1, 0.25}$. We vary the sample number $n$ to be $\set{500, 1000, 2000, 4000}$.
The corresponding results are put in Figures~\ref{fig:recover_n500},~\ref{fig:recover_n1000},~\ref{fig:recover_n2000}, and~\ref{fig:recover_n4000}, respectively.
Apart from evaluating the permutation recovery w.r.t. $\snr$, we also evaluate it w.r.t.
$\nfrac{\logdet\bracket{\bI + \frac{\bB^{\natural\rmt}\bBtrue}{\sigma^2}}}{\log n}$.
This ratio appears in the statistical lower bounds in Theorem~\ref{thm:multi_statis_lb},
which claims
\[
\logdet\bigg(\bI + \frac{\bB^{\natural\rmt}\bBtrue}{\sigma^2}\bigg) \gsim \log n,
\]
should hold for correct permutation recovery.

First, we notice the numerical results are well aligned with our theoretical results,
say Theorem~\ref{thm:multi_snr_require_log_concave}, which concludes the optimality of our estimator.
Moreover, we find that smaller $\nfrac{p}{n}$ and $h$ facilitate the permutation recovery.
The former factor leads to more concentrated behavior of $\bX^{\rmt}\bY$ around its
means $\Expc\bX^{\rmt}\bY = (n-h) \bBtrue \parallel \bBtrue$; while the latter
factor brings more energy in $\Expc\bX^{\rmt}\bY$.

\begin{figure}[!h]
\begin{center}
\mbox{
\includegraphics[width=2.2in]{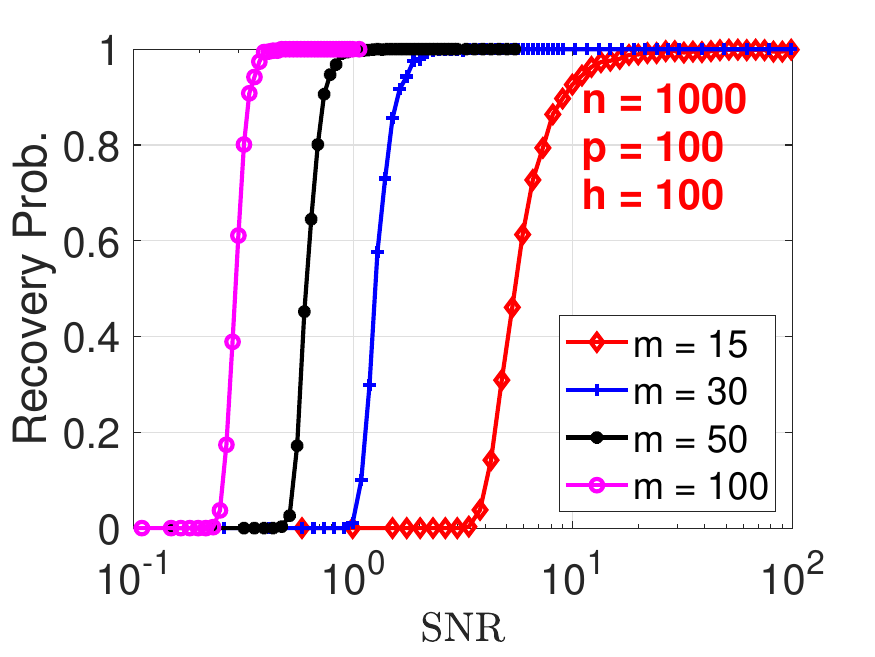}
\includegraphics[width=2.2in]{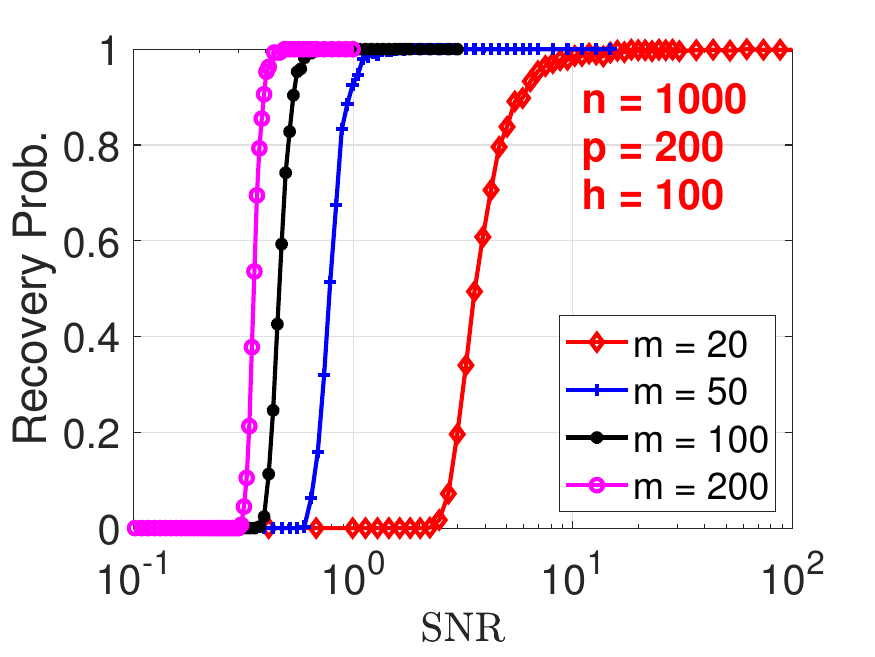}	
\includegraphics[width=2.2in]{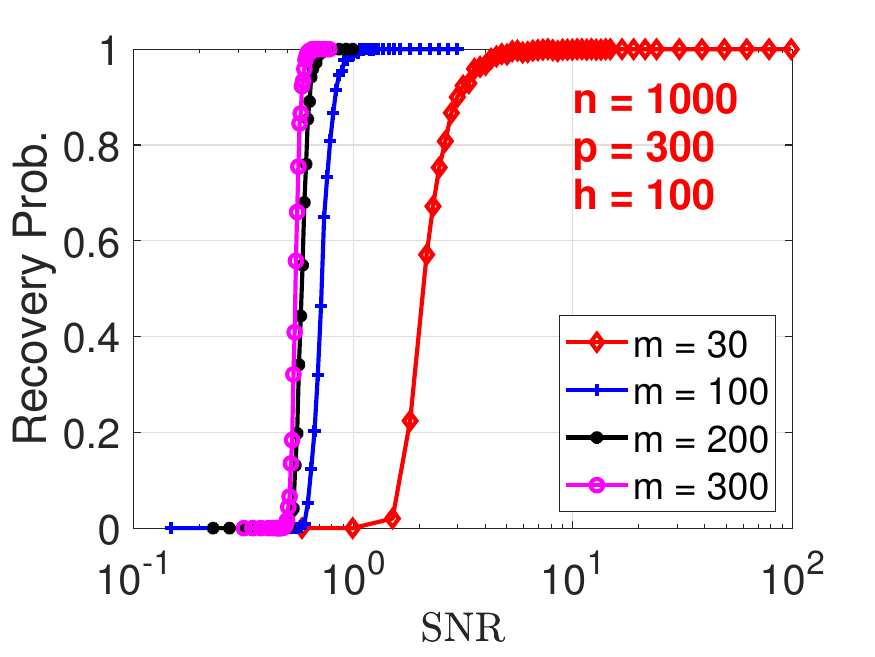}	
}

\mbox{
\includegraphics[width=2.2in]{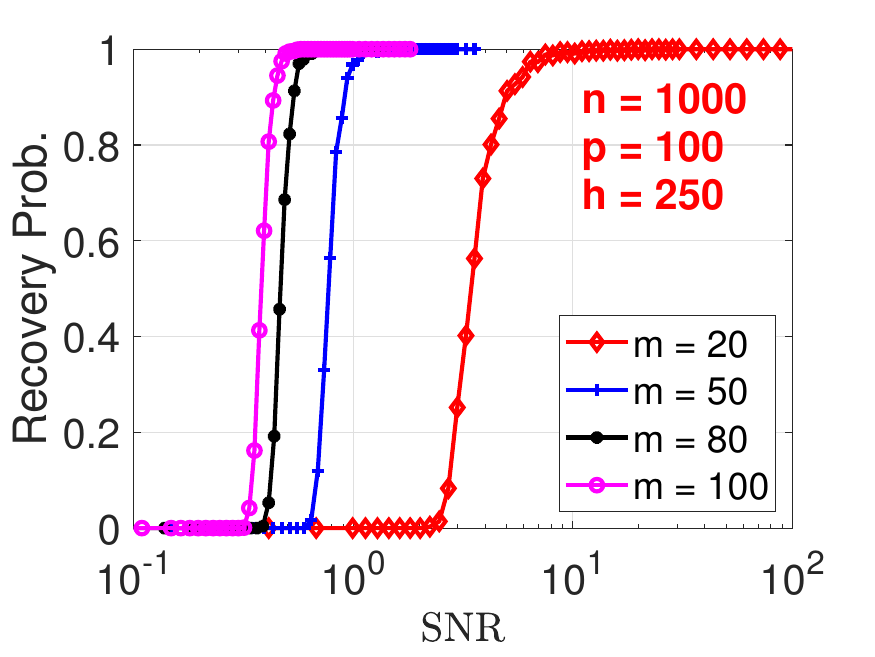}
\includegraphics[width=2.2in]{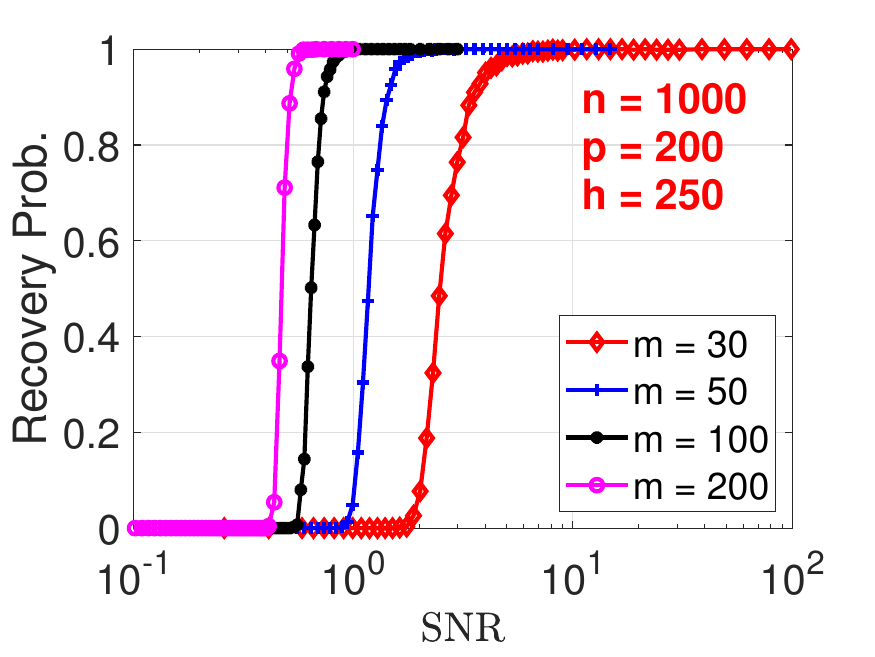}	
\includegraphics[width=2.2in]{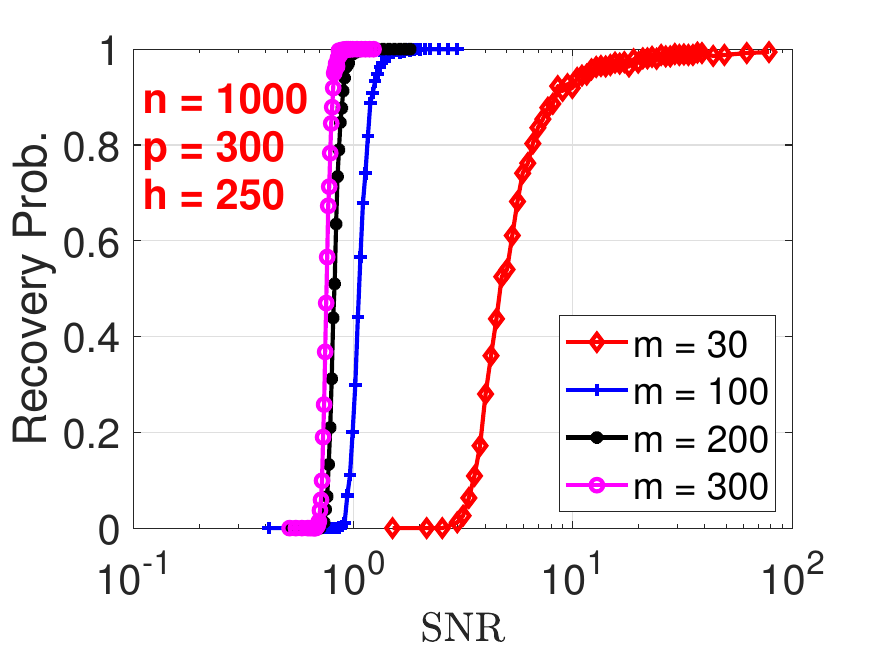}	
}

\mbox{
\includegraphics[width=2.2in]{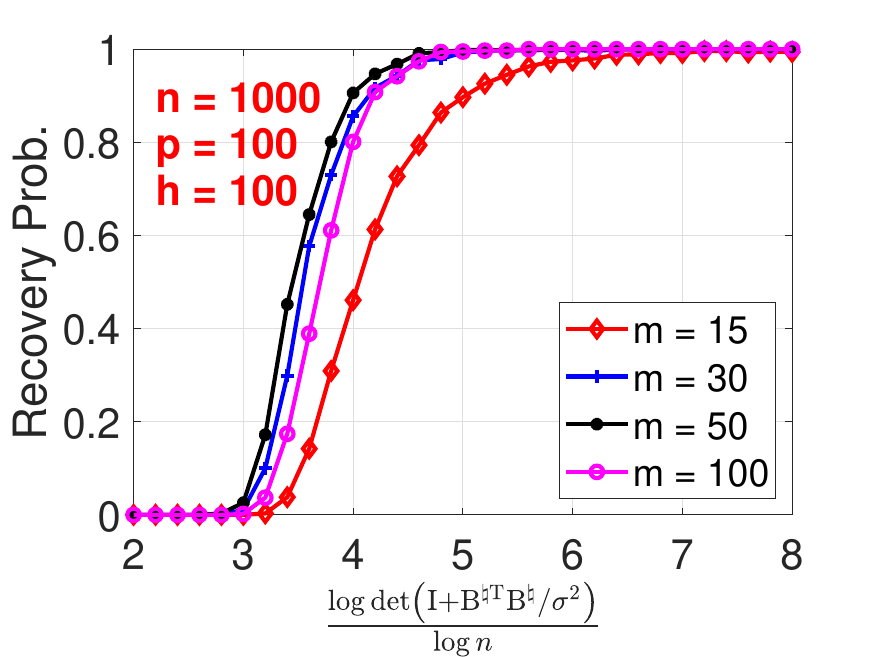}
\includegraphics[width=2.2in]{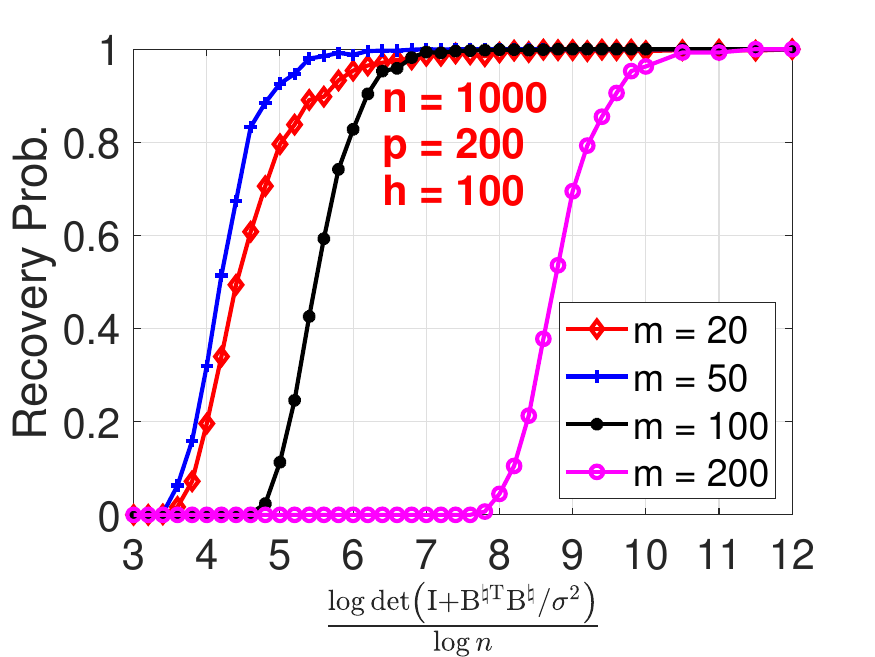}	
\includegraphics[width=2.2in]{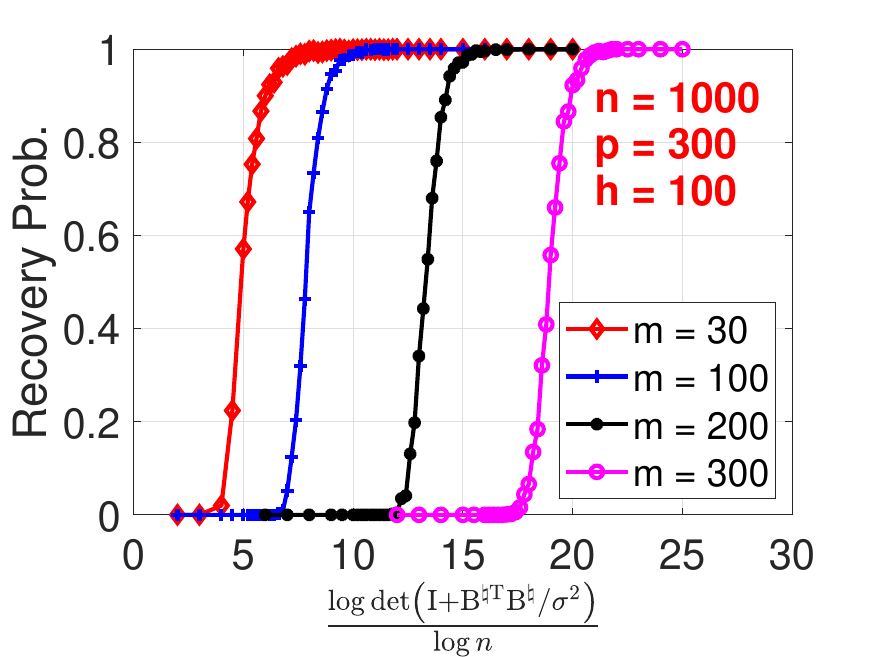}	
}

\mbox{
\includegraphics[width=2.2in]{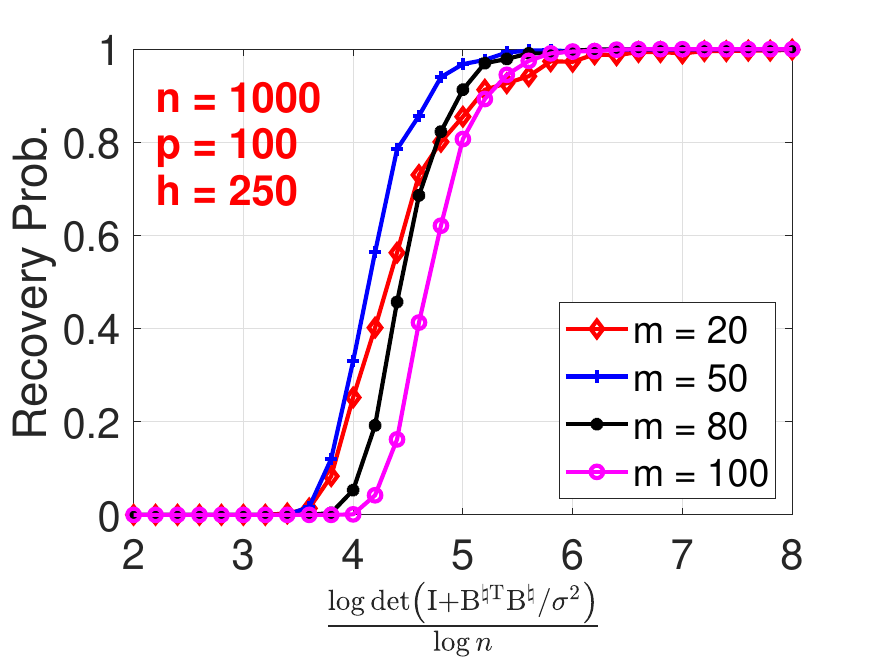}
\includegraphics[width=2.2in]{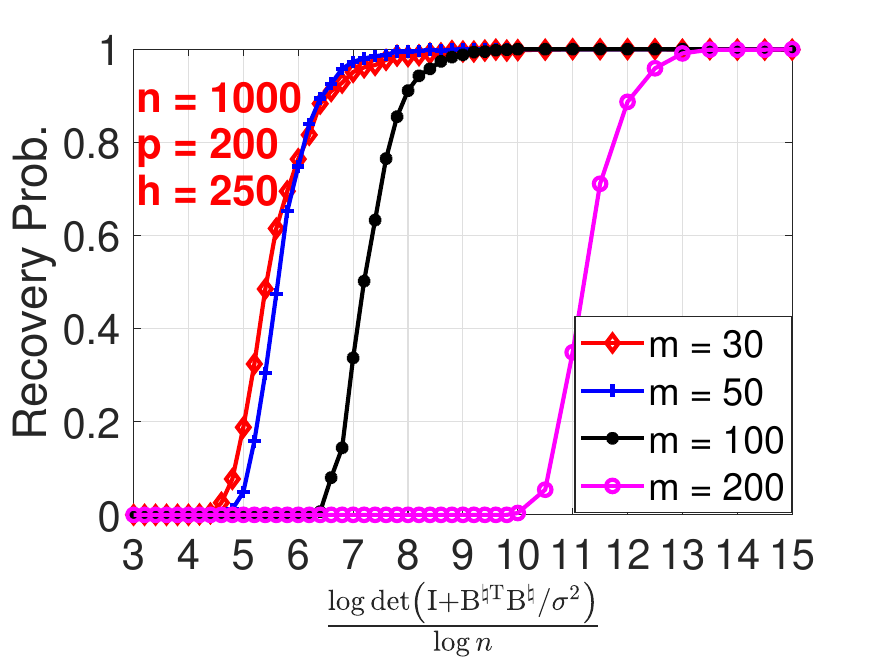}	
\includegraphics[width=2.2in]{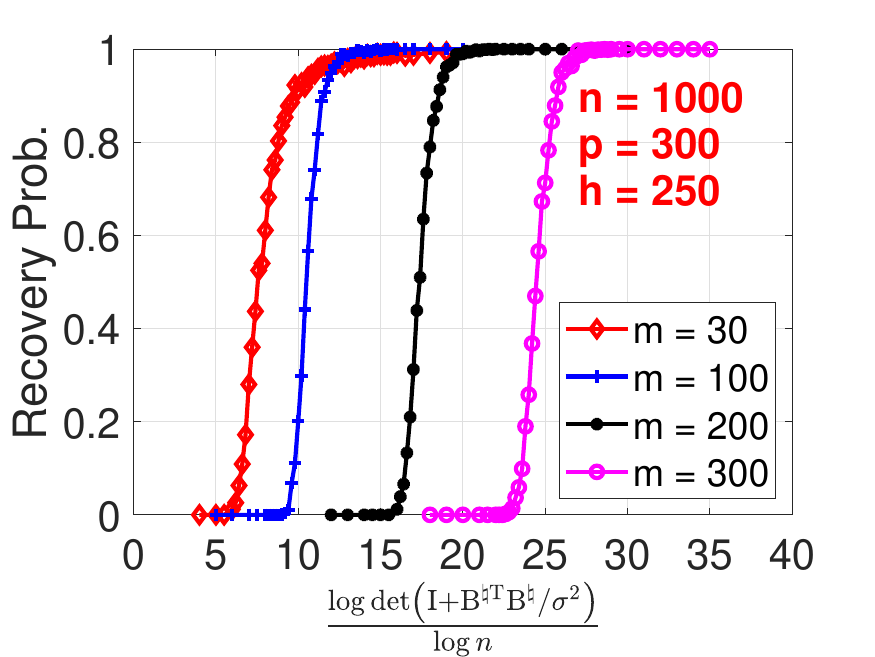}	
}

\end{center}
\caption{Simulated recovery rate $\Prob(\wh{\bPi} = \bPitrue)$, with $n = 1000$, $p\in\{100,200, 300\}$, and $h\in\{100,250\}$,
versus $\snr$ (\textbf{upper panels}) and $\frac{\logdet\bracket{\bI + \bB^{\natural\rmt}\bBtrue/\sigma^2}}{\log n}$ (\textbf{lower panels}).}
\label{fig:recover_n1000}
\end{figure}

\begin{figure}[!h]
\begin{center}
\mbox{
\includegraphics[width=2.2in]{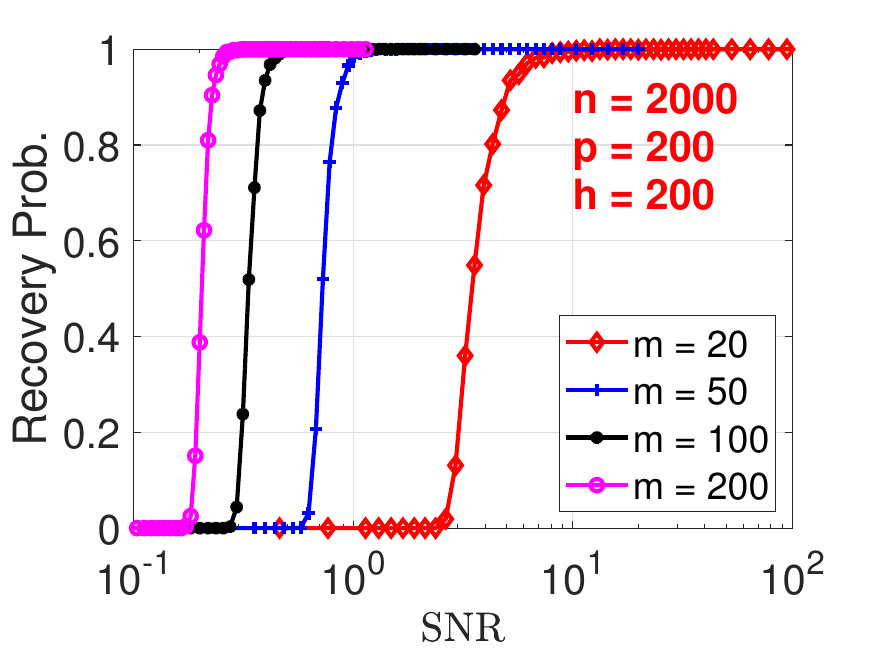}
\includegraphics[width=2.2in]{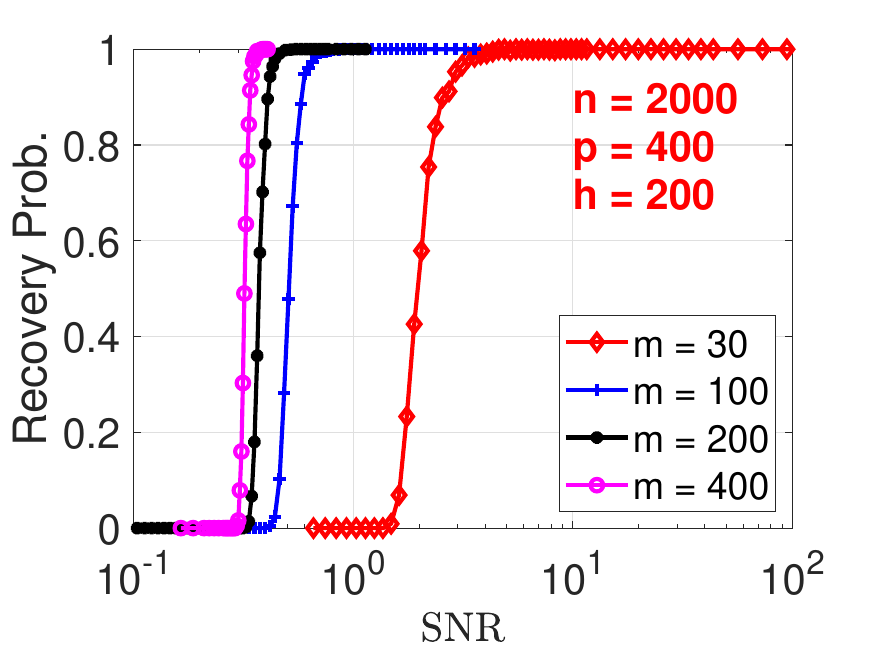}	
\includegraphics[width=2.2in]{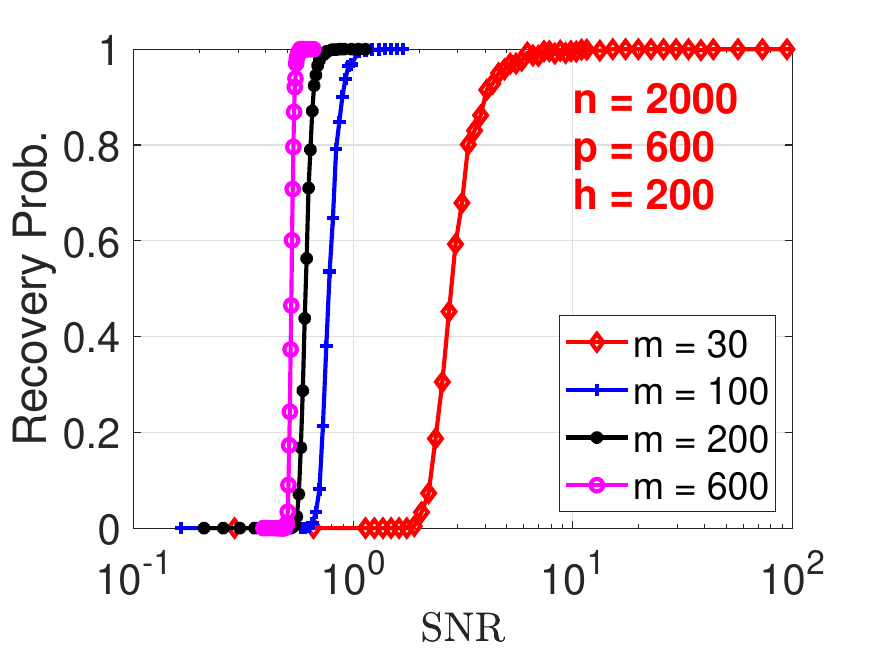}	
}

\mbox{
\includegraphics[width=2.2in]{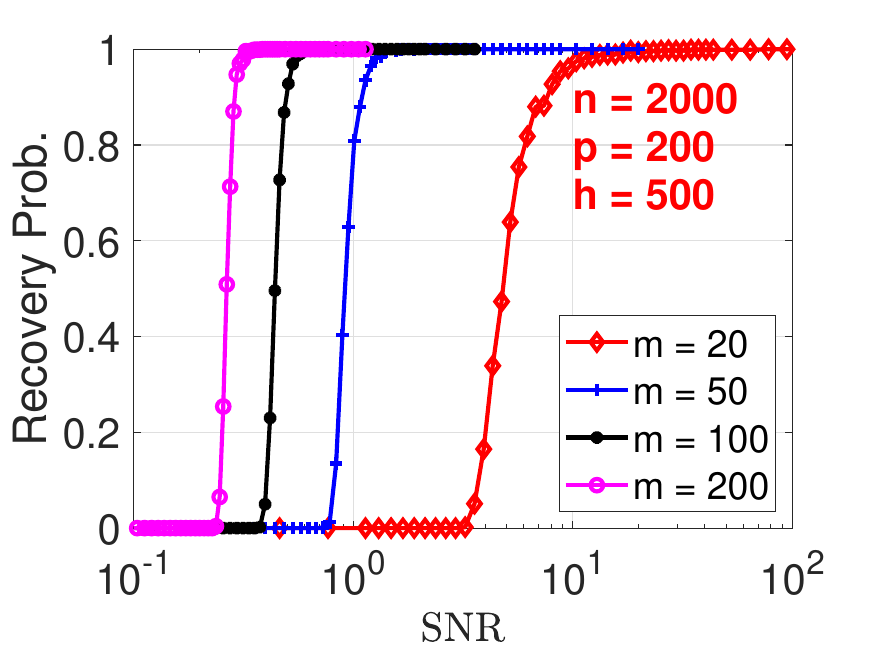}
\includegraphics[width=2.2in]{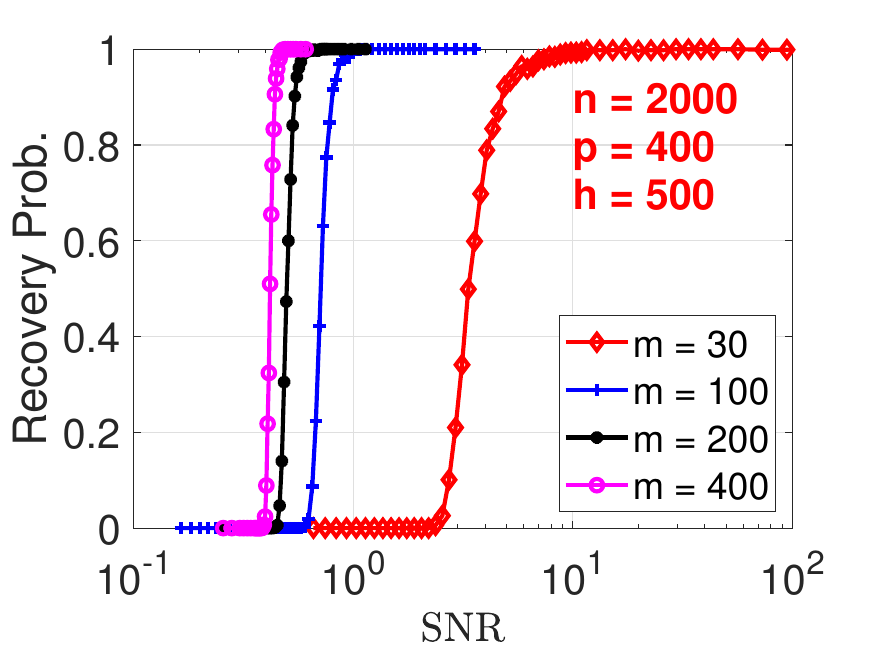}	
\includegraphics[width=2.2in]{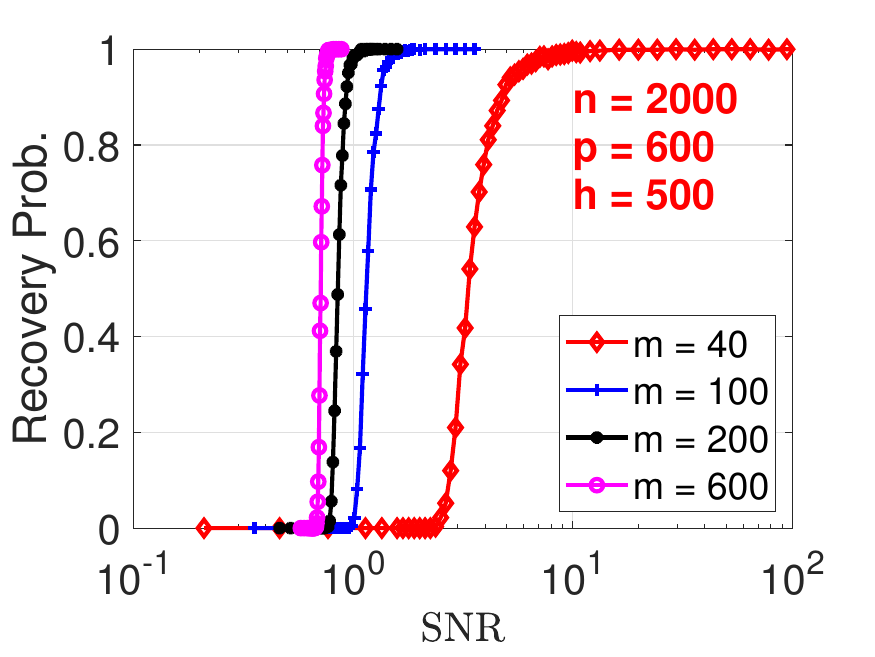}	
}

\mbox{
\includegraphics[width=2.2in]{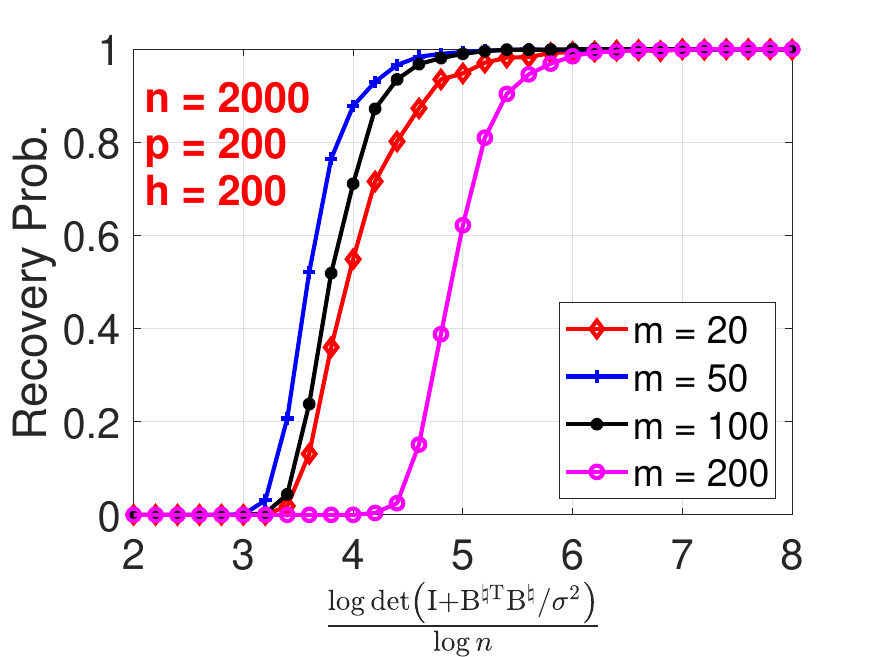}
\includegraphics[width=2.2in]{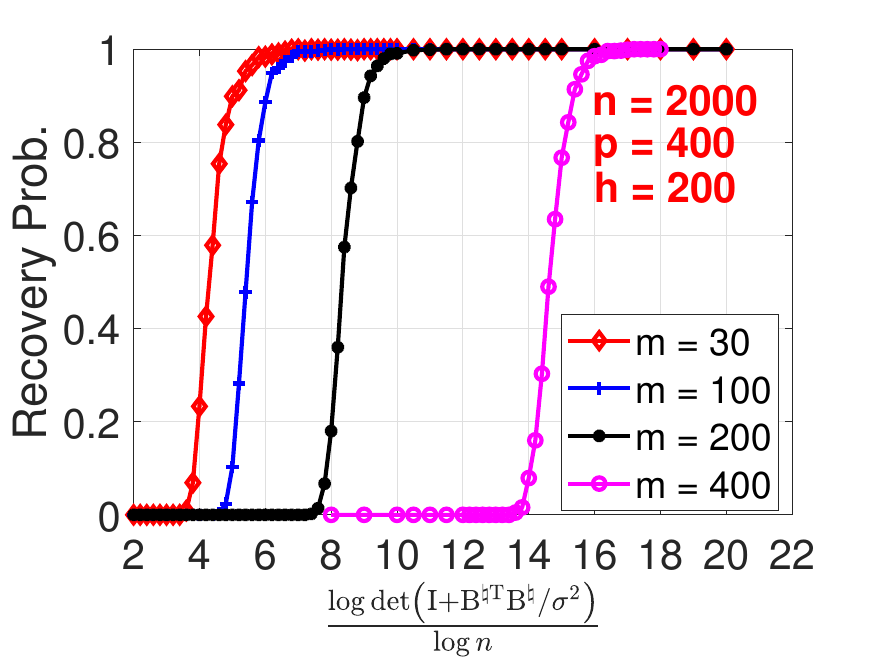}	
\includegraphics[width=2.2in]{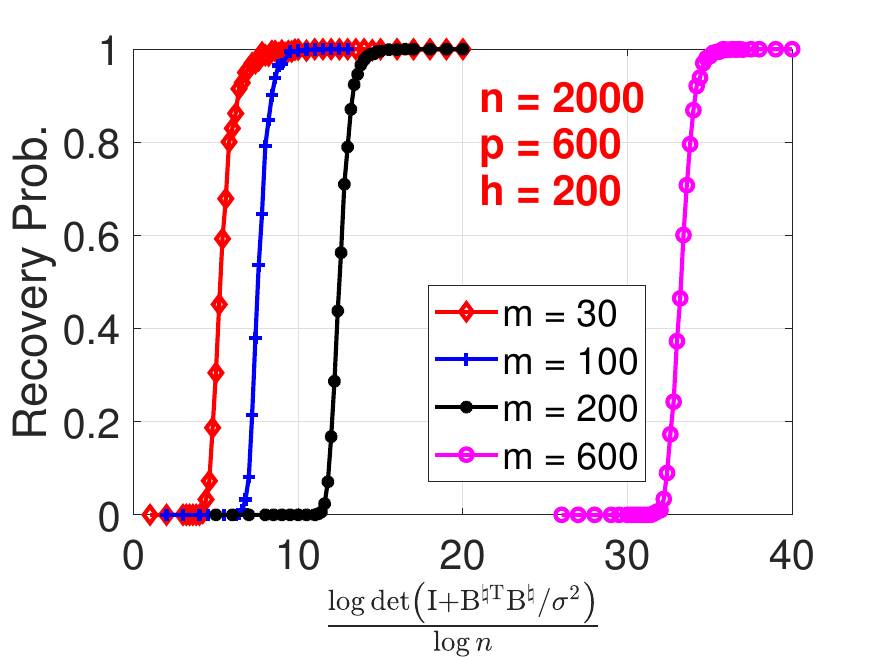}	
}

\mbox{
\includegraphics[width=2.2in]{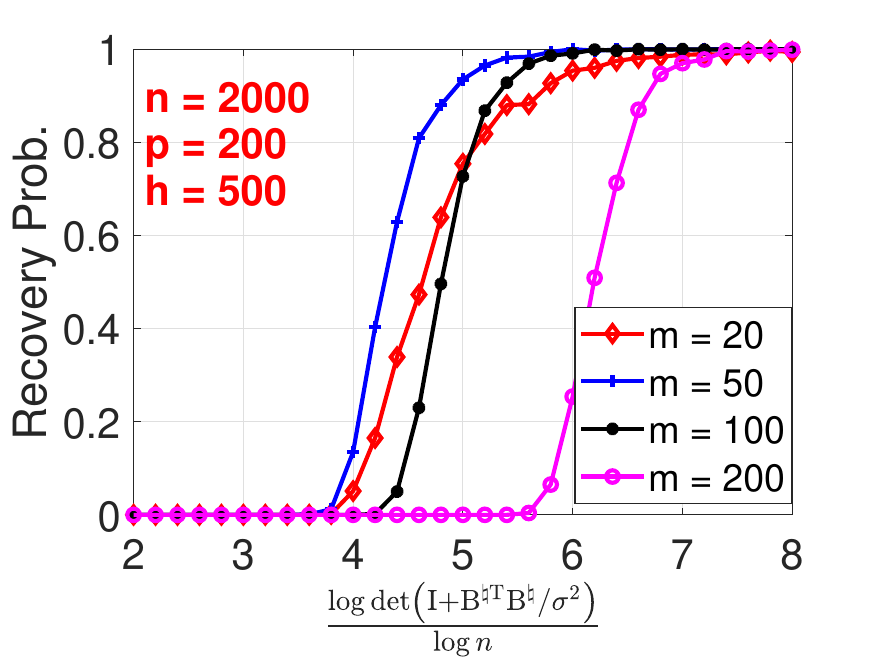}
\includegraphics[width=2.2in]{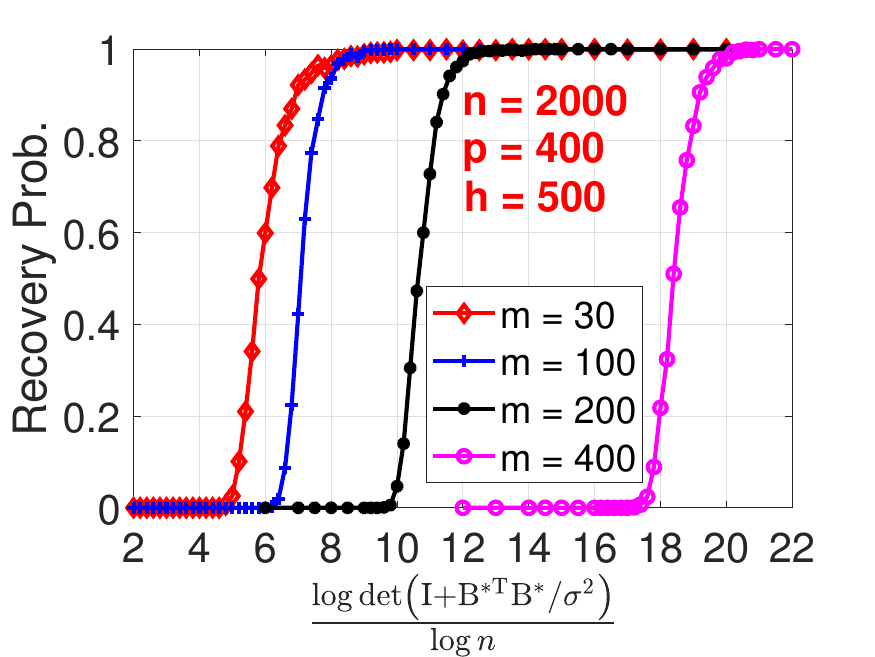}	
\includegraphics[width=2.2in]{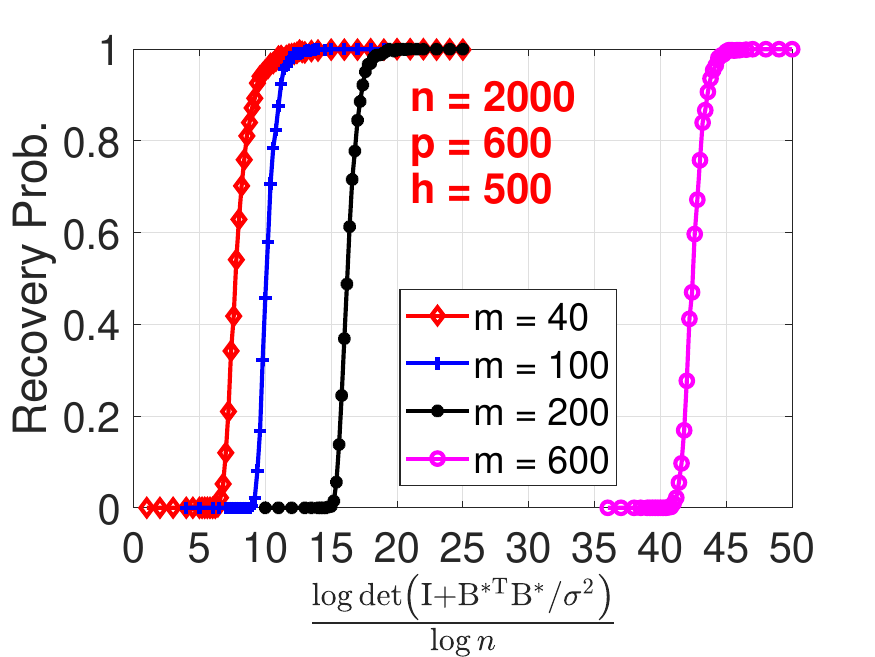}	
}

\end{center}
\caption{Simulated recovery rate $\Prob(\wh{\bPi} = \bPitrue)$, with $n = 2000$, $p\in\{200,400,600\}$, and $h\in\{200,500\}$,
versus $\snr$ (\textbf{upper panels}) and $\frac{\logdet\bracket{\bI + \bB^{\natural\rmt}\bBtrue/\sigma^2}}{\log n}$ (\textbf{lower panels}).}
\label{fig:recover_n2000}
\end{figure}

\begin{figure}[!h]
\begin{center}
\mbox{
\includegraphics[width=2.2in]{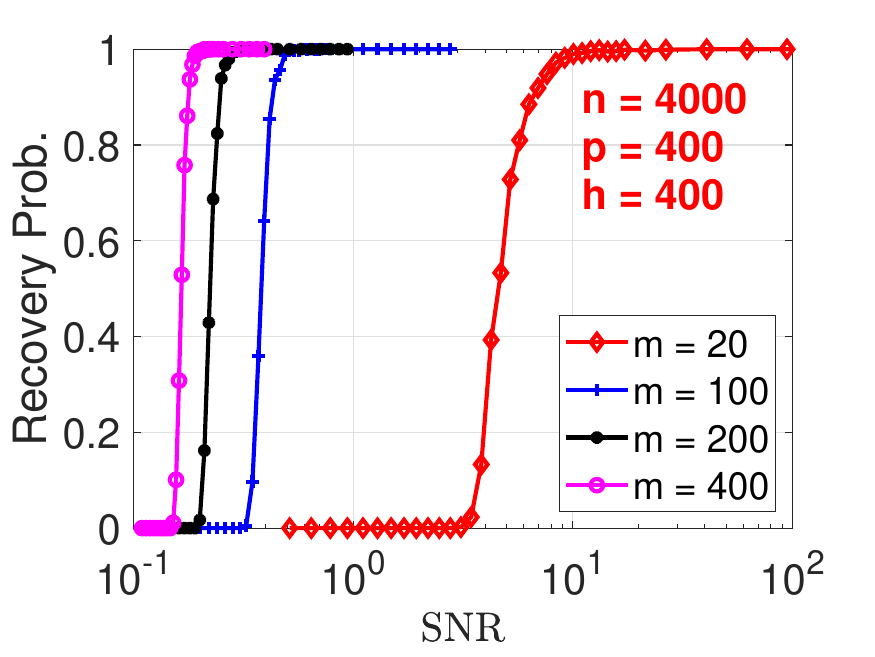}
\includegraphics[width=2.2in]{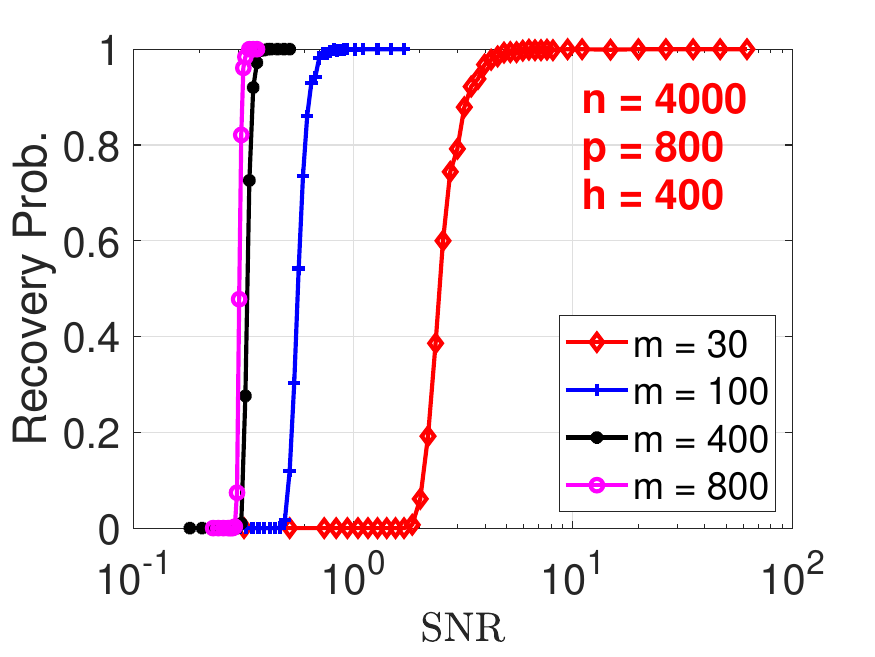}	
\includegraphics[width=2.2in]{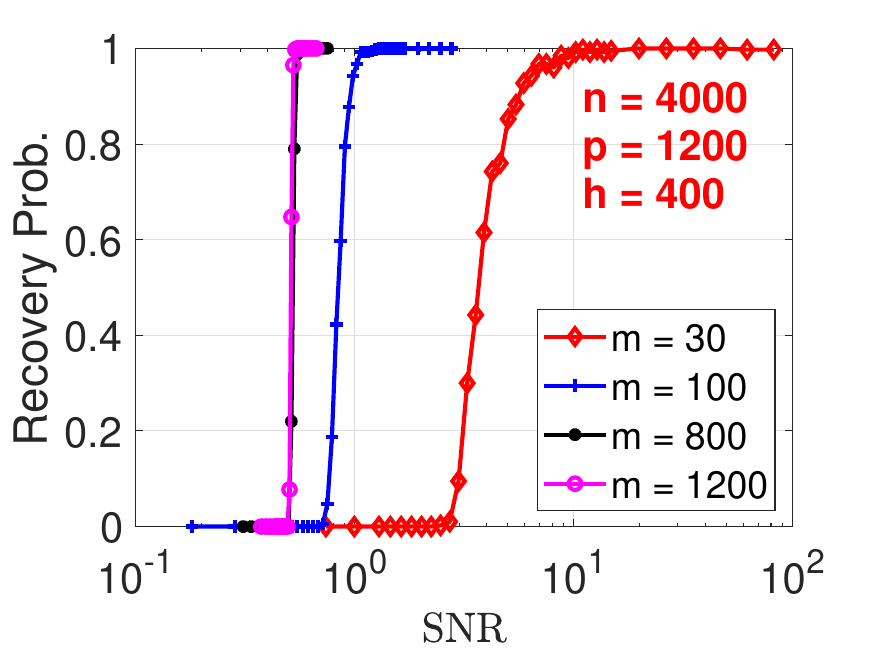}	
}

\mbox{
\includegraphics[width=2.2in]{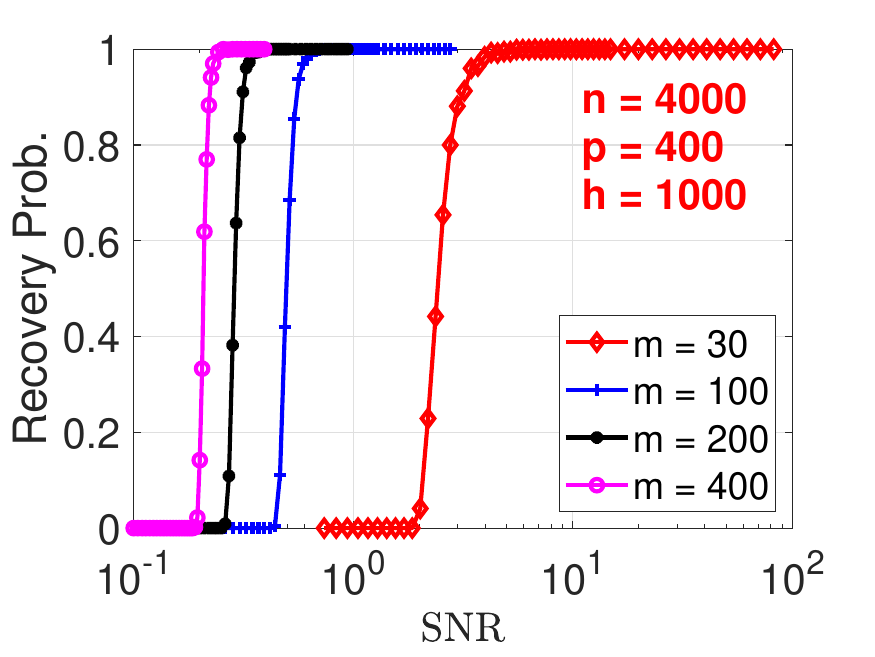}
\includegraphics[width=2.2in]{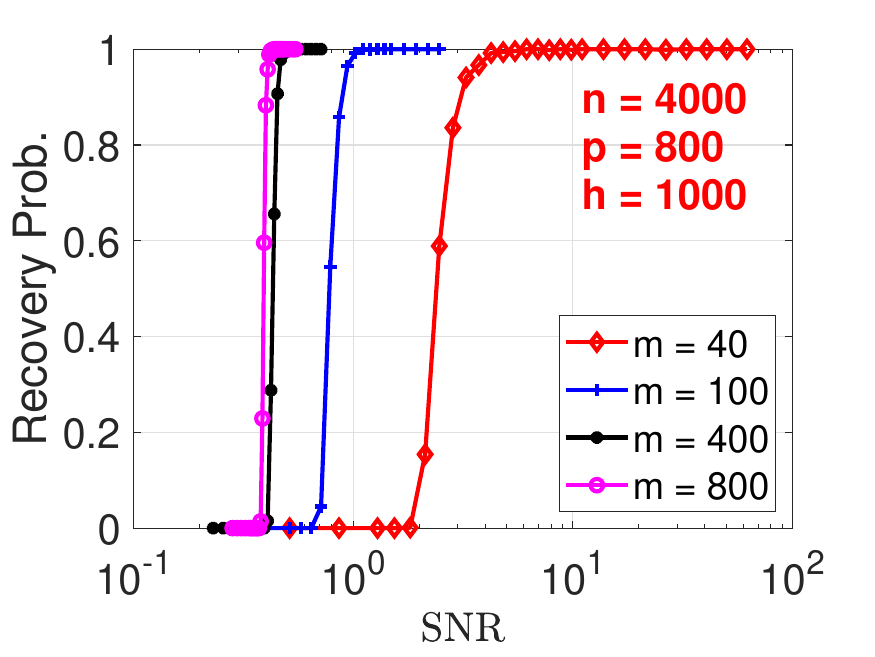}	
\includegraphics[width=2.2in]{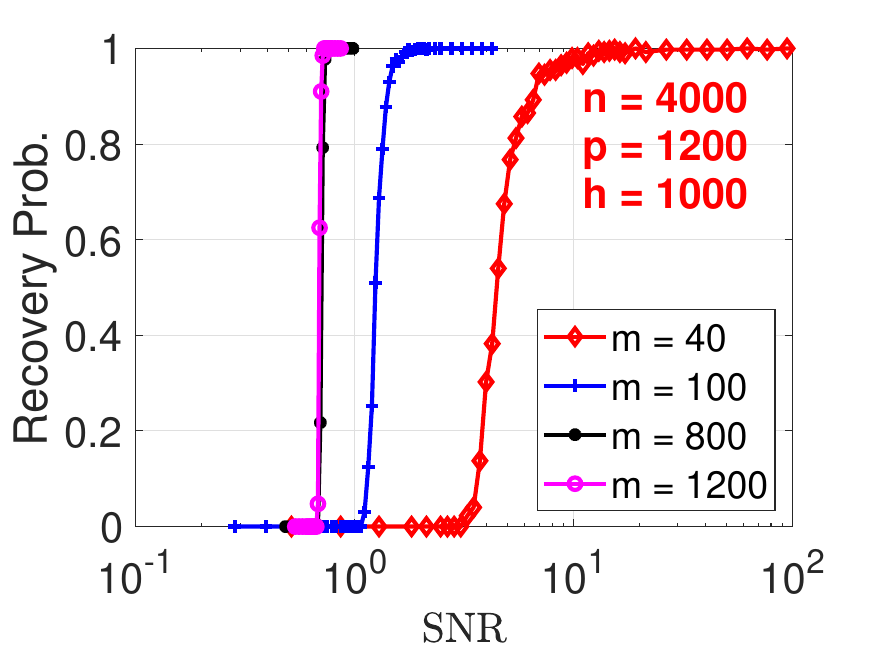}	
}

\mbox{
\includegraphics[width=2.2in]{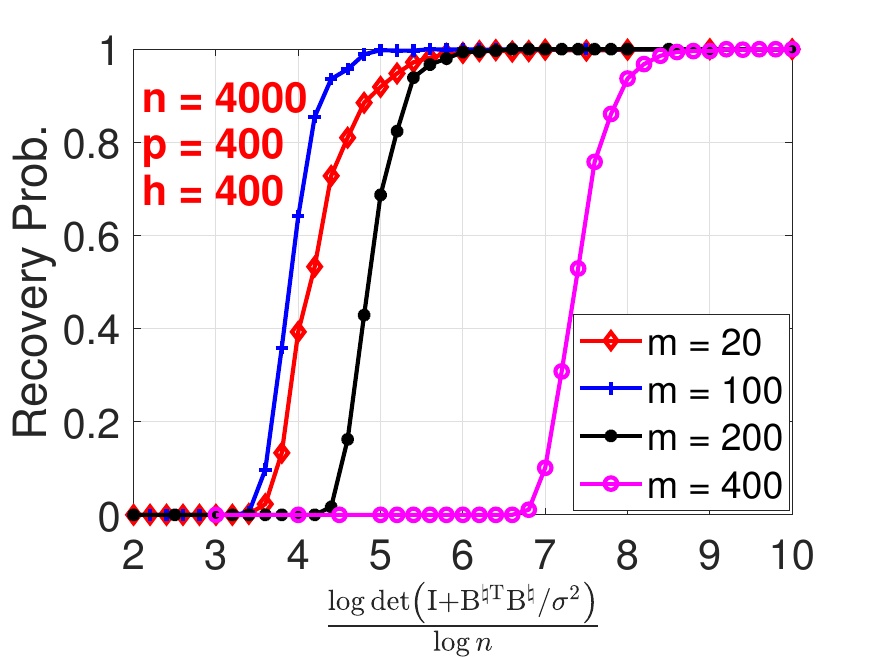}
\includegraphics[width=2.2in]{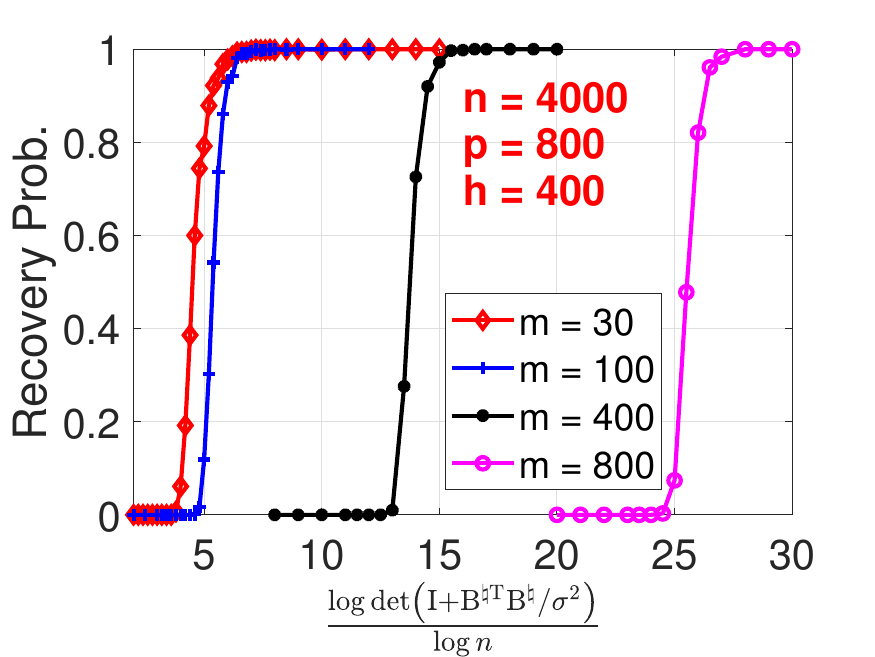}	
\includegraphics[width=2.2in]{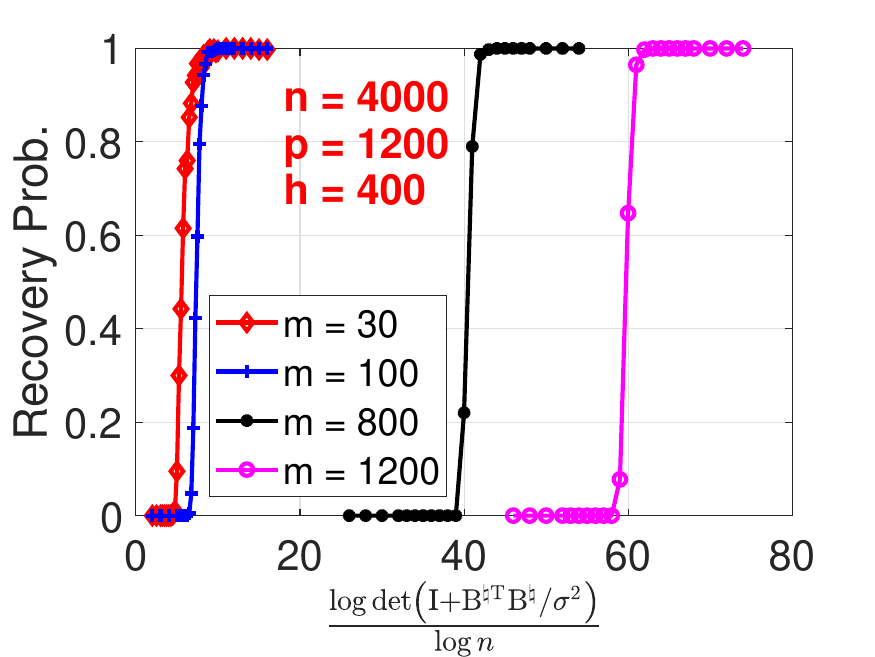}	
}

\mbox{
\includegraphics[width=2.2in]{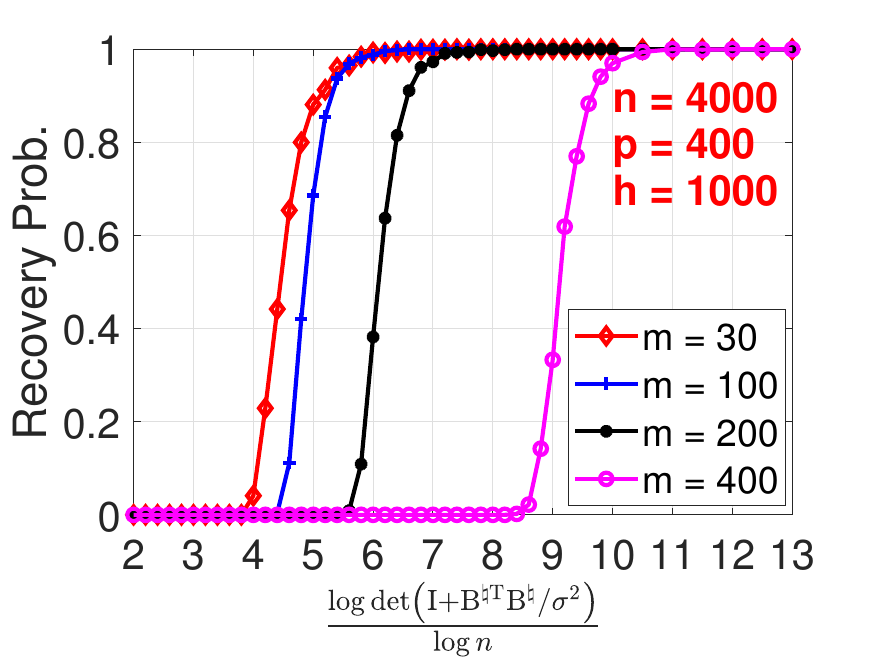}
\includegraphics[width=2.2in]{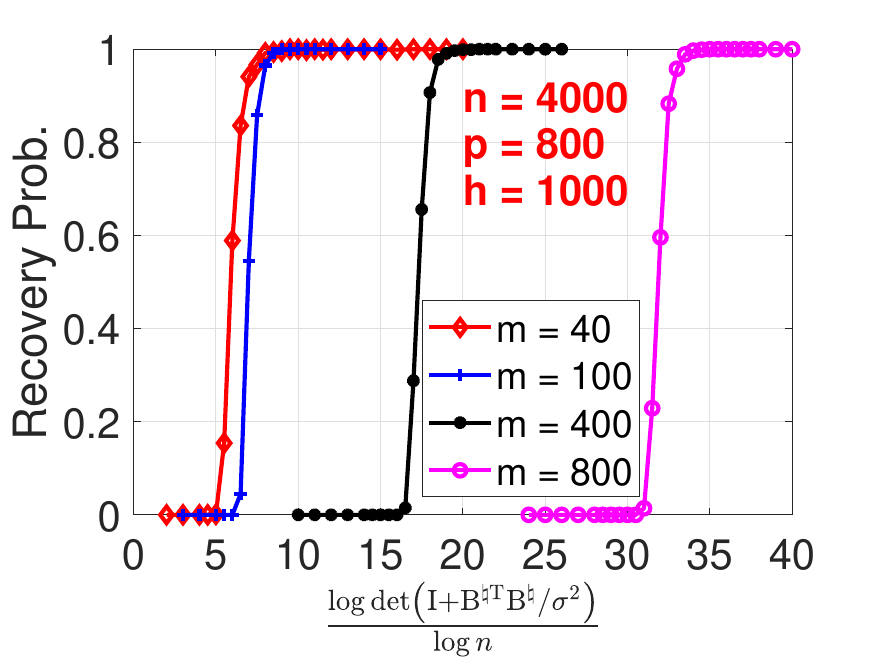}	
\includegraphics[width=2.2in]{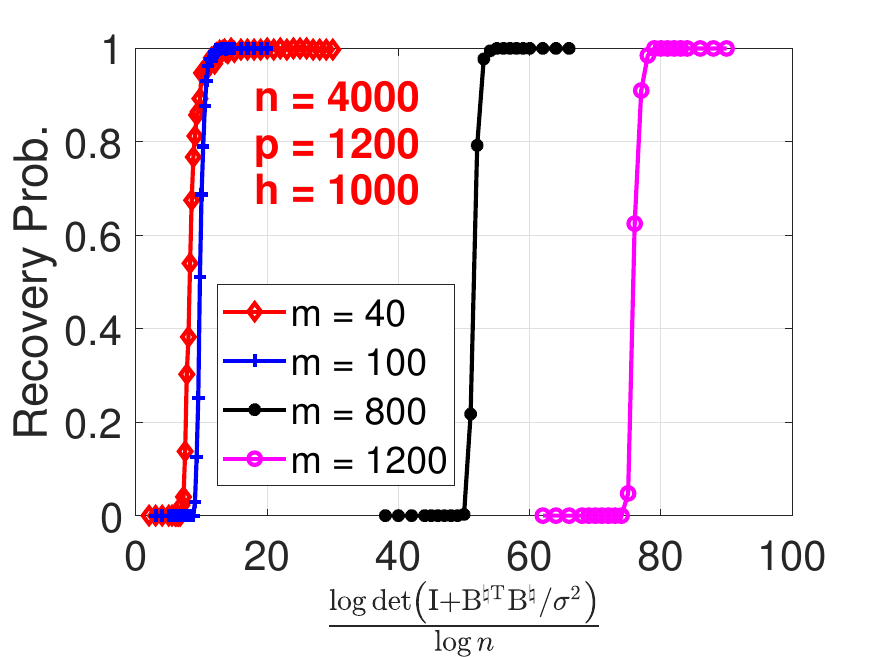}	
}

\end{center}
\caption{Simulated recovery rate $\Prob(\wh{\bPi} = \bPitrue)$, with $n = 4000$, $p\in\{400,800,1200\}$, and $h\in\{400,1000\}$,
versus $\snr$ (\textbf{upper panels}) and $\frac{\logdet\bracket{\bI + \bB^{\natural\rmt}\bBtrue/\sigma^2}}{\log n}$ (\textbf{lower panels}).}
\label{fig:recover_n4000}
\end{figure}

\newpage\clearpage

\begin{figure}[b!]

\vspace{-0.1in}

\centering
\mbox{
\includegraphics[width=2.2in]{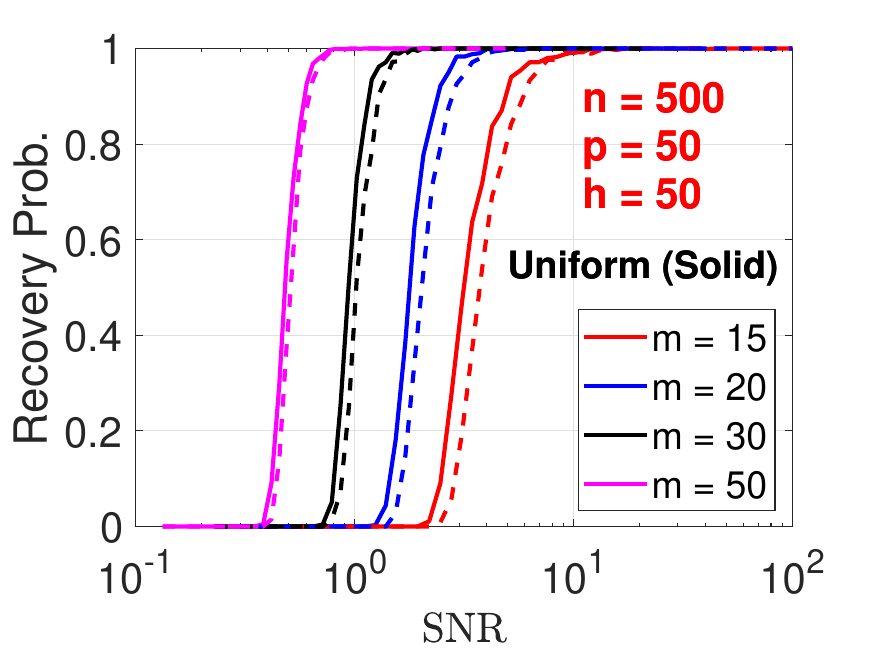}
\includegraphics[width=2.2in]{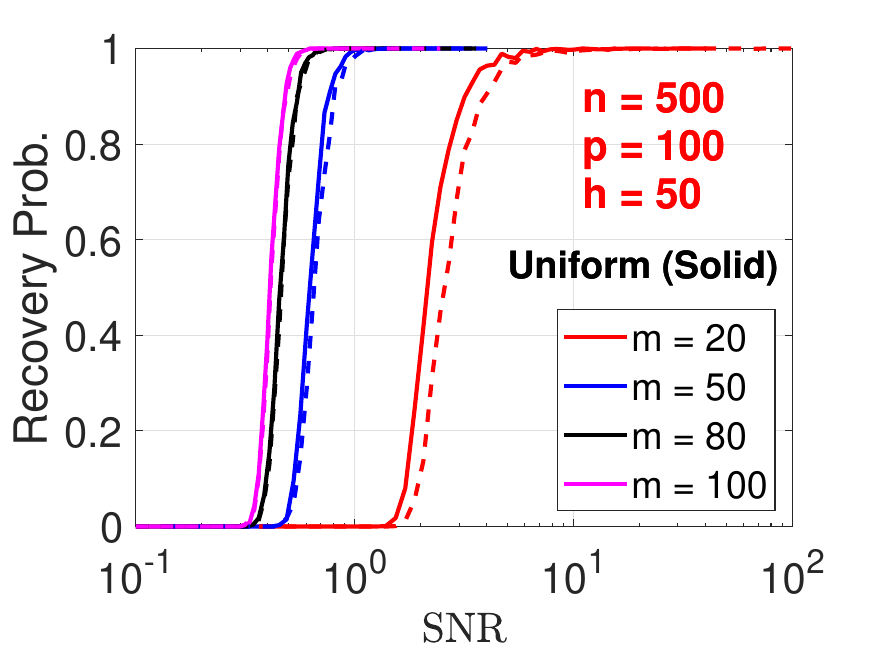}	
\includegraphics[width=2.2in]{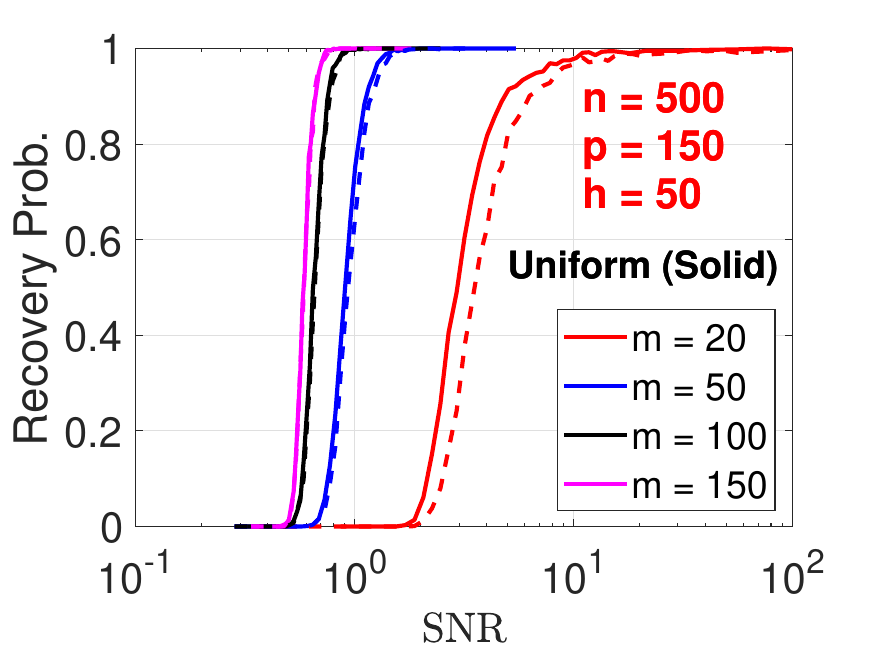}	
}

\mbox{
\includegraphics[width=2.2in]{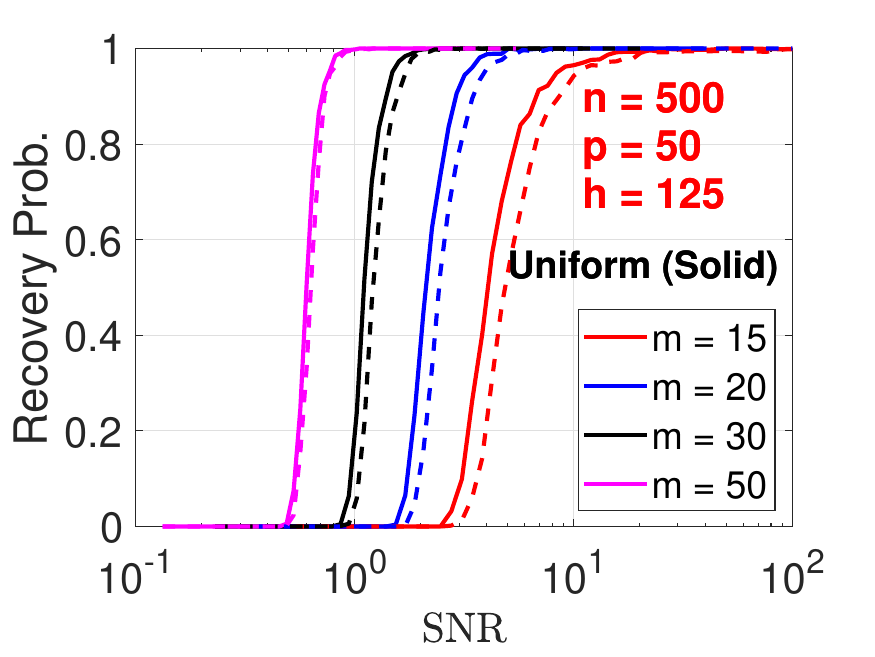}
\includegraphics[width=2.2in]{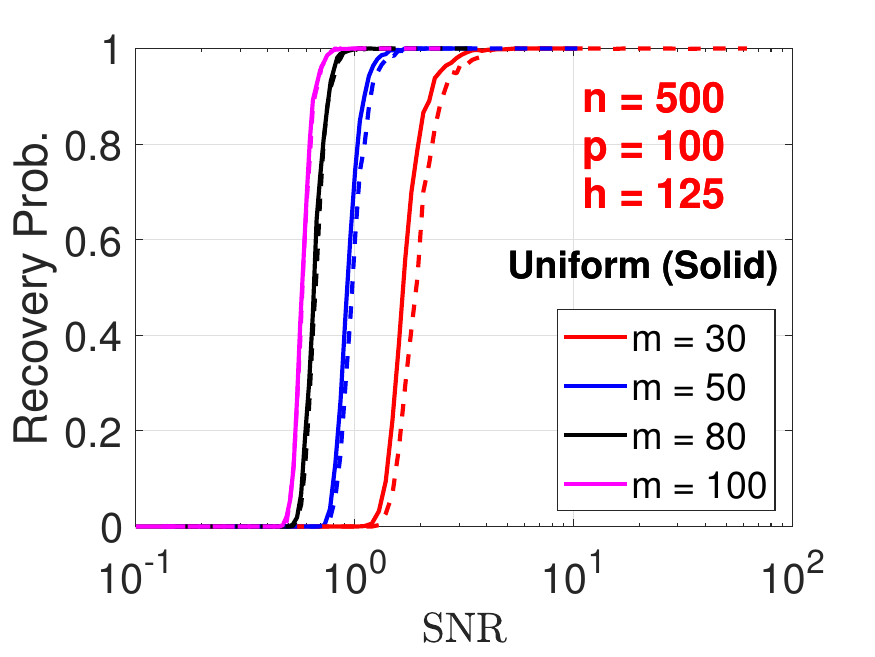}	
\includegraphics[width=2.2in]{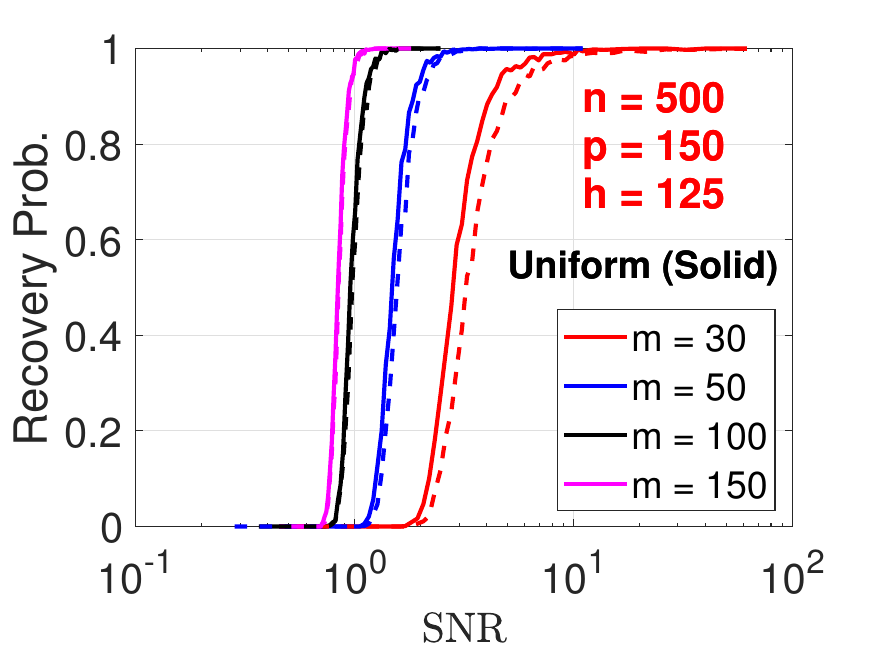}	
}

\vspace{-0.15in}

\caption{Simulated recovery rate $\Prob(\wh{\bPi} = \bPitrue)$ with
$n = 500$, $p \in \set{50, 100, 150}$, $h \in \set{50, 125}$, and $\bX_{ij}\iid \unif[-1, 1]$, with respect to $\snr$.}
\label{fig:unif_n500}
\end{figure}

\begin{figure}[b!]
\centering
\mbox{
\includegraphics[width=2.2in]{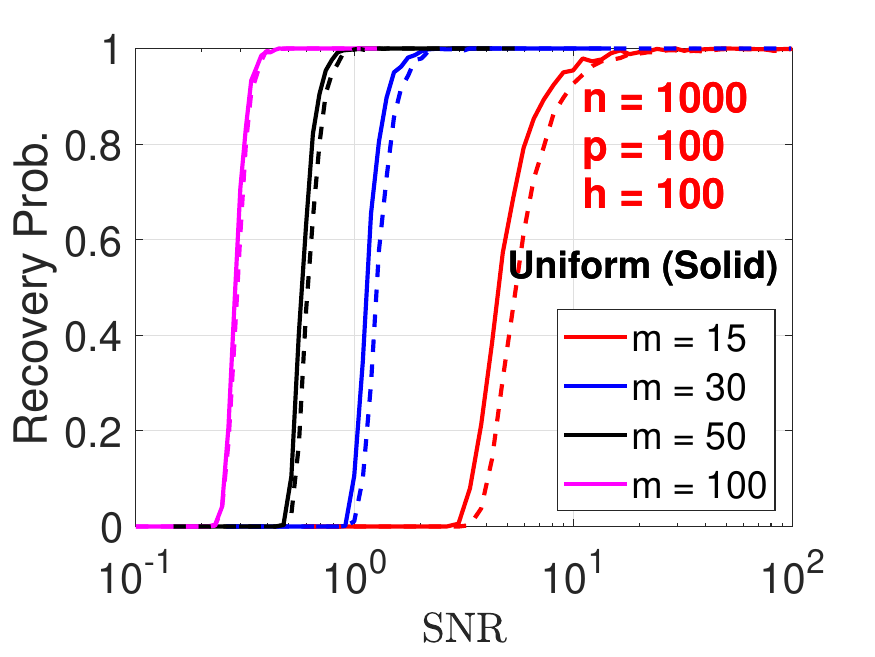}
\includegraphics[width=2.2in]{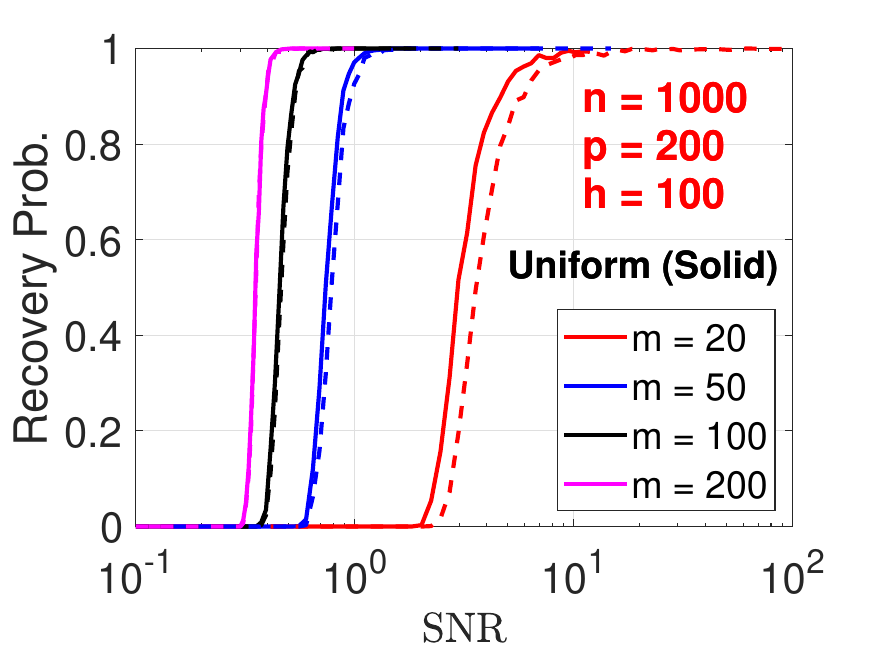}	
\includegraphics[width=2.2in]{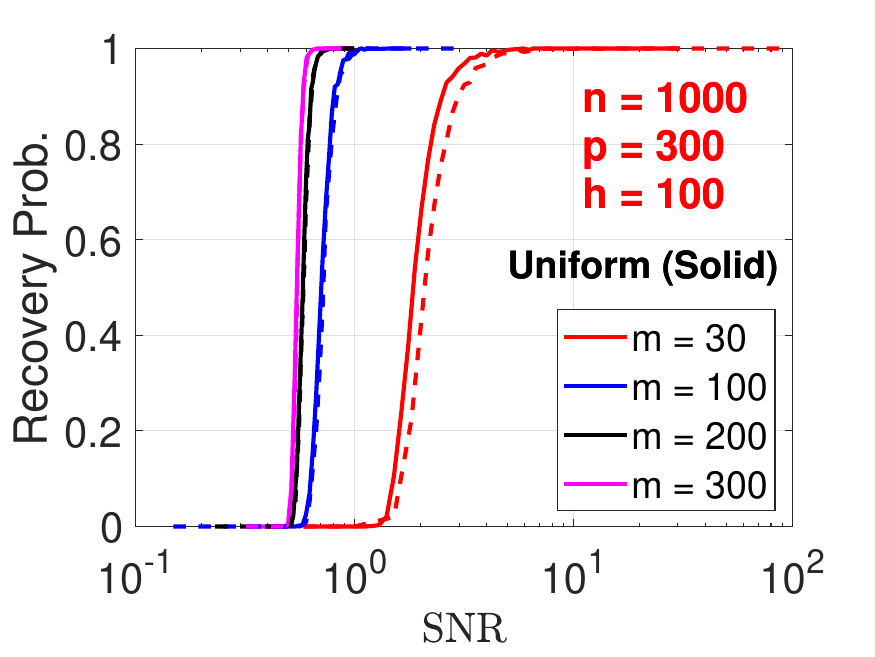}	
}

\mbox{
\includegraphics[width=2.2in]{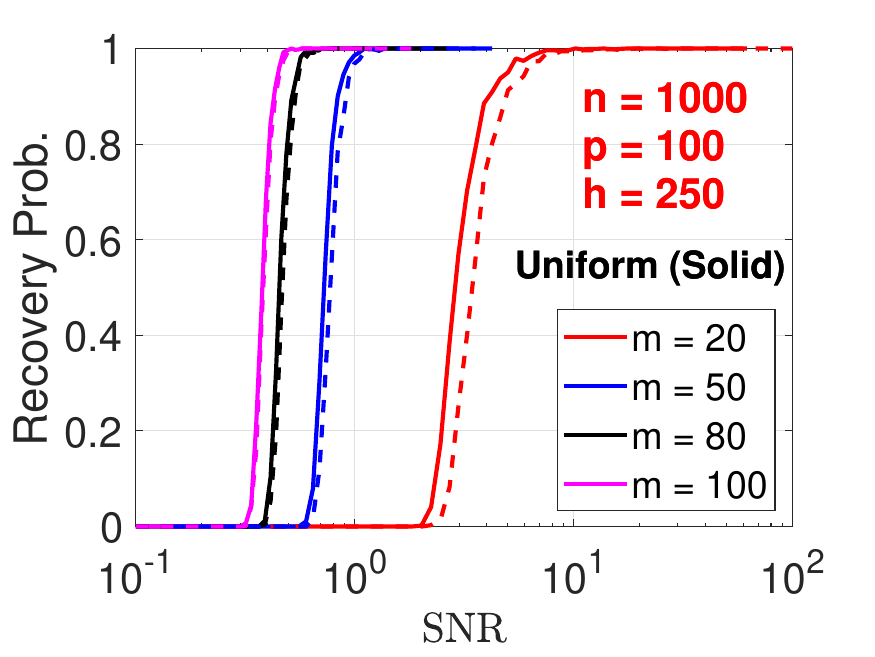}
\includegraphics[width=2.2in]{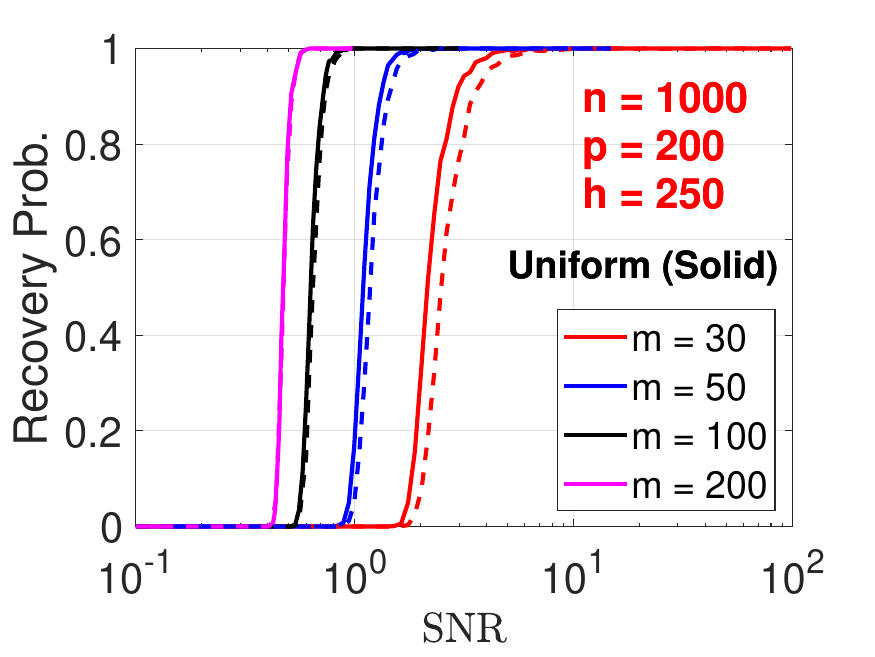}	
\includegraphics[width=2.2in]{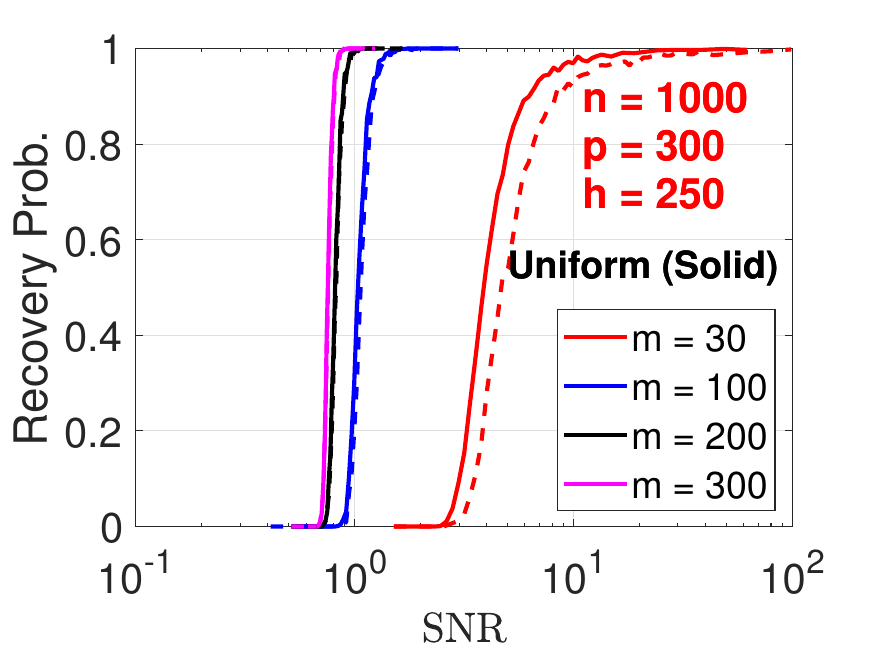}	
}

\vspace{-0.15in}

\caption{Simulated recovery rate $\Prob(\wh{\bPi} = \bPitrue)$ with
$n = 1000$, $p \in \set{100, 200, 300}$, $h \in \set{100, 250}$, and $\bX_{ij}\iid \unif[-1, 1]$, with respect to $\snr$.}
\label{fig:unif_n1000}\vspace{-0.1in}
\end{figure}

\subsection{Uniform distribution}

This subsection investigates the recovery performance
when $\bX_{ij}\iid\unif[-1, 1]$. Similar to above, we fix
$\nfrac{p}{n}$ to be $\set{0.1, 0.2, 0.3}$ and $\nfrac{h}{n}$
to be $\set{0.1, 0.25}$. Since the performance does not change drastically with increasing $n$,
we limit sample number $n$ to be $\set{500, 1000}$ and put the corresponding results
in Figure~\ref{fig:unif_n500} and Figure~\ref{fig:unif_n1000}, respectively.
For an easy comparison, we also put the
results with Gaussian distributed $\bX$ in
Figure~\ref{fig:unif_n500} and Figure~\ref{fig:unif_n1000}, which are
shown in dashed curves.
One noteworthy fact is that uniform distribution seems to
be more friendly for permutation recovery. A plausible
reason is that uniform distribution has a smaller variance,
namely, $\Var(\cdot) = (\nfrac{1}{2})\int_{-1}^1 z^2 dz = \nfrac{1}{3} < 1$.

Apart from that, we see its behavior is very similar to
that of the Gaussian distribution. For the conciseness of presentation, we omit the plot of recovery rate in terms
of $\frac{\logdet(\bI + \nfrac{\bB^{\natural \rmt}\bBtrue}{\sigma^2})}{\log n}$.

\subsection{Rademacher distribution}

This subsection considers the Rademacher distribution, i.e.,
$\Prob(\bX_{ij} = \pm 1) = \nfrac{1}{2}$. To begin with, we briefly comment on the
$\srank{\bBtrue}$ requirement. In Theorem~\ref{thm:multi_snr_require_log_concave}, we claim that log-concavity is required to remove the requirement
$\srank{\bBtrue}\gsim \log n$.
Here we give an example to illustrate its necessity.
For details see
Figure~\ref{fig:perform_compare}.
In the left panel, we assume $\bX_{ij} \iid \unif[-1,1]$, which is a log-concave
sub-gaussian RV; while in the right panel, we assume $\bX_{ij} \iid \textup{Rademacher}$, which is sub-gaussian but not log-concave.
When the stable rank $\srank{\bBtrue}$ is not sufficiently large, to put more specifically, $\srank{\bBtrue} \gsim \log n$,
we conclude the correct permutation cannot be reconstructed for Rademacher distribution even when $\snr$ is sufficiently large.

\begin{figure}[!h]

\begin{center}
\mbox{
\includegraphics[width=3in]{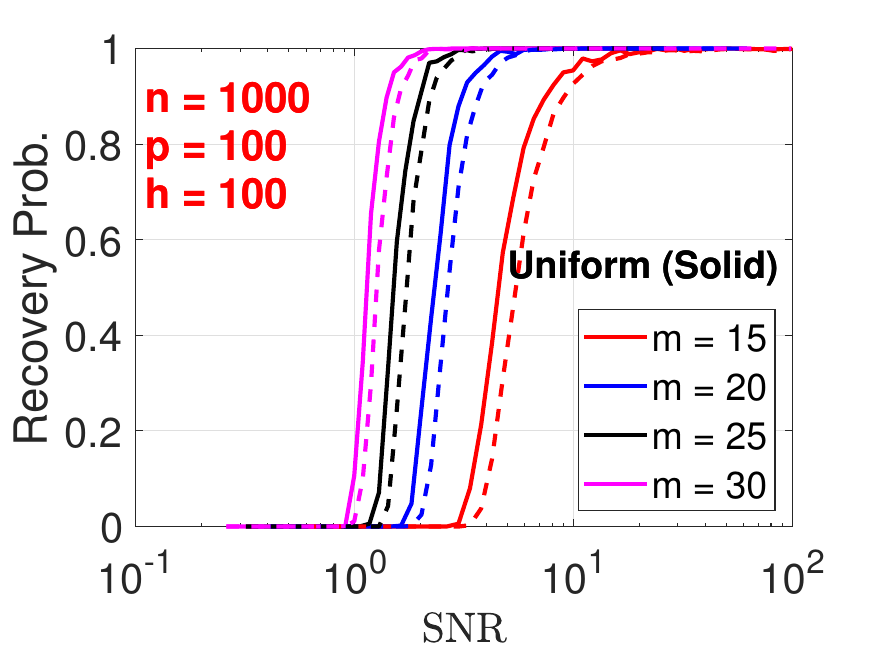}
\includegraphics[width=3in]{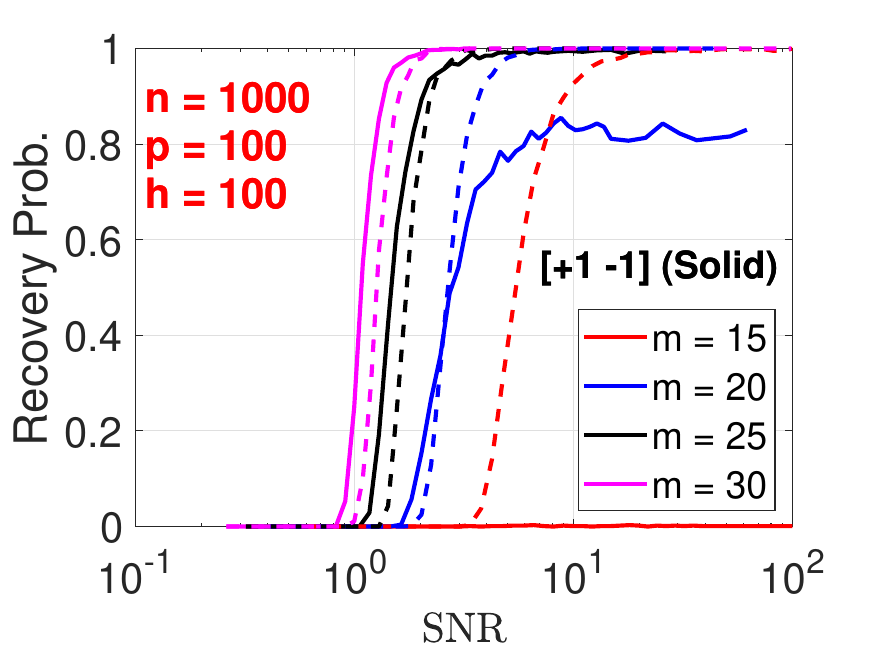}	
}

\end{center}

\vspace{-0.15in}

\caption{Comparison of permutation recovery performance
between \textbf{Uniform distribution among $[-1,1]$}
(\textbf{left panel})
and \textbf{Rademacher distribution} (\textbf{right panel}).
The dashed line corresponds to the
performance when $\bX_{ij}\iid \normdist(0, 1)$.}
\label{fig:perform_compare}
\end{figure}

In addition, we investigate the impact of $\nfrac{p}{n}$
and number of permuted rows $h$. Similar to above,
we fix $\nfrac{p}{n}$ to be $\set{0.1, 0.2, 0.3}$ and
$\nfrac{h}{n}$ to be $\set{0.1, 0.25}$. In Figure~\ref{fig:benoulli_n500},
we fix $n$ to be $500$; while in Figure~\ref{fig:benoulli_n1000}, we increase
it to be $1000$.
One noticeable phenomenon is that
the permutation recovery performance under
the Rademacher setting is very similar to
that of Gaussian distribution and Uniform distribution
provided that $\srank{\bBtrue}$ is large enough. This again
suggests that \textbf{log-concavity assumption is unavoidable
if the large stable rank requirement is to be mitigated}.

\begin{figure}[!h]

\begin{center}
\mbox{
\includegraphics[width=2.2in]{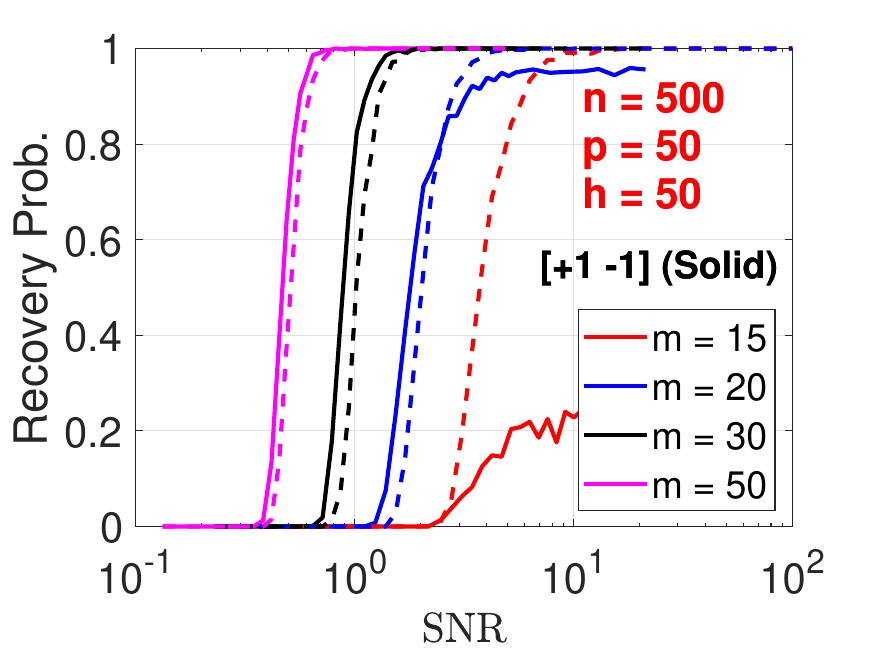}
\includegraphics[width=2.2in]{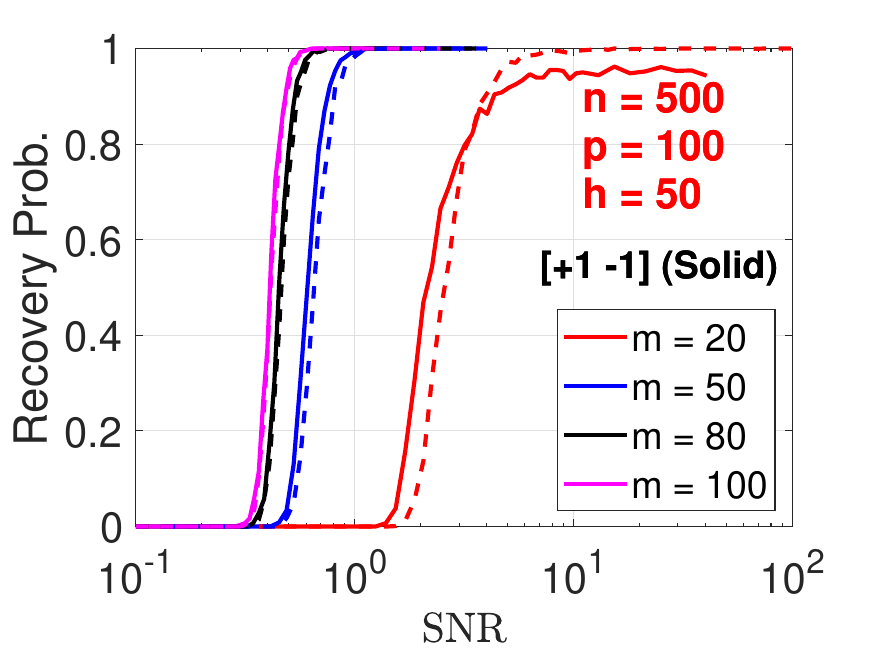}	
\includegraphics[width=2.2in]{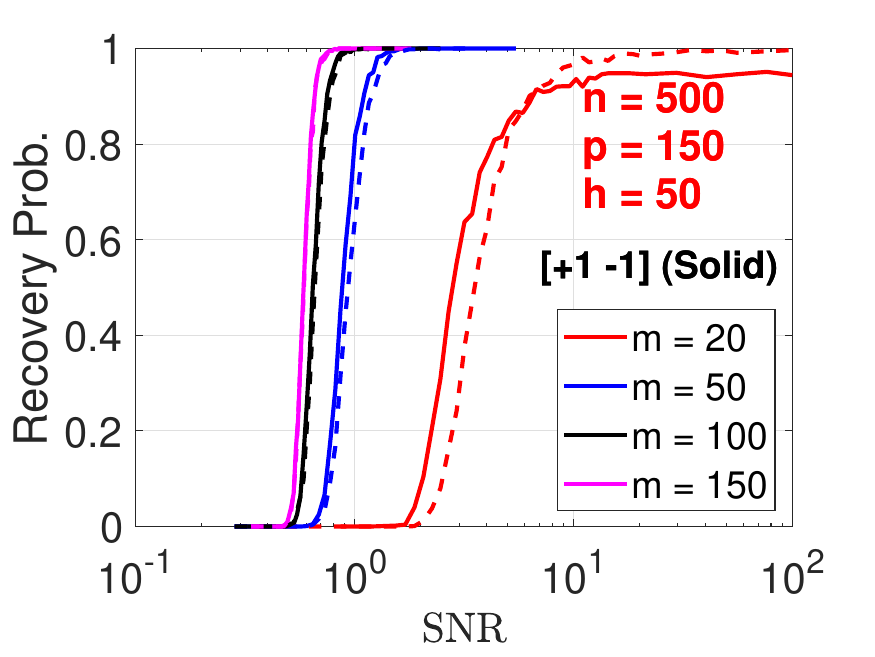}	
}

\mbox{
\includegraphics[width=2.2in]{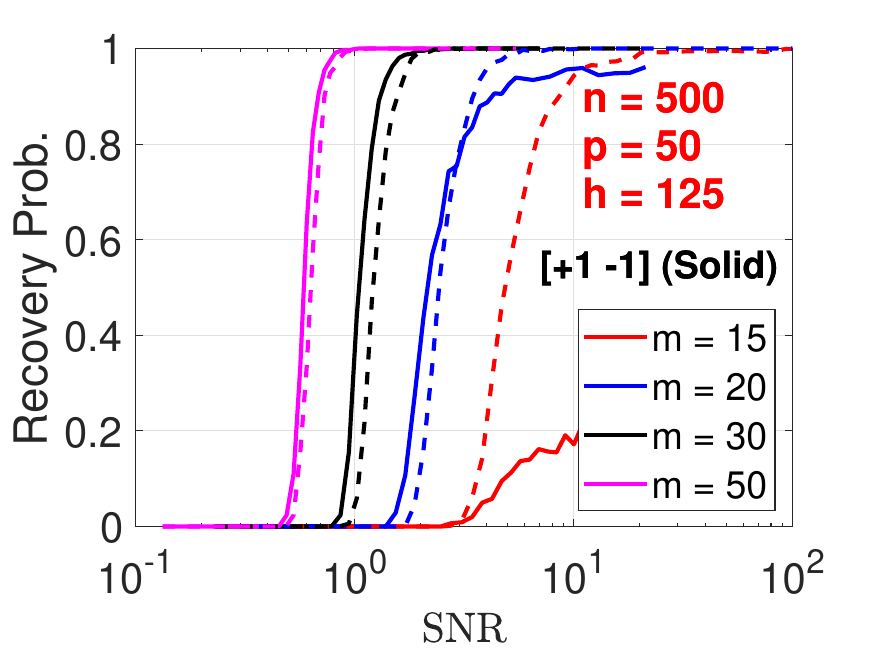}
\includegraphics[width=2.2in]{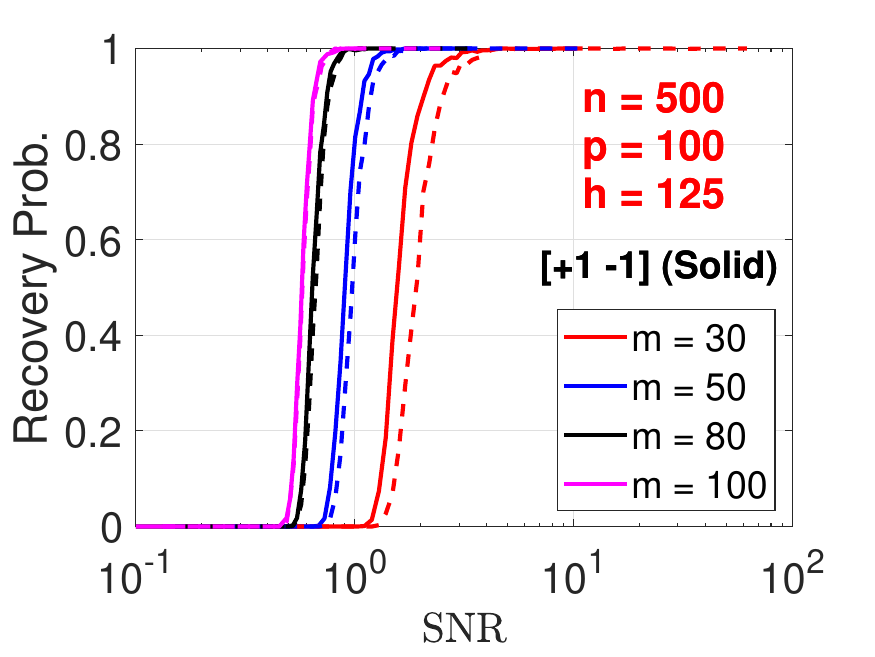}	
\includegraphics[width=2.2in]{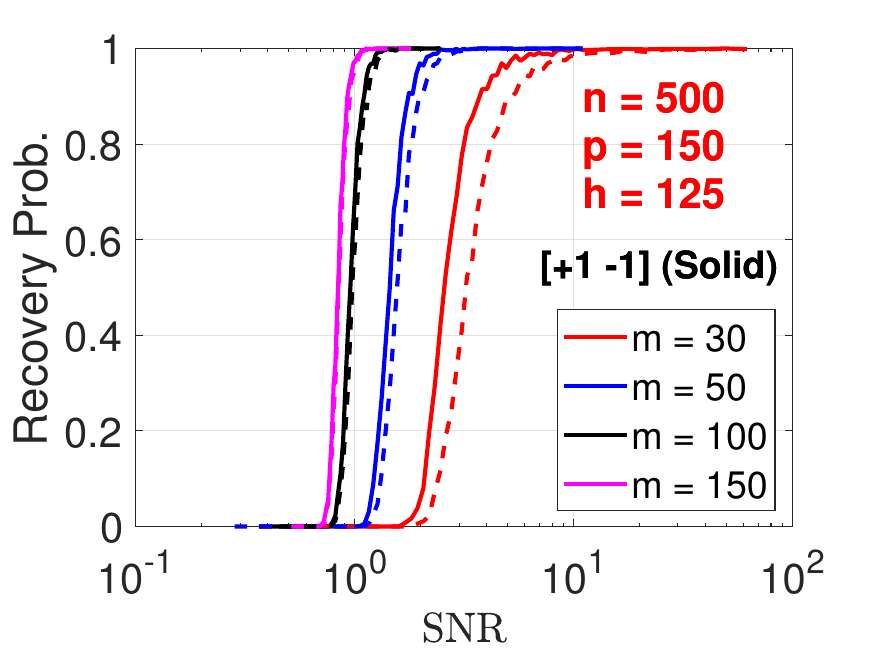}	
}

\end{center}
\caption{Simulated recovery rate $\Prob(\wh{\bPi} = \bPitrue)$ with
$n = 500$, $p\in \set{50, 100, 150}$, $h\in \set{50, 125}$, and $\bX_{ij}\iid \textup{Rademacher}$, with respect to $\snr$. The dashed line corresponds to the
performance when $\bX_{ij}\iid \normdist(0, 1)$.}
\label{fig:benoulli_n500}
\end{figure}

\begin{figure}[!h]
\begin{center}

\mbox{
\includegraphics[width=2.2in]{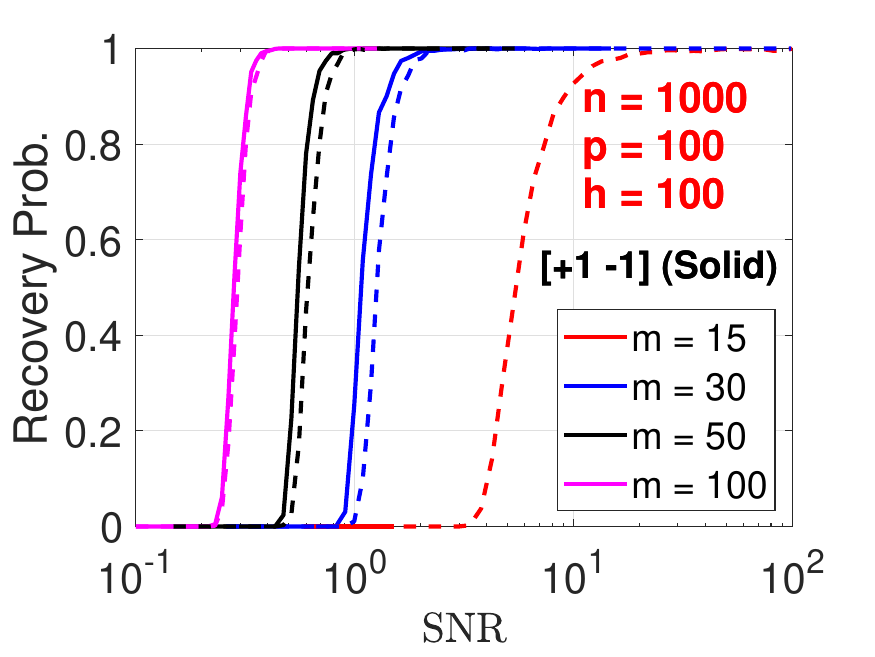}
\includegraphics[width=2.2in]{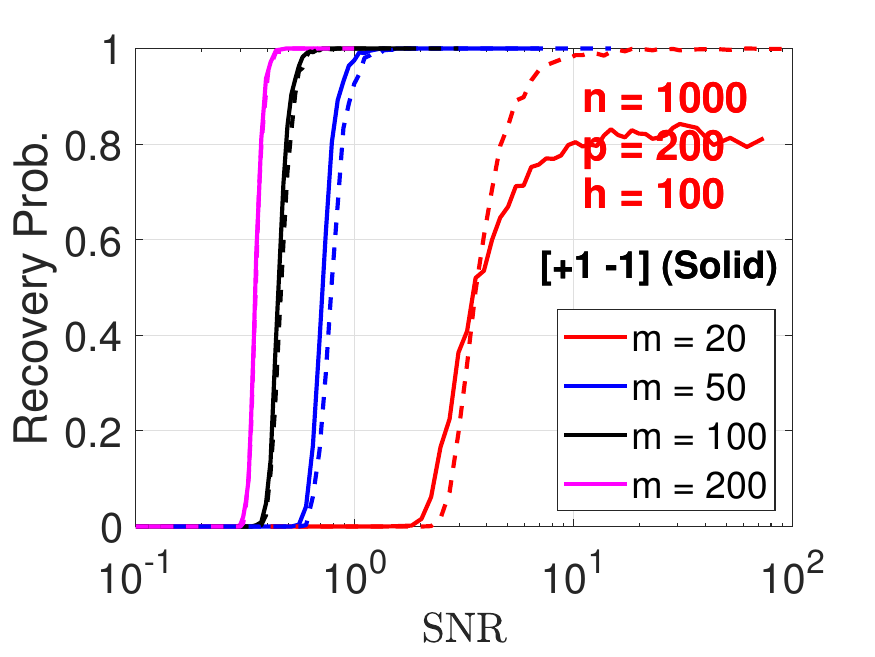}	
\includegraphics[width=2.2in]{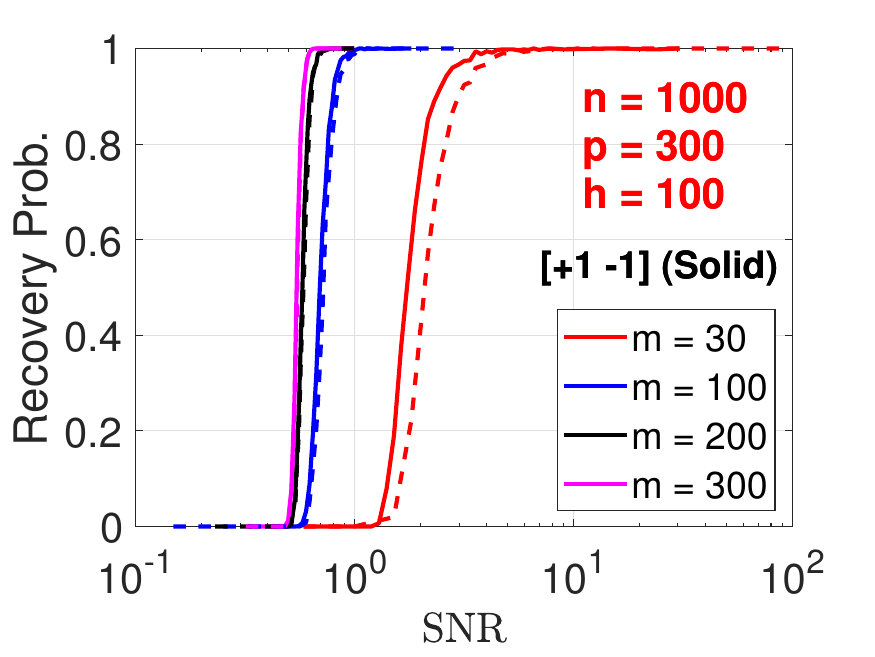}	
}

\mbox{
\includegraphics[width=2.2in]{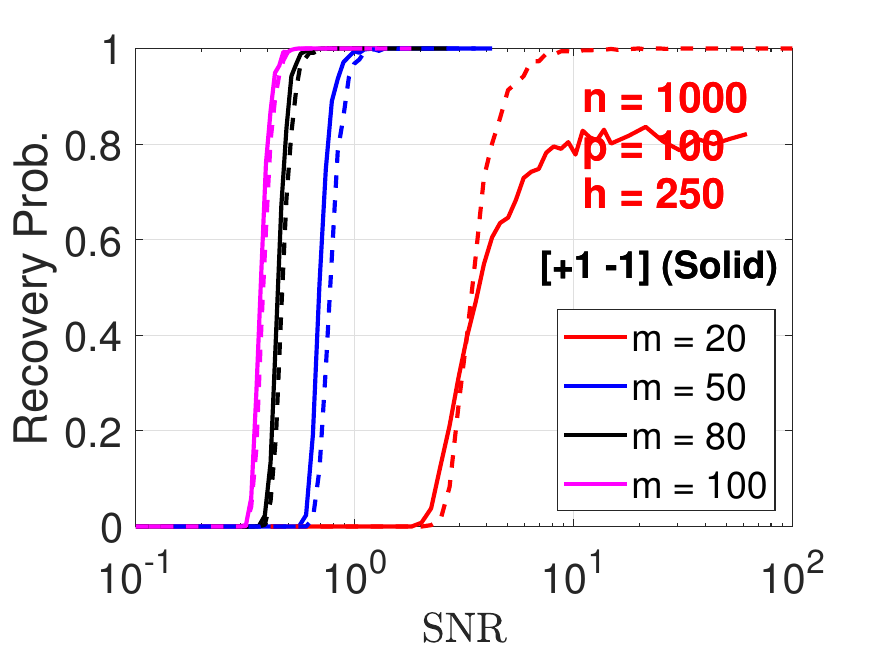}
\includegraphics[width=2.2in]{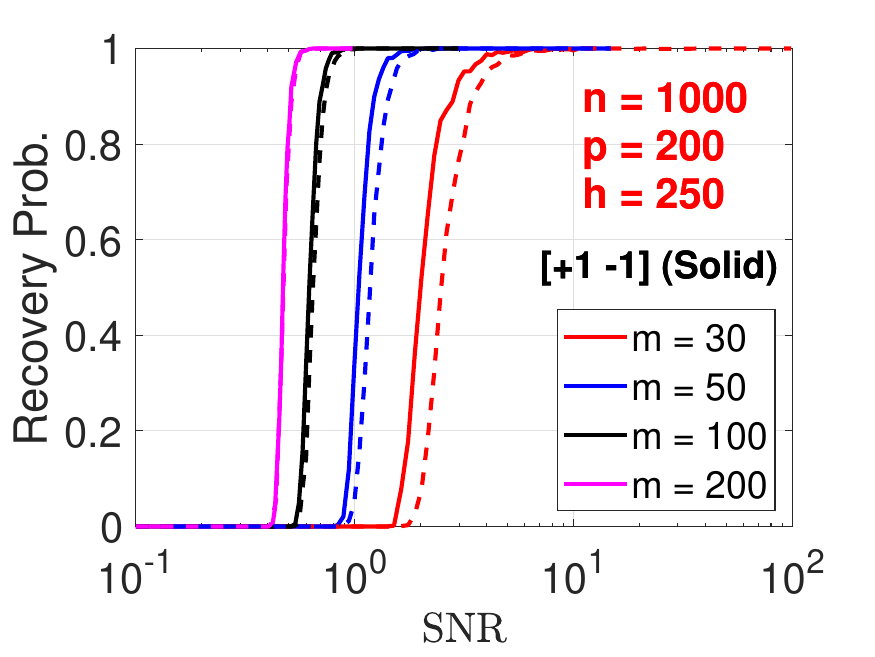}	
\includegraphics[width=2.2in]{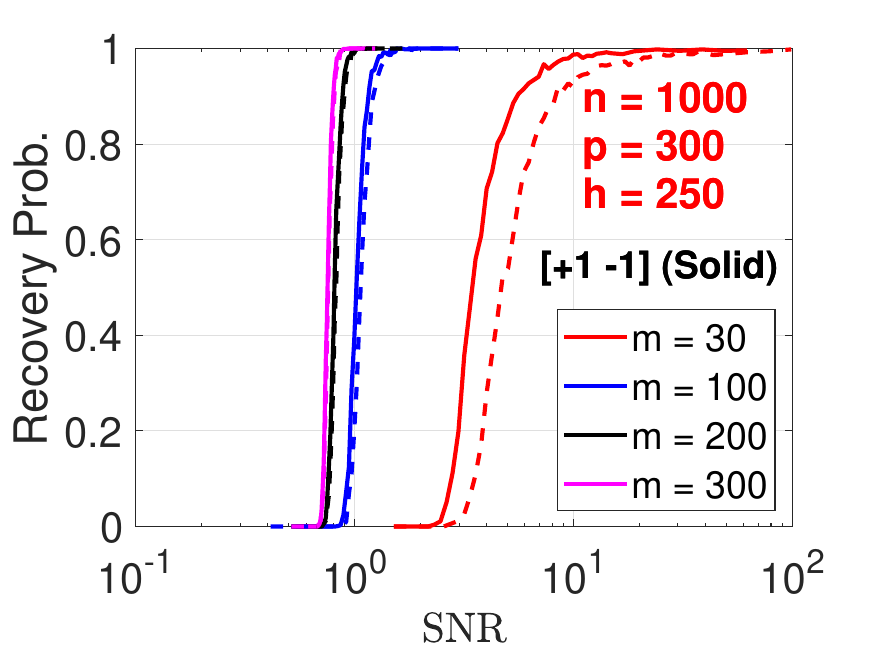}	
}

\end{center}
\caption{Simulated recovery rate $\Prob(\wh{\bPi} = \bPitrue)$ with
$n = 1000$, $p\in \set{100, 200, 300}$, $h\in \set{100, 250}$, and $\bX_{ij}\iid \textup{Rademacher}$, with respect to $\snr$. The dashed line corresponds to the
performance when $\bX_{ij}\iid \normdist(0, 1)$.}
\label{fig:benoulli_n1000}
\end{figure}

\newpage\clearpage

\section{Concluding Remarks}\label{sec:conclusion}

This paper considers unlabeled linear regression
and proposes a one-step estimator,
which is optimal in both
computational and statistical perspectives.
First, we show
our estimator has the same computational complexity
as that of oracle estimators.
Afterwards, we separately investigate its statistical properties
under the single observation model ($m = 1$) and multiple observations
model ($m > 1$).
For the single observation model, our estimator can obtain the
ground truth permutation matrix when the signal is with length one, i.e.,
$p = 1$. Moreover, its $\snr$ requirement matches the minimax lower bound, to put
more specifically, $\log \snr \gsim \log n$.
For the multiple observations model,
our estimator exhibits much richer behavior, which can be broadly
divided into three regimes.
In the Easy Regime ($\srank{\bBtrue}\gg \log^4 n$), our estimator is minimax optimal
and can obtain the correct permutation
once $\snr \geq \Omega(1)$; in the Medium Regime ($\log n \ll \srank{\bBtrue}\ll \log^4 n$),
our estimator experiences a performance loss up to
a multiplicative polynomial of $\log n$ as it
requires $\snr  \gsim \log n$ rather than $\snr \gsim \Omega(1)$
for correct permutation recovery;
and in the \textbf{hard regime} ($1\ll \srank{\bBtrue} \ll \log n$), our estimator
needs $\snr \gsim (\log n)^{c_0} \cdot n^{\nfrac{c_1}{\srank{\bBtrue}}}$, which
matches the statistical limits when
$\srank{\bBtrue} \ll \frac{\log n}{\log \log n}$ and
experiences a loss up to a multiplicative polynomial
of $\log n$ when $\frac{\log n}{\log\log n} \ll \srank{\bBtrue} \ll \log n$.
Comparing with the single observation model, we conclude that
high diversity, namely, large $\srank{\bBtrue}$, can
greatly facilitate the permutation recovery.
Ultimately, we provide numerical experiments to support all claims thereof.

\vspace{0.5in}

\bibliographystyle{plainnat}
\bibliography{refs_scholar}

\newpage
\pagebreak
\appendix
\section{Appendix for Section~\ref{sec:single_obser}}
\label{sec:single_proof_appendix}

This section focuses on the special case where
$p=1$ and  $m =1$.
Consider $\bx\in \RR^n$ to be an
isotropic log-concave random vector
with zero mean and $\norm{\bx}{\psi_2} \lsim 1$.
Additionally, we assume that the permutation matrix $\bPitrue$
satisfies $\dH(\bI, \bPitrue) = h \leq \nfrac{n}{4}$.

\subsection{Notations: single observation model}
First, we define the following events $\calE_i$, $(1\leq i \leq 4)$,
\begin{align*}
\calE_{1} \defequal & \set{
\langle \bx, \bPitrue\bx \rangle \geq c_0 n };
\\
\calE_2 \defequal &
\set{
\bw^{\rmt}\bx\bx^{\rmt}(\bPitrue - \bPi ) \bw \lsim \sigma^2 n^2\log n,~\forall~\bPi \neq \bPitrue}; \\
\calE_3 \defequal &
\set{
\abs{\langle \bw, \bx\rangle  \langle \bPitrue \bx, (\bPitrue - \bPi)^{\rmt}\bx\rangle   + \
\langle \bw, (\bPitrue - \bPi)^{\rmt}\bx\rangle
\langle \bPitrue \bx, \bx \rangle
}\lsim \sigma n^2 \sqrt{\log n},~\forall~\bPi \neq \bPitrue
};\\
\calE_{4}\defequal &
\set{
\norm{\bx - \bPi\bx}{2}^2 \gsim n^{-20},~~\forall~\bPi\neq \bPitrue}.
\end{align*}

\subsection{Proof of Theorem~\ref{thm:warm_up}}
\label{subsec:appendix_proof_thm_warm_up}

\begin{proof}
Under the assumptions in Theorem~\ref{thm:warm_up},
we will prove that the ground truth permutation matrix $\bPitrue$ will be returned
with high probability with Algorithm~\ref{alg:one_step_estim}.
Ahead of the technical details, we give an outline of the
proof strategy, which can be divided into two stages.
\begin{itemize}
\item
\textbf{Stage I.}
We show the intersection of events
$\bigcap_{i=1}^5 \calE_i$ is a subset of the
event $\{\bPi^{\opt} = \bPitrue\}$ under the assumptions
of Theorem~\ref{thm:warm_up}.

\item
\textbf{Stage II}.
With the union bound,
we can upper-bound the error probability
$\Prob(\bPi^{\opt }\neq \bPitrue)$
by $\sum_{\ell =1}^4 \Prob(\br{\calE}_{\ell})$.
The proof is then completed by studying
probability $\Prob(\br{\calE}_{(\cdot)})$, respectively.
\end{itemize}
Technical details come as follows.

\vspace{0.1in}\noindent
\textbf{Stage I.}
We begin the proof by showing
$\bigcap_{\ell =1}^4 \calE_{\ell} \subseteq \{\bPiopt = \bPitrue\}$ under the assumptions of Theorem~\ref{thm:warm_up}.
First, we expand $\langle \bPi, \by \by^{\rmt}\bx\bx^{\rmt} \rangle$
as
\[
\langle \bPi, \by \by^{\rmt}\bx\bx^{\rmt} \rangle =
(\betatrue)^2 \calT_2(\bPi) +  \betatrue \calT_1(\bPi) +
\calT_0(\bPi),
\]
where $\bPi$ is an arbitrary permutation matrix, and
$\calT_{i}(\bPi)$ $(0\leq i \leq 2)$ are defined as
\[
\calT_2(\bPi) &=
\langle \bPitrue\bx, \bPi \bx\rangle \langle \bPitrue\bx, \bx \rangle; \\
\calT_1(\bPi) &=
\la \bw, \bx\ra \langle \bPitrue \bx, \bPi^{\rmt}\bx\rangle  + \
\langle \bw, \bPi^{\rmt}\bx\rangle  \langle \bPitrue \bx, \bx \rangle; \\
\calT_{0}(\bPi) &=
\langle  \bw, \bPi^{\rmt}\bx \rangle \langle \bx, \bw\rangle.
\]
Then we can express the difference
$\langle \bPitrue, \by \by^{\rmt}\bx\bx^{\rmt}\rangle -
\langle \bPi, \by \by^{\rmt}\bx\bx^{\rmt}\rangle$ as
\begin{align*}
& \langle \bPitrue, \by \by^{\rmt}\bx\bx^{\rmt}\rangle -
\langle \bPi, \by \by^{\rmt}\bx\bx^{\rmt}\rangle \\
=~& \
(\betatrue)^2 (\calT_2(\bPitrue) -  \calT_2(\bPi)) +
\betatrue (\calT_1(\bPitrue) -\calT_1(\bPi)) +
\calT_0(\bPitrue) - \calT_0(\bPi)
\\
\stackrel{\cirone}{=}~& \frac{(\betatrue)^2}{2}\langle \bPitrue\bx, \bx\rangle
\big\|\bx - \bPi^{\natural\rmt}\bPi\bx \big\|_{2}^2 +
\betatrue(\calT_1(\bPitrue)-\calT_1(\bPi) ) +
\calT_0(\bPitrue) - \calT_0(\bPi)
\end{align*}
where in $\cirone$ we exploit the fact such that
$\bx$ is a vector and hence
\[
\norm{\bx}{2}^2 - \langle \bPitrue\bx, \bPi\bx\rangle =
\bracket{\norm{\bx}{2}^2 +
\big\|\bPi^{\natural\rmt}\bPi \bx\big\|_{2}^2
-2\langle \bPitrue\bx, \bPi\bx\rangle }/2= \
\big\|\bx - \bPi^{\natural\rmt}\bPi \bx\big\|_{2}^2/2.
\]
Note that this relation generally does not hold except
for the vector case. Conditioning on
$\calE_1 \bigcap \calE_4$,
we have the relation
$\calT_2(\bPitrue) -\calT_2(\bPi) \gsim \ell \cdot n^{-19}$
and hence
\[
\langle \bPitrue, \by \by^{\rmt}\bx\bx^{\rmt}\rangle -
\langle \bPi, \by \by^{\rmt}\bx\bx^{\rmt}\rangle
\geq~& \frac{(\betatrue)^2}{2} c_0n n^{-20} - \
\betatrue |\calT_1(\bPitrue) -\calT_1(\bPi)|- \
|\calT_0(\bPitrue) - \calT_0(\bPi)| \\
\stackrel{\cirtwo}{\gsim} ~&  \frac{c_0(\betatrue)^2}{n^{19}} -
c_1\betatrue\sigma n^2 \sqrt{\log n} - c_2\sigma^2 n^2 \log n,
\]
where
in $\cirtwo$ we condition on $\calE_2 \bigcap \calE_3$.
Under the assumptions in Theorem~\ref{thm:warm_up}, i.e.,  $\log \snr \gsim \log n$, we have
\[
\langle \bPitrue, \by \by^{\rmt}\bx\bx^{\rmt}\rangle >
\langle \bPi, \by \by^{\rmt}\bx\bx^{\rmt}\rangle,
~~\forall~~\bPi\neq \bPitrue,
\]
which suggests the correct permutation $\bPitrue$ can always be obtained and
we can upper bound the error probability
$\Prob(\bPiopt \neq \bPitrue)$ by
$\sum_{\ell = 1}^4 \Prob(\br{\calE}_{\ell})$.

\vspace{0.1in}
\noindent
\textbf{Stage II.}
We upper bound the error probability $\Prob(\bPiopt \neq \bPitrue)$ by $\sum_{\ell=1}^4 \Prob(\br{\calE}_{\ell})$
and complete the proof with
Lemma~\ref{lemma:xpix_proximity},
Lemma~\ref{lemma:event2},
Lemma~\ref{lemma:event3}, and Lemma~\ref{lemma:event4}.
\end{proof}

\subsection{Supporting lemmas for Theorem~\ref{thm:warm_up}}
\label{subsec:appendix_proof_warm_up_support_lemma}

This subsection collects the supporting lemmas
for the proof of Theorem~\ref{thm:warm_up}.

\begin{lemma}
\label{lemma:xpix_proximity}
We have $\Prob\bracket{\calE_1}\geq 1-  c_0 e^{-c_1 n}$ when
$n$ is sufficiently large, where $c_0, c_1 > 0$ are some positive constants. 	
\end{lemma}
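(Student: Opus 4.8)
The event $\calE_1$ concerns only the fixed ground-truth permutation $\bPitrue$, so no union bound over permutations is needed here; the task is a one-sided concentration bound for the quadratic form $\la \bx, \bPitrue\bx\ra = \bx^{\rmt} \bPitrue\bx$. The plan is first to pin down its mean and then to control its lower deviation. By isotropy, $\Expc\,\bx^{\rmt}\bPitrue\bx = \mathrm{tr}(\bPitrue\,\Expc\,\bx\bx^{\rmt}) = \mathrm{tr}(\bPitrue)$, which equals the number of fixed points of $\pi^{\natural}$, namely $n-h$. The hypothesis $h = \dH(\bI,\bPitrue)\le n/4$ then forces the mean to be a constant fraction of $n$, i.e. $\Expc\,\bx^{\rmt}\bPitrue\bx = n-h \ge 3n/4$. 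Hence it suffices to show that $\bx^{\rmt}\bPitrue\bx$ does not drop below, say, $n/2$ except with exponentially small probability.

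For the concentration step I would symmetrize, writing $\bx^{\rmt}\bPitrue\bx = \bx^{\rmt}\bM\bx$ with $\bM = \tfrac12(\bPitrue+\bPitrue^{\rmt})$, and then invoke the Hanson--Wright inequality for vectors with independent sub-gaussian coordinates (\citet{vershynin2018high}), which applies since $\bx$ here has i.i.d. centered unit-variance entries with $\norm{\bx}{\psi_2}\lsim 1$. The only inputs are the two matrix norms of $\bM$: a permutation matrix satisfies $\fnorm{\bPitrue}^2 = n$ and $\opnorm{\bPitrue}=1$, so $\fnorm{\bM}^2 \lsim n$ and $\opnorm{\bM}\le 1$. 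Taking the deviation $t = n/4$, the Hanson--Wright exponent is $-c\min\big(t^2/\fnorm{\bM}^2,\ t/\opnorm{\bM}\big) \lsim -\min(n/16,\,n/4) \lsim -n$, which yields $\Prob\big(\bx^{\rmt}\bPitrue\bx < n/2\big) \le \Prob\big(\abs{\bx^{\rmt}\bPitrue\bx - (n-h)} > n/4\big) \le c_0 e^{-c_1 n}$. This establishes $\calE_1$ with, e.g., constant $1/2$ in place of the generic $c_0$ inside the event.

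As a more transparent alternative that makes the role of the hypotheses visible, I would split $\la\bx,\bPitrue\bx\ra = \sum_{i:\pi^{\natural}(i)=i} x_i^2 + \sum_{i:\pi^{\natural}(i)\ne i} x_i x_{\pi^{\natural}(i)}$ into its contribution over the fixed set $F$ (of size $n-h\ge 3n/4$) and the moving set $S$ (of size $h\le n/4$, with $\pi^{\natural}(S)=S$). By AM--GM and $\pi^{\natural}(S)=S$, the off-diagonal part obeys $\big|\sum_{i\in S} x_i x_{\pi^{\natural}(i)}\big|\le \sum_{i\in S} x_i^2$. Bernstein's inequality for the sub-exponential variables $x_i^2-1$ places $\sum_{i\in F}x_i^2$ within a $\delta$-fraction of its mean $n-h$ and $\sum_{i\in S}x_i^2$ within a $\delta$-fraction of its mean $h$; taking $\delta$ a small constant and using $h\le n/4$ gives $\la\bx,\bPitrue\bx\ra \ge (1-\delta)(n-h) - (1+\delta)h = (n-2h) - \delta n \ge n/2 - \delta n \gsim n$ on an event of probability $1 - c_0 e^{-c_1 n}$. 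This route uses only that the coordinates are i.i.d. with sub-exponential squares, which sub-gaussianity (a fortiori log-concavity) supplies.

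The computation is routine; the single point to handle with care is ensuring the rate is genuinely $e^{-c_1 n}$ rather than a weaker $e^{-c_1\sqrt n}$. This is guaranteed precisely because the coordinates of $\bx$ are independent, so the quadratic form enjoys both a favorable $\fnorm{\cdot}$-to-$\opnorm{\cdot}$ ratio in Hanson--Wright and an $O(n)$-term sub-exponential sum in Bernstein. Were one to work with a merely jointly log-concave (dependent) vector, the available thin-shell bounds would only deliver $e^{-c\sqrt n}$, so the independence built into the model is what makes the stated exponent attainable.
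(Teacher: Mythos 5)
Your proof is correct, but it takes a genuinely different route from the paper's. The paper splits $\la\bx,\bPitrue\bx\ra$ into the cross term $\sum_{i\le h} x_i x_{\pi^{\natural}(i)}$ over the displaced indices and the diagonal term $\sum_{i>h} x_i^2$ over the fixed ones; the diagonal term is handled by $\chi^2$/Hanson--Wright concentration, while the cross term requires the three-way partition device of Lemma~8 in \citet{pananjady2018linear} to decouple $x_i$ from $x_{\pi^{\natural}(i)}$, after which each block is treated as an inner product $\la\bz_1,\bz_2\ra$ of independent sub-gaussian vectors by conditioning on $\norm{\bz_2}{2}$. Your main argument instead applies Hanson--Wright once to the full quadratic form, using only $\Expc\,\bx^{\rmt}\bPitrue\bx=\mathrm{tr}(\bPitrue)=n-h\ge 3n/4$, $\fnorm{\bPitrue}^2=n$, and $\opnorm{\bPitrue}=1$; this is shorter, avoids the partition lemma entirely, and cleanly delivers the exponent $e^{-cn}$ (the paper's cross-term computation, as written, actually only yields $\exp(-cn/\log n)$ because of the $\sqrt{h_\ell\log n}$ conditioning, so your route is if anything tighter). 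Your observation that independence of the coordinates is what makes the $e^{-cn}$ rate attainable is also apt and consistent with the model, since with $p=1$ the entries of $\bx$ are i.i.d. One small imprecision in your second, elementary alternative: for the moving set $S$ the claim that $\sum_{i\in S}x_i^2$ lies within a $\delta$-fraction of its mean $h$ fails with probability exponential in $n$ when $h$ is small (e.g.\ $h=O(1)$); you should instead take the additive deviation to be $\delta n$ for both sums, which still gives the lower bound $n-2h-2\delta n\gsim n$ under $h\le n/4$. This does not affect your primary Hanson--Wright argument, which stands as is.
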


\begin{proof}
W.l.o.g, we assume the first $h$ entries are permuted and
 expand the inner product $\langle \bx, \bPitrue \bx\rangle$ as
\[
\langle \bx, \bPitrue\bx\rangle = \sum_{i =1}^h x_i x_{\pi^{\natural}(i)}
+ \sum_{i= h+1}^n x_i^2.
\]
With union bound, we can upper bound
$\Prob\bracket{\langle \bx, \bPitrue\bx \rangle \leq c_0 n}$ as
\begin{align}
\label{eq:xpix_proximity_tot}
& \Prob\bracket{\langle \bx, \bPitrue\bx \rangle \leq c_0 n} \stackrel{\cirone}{\leq} \
\underbrace{\Prob\bracket{\sum_{i=1}^h x_i x_{\pi^{\natural}(i)} \leq  -\frac{c_0 n}{2} }}_{\defequal~\zeta_1} +
\underbrace{\Prob\bracket{\sum_{i=h+1}^n x_i^2 \leq \frac{3c_0 n}{2}}}_{\defequal~\zeta_2}.
\end{align}
We finish the proof by separately upper-bounding
$\zeta_1$ and $\zeta_2$ as $\zeta_1, \zeta_2 \lsim e^{-c_1 n}$.
The detailed computation comes as follows.

\vspace{0.1in}

\noindent
\textbf{Analysis of $\zeta_1$}.
The technical difficulties stem from the correlation
between the terms $x_ix_{\pi^{\natural}(i)}$.
According to Lemma 8 in~\citet{pananjady2018linear} (restated
as Lemma~\ref{lemma:permute_decomp}),
we can divide
the index set $\set{i: i\neq \pi^{\natural}(i)}$
into $3$ disjoint categories $\calI_{\ell}$, $(1\leq \ell \leq 3)$,
such that (i) indices $i$ and $\pi^{\natural}(i)$
belongs to different categories; (ii) the cardinality $h_{\ell}$ of $\calI_{\ell}$ satisfies $h_{\ell}\geq \lfloor h/5 \rfloor$.
Then we make the decomposition
\[
\sum_{i=1}^h x_{i}x_{\pi^{\natural}(i)}
= \sum_{\ell = 1}^3 \sum_{i\in \calI_{\ell}}x_i x_{\pi^{\natural}(i)},
\]
and again, using the union bound, we have
\begin{align}
\label{eq:xpix_proximity_zetaone}
\zeta_1 \leq~& \sum_{\ell=1}^3 \Prob\bigg(\sum_{i\in \calI_{\ell}}x_i x_{\pi^{\natural}(i)}\leq \frac{-c_0 n}{6}\bigg)
\leq  \sum_{\ell=1}^3 \Prob\bigg(\bigg|\sum_{i\in \calI_{\ell}}x_i x_{\pi^{\natural}(i)}\bigg|\geq c_1 n\bigg).
\end{align}
Recalling the fact that $i$ and $\pi^{\natural}(i)$ belong
to different categories, we have
$\sum_{i\in \calI_{\ell}}x_i x_{\pi^{\natural}(i)}$
to be identically distributed as $\langle \bz_1, \bz_2\rangle$,
where $\bz_1,\bz_2 \in \RR^{h_{\ell}}$ are i.i.d
sub-gaussian random vectors.
Then we obtain
\[
\Prob\bigg(\bigg|\sum_{i\in \calI_{\ell}}x_i x_{\pi^{\natural}(i)}\bigg|\geq c_1 n\bigg) \leq~&  \Prob\big(\norm{\bz_2}{2}\gsim \sqrt{h_{\ell}\log n}\big)
+ \Prob\big(|\langle \bz_1, \bz_2 \rangle|\geq c_1 n,~\norm{\bz_2}{2}\lsim \sqrt{h_{\ell}\log n}\big) \\
\stackrel{\cirtwo}{\leq}~& e^{-cn} + 2\exp\bracket{-\frac{c_1^{2}n^2}{h_{\ell}\log n}} \stackrel{\cirthree}{\lsim} e^{-cn},
\]
where $\cirtwo$ is due to the tail bound of sub-gaussian RVs
and $\cirthree$ is because $h_{\ell} \leq h \lsim n$.
Thus we show $\zeta_1 \lsim e^{-c n}$.

\vspace{0.1in}
\noindent
\textbf{Analysis of $\zeta_2$}.
Its analysis is a direct consequence of Lemma~\ref{lemma:chi_square}
as $\sum_{i=h+1}^n x_i^2$ can be viewed as a
$\chi^2$ RV with freedom $n-h$. In formulae:
\begin{align}
\label{eq:xpix_proximity_zetatwo}
\zeta_2\leq~&
\Prob\bracket{\abs{\sum_{i=h+1}^n x_i^2 - (n-h)} \geq \frac{n-h}{2}}
\stackrel{\cirfour}{\leq}
\exp\Bracket{-c_0\bracket{\frac{n-h}{\opnorm{\bI_{(n-h)\times (n-h)}}}\vcap \frac{(n-h)^2}{\fnorm{\bI_{(n-h)\times (n-h)}}^2}}} \notag \\
\stackrel{\cirfive}{=}~& e^{-cn},
\end{align}
where in $\cirfour$ we use the Hanson-Wright inequality (Theorem~$6.2.1$ in~\citet{vershynin2018high}) and
in $\cirfive$ we use the fact $h \lsim n$.
The proof is thus completed by combining
\eqref{eq:xpix_proximity_tot}, \eqref{eq:xpix_proximity_zetaone}
and \eqref{eq:xpix_proximity_zetatwo}.
\end{proof}

\begin{lemma}
\label{lemma:event2}
We have $\Prob\bracket{\calE_2} \geq 1 - c_0 e^{-c_1 n} - c_2
n^{-n}$, where $c_0, c_1$ and $c_2$ are some positive constants.
\end{lemma}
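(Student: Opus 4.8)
The plan is to exploit the bilinear structure of the quadratic form so that, after conditioning on $\bx$, the uniform control over all $\bPi\neq\bPitrue$ reduces to Gaussian tail estimates. The starting point is the factorization
\[
\bw^{\rmt}\bx\bx^{\rmt}(\bPitrue-\bPi)\bw
= \la \bw, \bx\ra\cdot\la \bx, (\bPitrue-\bPi)\bw\ra,
\]
which writes the quantity of interest as a product of two linear functionals of $\bw$. Conditioning on $\bx$, the noise $\bw\sim\normdist(\bZero,\sigma^2\bI)$ stays independent of $\bx$, so the first factor obeys $\la\bw,\bx\ra\sim\normdist(0,\sigma^2\norm{\bx}{2}^2)$ and the second obeys $\la\bx,(\bPitrue-\bPi)\bw\ra=\la(\bPitrue-\bPi)^{\rmt}\bx,\bw\ra\sim\normdist(0,\sigma^2\norm{(\bPitrue-\bPi)^{\rmt}\bx}{2}^2)$. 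Thus it suffices to bound each Gaussian factor and then multiply.

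First I would control the norm of $\bx$. Since $\bx$ is isotropic sub-gaussian with $\norm{\bx}{\psi_2}\lsim1$, a Hanson--Wright / $\chi^2$ concentration argument (as in the proof of Lemma~\ref{lemma:xpix_proximity}, cf. Lemma~\ref{lemma:chi_square}) gives $\norm{\bx}{2}^2\lsim n$ with probability at least $1-c_0e^{-c_1 n}$. The crucial observation is that permutation matrices preserve the Euclidean norm, so on this event
\[
\norm{(\bPitrue-\bPi)^{\rmt}\bx}{2}\leq\norm{\bPitrue^{\rmt}\bx}{2}+\norm{\bPi^{\rmt}\bx}{2}=2\norm{\bx}{2}\lsim\sqrt{n}
\]
\emph{simultaneously} for every $\bPi$, which sidesteps a separate variance estimate for each permutation.

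Next I would apply Gaussian tails. The first factor does not depend on $\bPi$, so one bound suffices: $|\la\bw,\bx\ra|\lsim\sigma\norm{\bx}{2}\sqrt{n}\lsim\sigma n$ with probability $\geq1-2e^{-cn}$. For the second factor I would union bound over the at most $n!$ permutations $\bPi\neq\bPitrue$: for fixed $\bPi$, $\Prob\bracket{|\la(\bPitrue-\bPi)^{\rmt}\bx,\bw\ra|\geq\sigma\norm{(\bPitrue-\bPi)^{\rmt}\bx}{2}\sqrt{2s}\mid\bx}\leq2e^{-s}$. Using Stirling's bound $\log n!\lsim n\log n$ and taking the deviation level $s\asymp n\log n$ large enough that $n!\cdot2e^{-s}\leq c_2 n^{-n}$, I obtain $\sup_{\bPi\neq\bPitrue}|\la\bx,(\bPitrue-\bPi)\bw\ra|\lsim\sigma\sqrt{n}\cdot\sqrt{n\log n}=\sigma n\sqrt{\log n}$ outside an event of probability at most $c_2 n^{-n}$.

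Finally, intersecting these events and multiplying the two factor bounds yields, uniformly in $\bPi\neq\bPitrue$,
\[
\bw^{\rmt}\bx\bx^{\rmt}(\bPitrue-\bPi)\bw\lsim \sigma n\cdot\sigma n\sqrt{\log n}=\sigma^2 n^2\sqrt{\log n}\lsim\sigma^2 n^2\log n,
\]
so $\calE_2$ holds. After integrating out $\bx$ (the norm event absorbs the conditioning), the failure probabilities add up to $c_0 e^{-c_1 n}$ from the norm and first-factor events and $c_2 n^{-n}$ from the permutation union bound, giving $\Prob(\calE_2)\geq1-c_0e^{-c_1n}-c_2n^{-n}$. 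The main obstacle is precisely this union bound over the $n!$ permutations: it forces the logarithmic factor (through $\log n!\asymp n\log n$) and produces the $n^{-n}$ residual, and one must calibrate the deviation level $s\asymp n\log n$ so that it dominates $n!$ while the resulting magnitude still fits inside $\sigma^2 n^2\log n$. That the common factor $\la\bw,\bx\ra$ and the uniform variance bound $\norm{(\bPitrue-\bPi)^{\rmt}\bx}{2}\lsim\sqrt n$ decouple cleanly is exactly what makes these two requirements compatible.
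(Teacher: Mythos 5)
Your proof is correct and follows the same skeleton as the paper's: truncate on the event $\|\bx\|_2\lesssim\sqrt{n}$ (costing $c_0e^{-c_1n}$), obtain a per-permutation tail bound of order $n^{-cn}$ conditional on $\bx$, and union bound over the $n!$ permutations via Stirling to get the residual $c_2n^{-n}$. The one genuine difference is the tool used for the fixed-permutation step: the paper applies the Hanson--Wright inequality to the quadratic form $\bw^{\rmt}\bM\bw$ with $\bM=\bx\bx^{\rmt}(\bPitrue-\bPi_0)$, exploiting that $\bM$ is rank one so $\fnorm{\bM}=\opnorm{\bM}\lesssim\|\bx\|_2^2$, whereas you exploit the same rank-one structure earlier by factoring the quadratic form into the product of two scalar Gaussian linear functionals $\la\bw,\bx\ra$ and $\la(\bPitrue-\bPi)^{\rmt}\bx,\bw\ra$ and bounding each with elementary Gaussian tails. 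Your route avoids Hanson--Wright entirely, correctly notes that the first factor is permutation-independent so only the second needs the union bound, and in fact yields the slightly sharper magnitude $\sigma^2n^2\sqrt{\log n}$; the paper's route is marginally shorter because a single application of Hanson--Wright handles the whole quadratic form at once (after subtracting the mean $\sigma^2\trace(\bM)\lesssim\sigma^2\|\bx\|_2^2$). Both are valid, and your calibration of the deviation level $s\asymp n\log n$ against $\log n!\asymp n\log n$ is exactly the balance the paper's argument also relies on.
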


\begin{proof}
We begin the proof with the union bound, which
proceeds as
\begin{align}
\label{eq:event2_tot_init}
\Prob(\br{\calE}_2) \leq
\Prob(\|\bx\|_2 \geq \sqrt{2n})
+ \Prob\bracket{\br{\calE}_2,~\|\bx\|_2 \leq \sqrt{2n}}
\leq c_0 e^{-c_1n} + \Prob\bracket{\br{\calE}_2,~\|\bx\|_2 \leq \sqrt{2n}}.
\end{align}
To upper-bound $\Prob\bracket{\br{\calE}_2,~\|\bx\|_2 \leq \sqrt{2n}}$,
we first consider a fixed permutation matrix $\bPi_0$
such that $\bPi_0 \neq \bPitrue$.
In addition, we define $\bM$ as
$\bx\bx^{\rmt}(\bPitrue - \bPi_0)$.
Due to the independence of the $\bx$ and $\bw$, we
have
\[
\Prob\bracket{\br{\calE}_2,~\|\bx\|_2 \leq \sqrt{2n}}
\stackrel{\cirone}{\leq}~&\Prob\bracket{
|\bw^{\rmt}\bM \bw -  \Expc \bw^{\rmt}\bM \bw|\geq
c \sigma^2 n^2 \log n,~\|\bx \|_2 \leq \sqrt{2 n}},
\]
where in $\cirone$ we condition on $\|\bx\|_2\leq \sqrt{2n}$
and use the fact
\begin{align*}
\Expc \bw^{\rmt}\bM\bw + c\sigma^2 n^2 \log n =\
\sigma^2\trace(\bM) +  c\sigma^2 n^2 \log n
\lsim \sigma^2 \|\bx\|^2_2 + c\sigma^2 n^2 \log n
\lsim
\sigma^2 n^2 \log n.
\end{align*}
Using  Hanson-Wright inequality (Theorem~$6.2.1$ in~\citet{vershynin2018high}), we obtain
\[
& \Prob\bracket{
|\bw^{\rmt}\bM \bw - \Expc \bw^{\rmt}\bM \bw | \geq
c \sigma^2 n^2 \log n,~\|\bx \|_2 \leq \sqrt{2 n}}  \\
\leq~& 2\Expc\set{\exp\Bracket{-\bracket{ \frac{c_0 n^4\log^2 n}{\fnorm{\bM}^2} \vcap \frac{c_1n^2\log n}{\opnorm{\bM}}}}
\Ind(\|\bx\|_2\leq \sqrt{2n})}.
\]
Since $\bM$ is a rank-$1$ matrix, we have
$\fnorm{\bM} = \opnorm{\bM}$.
Conditioning on the event $\|\bx\|_2\leq \sqrt{2n}$,
we have $\opnorm{\bM} \lsim \|\bx\|^2_2 \lsim n$ and
hence $\Prob\bracket{\bw^{\rmt}\bM \bw \gsim \sigma^2 n^2 \log n} \lsim n^{-3n}$ for a fixed $\bPi_0$. Iterating over all
possible $\bPi \neq \bPitrue$, we use \eqref{eq:event2_tot_init} and complete the proof as
\begin{align}
\label{eq:event2_tot}
\Prob(\br{\calE}_2) \leq c_0 e^{-c_1n}
+ |\calP_n|\cdot \Prob(\br{\calE}_2, \|\bx\|_2 \leq \sqrt{2n})
\leq c_0 e^{-c_1 n} + c_2 \cdot n!\cdot n^{-3n}
\stackrel{\cirtwo}{\lsim} e^{-cn} + n^{-n},
\end{align}
where we use the Stirling's approximation, namely,
$n!\sim \sqrt{2\pi n}(n/e)^n$, in $\cirtwo$.
\end{proof}

\begin{lemma}
\label{lemma:event3}
We have $\Prob\bracket{\calE_3} \geq 1 - c_0 e^{-c_1 n} - c_2 n^{-n}$, where $c_0, c_1$, and $c_2$ are some positive constants.
\end{lemma}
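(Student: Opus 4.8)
The plan is to mirror the strategy used for Lemma~\ref{lemma:event2}, exploiting the fact that the quantity inside $\calE_3$ is \emph{linear} in the noise $\bw$ rather than quadratic. First I would rewrite the bracketed expression as a single inner product $\langle \bw, \bg_{\bPi}\rangle$, where
\[
\bg_{\bPi} \defequal \langle \bPitrue\bx, (\bPitrue - \bPi)^{\rmt}\bx\rangle\, \bx + \langle \bPitrue \bx, \bx\rangle\, (\bPitrue - \bPi)^{\rmt}\bx
\]
depends only on $\bx$ and $\bPi$. Since $\bw$ has i.i.d.\ $\normdist(0,\sigma^2)$ entries and is independent of $\bx$, conditioning on $\bx$ makes $\langle \bw, \bg_{\bPi}\rangle$ a centered Gaussian with variance $\sigma^2\norm{\bg_{\bPi}}{2}^2$. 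This recognition is the crux: it reduces the problem from an awkward cubic form in $\bx$ to a one-dimensional Gaussian tail once $\bx$ is fixed.

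Next I would split the error probability through the norm event exactly as in \eqref{eq:event2_tot_init}, namely $\Prob(\br{\calE}_3) \leq \Prob(\norm{\bx}{2} \geq \sqrt{2n}) + \Prob(\br{\calE}_3,~\norm{\bx}{2} \leq \sqrt{2n})$, the first term being $\lsim e^{-c_1 n}$ by sub-gaussian norm concentration. To control the variance on the good event I would bound $\norm{\bg_{\bPi}}{2}$ by the triangle inequality, Cauchy--Schwarz, and the norm-preserving property of permutation matrices, giving $\norm{(\bPitrue - \bPi)^{\rmt}\bx}{2} \leq 2\norm{\bx}{2}$, $\abs{\langle \bPitrue\bx, (\bPitrue - \bPi)^{\rmt}\bx\rangle} \leq 2\norm{\bx}{2}^2$, and $\abs{\langle \bPitrue\bx, \bx\rangle}\leq \norm{\bx}{2}^2$. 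Combining these yields $\norm{\bg_{\bPi}}{2} \lsim \norm{\bx}{2}^3 \lsim n^{3/2}$ whenever $\norm{\bx}{2}\leq\sqrt{2n}$, hence a conditional variance of order $\sigma^2 n^3$.

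With the variance pinned at $\sigma^2 n^3$, the Gaussian tail bound gives, for each fixed $\bPi \neq \bPitrue$,
\[
\Prob\bracket{\abs{\langle \bw, \bg_{\bPi}\rangle} \geq c\,\sigma n^2\sqrt{\log n},~\norm{\bx}{2}\leq \sqrt{2n}} \lsim \exp\bracket{-\frac{c^2\sigma^2 n^4\log n}{\sigma^2 n^3}} = n^{-c' n},
\]
where the threshold $\sigma n^2\sqrt{\log n}$ is chosen precisely so that the exponent scales like $n\log n$. The final step is a union bound over all $\abs{\calP_n} = n!$ permutations, absorbed via Stirling's approximation exactly as in \eqref{eq:event2_tot}: $n!\cdot n^{-c'n}\lsim n^{-n}$ for $c'$ large enough. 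Combining the two contributions delivers $\Prob(\br{\calE}_3)\lsim e^{-c_1 n} + n^{-n}$, as claimed.

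The main obstacle I anticipate is the same balancing act that governs Lemma~\ref{lemma:event2}: the per-permutation Gaussian tail must decay faster than $(n!)^{-1}\approx e^{-n\log n}$ in order to survive the union bound over $\calP_n$. This forces the variance estimate to be tight at order $n^3$ (not merely polynomial of unspecified degree), which is exactly why the threshold is set at $\sigma n^2\sqrt{\log n}$---large enough to produce an $\exp(-\Omega(n\log n))$ tail, yet small enough to remain useful in Stage~I of the proof of Theorem~\ref{thm:warm_up}. Everything else is routine once the linear-in-$\bw$ structure is recognized and the norm event $\{\norm{\bx}{2}\leq\sqrt{2n}\}$ is invoked to deterministically control $\norm{\bg_{\bPi}}{2}$.
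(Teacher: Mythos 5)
Your proposal matches the paper's proof essentially line for line: the vector you call $\bg_{\bPi}$ is exactly the paper's $\bu_{\bPi}$, and the subsequent steps (conditioning on $\{\norm{\bx}{2}\leq\sqrt{2n}\}$ to get $\norm{\bu_{\bPi}}{2}^2\lsim n^3$, the conditional Gaussian tail at threshold $\sigma n^2\sqrt{\log n}$ yielding $n^{-cn}$, and the union bound over $|\calP_n|=n!$ absorbed by Stirling) are the same. The argument is correct as written.
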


\begin{proof}
The proof follows a similar strategy as in
that of Lemma~\ref{lemma:event2}.
Defining $\bu_{\bPi} \in \RR^n$ as
\[
\bu_{\bPi} \defequal \langle \bPitrue\bx, (\bPitrue - \bPi)^{\rmt}\bx \rangle \cdot \bx + \langle \bPitrue\bx, \bx\rangle (\bPitrue -\bPi)^{\rmt}\bx,
\]
we first rewrite the $\Prob\bracket{\br{\calE}_3}$ as
\[
\Prob\bracket{\br{\calE}_3}
= \Prob\bracket{|\la \bu_{\bPi}, \bw \ra |\gsim \sigma n^2\sqrt{\log n},~\exists~\bPi\neq \bPitrue}.
\]
With the union bound, we can obtain
\[
\Prob\bracket{\br{\calE}_3}
\leq e^{-cn} + \Prob\bracket{\br{\calE}_3,~\|\bx\|_2\leq \sqrt{2n}}
\leq e^{-cn} + |\calP_n|\cdot
\Prob\bracket{|\la \bu_{\bPi}, \bw \ra |\gsim \sigma n^2\sqrt{\log n},~\|\bx\|_2\leq \sqrt{2n}}.
\]
Conditioning on $\{\|\bx\|_2 \leq \sqrt{2n}\}$, we have the relation
$\|\bu_{\bPi}\|_2^2 \lsim
\sigma^2 (4\norm{\bx}{2}^3)^2 =
c \sigma^2 n^3$ and hence
\[
\Prob\bracket{|\la \bu_{\bPi}, \bw \ra |\gsim \sigma n^2\sqrt{\log n},~\|\bx\|_2\leq \sqrt{2n}}
\leq ~& 2\cdot \Expc_{\bu_{\bPi}}
\Bracket{\exp\bracket{-\frac{c \sigma^2 n^4 \log n}{2\|\bu_{\bPi}\|^2_2}}\Ind(\| \bu_{\bPi}\|^2_2 \lsim \sigma^2 n^3)} \\
\leq~& 2\exp\bracket{-\frac{c \sigma^2 n^4 \log n}{2\sigma^2 n^3}}
= 2\cdot n^{-3n}.
\]
Following the same logic as \eqref{eq:event2_tot},
we complete the proof.

\end{proof}

\begin{lemma}
\label{lemma:event4}
We have $\Prob\bracket{\calE_4} \geq 1- c_0 \cdot n^{-c_1}$, where
 $c_0, c_1 > 0$ are some positive constants.
\end{lemma}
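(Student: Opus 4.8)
The plan is to reduce this statement, which a priori ranges over all $n!$ permutations, to a one-dimensional anti-concentration estimate for a single pair of coordinates, followed by a union bound over only $\binom{n}{2}$ pairs. First a small bookkeeping remark: the quantity that actually enters Stage~I of Theorem~\ref{thm:warm_up} is $\big\|\bx-\bPi^{\natural\rmt}\bPi\bx\big\|_2^2$ with $\bPi\neq\bPitrue$, and writing $\bQ\defequal\bPi^{\natural\rmt}\bPi$ this ranges exactly over all permutations $\bQ\neq\bI$. So I would prove the equivalent and non-degenerate form: with high probability $\|\bx-\bQ\bx\|_2^2\gsim n^{-20}$ simultaneously for every $\bQ\neq\bI$ (note the literal ``$\bPi\neq\bPitrue$'' would include $\bPi=\bI$, for which the bound is vacuously false, so this is the intended reading).

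The key simplification is that I do not need the exact minimizer over permutations. For any $\bQ\neq\bI$ some diagonal entry vanishes, i.e.\ there are indices $i_0\neq j_0$ with $(\bQ\bx)_{i_0}=x_{j_0}$, and therefore
\[
\|\bx - \bQ\bx\|_2^2 \;\geq\; \big((\bI-\bQ)\bx\big)_{i_0}^2 \;=\; (x_{i_0}-x_{j_0})^2 \;\geq\; \min_{k\neq l}(x_k-x_l)^2 .
\]
Consequently the complement $\br{\calE}_4$ is contained in $\{\min_{k\neq l}|x_k-x_l|\leq c\,n^{-10}\}$, and it suffices to lower bound the minimal gap between coordinates of $\bx$. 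The anti-concentration input is where log-concavity is used essentially: by isotropy $x_k-x_l=\langle\bx,\be_k-\be_l\rangle$ has variance $2$, and as a one-dimensional marginal of a log-concave vector it is itself log-concave (Pr\'ekopa). A real log-concave variable with variance $\sigma^2$ has a density bounded by $c/\sigma$, so $f_{x_k-x_l}$ is bounded by a universal constant and $\Prob(|x_k-x_l|\leq\epsilon)\lsim\epsilon$ for every pair.

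Taking $\epsilon=c\,n^{-10}$ and applying the union bound over the $\binom{n}{2}$ pairs gives
\[
\Prob\big(\br{\calE}_4\big) \;\leq\; \Prob\Big(\min_{k\neq l}|x_k-x_l|\leq c\,n^{-10}\Big) \;\leq\; \binom{n}{2}\max_{k\neq l}\Prob\big(|x_k-x_l|\leq c\,n^{-10}\big)\;\lsim\; n^2\cdot n^{-10}=n^{-8},
\]
so that $\Prob(\calE_4)\geq 1-c_0 n^{-8}$, which is the claimed form with $c_1=8$. The huge slack between the required threshold $n^{-20}$ and the typical gap $\gsim n^{-10}$ is exactly what lets the crude reduction (from $n!$ permutations to $n^2$ pairs) go through.

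The main obstacle I expect is the anti-concentration step rather than the combinatorial reduction. If the coordinates of $\bx$ were independent it would be elementary, since the difference of two i.i.d.\ log-concave variables is again log-concave with bounded density; under only ``isotropic log-concave vector'' I must invoke that one-dimensional marginals of log-concave measures are log-concave and then cite the uniform density bound for log-concave densities in terms of their variance. This is also the precise point at which mere sub-Gaussianity is insufficient: for Rademacher entries $x_k-x_l$ has an atom at $0$, the density bound fails, and the lemma is false, which is consistent with the paper's insistence on log-concavity in this regime.
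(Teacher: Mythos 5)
Your proposal is correct and follows the paper's combinatorial reduction exactly: both arguments observe that $\|\bx-\bQ\bx\|_2^2\geq\min_{s\neq t}|x_s-x_t|^2$ for any permutation $\bQ\neq\bI$, and then union bound over the $O(n^2)$ coordinate pairs, exploiting the large slack between $n^{-20}$ and the typical gap. Where you genuinely diverge is the anti-concentration step for a single pair. The paper writes $\Prob(|x_s-x_t|\lsim n^{-10})\leq\Prob(|x_s|\lsim n^{-10})+\Prob(|x_t|\lsim n^{-10})$ and then invokes the Paouris small-ball bound (Lemma~\ref{lemma:small_ball_log_concave}) on each coordinate separately; that splitting is not a valid implication of events (take $x_s=x_t=5$), so as written the paper's proof has a gap at precisely that point. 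You instead bound $\Prob(|x_s-x_t|\leq\epsilon)$ directly by noting that $x_s-x_t=\langle\bx,\be_s-\be_t\rangle$ is a one-dimensional marginal of a log-concave vector, hence log-concave with variance $2$ by isotropy, hence has density bounded by a universal constant, giving $\Prob(|x_s-x_t|\leq\epsilon)\lsim\epsilon$. This is sound and arguably the cleaner route; one could equally repair the paper's version by applying Lemma~\ref{lemma:small_ball_log_concave} to the rank-one matrix built from $\be_s-\be_t$ rather than to the individual coordinates. Your bookkeeping remark that the event must be read with $\bQ=\bPi^{\natural\rmt}\bPi\neq\bI$ (since the literal statement would include $\bPi=\bI$, for which the bound is vacuously false) is also a correct reading of how $\calE_4$ is used in Stage~I, and matches the paper's own implicit use of ``$\bPi\neq\bI$'' in its first inequality.
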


\begin{proof}
To begin with, we notice the relation
\begin{align}
\label{lemma:event4_tot}
\Prob(\br{\calE}_4)
\stackrel{\cirone}{\leq}~& \Prob\bracket{\min_{s\neq t} |\bx_s - \bx_t|^2 \lsim n^{-20}}
\leq n^2\cdot \Prob\bracket{|\bx_s - \bx_t|\lsim n^{-10}} \notag  \\
\leq~& n^2 \cdot \Bracket{\Prob(|\bx_s| \lsim n^{-10}) +  \Prob(|\bx_t| \lsim n^{-10})}
= 2n^2 \cdot \Prob(|\bx_s|\lsim n^{-20}),
\end{align}
where $\cirone$ is due to the relation
$\norm{\bx - \bPi\bx}{2}^2 \geq \min_{s\neq t}|\bx_s - \bx_t|^2$ as long as  $\bPi \neq \bI$.
Invoking \eqref{lemma:small_ball_log_concave}, we have
\[
\Prob(|\bx_s|\lsim n^{-20})\lsim \exp\bracket{-c\log n} = n^{-c},
\]
and complete the proof when combining with \eqref{lemma:event4_tot}.

\end{proof}

\section{Appendix for Section~\ref{sec:multi_observe}}
\label{sec:multi_proof_appendix}
This section provides a theoretical analysis
for the multiple observations model, i.e., $m > 1$.
Without specification, only the centered sub-gaussian
assumption is put on $\bX_{ij}$ and the
log-concavity is not assumed.

\subsection{Notations: multiple observations model}
\label{subsec:multi_proof_appendix_notation}
We begin the proof by defining the notations.
First, we define $\wt{\bB}$ and $\wh{\bB}$ as
\[
\wt{\bB} \defequal~& (n-h)^{-1} \bX^{\rmt} \bPitrue \bX \bBtrue; \\
\wh{\bB} \defequal~& (n-h)^{-1} \bX^{\rmt}\bY = \wt{\bB} + (n-h)^{-1} \bX^{\rmt}\bW.
\]
In addition, we define the following events $\calF_{\ell}$ $(1\leq \ell \leq 8)$ as
\[
\calF_1(\bM) &\defequal
\set{\norm{\bM^{\rmt}\bX_{s, :}}{2} \lsim \sqrt{\log n} \Fnorm{\bM}
\textup{ and }\big\|\bM^{\rmt}\bX^{'}_{s, :}\big\|_{2} \lsim \sqrt{\log n} \Fnorm{\bM}~~\forall~1\leq s \leq n}; \\
\calF_{2, 1} & \defequal
\set{\langle \bX_{s, :}, \bX_{t, :}^{'}\rangle \lsim (\log n)\sqrt{p},~~1\leq s, t \leq n}; \\
\calF_{2,2} & \defequal
\set{\langle \bX_{s, :}, \bX_{t, :}\rangle \lsim (\log n)\sqrt{p}, ~~1\leq s \neq t \leq n}; \\
\calF_{2, 3} & \defequal
\set{\langle \bX^{'}_{s, :}, \bX^{'}_{t, :}\rangle \lsim (\log n)\sqrt{p},~~1\leq s \neq t \leq n}; \\
\calF_2 &= \calF_{2,1}\bigcap \calF_{2, 2}\bigcap \calF_{2, 3};\\
\calF_3 &= \set{\norm{\bX_{s, :}}{2} \leq \sqrt{p\log n} \textup{ and }
\|\bX^{'}_{s, :}\|_{2} \leq \sqrt{p\log n},~~\forall~1\leq s \leq n }; \\
\calF_4 &= \set{
\Fnorm{\bX} \leq \sqrt{2np}~\textup{ and }\Fnorm{\sampminus{\bX}{s}}
\leq \sqrt{2np},~~\forall~1\leq s \leq n}; \\
\calF_5 &= \set{
\norm{\bX\bX_{s, :}}{2} \lsim (\log n)\sqrt{np},
~~\forall~1\leq s \leq n }; \\
\calF_{6, \textup{original}} &=
\set{\Fnorm{\bBtrue - \wt{\bB}} \lsim\frac{(\log n)(\log n^2 p^3)\sqrt{p}}{\sqrt{n}}\Fnorm{\bBtrue}}; \\
\calF_{6, \textup{single}} &=
\set{\Fnorm{\bBtrue - \wtminus{\bB}{s}} \lsim\frac{(\log n)(\log n^2 p^3)\sqrt{p}}{\sqrt{n}}\Fnorm{\bBtrue},~~\forall~1\leq s \leq n}; \\
\calF_{6, \textup{double}} &=
\set{\Fnorm{\bBtrue - \wtminus{\bB}{s, t}} \lsim \frac{(\log n)(\log n^2 p^3)\sqrt{p}}{\sqrt{n}}\Fnorm{\bBtrue},~~\forall~1\leq s \neq t \leq n}; \\
\calF_6 &= \calF_{6, \textup{original}}\bigcap \calF_{6,\textup{single}}\bigcap \calF_{6, \textup{double}}; \\
\calF_{7, \textup{single}} &= \
\set{\norm{ (\wt{\bB} - \wtminus{\bB}{s})^{\rmt} \bX_{s, :} }{2}
\lsim \frac{p\log^{\nfrac{3}{2}} n}{n}\Fnorm{\bBtrue},~\forall~1\leq s \leq n};  \\
\calF_{7, \textup{double}} &= \set{
\norm{(\wt{\bB} - \wtminus{\bB}{s, t})^{\rmt}\bX_{s, :} }{2}
\lsim \frac{p\log^{\nfrac{3}{2}} n}{n}\Fnorm{\bBtrue},~\forall~1\leq s\neq t \leq n}; \\
\calF_7 &= \calF_{7, \textup{single}}\bigcap \calF_{7, \textup{double}}; \\
\calF_8 &= \set{\big\|( \wt{\bB} - \bBtrue)^{\rmt} \bX_{s, :}\big\|_{2} \lsim
\frac{(\log n)^{\nfrac{3}{2}}(\log n^2 p^3)\sqrt{p}}{\sqrt{n}}\Fnorm{\bBtrue},~\forall~1\leq s\leq n},
\]
where $\bM$ in $\calF_1(\bM)$ is an arbitrary matrix independent of $\bX_{i, :}$.
In addition, we define the quantities $\Delta_1$,
$\Delta_2$, and $\Delta_3$ as
\begin{align}
\Delta_1 &=
c_0 \sigma(\log^{\nfrac{5}{2}} n)\sqrt{\frac{p}{n}}\Fnorm{\bBtrue}; \label{eq:Delta1_def}\\
\Delta_2 &= c_1\sigma (\log^2 n)\Fnorm{\bBtrue};\label{eq:Delta2_def} \\
\Delta_3 &= c_2\bracket{\frac{mp\sigma^2 (\log^2 n)}{n}
+ \sigma^2(\log^2 n)\sqrt{\frac{mp}{n}}}, \label{eq:Delta3_def}
\end{align}
respectively. Besides, we denote $\Delta_{(\textup{multi, 1})}$ as
$\Delta_1 + \Delta_2 + \Delta_3$ and
$\Delta_{(\textup{multi, 2})}$ as $\Delta_2 + \Delta_3$.

The following context presents the analysis corresponding to each regime, which is organized in
ascending order
of difficulties. To facilitate understanding, we put a diagram illustrating the dependence among lemmas in
Figure~\ref{fig:dependence_diagram}.

\vspace{0.1in}

\begin{figure}[!ht]
\centering
\begin{tikzpicture}

\node (ub1) [draw=black, rounded corners, fill=celadon, minimum width=0.5cm, align=center] at (-4, 1.5) {\textbf{Easy Regime} \\ {\footnotesize sub-gaussian $\&$} \\ {\footnotesize $\srank{\bBtrue}\gg \log^4 n$} };

\node (ub2) [draw=black, rounded corners, fill=celadon, minimum width=0.5cm, align=center] at (0, 1.5) {\textbf{Medium Regime} \\ {\footnotesize sub-gaussian $\&$} \\ {\footnotesize $\srank{\bBtrue}\gg \log n$} };

\node (ub3) [draw=black, rounded corners, fill=celadon, minimum width=0.5cm, align=center] at (4.5, 1.5) {\textbf{Hard Regime} \\ {\footnotesize log-concave sub-gaussian} \\ {\footnotesize  $\&$ $\srank{\bBtrue} \geq c$} };

\node (mtot) [draw=black, rounded corners, fill=chromeyellow, minimum width=0.5cm, align=center] at (-7.5, 0) {Lemma~\ref{lemma:termtot_bound} };

\node (m1) [draw=black, rounded corners, fill=chromeyellow, minimum width=0.5cm, align=center] at (-5, 0) {Lemma~\ref{lemma:term1_bound}};

\node (m2) [draw=black, rounded corners, fill=chromeyellow, minimum width=0.5cm, align=center] at (-2.5, 0) {Lemma~\ref{lemma:term2_bound}};

\node (m3) [draw=black, rounded corners, fill=chromeyellow, minimum width=0.5cm, align=center] at (0, 0) {Lemma~\ref{lemma:term3_bound}};

\node (mrhs) [draw=black, rounded corners, fill=chromeyellow, minimum width=0.5cm, align=center] at (2.5, 0) {Lemma~\ref{lemma:rhs_general}};

\node (mxb) [draw=black, rounded corners, fill=chromeyellow, minimum width=0.5cm, align=center] at (5, 0) {Lemma~\ref{lemma:xw_ub}};

\node (mrhslogconcave) [draw=black, rounded corners, fill=chromeyellow, minimum width=0.5cm, align=center] at (7.5, 0) {Lemma~\ref{lemma:event9_logconcave}};

\node (basenode) [draw=black, rounded corners, fill=columbiablue, minimum width=0.5cm, align=center] at (0, -1.5) {Lemma~\ref{lemma:x_row_norm_ub},~\ref{lemma:inner_product},~\ref{lemma:Xmat_fnorm},~\ref{lemma:event5},~\ref{lemma:Bmat_perturb},~\ref{lemma:xb_single_idx},~\ref{lemma:xb_multi_indices}, and \ref{lemma:beta_perturb}};

\draw [->, thick] (ub1) -- (mtot);
\draw [->, thick] (ub1) -- (m1);
\draw [->, thick] (ub1) -- (m2);
\draw [->, thick] (ub1) -- (m3);

\draw [->, thick] (ub2) -- (m2);
\draw [->, thick] (ub2) -- (m3);
\draw [->, thick] (ub2) -- (mrhs);
\draw [->, thick] (ub2) -- (mxb);

\draw [->, thick] (ub3) -- (m2);
\draw [->, thick] (ub3) -- (m3);
\draw [->, thick] (ub3) -- (mrhslogconcave);
\draw [->, thick] (ub3) -- (mxb);

\draw [->, thick] (mtot) -- (basenode);
\draw [->, thick] (m1) -- (basenode);
\draw [->, thick] (m2) -- (basenode);

\draw [->, thick] (m3) -- (basenode);
\draw [->, thick] (mrhs) -- (basenode);
\draw [->, thick] (mxb) -- (basenode);
\draw [->, thick] (mrhslogconcave) -- (basenode);

\end{tikzpicture}
\caption{Dependence diagram of lemmas.}
\label{fig:dependence_diagram}
\end{figure}
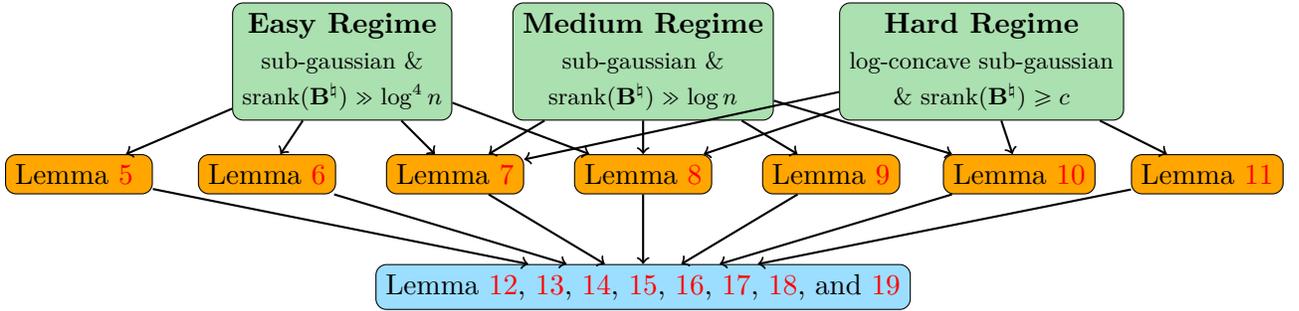

Before proceeding to the technical details, we first restate
our construction method of $\sampminus{\bX}{s}$,
$\wtminus{\bB}{s}$, and $\wtminus{\bB}{s,t}$, respectively.
To begin with, we construct the leave-one-out
sensing matrices. For each row $\bX_{s, :}$ of $\bX$ $(1\leq s \leq n)$,
we draw an independent copy denoted as
$\bX_{s, :}^{'}$. Matrix $\sampminus{\bX}{s}$
is then constructed by $(i)$ replacing the $s$th row in the sensing matrix $\bX$ with $\bX_{s, :}^{'}$ and $(ii)$ copying the rest rows.

In addition, we construct leave-one-out samples $\wtminus{\bB}{s}$ ($1\leq s \leq n$).
This is done by replacing the role of $\bX_{s,:}$ in $\wt{\bB}$ with
its independent copy $\bX_{s, :}^{'}$.
In formulae:
\[
\wtminus{\bB}{s} =
(n-h)^{-1}\bigg(
\sum_{\substack{k \neq s \\ \pi^{\natural}(k) \neq s}} \bX_{\pi^{\natural}(k), :}\bX_{k, :}^{\rmt}
+ \sum_{\substack{k = s \textup{ or} \\  \pi^{\natural}(k) = s}} {\bX}^{'}_{\pi^{\natural}(k), :}{\bX}^{'\rmt}_{k, :}\bigg)\bBtrue.
\]
Easily, we can verify that $\wtminus{\bB}{s}$ is independent of
$\bX_{s, :}$. Similarly, we construct matrices
$\{\wtminus{\bB}{s, t}\}_{1\leq s\neq t \leq n}$ as
\[
\wtminus{\bB}{s, t} =
(n-h)^{-1}\bigg(
\sum_{\substack{k \neq s, t\\ \pi^{\natural}(k) \neq s,t }} \bX_{\pi^{\natural}(k), :}\bX_{k, :}^{\rmt}
+ \sum_{\substack{k = s \textup{ or } k = t \textup{ or} \\ \pi^{\natural}(k) =s
  \textup{ or }\pi^{\natural}(k) = t}} {\bX}^{'}_{\pi^{\natural}(k), :}{\bX}^{'\rmt}_{k, :}\bigg)\bBtrue.
\]
Moreover, we can show that $\wtminus{\bB}{s, t}$ is independent from
the rows $\bX_{s, :}$ and $\bX_{t, :}$ $(1\leq s\neq t \leq n)$.

\subsection{Easy regime: proof of Theorem~\ref{thm:multi_snr_require_general_sub_gauss}}
\begin{proof}
To begin with, we notice that the construction error occurs when there exists some permutation matrix $\bPi \neq \bPitrue$ such that $\langle \bY, \bPi \bX \wh{\bB}  \rangle  \geq \langle \bY, \bPitrue \bX \wh{\bB}\rangle$.

\vspace{0.1in}\noindent
\textbf{Step I.}
Defining event $\calF_{\textup{err-relax}}^{(\textup{multi}, 1)}$ as
\[
\calF_{\textup{err-relax}}^{(\textup{multi}, 1)} \defequal
\set{ \la \bY_{i, :},  \wh{\bB}^{\rmt}\bX_{\pi^{\natural}(i), :} \ra
\leq \la \bY_{i, :}, \wh{\bB}^{\rmt} \bX_{j, :} \ra,~\exists~1\leq \pi^{\natural}(i) \neq  j \leq n},
\]
we first show $\{\bPiopt \neq \bPitrue\}\subseteq \calF_{\textup{err-relax}}^{(\textup{multi}, 1)}$.
The detailed reasoning is as follows.
Conditional on $\br{\calF}_{\textup{err-relax}}^{(\textup{multi}, 1)}$, we have
\[
\langle \bY, \bPitrue \bX \wh{\bB}\rangle =
\sum_i \la \bY_{i, :},  \wh{\bB}^{\rmt}\bX_{\pi^{\natural}(i), :} \ra
\geq \sum_{i} \la \bY_{i, :}, \wh{\bB}^{\rmt} \bX_{\pi(i), :} \ra
= \langle \bY, \bPi \bX \wh{\bB}\rangle,
\]
which means ground truth $\bPitrue$ will be returned by our estimator in
Algorithm~\ref{alg:one_step_estim}, in other words,
$\br{\calF}_{\textup{err-relax}}^{(\textup{multi}, 1)} \subseteq \{\bPiopt = \bPitrue\}$.

\vspace{0.1in}
\noindent
\textbf{Step II.}
Then we will upper bound the error probability with
$\Prob(\calF_{\textup{err-relax}}^{(\textup{multi}, 1)})$.
With the relation
$\bY_{i, :} = \bB^{\natural \rmt}  \bX_{\pi^{\natural}(i), :}+ \bW_{i, :}$
and $\wh{\bB} = \wt{\bB} + (n-h)^{-1}\bX^{\rmt}\bW$, we obtain an equivalent form of
condition
$\langle \bY_{i, :},  \wh{\bB}^{\rmt}\bX_{\pi^{\natural}(i), :} \rangle
\leq \langle \bY_{i, :}, \wh{\bB}^{\rmt} \bX_{j, :} \rangle$ reading as
\begin{align}
\label{eq:greed_optim_condition}
& \la \bB^{\natural \rmt}  \bX_{\pi^{\natural}(i), :}+ \bW_{i, :},
\bracket{\wt{\bB} +(n-h)^{-1}\bX^{\rmt}\bW}^{\rmt}
\bX_{\pi^{\natural}(i), :}
\ra \notag \\
\leq~&
\la \bB^{\natural \rmt}  \bX_{\pi^{\natural}(i), :}+ \bW_{i, :},
\bracket{\wt{\bB} + (n-h)^{-1}\bX^{\rmt}\bW}^{\rmt}
\bX_{j, :}\ra.
\end{align}
For the notation conciseness, we define terms
$\term_i$ ($1\leq i \leq 4$) as
\begin{align}
\term_{\textup{tot}}&= \la \bB^{\natural \rmt}  \bX_{\pi^{\natural}(i), :}, \wt{\bB}^{\rmt}
\bracket{\bX_{\pi^{\natural}(i), :} -  \bX_{j, :}}\ra; \label{eq:term1_def}\\
\term_{1} &= (n-h)^{-1}\la \bB^{\natural \rmt}  \bX_{\pi^{\natural}(i), :}, \bW^{\rmt}\bX\bracket{\bX_{j, :} - \bX_{\pi^{\natural}(i), :}} \ra; \label{eq:term2_def}\\
\term_{2} &= \la \bW_{i, :}, \wt{\bB}^{\rmt}\bracket{\bX_{j, :} - \bX_{\pi^{\natural}(i), :}} \ra;\label{eq:term3_def} \\
\term_{3} &= (n-h)^{-1}\la \bW\bW_{i, :}, \bX\bracket{\bX_{j, :} - \bX_{\pi^{\natural}(i), :}} \ra. \label{eq:term4_def}
\end{align}
Then \eqref{eq:greed_optim_condition} can be rewritten in a concise form, i.e.,
$\term_{\textup{tot}} \leq \term_{1} + \term_{2} + \term_{3}$. With the union bound,
we obtain
\begin{align}
\label{eq:main_theorem_tot}
\Prob(\pi^{\natural}(i) \neq \wh{\pi}(i), \exists~i)
=~& \Expc\Bracket{\Ind\bracket{\term_{\textup{tot}} \leq \term_{1} + \term_{2} + \term_{3},~\exists~i, j}\Ind\bracket{\bigcap_{a=1}^8 \calF_a}} +
\sum_{a=1}^8 \Prob\bracket{\br{\calF}_a} \notag \\
\stackrel{\cirone}{\leq}~& n^2 \cdot \Expc\Bracket{\Ind\bracket{\term_{\textup{tot}} \leq \term_{1} + \term_{2} + \term_{3}}\Ind\bracket{\bigcap_{a=1}^8 \calF_a}} +
c_0 p^{-c_1} + c_2 n^{-c_3},
\end{align}
where in $\cirone$ we invoke Lemma~\ref{lemma:x_row_norm_ub}, Lemma~\ref{lemma:inner_product},
Lemma~\ref{lemma:Xmat_fnorm}, Lemma~\ref{lemma:event5},
Lemma~\ref{lemma:Bmat_perturb}, Lemma~\ref{lemma:xb_single_idx},
Lemma~\ref{lemma:xb_multi_indices}, and Lemma~\ref{lemma:beta_perturb}.

Regarding the term $\Expc\Bracket{\Ind\bracket{\term_{\textup{tot}} \leq \term_{1} + \term_{2} + \term_{3},~\exists~i, j}\Ind\bracket{\bigcap_{a=1}^8 \calF_a}}$, we perform the following decomposition
\begin{align}
\label{eq:main_theorem_union}	
& \Expc\Bracket{\Ind\bracket{\term_{\textup{tot}} \leq \term_{1} + \term_{2} + \term_{3}}\Ind\bracket{\bigcap_{a=1}^8 \calF_a}} \notag \\
\leq~& \Expc\Bracket{\Ind\bracket{\term_{\textup{tot}} \leq \Delta_{(\textup{multi, 1})}}\Ind\bracket{\bigcap_{a=1}^8 \calF_a}} + \Expc\Bracket{\Ind\bracket{\term_{1} \geq \Delta_1}\Ind\bracket{\bigcap_{a=1}^8 \calF_a}}\notag  \\
+~&
\Expc\Bracket{\Ind\bracket{\term_{2} \geq \Delta_2}\Ind\bracket{\bigcap_{a=1}^8 \calF_a}} +
\Expc\Bracket{\Ind\bracket{\term_{3} \geq \Delta_3}\Ind\bracket{\bigcap_{a=1}^8 \calF_a}},
\end{align}
where the definitions of
$\Delta_1$, $\Delta_2$, $\Delta_3$, and $\Delta_{(\textup{multi, 1})}$ are referred to
Subsection~\ref{subsec:multi_proof_appendix_notation}.
The proof is then completed by
combining ~\eqref{eq:main_theorem_tot} and ~\eqref{eq:main_theorem_union}
and invoking
Lemma~\ref{lemma:termtot_bound}, Lemma~\ref{lemma:term1_bound},
Lemma~\ref{lemma:term2_bound}, and Lemma~\ref{lemma:term3_bound}.

\end{proof}

\begin{lemma}
\label{lemma:termtot_bound}
Consider fixed indices $\pi^{\natural}(i)$ and $j$ such that $j\neq \pi^{\natural}(i)$.
Assuming that $(i)$ $\srank{\bBtrue}\gg \log^4 n$,
$(ii)$ $n\gsim p \log^6 n$, $(iii)$
intersection of events
$\calF_1(\bBtrue)\bigcap { \calF_1\big(\bBtrue \wtminus{\bB}{\pi^{\natural}(i), j}^{\rmt} \big) }\bigcap \calF_6 \bigcap \calF_7 \bigcap \calF_8$ holds, and $(iv)$ $\snr \geq c$,
 we have $\{\term_{\textup{tot}} \geq \Delta_{(\textup{multi, 1})}\}$
hold with probability
$1 - n^{-c}$ when $n$ and $p$ are sufficiently large.
Here $\term_{\textup{tot}}$ and $\Delta_{(\textup{multi, 1})}$ are defined in
\eqref{eq:term1_def} and Section~\ref{subsec:multi_proof_appendix_notation}, respectively.
\end{lemma}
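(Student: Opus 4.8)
The plan is to show that $\term_{\textup{tot}}$ concentrates around $\Fnorm{\bBtrue}^2$, which under the stated hypotheses dwarfs $\Delta_{(\textup{multi, 1})}$. First I would expand
\[
\term_{\textup{tot}} = \bX_{\pi^{\natural}(i), :}^{\rmt}\bBtrue\,\wt{\bB}^{\rmt}\bX_{\pi^{\natural}(i), :} - \bX_{\pi^{\natural}(i), :}^{\rmt}\bBtrue\,\wt{\bB}^{\rmt}\bX_{j, :},
\]
and replace $\wt{\bB}$ by the leave-one-out surrogate $\wtminus{\bB}{\pi^{\natural}(i), j}$, which is independent of both rows $\bX_{\pi^{\natural}(i), :}$ and $\bX_{j, :}$. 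The replacement error $\la \bBtrue^{\rmt}\bX_{\pi^{\natural}(i), :}, (\wt{\bB} - \wtminus{\bB}{\pi^{\natural}(i), j})^{\rmt}(\bX_{\pi^{\natural}(i), :} - \bX_{j, :})\ra$ I would bound by Cauchy--Schwarz, using $\calF_1(\bBtrue)$ for $\norm{\bBtrue^{\rmt}\bX_{\pi^{\natural}(i), :}}{2} \lsim \sqrt{\log n}\Fnorm{\bBtrue}$ and $\calF_7$ for the two perturbation factors, giving an error of order $\frac{p\log^2 n}{n}\Fnorm{\bBtrue}^2 = o(\Fnorm{\bBtrue}^2)$ once $n \gsim p\log^6 n$. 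It is essential to anchor at the leave-one-out matrix through $\calF_7$ rather than at $\bBtrue$ through $\calF_8$: the latter only offers the coarser $\sqrt{p/n}$ scaling, which would not be negligible against the main term.

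Writing $\bC \defequal \bBtrue\wtminus{\bB}{\pi^{\natural}(i), j}^{\rmt}$, which is independent of the two rows, I would treat the two surviving forms separately. For the quadratic form $\bX_{\pi^{\natural}(i), :}^{\rmt}\bC\bX_{\pi^{\natural}(i), :}$ I would condition on $\wtminus{\bB}{\pi^{\natural}(i), j}$ and apply the Hanson--Wright inequality to the symmetrization of $\bC$. Its conditional mean is $\trace(\bC) = \La \bBtrue, \wtminus{\bB}{\pi^{\natural}(i), j}\Ra$, which by $\calF_6$ equals $(1 - o(1))\Fnorm{\bBtrue}^2 \geq \tfrac12\Fnorm{\bBtrue}^2$. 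Since $\Fnorm{\bC} \lsim \opnorm{\bBtrue}\Fnorm{\bBtrue}$ and $\opnorm{\bC} \lsim \opnorm{\bBtrue}^2$, both Hanson--Wright exponents at deviation $\tfrac14\Fnorm{\bBtrue}^2$ are of order $\Fnorm{\bBtrue}^2/\opnorm{\bBtrue}^2 = \srank{\bBtrue}$; invoking $\srank{\bBtrue} \gg \log^4 n$ drives the failure probability below $2\exp(-c\log^4 n) \leq n^{-c}$, so this form is $\geq \tfrac14\Fnorm{\bBtrue}^2$.

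For the bilinear form $\bX_{\pi^{\natural}(i), :}^{\rmt}\bC\bX_{j, :} = \la \bC^{\rmt}\bX_{\pi^{\natural}(i), :}, \bX_{j, :}\ra$, I would condition on both $\wtminus{\bB}{\pi^{\natural}(i), j}$ and $\bX_{\pi^{\natural}(i), :}$, so that it is a centered sub-gaussian variable in $\bX_{j, :}$ with variance proxy $\norm{\bC^{\rmt}\bX_{\pi^{\natural}(i), :}}{2}^2$. This is precisely where $\calF_1\big(\bBtrue\wtminus{\bB}{\pi^{\natural}(i), j}^{\rmt}\big)$ enters, yielding $\norm{\bC^{\rmt}\bX_{\pi^{\natural}(i), :}}{2} \lsim \sqrt{\log n}\Fnorm{\bC} \lsim \sqrt{\log n}\,\opnorm{\bBtrue}\Fnorm{\bBtrue}$. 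A sub-gaussian tail at $\sqrt{\log n}$ standard deviations then gives $|\bX_{\pi^{\natural}(i), :}^{\rmt}\bC\bX_{j, :}| \lsim \log n\,\opnorm{\bBtrue}\Fnorm{\bBtrue}$ with probability $1 - n^{-c}$; relative to the main term this is of order $\log n/\sqrt{\srank{\bBtrue}} = o(1)$, again by $\srank{\bBtrue} \gg \log^4 n$.

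A union bound over the three estimates gives $\term_{\textup{tot}} \geq \tfrac14\Fnorm{\bBtrue}^2 - o(\Fnorm{\bBtrue}^2) \geq \tfrac18\Fnorm{\bBtrue}^2$ with probability $1 - n^{-c}$. To conclude I would verify $\Delta_{(\textup{multi, 1})} = \Delta_1 + \Delta_2 + \Delta_3 = o(\Fnorm{\bBtrue}^2)$: writing $\Fnorm{\bBtrue}^2 = m\sigma^2\snr$ and using $m \geq \srank{\bBtrue} \gg \log^4 n$, $\snr \geq c$, and $n \gsim p\log^6 n$, each ratio $\Delta_{\ell}/\Fnorm{\bBtrue}^2$ vanishes, so $\term_{\textup{tot}} \geq \Delta_{(\textup{multi, 1})}$. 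I expect the bilinear form to be the main obstacle: forcing its variance proxy below the main term is exactly what necessitates the leave-one-out decoupling (so that $\bX_{j, :}$ is genuinely independent of $\bC$) together with the tailored event $\calF_1\big(\bBtrue\wtminus{\bB}{\pi^{\natural}(i), j}^{\rmt}\big)$, and it is the stable-rank margin $\srank{\bBtrue} \gg \log^4 n$ that simultaneously powers this tail and the Hanson--Wright step to the required $1 - n^{-c}$ confidence.
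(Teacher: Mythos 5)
Your argument is correct, and for the dominant term it takes a genuinely different route from the paper. The paper isolates $\norm{\bB^{\natural\rmt}\bX_{\pi^{\natural}(i),:}}{2}^2$ as the anchor, lower-bounds it by $\tfrac14\Fnorm{\bBtrue}^2$ via the small-ball inequality (Lemma~\ref{lemma:small_ball_subgauss}, failure probability $e^{-c\,\srank{\bBtrue}}$), and then controls the diagonal perturbation $\la \bB^{\natural\rmt}\bX_{\pi^{\natural}(i),:},(\wt{\bB}-\bBtrue)^{\rmt}\bX_{\pi^{\natural}(i),:}\ra$ through $\calF_1(\bBtrue)$ and $\calF_8$, which indeed only yields the $\sqrt{p/n}$-type bound you flag as borderline. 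You instead push the leave-one-out surrogate into the diagonal term as well and anchor at the quadratic form $\bX_{\pi^{\natural}(i),:}^{\rmt}\bBtrue\wtminus{\bB}{\pi^{\natural}(i),j}^{\rmt}\bX_{\pi^{\natural}(i),:}$, whose conditional mean $\La\bBtrue,\wtminus{\bB}{\pi^{\natural}(i),j}\Ra \geq \tfrac12\Fnorm{\bBtrue}^2$ follows from $\calF_6$, with Hanson--Wright supplying the concentration; this replaces the small-ball lemma entirely, dispenses with $\calF_8$ for this lemma, and gives a cleaner $o(\Fnorm{\bBtrue}^2)$ perturbation of order $\frac{p\log^2 n}{n}\Fnorm{\bBtrue}^2$ from $\calF_{7,\textup{double}}$. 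The bilinear term is handled essentially as in the paper (leave-one-out decoupling, $\calF_1\big(\bBtrue\wtminus{\bB}{\pi^{\natural}(i),j}^{\rmt}\big)$, sub-gaussian tail, final ratio $\log n/\sqrt{\srank{\bBtrue}}$), up to which row you condition on. One small inaccuracy: $\Opnorm{\bBtrue\wtminus{\bB}{\pi^{\natural}(i),j}^{\rmt}}\lsim\Opnorm{\bBtrue}^2$ is not justified, since $\calF_6$ only controls $\Fnorm{\wtminus{\bB}{\pi^{\natural}(i),j}-\bBtrue}$, and that Frobenius perturbation can exceed $\Opnorm{\bBtrue}$ when the stable rank is large; the available bound is $\Opnorm{\bBtrue\wtminus{\bB}{\pi^{\natural}(i),j}^{\rmt}}\lsim\Opnorm{\bBtrue}\Fnorm{\bBtrue}$, which makes the linear Hanson--Wright exponent $\sqrt{\srank{\bBtrue}}$ rather than $\srank{\bBtrue}$ --- still $\gg\log^2 n$ under the hypothesis $\srank{\bBtrue}\gg\log^4 n$, so the conclusion is unaffected. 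The remaining bookkeeping ($m\geq\srank{\bBtrue}$, $\Delta_{(\textup{multi, 1})}/\Fnorm{\bBtrue}^2\to 0$ under $\snr\geq c$ and $n\gsim p\log^6 n$) matches the paper's Phase III.
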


\begin{proof}
We begin the proof with the decomposition
\[
\term_{\textup{tot}} = \norm{ \bB^{\natural \rmt}  \bX_{\pi^{\natural}(i), :}}{2}^2 +
\underbrace{\big\langle \bB^{\natural \rmt}  \bX_{\pi^{\natural}(i), :} ,
\bracket{\wt{\bB} - \bBtrue}^{\rmt} \bX_{\pi^{\natural}(i), :}\big\rangle}_{\defequal \term_{\textup{tot}, 1}}
- \underbrace{\la \bB^{\natural \rmt}  \bX_{\pi^{\natural}(i), :}, \wt{\bB}^{\rmt}\bX_{j, :}\ra}_{\defequal \term_{\textup{tot}, 2}}.
\]
Then we obtain
\begin{align}
\label{eq:term1_tot}
\Prob \bracket{\term_{\textup{tot}} \leq \Delta_{(\textup{multi, 1})}}
=~&\Prob\bracket{\frac{\Delta_{(\textup{multi, 1})}}{\norm{ \bB^{\natural \rmt}  \bX_{\pi^{\natural}(i), :}}{2}^2}
- \frac{\term_{\textup{tot}, 1}}{\norm{ \bB^{\natural \rmt}  \bX_{\pi^{\natural}(i), :}}{2}^2}
+ \frac{\term_{\textup{tot}, 2}}{\norm{ \bB^{\natural \rmt}  \bX_{\pi^{\natural}(i), :}}{2}^2}
\geq 1} \notag \\
\leq~&
\underbrace{\Prob\bracket{\norm{ \bB^{\natural \rmt}  \bX_{\pi^{\natural}(i), :}}{2} \leq
\delta}}_{\defequal~\zeta_1} +
\underbrace{\Prob\bracket{
\frac{\Delta_{(\textup{multi, 1})}}{\delta^2} +
\frac{\abs{\term_{\textup{tot}, 1}}}{\delta^2}
+ \frac{\abs{\term_{\textup{tot}, 2}}}{\delta^2} \geq 1}}_{\defequal~\zeta_2}.
\end{align}
Setting $\delta$ as $1/2 \Fnorm{\bBtrue}$,
we separately bound the probabilities $\zeta_1$ and
$\zeta_2$.
For the term $\zeta_1$, we invoke the small ball probability (Lemma~\ref{lemma:small_ball_subgauss}) and conclude
\begin{align}
\label{eq:term1_zeta1}
\Prob\bracket{\norm{ \bB^{\natural \rmt}  \bX_{\pi^{\natural}(i), :}}{2} \leq
\frac{1}{2}\Fnorm{\bBtrue}}\leq e^{-c_0 \cdot \srank{\bBtrue}}.
\end{align}
For probability $\zeta_2$, we will show
it is upper-bounded by $c_1\cdot n^{-c_2}$ provided
$\snr \geq C$.
A detailed explanation comes as follows.

\newpage
\noindent
\textbf{Phase I.}
First, we consider $\term_{\textup{tot}, 1}$. Conditional on the intersection of
events $\calF_1(\bBtrue)\bigcap \calF_8$, we have
\[
\abs{\term_{\textup{tot}, 1}} \leq ~& \norm{\bB^{\natural \rmt} \bX_{i, :}}{2} \cdot  \norm{(\wt{\bB} - \bBtrue)^{\rmt} \bX_{\pi^{\natural}(i), :}}{2} \lsim
\sqrt{\log n}\Fnorm{\bBtrue}
\frac{(\log n)^{\nfrac{3}{2}} (\log n^2 p^3) \sqrt{p}}{\sqrt{n}}
\Fnorm{\bBtrue} \\
=~&
(\log^2 n)(\log n^2 p^3)\sqrt{\frac{p}{n}}\Fnorm{\bBtrue}^2.
\]

\vspace{0.1in}\noindent
\textbf{Phase II.}
Then we turn to  $\term_{\textup{tot}, 2}$.  Adopting the leave-out-out trick, we can
expand it as
\[
\term_{\textup{tot}, 2} =~
\underbrace{\la \bB^{\natural \rmt}  \bX_{\pi^{\natural}(i), :},
(\wt{\bB} - \wtminus{\bB}{\pi^{\natural}(i), j})^{\rmt} \bX_{j, :}\ra}_{\term_{\textup{tot}, 2, 1}}
+ \underbrace{\la \bB^{\natural \rmt}  \bX_{\pi^{\natural}(i), :}, \wtminus{\bB}{\pi^{\natural}(i), j}^{\rmt} \bX_{j, :}\ra}_{\term_{\textup{tot}, 2, 2}}.
\]
For $\term_{\textup{tot}, 2, 1}$, we have
\[
\term_{\textup{tot}, 2, 1} \leq~&
 \norm{ \bB^{\natural \rmt}  \bX_{\pi^{\natural}(i), :}}{2}
\big\|(\wt{\bB} - \wtminus{\bB}{\pi^{\natural}(i), j})^{\rmt} \bX_{j, :}\big\|_{2}
\stackrel{\cirone}{\lsim} \sqrt{\log n}\Fnorm{\bBtrue}
\frac{p\log^{\nfrac{3}{2}} n}{n}\Fnorm{\bBtrue} \\
=~& \frac{p\log^2 n}{n} \Fnorm{\bBtrue}^2,
\]
where in $\cirone$ we condition on event $\calF_{7, \textup{double}}$.
Regarding $\term_{2,2,2}$, we notice that $\wtminus{\bB}{\pi^{\natural}(i), j}$ is independent of the rows $\bX_{\pi^{\natural}(i), :}$ and
$\bX_{j, :}$ due to its construction method.
Hence we can bound $\term_{2,2,2}$ by
conditioning on rows $\{\bX_{s, :}\}_{s\neq \pi^{\natural}}$
and only viewing $\bX_{\pi^{\natural}(i), :}$ as the RV, which
yields
\begin{align}
\label{eq:term122}
\term_{\textup{tot}, 2, 2} \lsim \sqrt{\log n}\norm{\bBtrue \wtminus{\bB}{\pi^{\natural}(i), j}^{\rmt} \bX_{j, :}}{2},
\end{align}
holds with probability $1-n^{-c}$. Conditional on
event $\calF_1(\bBtrue \wtminus{\bB}{\pi^{\natural}(i), j}^{\rmt})$,
we have
\[
\term_{\textup{tot}, 2, 2} \lsim~& \log n\Fnorm{\bBtrue \wtminus{\bB}{\pi^{\natural}(i), j}^{\rmt} }
\lsim \log n \Opnorm{\bBtrue}\Fnorm{\wtminus{\bB}{\pi^{\natural}(i), j}^{\rmt} } \\
\stackrel{\cirtwo}{\leq}~& (\log n)\Opnorm{\bBtrue}
\Bracket{\Fnorm{\wtminus{\bB}{\pi^{\natural}(i), j} - \bBtrue} + \Fnorm{\bBtrue}}
\stackrel{\cirthree}{\lsim} \frac{(\log n)\Fnorm{\bBtrue}^2}{\sqrt{\srank{\bBtrue}}},
\]
where $\cirtwo$ uses the definition of stable rank,
and $\cirthree$ is conditional on event $\calF_6$, $n\geq p$,
and $n\gsim p\log^6 n$.

\vspace{0.1in}\noindent
\textbf{Phase III.}
Conditioning on ~\eqref{eq:term122}, we  expand the sum $\nfrac{\Delta_{(\textup{multi, 1})}}{\delta^2} + \term_{\textup{tot}, 1}/\delta^2 +
\term_{\textup{tot}, 2}/\delta^2$ as
\[
& \frac{\Delta_{(\textup{multi, 1})}}{\delta^2} + \frac{\term_{\textup{tot}, 1}}{\delta^2} +
\frac{\term_{\textup{tot}, 2}}{\delta^2} \\
= ~&
c_0 \sigma(\log n)^{\nfrac{5}{2}}\sqrt{\frac{p}{n}}\frac{1}{\Fnorm{\bBtrue}}
+ \frac{ c_1\sigma (\log^2 n)}{\Fnorm{\bBtrue}}
+ c_2
\bracket{\frac{pm}{n} + \sqrt{\frac{mp}{n}}} \frac{\sigma^2(\log^2 n)}{\Fnorm{\bBtrue}^2} \\
+~& \frac{c_3 (\log^2 n)(\log n^2 p^3)\sqrt{p} }{\sqrt{n}} +
\frac{c_4 p\log^2 n}{n}
+ \frac{c_5 \log n}{\sqrt{\srank{\bBtrue}}}
 \\
\asymp~&
c_0 \sqrt{\frac{p}{nm}}\frac{\bracket{\log n}^{\nfrac{5}{2}}}{\sqrt{\snr}}
+ \frac{c_1 \log^2 n}{\sqrt{m \cdot \snr}}
+ \frac{c_2 p (\log^2 n)}{n\cdot  \snr}
+c_2 \sqrt{\frac{p}{mn}}\frac{\log^2 n }{\snr} \\
+~&\frac{c_3 (\log^2 n)(\log n^2 p^3)\sqrt{p} }{\sqrt{n}} +
\frac{c_4 p \log^2 n}{n}
+ \frac{c_5 \log n}{\sqrt{\srank{\bBtrue}}}.
\]
Provided that $\snr \geq c$, $\srank{\bBtrue} \gg \log^4 n$ and
$n\gsim p \log^6 n$,
we can verify the sum
$\nfrac{\Delta_{(\textup{multi, 1})}}{\delta^2} + \term_{\textup{tot}, 1}/\delta^2 +
\term_{\textup{tot}, 2}/\delta^2 \ll 1$ when
$n$ and $p$ are sufficiently large.
Hence we can conclude
\[
\zeta_2 \leq \Prob\bracket{\term_{\textup{tot}, 2, 2} \gsim \sqrt{\log n}
\norm{\bBtrue \wtminus{\bB}{\pi^{\natural}(i), j}^{\rmt} \bX_{j, :}}{2}
} \leq n^{-c}.
\]
Hence the proof is completed by combining
~\eqref{eq:term1_tot} and ~\eqref{eq:term1_zeta1}.
\end{proof}

\begin{remark}
If we strength the requirement on $\snr$ from $\snr\geq c$ to
$\snr \gsim \log^2 n$, we can relax the requirement
on the stable rank $\srank{\bBtrue}$ from $\srank{\bBtrue}\gg \log^4 n$ to
$\srank{\bBtrue}\gg \log^2 n$.
\end{remark}

\begin{lemma}
\label{lemma:term1_bound}
Conditional on the intersection of events $\calF_1(\bBtrue)\bigcap \calF_5$ and fixing the indices $\pi^{\natural}(i)$ and $j$ $(j\neq \pi^{\natural}(i))$,
we have
\[
\term_{1} \lsim \sigma(\log n)^{\nfrac{5}{2}}\sqrt{\frac{p}{n}}\Fnorm{\bBtrue}.
\]
hold with probability at least $1-n^{-c}$, where $\term_1$ is defined in
\eqref{eq:term2_def}.
\end{lemma}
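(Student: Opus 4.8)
The plan is to view $\term_1$ as a bilinear form in the Gaussian noise $\bW$ and to exploit that $\bW$ is independent of $\bX$. Set $\bu \defequal \bB^{\natural \rmt}\bX_{\pi^{\natural}(i), :}\in\RR^m$ and $\bz \defequal \bX(\bX_{j, :} - \bX_{\pi^{\natural}(i), :})\in\RR^n$; both are measurable with respect to $\bX$ alone. Then $\term_1 = (n-h)^{-1}\la \bu, \bW^{\rmt}\bz\ra = (n-h)^{-1}\bu^{\rmt}\bW^{\rmt}\bz$, and expanding coordinate-wise gives $\bu^{\rmt}\bW^{\rmt}\bz = \sum_{k=1}^n\sum_{l=1}^m z_k u_l \bW_{kl}$, so each entry $\bW_{kl}$ enters linearly with coefficient $z_k u_l$.

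Since $\bW_{kl}\iid\normdist(0,\sigma^2)$ are independent of $\bX$, conditioning on $\bX$ makes $\bu^{\rmt}\bW^{\rmt}\bz$ a centered Gaussian with variance $\sigma^2\sum_{k,l}z_k^2 u_l^2 = \sigma^2\norm{\bu}{2}^2\norm{\bz}{2}^2$. The Gaussian tail bound then yields, for every $t>0$,
\[
\Prob\big(\,|\bu^{\rmt}\bW^{\rmt}\bz| \geq t \,\big|\, \bX\,\big) \leq 2\exp\Big(-\frac{t^2}{2\sigma^2\norm{\bu}{2}^2\norm{\bz}{2}^2}\Big).
\]
Crucially, the conditioning events $\calF_1(\bBtrue)$ and $\calF_5$ depend on $\bX$ only, so they pin down deterministic bounds on $\norm{\bu}{2}$ and $\norm{\bz}{2}$ without affecting this Gaussianity.

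On $\calF_1(\bBtrue)$ we have $\norm{\bu}{2} = \norm{\bB^{\natural \rmt}\bX_{\pi^{\natural}(i), :}}{2}\lsim\sqrt{\log n}\,\Fnorm{\bBtrue}$, and on $\calF_5$ a triangle inequality gives $\norm{\bz}{2}\leq \norm{\bX\bX_{j, :}}{2} + \norm{\bX\bX_{\pi^{\natural}(i), :}}{2}\lsim (\log n)\sqrt{np}$; hence $\norm{\bu}{2}\norm{\bz}{2}\lsim (\log n)^{\nfrac{3}{2}}\sqrt{np}\,\Fnorm{\bBtrue}$. Choosing $t\asymp\sigma\sqrt{\log n}\,\norm{\bu}{2}\norm{\bz}{2}\asymp\sigma(\log n)^2\sqrt{np}\,\Fnorm{\bBtrue}$ drives the exponent to order $\log n$, so the bad event has probability at most $n^{-c}$. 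On the complement, using $n-h\gsim n$ (a consequence of $h\leq c_0 n$), we get
\[
\term_1 \lsim n^{-1}\sigma(\log n)^2\sqrt{np}\,\Fnorm{\bBtrue} = \sigma(\log n)^2\sqrt{\frac{p}{n}}\,\Fnorm{\bBtrue}\lsim \sigma(\log n)^{\nfrac{5}{2}}\sqrt{\frac{p}{n}}\,\Fnorm{\bBtrue},
\]
which is exactly the asserted bound (in fact with a marginally sharper power of $\log n$).

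I do not expect a serious obstacle here: the argument is, at bottom, one application of Gaussian concentration. The only points that need care are computing the conditional variance $\sigma^2\norm{\bu}{2}^2\norm{\bz}{2}^2$ of the bilinear form correctly, and checking that $\calF_1(\bBtrue)$ and $\calF_5$ control precisely the two norms $\norm{\bu}{2}$ and $\norm{\bz}{2}$ that enter that variance. Because the indices $\pi^{\natural}(i)$ and $j$ are fixed in the statement, no union bound is incurred at this stage; the $n^2$ factor over all index pairs is paid later, in \eqref{eq:main_theorem_tot} within the proof of Theorem~\ref{thm:multi_snr_require_general_sub_gauss}.
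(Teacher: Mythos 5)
Your proposal is correct and follows essentially the same route as the paper's proof: both view $\term_1$ conditionally on $\bX$ as a centered Gaussian with variance $\sigma^2\norm{\bu}{2}^2\norm{\bz}{2}^2/(n-h)^2$, bound the two norms via $\calF_1(\bBtrue)$ and $\calF_5$ exactly as you do, and finish with the Gaussian tail bound. The only cosmetic difference is your threshold choice of $\sigma\sqrt{\log n}\,\norm{\bu}{2}\norm{\bz}{2}$ versus the paper's $\sigma(\log n)\norm{\bu}{2}\norm{\bz}{2}$, which gives you a marginally sharper $(\log n)^2$ factor at the cost of a polynomial rather than super-polynomial failure probability --- both suffice for the stated claim.
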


\begin{proof}
Define vectors $\bu_{\bX}$ and  $\bv_{\bX}^{\rmt}$ as
\[
\bu_{\bX} &= \bX\bracket{\bX_{j, :} - \bX_{\pi^{\natural}(i), :}}, \\
\bv_{\bX} &= \bB^{\natural \rmt}\bX_{\pi^{\natural}(i), :},
\]
respectively. We can rewrite $\term_{1}$ as
\[
\term_{1} =~&
(n-h)^{-1}\trace\Bracket{\bX\bracket{\bX_{j, :} - \bX_{\pi^{\natural}(i), :}}
\bX^{\rmt}_{\pi^{\natural}(i), :}
\bBtrue \bW^{\rmt}}
= (n-h)^{-1}\bu_{\bX}^{\rmt}\bW \bv_{\bX}.
\]
Invoking the union bound, we conclude
\begin{align}
\label{eq:term2_tot}
& \Prob\bracket{\term_{1} \gsim \sigma(\log n)^{\nfrac{5}{2}}\sqrt{\frac{p}{n}}\Fnorm{\bBtrue}}  \notag \\
\leq~&
\Prob\bracket{\term_{1} \gsim \sigma(\log n)^{\nfrac{5}{2}}\sqrt{\frac{p}{n}}\Fnorm{\bBtrue},~\norm{\bu_{\bX}}{2}\norm{\bv_{\bX}}{2} \lsim
(\log n)^{\nfrac{3}{2}}\sqrt{np} \Fnorm{\bBtrue}} \notag \\
+~& \Prob\bracket{\norm{\bu_{\bX}}{2}\norm{\bv_{\bX}}{2} \gsim
(\log n)^{\nfrac{3}{2}}\sqrt{np} \Fnorm{\bBtrue} }\notag  \\
\leq~&
\underbrace{\Prob\bracket{\term_{1} \gsim  \frac{\sigma (\log n) \norm{\bu_{\bX}}{2} \norm{\bv_{\bX}}{2}}{n-h} }}_{\defequal~\zeta_1}
+ \underbrace{\Prob\bracket{ \norm{\bu_{\bX}}{2}\norm{\bv_{\bX}}{2} \gsim
(\log n)^{\nfrac{3}{2}}\sqrt{np} \Fnorm{\bBtrue}}}_{\defequal~\zeta_2}.
\end{align}
Then we separately bound the probabilities $\zeta_1$ and $\zeta_2$.

\vspace{0.1in}\noindent
\textbf{Phase I.}
For probability $\zeta_1$, we exploit the independence between $\bX$ and $\bW$.
We can view $\term_{1}$ as a Gaussian RV conditional on $\bX$, since it is a linear combination of Gaussian RVs $\set{\bW_{i,j}}_{1\leq i \leq n, 1\leq j \leq m}$.
Easily we can calculate its mean as zero and its variance as
\[
\Expc_{\bW}(\term_{1})^2 = \frac{\sigma^2}{(n-h)^2} \norm{\bu_{\bX}}{2} \norm{\bv_{\bX}}{2}^2.
\]
Thus we can upper-bound $\zeta_1$ as
\begin{align}
\label{eq:term2_zeta1}
\zeta_1= \Expc_{\bX}\Expc_{\bW}\Ind\bracket{\term_{1} \gsim \frac{\sigma (\log n) \norm{\bu_{\bX}}{2} \norm{\bv_{\bX}}{2}}{n-h}} \stackrel{\cirone}{\leq} \Expc_{\bX}\exp\bracket{-c_0 \log n} = n^{-c},
\end{align}
where $\cirone$ is due to the bound on the tail-probability of Gaussian RV.

\vspace{0.1in}\noindent
\textbf{Phase II.}
As for $\zeta_2$, easily we can verify it to be zero conditional on the intersection
of events $\calF_1(\bBtrue)\bigcap \calF_5$ since
\[
\norm{\bu_{\bX}}{2}\norm{\bv_{\bX}}{2}\lsim
\sqrt{\log n}\Fnorm{\bBtrue}  \cdot
\bracket{\norm{\bX\bX_{j, :}}{2} + \norm{\bX\bX_{\pi^{\natural}(i), :}}{2}}
\lsim \bracket{\log n}^{\nfrac{3}{2}}\sqrt{np}\Fnorm{\bBtrue}.
\]
The proof is then completed by combining
\eqref{eq:term2_tot} and \eqref{eq:term2_zeta1}.
\end{proof}

\begin{lemma}
\label{lemma:term2_bound}
Conditional on the intersection of events
$\calF_6 \bigcap \calF_7$ and fixing the indices
$\pi^{\natural}(i)$ and $j$ $(j \neq \pi^{\natural}(i))$, we know
$\term_{2} \lsim \sigma (\log^2 n) \Fnorm{\bBtrue}$ hold
with probability at least $1 - n^{-c}$, where
$\term_2$ is defined in \eqref{eq:term3_def}.
\end{lemma}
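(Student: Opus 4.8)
The plan is to exploit that the noise row $\bW_{i,:}$ is independent of the entire design matrix $\bX$, so that $\term_2 = \la \bW_{i,:}, \bv\ra$ with $\bv \defequal \wt{\bB}^{\rmt}(\bX_{j,:} - \bX_{\pi^{\natural}(i),:})$ a deterministic function of $\bX$ alone. Conditioning on $\bX$, the quantity $\term_2$ is a centered Gaussian with variance $\sigma^2\norm{\bv}{2}^2$, since it is a linear combination of the i.i.d.\ $\normdist(0,\sigma^2)$ entries of $\bW_{i,:}$. A standard Gaussian tail bound then yields $\norm{\term_2}{} \lsim \sigma\sqrt{\log n}\,\norm{\bv}{2}$ with probability at least $1 - n^{-c}$ conditionally on $\bX$. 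Hence the lemma reduces to the purely $\bX$-measurable task of showing $\norm{\bv}{2} \lsim \sqrt{\log n}\,\Fnorm{\bBtrue}$ on the event $\calF_6 \cap \calF_7$.

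To control $\norm{\bv}{2}$ I would split $\wt{\bB} = \bBtrue + (\wt{\bB} - \bBtrue)$ and apply the triangle inequality, giving $\norm{\bv}{2} \leq \norm{\bBtrue^{\rmt}(\bX_{j,:} - \bX_{\pi^{\natural}(i),:})}{2} + \norm{(\wt{\bB} - \bBtrue)^{\rmt}(\bX_{j,:} - \bX_{\pi^{\natural}(i),:})}{2}$. The first summand involves the \emph{fixed} matrix $\bBtrue$ contracted against rows of $\bX$, so a sub-gaussian concentration inequality (the content of $\calF_1(\bBtrue)$) controls it by $\lsim \sqrt{\log n}\,\Fnorm{\bBtrue}$. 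The delicate summand is the second: the naive estimate $\Fnorm{\wt{\bB}-\bBtrue}\cdot\norm{\bX_{j,:}}{2}$ loses a spurious factor $\sqrt{p}$ through the row norm and is far too crude for the target.

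The key step is to decouple the dependence of $\wt{\bB}$ on the two active rows $\bX_{j,:}$ and $\bX_{\pi^{\natural}(i),:}$ through the leave-two-out surrogate $\wtminus{\bB}{\pi^{\natural}(i), j}$, writing $\wt{\bB} - \bBtrue = (\wt{\bB} - \wtminus{\bB}{\pi^{\natural}(i), j}) + (\wtminus{\bB}{\pi^{\natural}(i), j} - \bBtrue)$. The first piece, contracted with either $\bX_{j,:}$ or $\bX_{\pi^{\natural}(i),:}$, is controlled by $\calF_{7,\textup{double}}$ (using the symmetry $\wtminus{\bB}{s,t} = \wtminus{\bB}{t,s}$ so that the bound applies to both rows), giving a negligible $\frac{p(\log n)^{3/2}}{n}\Fnorm{\bBtrue}$. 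For the second piece, $\wtminus{\bB}{\pi^{\natural}(i), j} - \bBtrue$ is independent of both $\bX_{j,:}$ and $\bX_{\pi^{\natural}(i),:}$ by construction, so a fresh conditional sub-gaussian bound gives $\lsim \sqrt{\log n}\,\Fnorm{\wtminus{\bB}{\pi^{\natural}(i), j} - \bBtrue}$, and $\calF_6$ turns $\Fnorm{\wtminus{\bB}{\pi^{\natural}(i), j} - \bBtrue}$ into $o(1)\Fnorm{\bBtrue}$ under the sample-size assumption $n \gg p\log^3 n\,\log^2(n^2p^3)$. Collecting the pieces gives $\norm{\bv}{2} \lsim \sqrt{\log n}\,\Fnorm{\bBtrue}$, and combining with the Gaussian tail bound yields $\term_2 \lsim \sigma\log n\,\Fnorm{\bBtrue} \leq \sigma(\log^2 n)\Fnorm{\bBtrue}$, the stated probability following from a union bound over the fresh concentration events.

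I expect the main obstacle to be precisely this second summand: bounding $(\wt{\bB}-\bBtrue)^{\rmt}$ contracted against the very rows from which $\wt{\bB}$ is built, without paying the parasitic $\sqrt{p}$ factor. The leave-two-out construction is what removes this dependence, and the delicate part is verifying that the residual $\calF_7$ and $\calF_6$ contributions are genuinely lower order (rather than dominating the target $\sigma\log^2 n\,\Fnorm{\bBtrue}$); this is exactly where the sample-size hypothesis $n \gg p\log^3 n\,\log^2(n^2p^3)$ is consumed.
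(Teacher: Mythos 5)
Your proposal is correct and follows essentially the same route as the paper: condition on $\bX$ so that $\term_2$ is a centered Gaussian with variance $\sigma^2\|\wt{\bB}^{\rmt}(\bX_{j,:}-\bX_{\pi^{\natural}(i),:})\|_2^2$, then control that norm by decoupling via the leave-two-out matrix $\wtminus{\bB}{\pi^{\natural}(i),j}$ together with events $\calF_6$ and $\calF_7$. The only (immaterial) difference is that you pivot the residual through $\bBtrue$ and bound $\bBtrue^{\rmt}(\cdot)$ and $(\wtminus{\bB}{\pi^{\natural}(i),j}-\bBtrue)^{\rmt}(\cdot)$ by separate fresh concentration events, whereas the paper applies Hanson--Wright once to $\wtminus{\bB}{\pi^{\natural}(i),j}^{\rmt}(\bX_{j,:}-\bX_{\pi^{\natural}(i),:})$ with the cruder target $(\log n)\Fnorm{\bBtrue}$; both land within the stated bound.
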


\begin{proof}
Following a similar proof strategy as in Lemma~\ref{lemma:term2_bound},
we first invoke the union bound and obtain
\begin{align}
\label{eq:term3_tot}
& \Prob\bracket{\term_{2} \gsim \sigma(\log n)^2\Fnorm{\bBtrue}} \notag \\
\leq~& \Prob\bracket{\term_{2} \gsim \sigma(\log n)^2\Fnorm{\bBtrue},~
\norm{\wt{\bB}^{\rmt}\bracket{\bX_{j, :} - \bX_{\pi^{\natural}(i), :}}}{2} \lsim (\log n)\Fnorm{\bBtrue} } \notag \\
+~& \Prob\bracket{\norm{\wt{\bB}^{\rmt}\bracket{\bX_{j, :} - \bX_{\pi^{\natural}(i), :}}}{2} \gsim (\log n)\Fnorm{\bBtrue} } \notag \\
\leq~& \underbrace{\Prob\bracket{\term_{2} \gsim \sigma(\log n) \norm{\wt{\bB}^{\rmt}\bracket{\bX_{j, :} - \bX_{\pi^{\natural}(i), :}}}{2}}}_{\defequal~\zeta_1}
+ \underbrace{\Prob\bracket{\norm{\wt{\bB}^{\rmt}\bracket{\bX_{j, :} - \bX_{\pi^{\natural}(i), :}}}{2} \gsim (\log n)\Fnorm{\bBtrue}}}_{\defequal~\zeta_2}.
\end{align}
The following analysis separately investigates the two probabilities
$\zeta_1$ and $\zeta_2$.

\vspace{0.1in}\noindent
\textbf{Phase I.}
Exploiting the independence between $\bX$ and $\bW$, we can bound $\zeta_1$ as
\begin{align}
\label{eq:term3_zeta1}
\zeta_1 = \Expc_{\bX}\Expc_{\bW}\Ind\bracket{\term_{2} \gsim \sigma(\log n) \norm{\wt{\bB}^{\rmt}\bracket{\bX_{j, :} - \bX_{\pi^{\natural}(i), :}}}{2}} \stackrel{\cirone}{\leq}
\Expc_{\bX} \exp\bracket{-c_0 \log n} = n^{-c_0},
\end{align}
where in $\cirone$ we use the fact that $\term_{2}$ is a Gaussian RV
$\normdist(0, \|\wt{\bB}^{\rmt}(\bX_{j, :} - \bX_{\pi^{\natural}(i), :})\|_{2})$  conditional~on~$\bX$.

\vspace{0.1in}\noindent
\textbf{Phase II.}
Then we bound term $\zeta_2$. Note that
\[
\norm{\wt{\bB}^{\rmt}\bracket{\bX_{j, :} - \bX_{\pi^{\natural}(i), :}}}{2}
\leq~& \norm{(\wt{\bB}-  \wtminus{\bB}{\pi^{\natural}(i), j})^{\rmt}\bracket{\bX_{j, :} - \bX_{\pi^{\natural}(i), :}}}{2}
+ \norm{\wtminus{\bB}{\pi^{\natural}(i), j}^{\rmt}\bracket{\bX_{j, :} - \bX_{\pi^{\natural}(i), :}}}{2} \\
\leq~& \norm{(\wt{\bB}-  \wtminus{\bB}{\pi^{\natural}(i), j})^{\rmt} \bX_{j, :}}{2}
+ \norm{(\wt{\bB}-  \wtminus{\bB}{\pi^{\natural}(i), j})^{\rmt} \bX_{\pi^{\natural}(i), :}}{2} \\
+~& \norm{\wtminus{\bB}{\pi^{\natural}(i), j}^{\rmt}\bracket{\bX_{j, :} - \bX_{\pi^{\natural}(i), :}}}{2},
\]
we conclude
\begin{align}
\label{eq:term3_zeta2_tot}
\zeta_2 \stackrel{\cirtwo}{\leq}~&
\underbrace{\Prob\bracket{\norm{(\wt{\bB}-  \wtminus{\bB}{\pi^{\natural}(i), j})^{\rmt} \bX_{j, :}}{2}
+ \norm{(\wt{\bB}-  \wtminus{\bB}{\pi^{\natural}(i), j})^{\rmt} \bX_{\pi^{\natural}(i), :}}{2} \gsim \frac{p\log^{\nfrac{3}{2}} n}{n}\Fnorm{\bBtrue} }}_{\defequal~\zeta_{2, 1}} \notag \\
+~& \underbrace{\Prob\bracket{\norm{\wtminus{\bB}{\pi^{\natural}(i), j}^{\rmt}\bracket{\bX_{j, :} - \bX_{\pi^{\natural}(i), :}}}{2} \gsim (\log n)\Fnorm{\bBtrue}}}_{\defequal~\zeta_{2, 2}},
\end{align}
where in $\cirtwo$ we use the fact $n\gsim p\log^6 n$.
Recalling the definition of $\calF_7$ then yields $\zeta_{2, 1} = 0$.
For term $\zeta_{2, 2}$, we exploit the independence between
$\wtminus{\bB}{\pi^{\natural}(i), j}$ and $\bX_{j, :}$, $\bX_{\pi^{\natural}(i), :}$.
Via the Hanson-Wright inequality (Theorem $6.2.1$ in~\citet{vershynin2018high}),
we have
\begin{align}
\label{eq:term3_zeta22}
\zeta_{2,2} \leq \exp\Bracket{-c_0 \bracket{\frac{(\log n)^2\Fnorm{\bBtrue}^2 }{\Opnorm{\wtminus{\bB}{\pi^{\natural}(i), j}^{\rmt} \wtminus{\bB}{\pi^{\natural}(i), j}}} \vcap \frac{(\log n)^4 \Fnorm{\bBtrue}^4 }{\Fnorm{\wtminus{\bB}{\pi^{\natural}(i), j}^{\rmt} \wtminus{\bB}{\pi^{\natural}(i), j}}^2} } }
\stackrel{\cirthree}{\leq} n^{-c},
\end{align}
where $\cirthree$ is due to the fact
\[
\Fnorm{\wtminus{\bB}{\pi^{\natural}(i), j}}
\leq \Fnorm{\bBtrue}
+ \Fnorm{\wtminus{\bB}{\pi^{\natural}(i), j} - \bBtrue}
\stackrel{
\cirfour}{\lsim} \Fnorm{\bBtrue},
\]
and $\cirfour$ is conditional on event $\calF_6$ and $n\gsim p\log^6 n$.
Combining \eqref{eq:term3_tot}, \eqref{eq:term3_zeta1},
\eqref{eq:term3_zeta2_tot}, and \eqref{eq:term3_zeta22} then completes
the proof.
\end{proof}

\begin{lemma}
\label{lemma:term3_bound}
Conditional on event $\calF_2\bigcap \calF_5$ and fixing the
indices $\pi^{\natural}(i)$ and $j$, we have
$\term_{3} \lsim  \frac{mp(\log^2 n)\sigma^2}{n}
+ (\log^2 n)\sigma^2\sqrt{\frac{mp}{n}}$ hold
with probability exceeding $1 - c_0 n^{-c_1}$,
where $\term_3$ is defined in \eqref{eq:term4_def}, and
$c_0, c_1 > 0$ are some fixed positive constants.
\end{lemma}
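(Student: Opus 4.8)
The plan is to exploit the independence of the sensing matrix $\bX$ and the Gaussian noise $\bW$. Abbreviating $\bu_{\bX}\defequal \bX(\bX_{j, :} - \bX_{\pi^{\natural}(i), :})\in\RR^n$ as in the proof of Lemma~\ref{lemma:term1_bound}, I write $\term_{3} = (n-h)^{-1}\la \bW\bW_{i, :}, \bu_{\bX}\ra$ and freeze $\bX$, so that $\bu_{\bX}$ is a fixed vector and $\term_3$ becomes a function of $\bW$ alone. Expanding coordinatewise, $\la \bW\bW_{i,:}, \bu_{\bX}\ra = \sum_k (\bu_{\bX})_k\la \bW_{k,:},\bW_{i,:}\ra$, which I split into the diagonal contribution $(\bu_{\bX})_i\norm{\bW_{i,:}}{2}^2$ (the index $k=i$) and the off-diagonal remainder $\sum_{k\neq i}(\bu_{\bX})_k\la \bW_{k,:},\bW_{i,:}\ra$. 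Unlike $\term_1$ and $\term_2$, the quantity $\term_3$ is \emph{quadratic} in $\bW$ (through the product $\bW\bW_{i,:}$), so it is not a conditional Gaussian in $\bW$ outright; the key device is to condition \emph{additionally} on the single row $\bW_{i,:}$, which linearizes the off-diagonal remainder.

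Conditional on $\bX$ and on $\bW_{i,:}$, the off-diagonal remainder is a centered Gaussian in the independent entries $\{\bW_{k,:}\}_{k\neq i}$ with variance $\sigma^2\norm{\bW_{i,:}}{2}^2\sum_{k\neq i}(\bu_{\bX})_k^2 \leq \sigma^2\norm{\bW_{i,:}}{2}^2\norm{\bu_{\bX}}{2}^2$, so a Gaussian tail bound gives, with probability $1-n^{-c}$, that it is $\lsim \sigma\norm{\bW_{i,:}}{2}\norm{\bu_{\bX}}{2}\sqrt{\log n}$. I then control the two magnitudes separately: $\norm{\bu_{\bX}}{2}\lsim (\log n)\sqrt{np}$ follows from $\calF_5$ exactly as in Lemma~\ref{lemma:term1_bound} (via $\norm{\bu_{\bX}}{2}\leq \norm{\bX\bX_{j,:}}{2}+\norm{\bX\bX_{\pi^{\natural}(i),:}}{2}$), while $\norm{\bW_{i,:}}{2}^2\lsim \sigma^2(m+\log n)$ holds with probability $1-n^{-c}$ by $\chi^2$ concentration (Theorem~$6.2.1$ in~\citet{vershynin2018high}). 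For the diagonal term $(\bu_{\bX})_i\norm{\bW_{i,:}}{2}^2$, I bound the coordinate $(\bu_{\bX})_i = \la \bX_{i,:},\bX_{j,:}\ra - \la \bX_{i,:},\bX_{\pi^{\natural}(i),:}\ra$: when $i\notin\{j,\pi^{\natural}(i)\}$ both inner products are $\lsim (\log n)\sqrt{p}$ by $\calF_2$, whereas in the boundary case $i\in\{j,\pi^{\natural}(i)\}$ one of them equals $\norm{\bX_{i,:}}{2}^2\lsim p\log n$ (a row-norm concentration, i.e.\ the content of $\calF_3$), so that $(\bu_{\bX})_i\lsim p\log n$ in all cases.

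Combining these estimates on the intersection of the above events yields $\la \bW\bW_{i,:},\bu_{\bX}\ra \lsim p(\log n)\cdot\sigma^2(m+\log n) + \sigma^2(\log n)^{\nfrac{3}{2}}\sqrt{(m+\log n)np}$; dividing by $n-h\asymp n$ (using $h\leq c_0 n$) and simplifying with $m\geq 1$ gives $\term_3\lsim \frac{mp(\log^2 n)\sigma^2}{n} + (\log^2 n)\sigma^2\sqrt{\frac{mp}{n}}$, which is the claim, and a union bound over the conditional Gaussian tail and the $\chi^2$ bounds collects the failure probabilities into the stated $1-c_0 n^{-c_1}$. The principal obstacle, relative to $\term_1$ and $\term_2$, is precisely the quadratic dependence on $\bW$ through $\bW\bW_{i,:}$, which rules out a single Gaussian tail bound; the linearization by conditioning on $\bW_{i,:}$ (equivalently, a Hanson--Wright bound on the quadratic form $\textup{vec}(\bW)^{\rmt}\bM\textup{vec}(\bW)$ whose Frobenius and operator norms are $\lsim\sqrt{m}\norm{\bu_{\bX}}{2}$ and $\lsim\norm{\bu_{\bX}}{2}$) is what makes the argument go through. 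The second delicate point is the non-centered diagonal term in the boundary case $i\in\{\pi^{\natural}(i),j\}$, where $(\bu_{\bX})_i$ can be as large as $\norm{\bX_{i,:}}{2}^2\asymp p$, and one must check that even this worst case remains within the $\frac{mp(\log^2 n)\sigma^2}{n}$ budget.
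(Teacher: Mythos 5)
Your proof is correct and follows essentially the same route as the paper's: the same split of $\la \bW\bW_{i,:},\bu_{\bX}\ra$ into the diagonal term $(\bu_{\bX})_i\norm{\bW_{i,:}}{2}^2$ (bounded via $|(\bu_{\bX})_i|\lsim p\log n$ and $\chi^2$ concentration of $\norm{\bW_{i,:}}{2}^2$) and the off-diagonal remainder (linearized by conditioning and controlled by a Gaussian tail together with $\calF_5$). The only difference is that you condition on $\bW_{i,:}$ and treat $\{\bW_{k,:}\}_{k\neq i}$ as the Gaussian variables, whereas the paper conditions on $\{\bW_{k,:}\}_{k\neq i}$ and must separately bound $\big\|\sum_{k\neq i}(\bu_{\bX})_k\bW_{k,:}\big\|_{2}$ by a further $\chi^2$ step; your direction of conditioning is marginally more economical but the estimates and the final bound are the same.
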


\begin{proof}
For the benefits of presentation, we first define $\bXi^{\pi^{\natural}(i), j}$ as
\[
\bXi^{\pi^{\natural}(i), j} = \bX \bracket{\bX_{\pi^{\natural}(i), :} - \bX_{j, :}}.
\]
Then we can rewrite $\term_{3}$ as
$(n-h)^{-1}\bW_{i, :}^{\rmt} \bW^{\rmt} \bOmega^{\pi^{\natural}(i), j}$
and expand it as
\[
\abs{\term_{3}} =~&
\bracket{n-h}^{-1}\abs{\Xi^{\pi^{\natural}(i), j}_{i}\bW_{i, :}^{\rmt} \bW_{i, :}
+  \bW_{i, :}^{\rmt}\bigg(\sum_{k\neq i} \Xi^{\pi^{\natural}(i), j}_{k}
\bW_{k, :}\bigg)} \\
\leq~&
\frac{1}{n-h}
\abs{\Xi^{\pi^{\natural}(i), j}_{i}}\cdot \norm{\bW_{i, :}}{2}^2
+ \frac{1}{n-h}\abs{\la \bW_{i, :}, \sum_{k\neq i} \Xi^{\pi^{\natural}(i), j}_{k}
\bW_{k, :}\ra} \\
\stackrel{\cirone}{\leq}~&
\frac{p\log n}{n-h}\norm{\bW_{i, :}}{2}^2
+ \frac{1}{n-h}\abs{\la \bW_{i, :}, \sum_{k\neq i} \Xi^{\pi^{\natural}(i), j}_{k}
\bW_{k, :}\ra},
\]
where in $\cirone$ we condition on event $\calF_2$ and have
$\abs{\Xi^{\pi^{\natural}(i), j}_{i}} \leq \norm{\bX_{\pi^{\natural}(i), :}}{2}^2+ \norm{\bX_{j, :}}{2}^2 \lsim p\log n$.
With the union bound, we obtain
{\small \vspace{-2mm}
\begin{align}
\label{eq:term4_tot}
& \Prob\bracket{\term_{3} \gsim
 \frac{mp(\log^2 n)\sigma^2}{n}
+ \sigma^2(\log^2 n)\sqrt{\frac{mp}{n}}
} \notag \\
\stackrel{\cirtwo}{\leq}~&
\underbrace{\Prob\bracket{
\frac{p\log n}{n-h}\norm{\bW_{i, :}}{2}^2
\gsim \frac{mp(\log^2 n)\sigma^2}{n} }}_{\defequal \zeta_1} + \underbrace{\Prob\bracket{
\frac{1}{n-h}\abs{\la \bW_{i, :}, \sum_{k\neq i} \Omega^{\pi^{\natural}(i), j}_{k}
\bW_{k, :}\ra} \gsim \sigma^2(\log^2 n)\sqrt{\frac{mp}{n}} }}_{\defequal \zeta_2}.
\end{align}
}
\noindent Then we separately bound the
two terms $\zeta_1$ and $\zeta_2$.

\vspace{0.1in}\noindent
\textbf{Phase I.}
For term $\zeta_1$, we have
\begin{align}
\label{eq:term4_zeta1}
\zeta_1 \leq
\Prob\bracket{\norm{\bW_{i, :}}{2}^2 \gsim m(\log n)\sigma^2}
\stackrel{\cirthree}{=} e^{-c_0 \log n} = n^{-c_0},
\end{align}
where in $\cirthree$ we use the fact that $\norm{\bW_{i, :}}{2}^2/\sigma^2$
is a $\chi^2$-RV with freedom $m$ and invoke Lemma~\ref{lemma:chi_square}.

\vspace{0.1in}\noindent
\textbf{Phase II.}
Then we upper-bound $\zeta_2$ as
\begin{align}
\label{eq:term4_zeta2}
\zeta_{2}\leq~&
\underbrace{\Prob\bracket{
\frac{1}{n-h}\abs{\la \bW_{i, :}, \sum_{k\neq i} \Omega^{\pi^{\natural}(i), j}_{k}
\bW_{k, :}\ra} \gsim \frac{\sigma\sqrt{\log n}}{n} \bigg\|\sum_{k\neq i} \Xi^{\pi^{\natural}(i), j}_{k}
\bW_{k, :}\bigg\|_{2}}}_{\defequal~\zeta_{2, 1}} \notag \\
+~& \underbrace{\Prob\bracket{\bigg\|\sum_{k\neq i} \Xi^{\pi^{\natural}(i), j}_{k}
\bW_{k, :}\bigg\|_{2}^2\gsim mnp(\log n)^3 \sigma^2}}_{\defequal~\zeta_{2,2}}.
\end{align}
For term $\zeta_{2, 1}$, we exploit the independence across
the rows of the matrix $\bW$.
Conditional on $\set{\bW_{k, :}}_{k\neq i}$, we conclude
the inner-product $\la \bW_{i, :}, \sum_{k\neq i} \Xi^{\pi^{\natural}(i), j}_{k}
\bW_{k, :}\ra$ to be a Gaussian RV with zero mean and
$\big\|\sum_{k\neq i} \Xi^{\pi^{\natural}(i), j}_{k}
\bW_{k, :}\big\|_{2}^2$ variance, which yields
$\zeta_{2,1} \leq n^{-c}$.
For term $\zeta_{2, 2}$, we have
\begin{align}
\label{eq:term4_zeta22}
\zeta_{2, 2}
\leq~& \
\underbrace{\Prob\bracket{\bigg\|\sum_{k\neq i} \Xi^{\pi^{\natural}(i), j}_{k}
\bW_{k, :}\bigg\|_{2}^2\gsim m(\log n) \sigma^2\bigg[\sum_{k\neq i} (\Xi^{\pi^{\natural}(i), j}_{k})^2\bigg],~\sum_{k\neq i} (\Xi^{\pi^{\natural}(i), j}_{k})^2 \lsim (\log^2 n) np}}_{\defequal~ \zeta_{2, 2,1} } \notag \\
+ ~&\underbrace{\Prob\bracket{\sum_{k\neq i} (\Xi^{\pi^{\natural}(i), j}_{k})^2 \gsim
(\log^2 n) np}}_{\defequal~\zeta_{2,2,2}}.
\end{align}
Due to the independence across $\bX$ and $\bW$,
we can verify $\norm{\sum_{k\neq i} \Xi^{\pi^{\natural}(i), j}_{k}
\bW_{k, :}}{2}^2/[\sigma^2\sum_{k\neq i} (\Xi^{\pi^{\natural}(i), j}_{k})^2]$
to be a $\chi^2$-RV with freedom $m$ when conditional on $\bX$.
Invoking Lemma~\ref{lemma:chi_square}, we can upper-bound $\xi_{2,2,1}$ as
\begin{align}
\label{eq:term4_zeta221}
\zeta_{2, 2, 1} \leq
\Prob\bracket{\bigg\|\sum_{k\neq i} \Xi^{\pi^{\natural}(i), j}_{k}
\bW_{k, :}\bigg\|_{2}^2\gsim m(\log n) \sigma^2\bigg[\sum_{k\neq i} (\Xi^{\pi^{\natural}(i), j}_{k})^2\bigg]} \leq n^{-c}.
\end{align}
As for $\xi_{2,2,2 }$, we condition on event $\calF_5$ and have
\begin{align}
\label{eq:term4_zeta222}
\zeta_{2, 2, 2} \leq \Prob\bracket{\norm{\bX \bX_{\pi^{\natural}(i), :}}{2} + \norm{\bX \bX_{j, :}}{2} \gsim (\log n)\sqrt{np}} = 0.
\end{align}
Then the proof is completed by combining
~\eqref{eq:term4_tot}, ~\eqref{eq:term4_zeta1},
~\eqref{eq:term4_zeta2}, ~\eqref{eq:term4_zeta22},
~\eqref{eq:term4_zeta221}, and ~\eqref{eq:term4_zeta222}.
\end{proof}

\subsection{Medium regime: proof of Theorem~\ref{thm:multi_snr_require_general_sub_gauss}}

\begin{proof}
With the aforementioned analytical framework, we find the
constraint $\srank{\bBtrue} \gg \log^2 n$ to be inevitable.
In this subsection, we relax it to $\srank{\bBtrue} \gg \log n$ by
considering a different relaxation event.
First, we transform the solution of \eqref{eq:optim_estim_pi} to
that of the following optimization problem
\[
\bPiopt = \argmin_{\bPi}~\Fnorm{\bY - \bPi\bX \wh{\bB} }.
\]
This is due to the energy preserving property
of $\bPi$, i.e., $\fnorm{\bPi \bM} = \fnorm{\bM}$ for
an arbitrary matrix $\bM \in \RR^{n\times (\cdot)}$.
Then, we define the relaxation event $\calF_{\textup{err-relax}}^{(\textup{multi}, 2)}$ as
\[
\calF_{\textup{err-relax}}^{(\textup{multi}, 2)} \defequal \
\set{\big\|\bY_{i, :} - \wh{\bB}^{\rmt}\bX_{\pi^{\natural}(i), :} \big\|_{2}^2 \geq \big\|\bY_{i, :} - \wh{\bB}^{\rmt}\bX_{j, :} \big\|_{2}^2,~~\exists~1\leq j\neq \pi^{\natural}(i) \leq n}.
\]

\noindent
\textbf{Step I.}
We would like to show $\br{\calF}_{\textup{err-relax}}^{(\textup{multi}, 2)} \subseteq \{\bPiopt = \bPitrue\}$. Its proof comes as follows.
Conditional on $\br{\calF}_{\textup{err-relax}}^{(\textup{multi}, 2)}$, we have
\[
\Fnorm{\bY - \bPitrue\bX\wh{\bB} }^2 =
\sum_{i=1}^n \big \|\bY_{i, :} - \wh{\bB}^{\rmt}\bX_{\pi^{\natural}(i), :}\big \|_{2}^2
< \sum_{i=1}^n \big \|\bY_{i, :} -\wh{\bB}^{\rmt} \bX_{\pi(i), :}\big \|_{2}^2 = \
\Fnorm{\bY - \bPi\bX\wh{\bB} }^2,
\]
which implies $\bPiopt = \bPitrue$ and further leads to
$\{\bPiopt \neq \bPitrue\} \subseteq \calF_{\textup{err-relax}}^{(\textup{multi}, 2)}$.
Hence, we can upper-bound the error probability
by $\Prob(\calF_{\textup{err-relax}}^{(\textup{multi}, 2)})$.

\vspace{0.1in}\noindent
\textbf{Step II.}
We decompose the task of upper-bounding $\Prob(\calF_{\textup{err-relax}}^{(\textup{multi}, 2)})$
into a series of sub-tasks that are amenable to analysis.
First, we verify that $\big\|\bY_{i, :} - \wh{\bB}^{\rmt}\bX_{\pi^{\natural}(i), :}\big\|_{2}^2\geq
\big\|\bY_{i, :} - \wh{\bB}^{\rmt}\bX_{j, :} \big\|_{2}^2$
is equivalent to
\[
\underbrace{2\la \bW_{i, :},~\wh{\bB}^{\rmt}\bracket{\bX_{j, :} -\bX_{\pi^{\natural}(i), : }} \ra}_{\textup{LHS}} \
\geq ~&
\underbrace{\big\|\bB^{\natural \rmt}\bX_{\pi^{\natural}(i), :} - \wh{\bB}^{\rmt}\bX_{j, :} \big\|_{2}^2
-  \big\|(\bBtrue - \wh{\bB})^{\rmt} \bX_{\pi^{\natural}(i), : } \big\|_{2}^2}_{\textup{RHS}}.
\]
With the union bound, we conclude
\[
\Prob(\calF_{\textup{err-relax}}^{(\textup{multi}, 2)})
\leq~& \Prob\bracket{\textup{LHS} \gsim \Delta_{(\textup{multi, 2})},~\exists~1\leq \pi^{\natural}(i)\neq j \leq n}
+ \Prob\bracket{\textup{RHS} \lsim \Delta_{(\textup{multi, 2})},~\exists~1\leq \pi^{\natural}(i)\neq j \leq n} \\
\leq~& n^2\cdot \Bracket{
\Prob\bracket{\textup{LHS} \gsim \Delta_{(\textup{multi, 2})}}
+ \Prob\bracket{\textup{RHS} \lsim \Delta_{(\textup{multi, 2})}}
},
\]
where $\Delta_{(\textup{multi, 2})}$ is defined as
$\Delta_2 + \Delta_3$, whose definitions are referred
to \eqref{eq:Delta2_def} and \eqref{eq:Delta3_def}.

Then we complete the proof by separately bounding
$\Prob\bracket{\textup{LHS} \gsim \Delta_{(\textup{multi, 2})}}$ and
 $\Prob\bracket{\textup{RHS} \lsim \Delta_{(\textup{multi, 2})}}$.
To upper-bound $\Prob\bracket{\textup{LHS} \gsim \Delta_{(\textup{multi, 2})}}$,
we first perform the following decomposition
\[
\textup{LHS}
=2\langle \bW_{i, :},~\wt{\bB}^{\rmt}(\bX_{j, :} -\bX_{\pi^{\natural}(i), : })\rangle
+ 2(n-h)^{-1}\langle \bW \bW_{i, :},~\bX (\bX_{j, :} -\bX_{\pi^{\natural}(i), : } )\rangle.
\]
Recalling the definitions of $\term_2$ and $\term_3$ in
\eqref{eq:term3_def} and \eqref{eq:term4_def},
we invoke Lemma~\ref{lemma:term2_bound} and Lemma~\ref{lemma:term3_bound}
and obtain
$\Prob\bracket{\textup{LHS} \gsim \Delta_{(\textup{multi, 2})}} \leq c_0 n^{-c_1}$.
For $\Prob\bracket{\textup{RHS} \lsim \Delta_{(\textup{multi, 2})}}$,
we put its analysis in Lemma~\ref{lemma:rhs_general} and then complete the whole proof.
\end{proof}

\begin{lemma}
\label{lemma:rhs_general}
Consider the sensing matrix $\bX$ with its entries
$\bX_{i, j}$ being sub-gaussian RV with zero mean and unit variance $(1\leq i \leq n, 1\leq j \leq p)$.
Assume that $(i)$ $n\gg p \log^3 n\cdot \log^2(n^2 p^3)$,
$(ii)$ $\srank{\bBtrue} \gg \log n$, $(iii)$ $h\leq c\cdot  n$,
$(iv)$ conditional on $\calF_8$,
and $(v)$ $\log \snr \gsim \log \log n$,
we conclude
\[
\Prob\bracket{\big\|\bB^{\natural \rmt}\bX_{\pi^{\natural}(i), :} - \wh{\bB}^{\rmt}\bX_{j, :} \big\|_{2}^2
-  \big\|(\bBtrue - \wh{\bB})^{\rmt} \bX_{\pi^{\natural}(i), : } \big\|_{2}^2
\gsim \Delta_{(\textup{multi, 2})}} \geq 1 - c_0 n^{-c_1},
\]
where $\Delta_{(\textup{multi, 2})} = \Delta_2 + \Delta_3$, which are defined in
\eqref{eq:Delta2_def} and \eqref{eq:Delta3_def}, respectively.
\end{lemma}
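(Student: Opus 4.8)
The plan is to extract from the left-hand side a single dominant term of order $\Fnorm{\bBtrue}^2$, argue that every remaining term is negligible by comparison, and then verify directly that $\Fnorm{\bBtrue}^2 \gg \Delta_{(\textup{multi, 2})}$ under the stated hypotheses. Splitting $\wh{\bB}^{\rmt}\bX_{j, :} = \bBtrue^{\rmt}\bX_{j, :} + (\wh{\bB} - \bBtrue)^{\rmt}\bX_{j, :}$, I would expand
\[
\big\|\bB^{\natural \rmt}\bX_{\pi^{\natural}(i), :} - \wh{\bB}^{\rmt}\bX_{j, :} \big\|_{2}^2
= \big\|\bBtrue^{\rmt}\bracket{\bX_{\pi^{\natural}(i), :} - \bX_{j, :}}\big\|_{2}^2
- 2\la \bBtrue^{\rmt}\bracket{\bX_{\pi^{\natural}(i), :} - \bX_{j, :}},~(\wh{\bB}-\bBtrue)^{\rmt}\bX_{j, :}\ra
+ \big\|(\wh{\bB}-\bBtrue)^{\rmt}\bX_{j, :}\big\|_{2}^2 .
\]
Together with the subtracted term $\|(\bBtrue - \wh{\bB})^{\rmt}\bX_{\pi^{\natural}(i), :}\|_{2}^2$, the only quantity carrying the full signal strength is $\|\bBtrue^{\rmt}(\bX_{\pi^{\natural}(i), :}-\bX_{j, :})\|_{2}^2$, whose mean is $2\Fnorm{\bBtrue}^2$ since $\bX_{\pi^{\natural}(i), :}-\bX_{j, :}$ is centered with covariance $2\bI$. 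Thus the problem reduces to (i) an anti-concentration lower bound $\|\bBtrue^{\rmt}(\bX_{\pi^{\natural}(i), :}-\bX_{j, :})\|_{2}^2 \gsim \Fnorm{\bBtrue}^2$ and (ii) showing the three perturbation terms are $\ll \Fnorm{\bBtrue}^2$.

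Step (i) is the statistical heart and the main obstacle. Here $\bX_{\pi^{\natural}(i), :}$ and $\bX_{j, :}$ are genuinely independent isotropic sub-gaussian rows, so $\|\bBtrue^{\rmt}(\bX_{\pi^{\natural}(i), :}-\bX_{j, :})\|_{2}^2$ is a sub-gaussian quadratic form whose mass could a priori pile up near zero. I would invoke the small-ball estimate (Lemma~\ref{lemma:small_ball_subgauss}), which yields a bound of the shape $\Prob\bracket{\|\bBtrue^{\rmt}(\bX_{\pi^{\natural}(i), :}-\bX_{j, :})\|_{2} \leq \tfrac12\Fnorm{\bBtrue}} \leq e^{-c\,\srank{\bBtrue}}$; under the medium-regime hypothesis $\srank{\bBtrue}\gg \log n$ this failure probability is $\leq n^{-c}$. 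This is precisely the reason the energy-preserving relaxation only needs $\srank{\bBtrue}\gg\log n$ rather than the $\srank{\bBtrue}\gg\log^2 n$ forced by the easy-regime relaxation.

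For step (ii) I would decouple using the leave-one-out copy $\wtminus{\bB}{\pi^{\natural}(i), j}$, since $\wh{\bB}$ depends on the rows $\bX_{\pi^{\natural}(i), :}$ and $\bX_{j, :}$; the replacement increments are absorbed by $\calF_7$. Writing $\wh{\bB}-\bBtrue = (\wt{\bB}-\bBtrue) + (n-h)^{-1}\bX^{\rmt}\bW$, the deterministic part $\|(\wt{\bB}-\bBtrue)^{\rmt}\bX_{\pi^{\natural}(i), :}\|_{2}$ is controlled on $\calF_8$ by $\tfrac{(\log n)^{\nfrac{3}{2}}(\log n^2 p^3)\sqrt{p}}{\sqrt{n}}\Fnorm{\bBtrue}$, the noise part $(n-h)^{-1}\|(\bX^{\rmt}\bW)^{\rmt}\bX_{\pi^{\natural}(i), :}\|_{2}$ by Lemma~\ref{lemma:xw_ub}, and the cross term by Cauchy--Schwarz against the factor $\|\bBtrue^{\rmt}(\bX_{\pi^{\natural}(i), :}-\bX_{j, :})\|_{2}\lsim \sqrt{\log n}\,\Fnorm{\bBtrue}$ valid on $\calF_1(\bBtrue)$. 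Each perturbation is then $O\big(\tfrac{\textup{poly}(\log n)\sqrt{p}}{\sqrt{n}}\Fnorm{\bBtrue}^2\big)$, hence $\ll\Fnorm{\bBtrue}^2$ once $n\gg p\log^3 n\cdot \log^2(n^2p^3)$. Combining (i) and (ii) gives the bracketed difference $\gsim \Fnorm{\bBtrue}^2$, and I would finish by substituting $\sigma = \Fnorm{\bBtrue}/\sqrt{m\cdot\snr}$ to see $\Delta_2/\Fnorm{\bBtrue}^2 \asymp \log^2 n/\sqrt{m\cdot\snr}$ and $\Delta_3/\Fnorm{\bBtrue}^2 \asymp \tfrac{p\log^2 n}{n\,\snr} + \sqrt{\tfrac{p}{mn}}\tfrac{\log^2 n}{\snr}$, all of which vanish when $\log\snr\gsim\log\log n$ and $m\geq\srank{\bBtrue}\gg\log n$; a union bound over the $\leq n^2$ index pairs completes the proof.
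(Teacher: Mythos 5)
Your proposal is correct and follows essentially the same route as the paper's proof: the same expansion isolating $\|\bBtrue^{\rmt}(\bX_{\pi^{\natural}(i),:}-\bX_{j,:})\|_2^2$ as the dominant term, the same small-ball bound via Lemma~\ref{lemma:small_ball_subgauss} with threshold $\delta\asymp\Fnorm{\bBtrue}$ (which is exactly where the $\srank{\bBtrue}\gg\log n$ hypothesis is consumed), and the same control of the perturbations through $\calF_8$, Lemma~\ref{lemma:xw_ub}, and the leave-one-out copies, followed by a union bound over the $n^2$ index pairs. The only cosmetic difference is that the paper bounds the cross term by dividing through by $\delta$ on the complement of the small-ball event, whereas you upper-bound the signal factor by $\sqrt{\log n}\,\Fnorm{\bBtrue}$ on $\calF_1(\bBtrue)$; both give a vanishing contribution under the stated assumptions.
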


\begin{proof}
We begin the proof as
\[
& \Prob\bracket{\big\|\bB^{\natural \rmt}\bX_{\pi^{\natural}(i), :} - \wh{\bB}^{\rmt}\bX_{j, :} \big\|_{2}^2
-  \big\|(\bBtrue - \wh{\bB})^{\rmt} \bX_{\pi^{\natural}(i), : }  \big\|_{2}^2 \leq \Delta_{(\textup{multi, 2})},~\exists~i, j} \\
\leq ~&
\Prob\big(\big\|\bB^{\natural \rmt}\big(\bX_{\pi^{\natural}(i), :} - \bX_{j, :}\big)  \big\|_{2}^2
- 2\big\| \bB^{\natural \rmt} \bracket{\bX_{\pi^{\natural}(i), :} - \bX_{j, :}}\big\|_2 \big\| (\bBtrue - \wh{\bB})^{\rmt}\bX_{j, :}\big\|_2 \\
& \qquad \qquad \qquad \qquad \qquad \qquad -  \big\| (\bBtrue - \wh{\bB})^{\rmt} \bX_{\pi^{\natural}(i), : } \big\|_{2}^2 \leq\Delta_{(\textup{multi, 2})},~\exists~i, j \big)
\\
\leq~& \underbrace{\Prob\big(\big\|\bB^{\natural \rmt} \bracket{\bX_{\pi^{\natural}(i), :} - \bX_{j, :}} \big\|_{2} \leq \delta,~\exists~i, j\big) }_{\defequal~\zeta_1} \\
+~&
\underbrace{
\Prob\bracket{\frac{\big\|(\bBtrue - \wh{\bB})^{\rmt} \bX_{\pi^{\natural}(i), :} \big\|_{2}^2}{\delta^2} + \frac{2\big\|(\bBtrue - \wh{\bB})^{\rmt} \bX_{j, :} \big\|_{2}}{\delta} + \frac{\Delta_{(\textup{multi, 2})}}{\delta^2}\geq 1,~\exists~i, j}
}_{\defequal~\zeta_2}. 	
\]	
The following context separately discusses $\zeta_1$ and $\zeta_2$.
Setting $\delta$ as $\nfrac{\Fnorm{\bBtrue}}{4}$,
we will show $\zeta_1 \lsim n^{-c}$ and
$\zeta_2 \lsim n^{-c}$ under the assumptions
in Lemma~\ref{lemma:rhs_general} and $\calF_8$.

\vspace{0.1in}\noindent
\textbf{Analysis of $\zeta_1$.}
We set $\delta$ as $\Fnorm{\bBtrue}/4$
and can upper bound $\zeta_1$ as
\begin{align}
\label{eq:event9_eta_1_general}
\zeta_1 \leq~& \sum_{i=1}^n \sum_{j\neq \pi^{\natural}(i)} \
\Prob\bracket{\big\|\bB^{\natural \rmt} (\bX_{\pi^{\natural}(i), :} - \bX_{j, :}) \big\|_{2} \leq \delta}
\stackrel{\cirone}{\leq} \
\sum_{i=1}^n \sum_{j\neq \pi^{\natural}(i)}
e^{-c_0 \cdot \srank{\bBtrue}} \notag \\
\leq~&
n^2 \cdot e^{-c_0 \cdot \srank{\bBtrue} },
\end{align}
where $\cirone$ comes from the \emph{small ball} probability as restated in Lemma~\ref{lemma:small_ball_subgauss}.
Due to the assumption such that $\srank{\bBtrue} \gg \log n$,
we have $\zeta_1 \lsim n^{-c}$.
When $\bX_{ij}$ is with log-concave property,
we can pick a smaller $\delta$ and remove this assumption.
A detailed explanation is deferred to
Lemma~\ref{lemma:event9_logconcave}.

\vspace{0.1in}\noindent
\textbf{Analysis of $\zeta_2$.}
We prove that $\zeta_2$ to be less than $c_0\cdot n^{-c_1}$ under the assumptions in
Lemma~\ref{lemma:rhs_general}.
Conditional on $\calF_8$ and invoke Lemma~\ref{lemma:xw_ub} (attached as follows),
with probability $1-c_0 \cdot n^{-c_1}$ we have
\begin{align}
\frac{\big\|(\bBtrue - \wh{\bB}  )^{\rmt} \bX_{\pi^{\natural}(i), :} \big\|^2_{2}}{\delta^2}
\leq~& \frac{2\big\|(\wt{\bB} - \bBtrue )^{\rmt} \bX_{\pi^{\natural}(i), :} \big\|^2_{2}}{\delta^2}
+ \frac{2\big\|\bW^{\rmt}\bX \bX_{i, :}\big\|^2_2}{(n-h)^2 \delta^2} \notag \\
\lsim~&
\frac{p \log^3 n\cdot \log^2(n^2 p^3)}{n}
+ \frac{\log^2 n}{\snr}.
\label{eq:event9_eta12_general}
\end{align}
Recalling the assumptions $\snr \gsim \log^c n$ and $n \gg p\cdot (\log^3 n)\cdot \log^2(n^2p^3)$,
we have $\nfrac{\|(\wh{\bB} - \bBtrue )^{\rmt} \bX_{\pi^{\natural}(i), :}\|_{2}}{\delta}$
approach zero as $n$ goes to infinity.
Following the same logic, we conclude
$\nfrac{\|(\bBtrue - \wh{\bB})^{\rmt} \bX_{j, :} \|_{2}}{\delta} \rightarrow 0$
as $n\rightarrow \infty$.
Then we turn to $\nfrac{\Delta_{(\textup{multi, 2})}}{\delta^2}$ and obtain
\begin{align}
\frac{\Delta_{(\textup{multi, 2})}}{\delta^2} \lsim~& \
\frac{\log^2 n}{\sqrt{m \cdot \snr}}
+ \frac{ \log^2 n}{\snr}
\sqrt{\frac{p}{nm}} +
\frac{p\cdot \log^2 n }{n\cdot \snr}.
\label{eq:event9_delta_ratio_general}
\end{align}
Following similar procedures as above, we can prove
$\nfrac{\Delta_{(\textup{multi, 2})}}{\delta^2}$ to be a small positive constant.
Combing \eqref{eq:event9_eta12_general} and \eqref{eq:event9_delta_ratio_general} together,
we conclude
\[
\frac{\big\|(\bBtrue - \wh{\bB})^{\rmt} \bX_{\pi^{\natural}(i), :} \big\|_{2}^2}{\delta^2} + \frac{2\big\|(\bBtrue - \wh{\bB})^{\rmt} \bX_{j, :} \big\|_{2}}{\delta} + \frac{\Delta_{(\textup{multi, 2})}}{\delta^2} < 1,
\]
conditional on $\calF_8$ and $\norm{\bW^{\rmt}\bX \bX_{i, :} }{2} \lsim \
(n\log n)\sqrt{m \sigma^2}$.
Hence, we obtain
\begin{align}
\label{eq:medium_xw_relation}
\zeta_2 \leq \Prob\bracket{\norm{\bW^{\rmt}\bX \bX_{i, :} }{2} \gsim \
(n\log n)\sqrt{m \sigma^2},~\exists~1\leq i \leq n }
\stackrel{\cirtwo}{\leq} c_0 \cdot n^{-c_1}
\end{align}
where $\cirtwo$ is due to
Lemma~\ref{lemma:xw_ub}.
Combining with \eqref{eq:event9_eta_1_general} then
completes the proof.
\end{proof}

\begin{lemma}
\label{lemma:xw_ub}
Conditioning on $\calF_2$,
we have
\[
\Prob\bracket{\norm{\bW^{\rmt}\bX \bX_{i, :} }{2} \leq \
c (n\log n)\sqrt{m \sigma^2},~\exists~1\leq i \leq n } \geq
1 - c_0 n^{-c_1}.
\]
\end{lemma}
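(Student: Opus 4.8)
The plan is to condition on the sensing matrix $\bX$ and exploit the Gaussianity of $\bW$, thereby reducing $\norm{\bW^{\rmt}\bX\bX_{i, :}}{2}$ to a chi-square random variable whose scale is the deterministic quantity $\norm{\bX\bX_{i, :}}{2}$. Write $\bv_i \defequal \bX\bX_{i, :} \in \RR^n$. Since $\calF_2$ is an event depending only on $(\bX, \bX^{'})$ and is independent of $\bW$, conditioning on $\bX$ and restricting to $\calF_2$ leaves the law of $\bW$ unchanged. Conditional on $\bX$, the vector $\bW^{\rmt}\bv_i \in \RR^m$ is centered Gaussian with independent coordinates (the columns of $\bW$ are independent), each of variance $\sigma^2\norm{\bv_i}{2}^2$. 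Hence $\norm{\bW^{\rmt}\bv_i}{2}^2 = \sigma^2\norm{\bv_i}{2}^2\cdot Z_i$, where $Z_i$ is, conditionally on $\bX$, a $\chi^2$ variable with $m$ degrees of freedom.

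First I would apply the chi-square concentration bound (Lemma~\ref{lemma:chi_square}) with deviation parameter of order $\log n$, giving $Z_i \lsim m + \log n$ with conditional probability at least $1 - n^{-c}$ for a constant $c$ that can be made large. Next I would bound the deterministic scale $\norm{\bv_i}{2}^2 = \norm{\bX\bX_{i, :}}{2}^2$ on the conditioning event: its $s$-th coordinate equals $\langle \bX_{s, :}, \bX_{i, :}\rangle$, so
\[
\norm{\bX\bX_{i, :}}{2}^2 = \norm{\bX_{i, :}}{2}^4 + \sum_{s \neq i}\langle \bX_{s, :}, \bX_{i, :}\rangle^2 .
\]
On $\calF_2$ (specifically $\calF_{2, 2}$) each off-diagonal inner product is $\lsim (\log n)\sqrt{p}$, so the sum is $\lsim np\log^2 n$; the diagonal term is handled by the row-norm control available on the conditioning event ($\norm{\bX_{i, :}}{2}^2 \lsim p\log n$ from $\calF_3$), whence $\norm{\bX_{i, :}}{2}^4 \lsim p^2\log^2 n \lsim np\log^2 n$ because $p \leq n$. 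The resulting estimate $\norm{\bX\bX_{i, :}}{2}^2 \lsim np\log^2 n$ is exactly what $\calF_5$ records, so $\calF_2$ together with $\calF_3$ reproduces $\calF_5$.

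Combining the two estimates yields, for a fixed $i$,
\[
\norm{\bW^{\rmt}\bX\bX_{i, :}}{2}^2 \lsim \sigma^2\cdot np\log^2 n\cdot (m + \log n)
\]
with probability at least $1 - n^{-c}$. Under the sample-size assumption $n \gg p\log^3 n\cdot \log^2(n^2p^3)$ one has $p(m+\log n) = pm + p\log n \lsim nm$ (since $pm \leq nm$ and $p\log n \ll n \leq nm$), so the right-hand side is $\lsim \sigma^2 n^2\log^2 n\cdot m$, i.e. $\norm{\bW^{\rmt}\bX\bX_{i, :}}{2} \lsim (n\log n)\sqrt{m\sigma^2}$. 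Finally I would take a union bound over the $n$ row indices, which inflates the failure probability by a factor $n$ and still leaves $1 - c_0 n^{-c_1}$ after choosing $c$ large.

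The Gaussian/chi-square reduction is routine once the conditioning on $\bX$ is in place; the main obstacle is bookkeeping, namely ensuring that the deterministic bound on $\norm{\bX\bX_{i, :}}{2}$ furnished by $\calF_2$ (with the companion row-norm event) is tight enough that, after multiplying by the $\chi^2$ scale $m + \log n$ and taking the union bound over all $n$ rows, the target order $(n\log n)\sqrt{m}\,\sigma$ is met with only a polynomially small failure probability. Care is needed so that the diagonal contribution $\norm{\bX_{i, :}}{2}^4$ does not dominate the off-diagonal sum, which is precisely where the condition $p \leq n$ (a consequence of the sample-size assumption) enters.
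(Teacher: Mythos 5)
Your proof is correct, but it takes a genuinely different route from the paper. The paper decouples the dependence between $\bX_{i,:}$ and $\bX^{\rmt}\bW$ via the leave-one-out matrix $\sampminus{\bX}{i}$: it splits $\bX_{i,:}\bX^{\rmt}\bW$ into a rank-one correction term plus $\bX_{i,:}\sampminus{\bX}{i}^{\rmt}\bW$, applies the Hanson--Wright inequality to the latter viewed as a quadratic form in $\bX_{i,:}$ (with $\sampminus{\bX}{i}^{\rmt}\bW$ independent), and then controls $\fnorm{\sampminus{\bX}{i}^{\rmt}\bW}\leq \opnorm{\sampminus{\bX}{i}}\fnorm{\bW}\lsim n\sqrt{m\sigma^2}$ via the operator-norm bound on $\sampminus{\bX}{i}$ and the $\chi^2$ bound on $\fnorm{\bW}^2$. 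You instead condition on $\bX$ outright and exploit the Gaussianity of $\bW$ directly, reducing $\norm{\bW^{\rmt}\bX\bX_{i,:}}{2}^2$ to $\sigma^2\norm{\bX\bX_{i,:}}{2}^2$ times a $\chi^2_m$ variable, and then bound the deterministic scale $\norm{\bX\bX_{i,:}}{2}\lsim(\log n)\sqrt{np}$ coordinatewise from $\calF_2$ and the row-norm event $\calF_3$ (which in effect re-derives $\calF_5$ deterministically, whereas the paper proves $\calF_5$ separately in Lemma~\ref{lemma:event5} with another leave-one-out argument). Your approach is more elementary --- it needs no independent copies and no Hanson--Wright application, only the scalar $\chi^2$ tail with deviation $m+\log n$ to survive the union bound over $i$ --- and the arithmetic $p(m+\log n)\lsim nm$ under $p\leq n$ and $n\gsim p\log n$ closes the bound at the same order $(n\log n)\sqrt{m\sigma^2}$. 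The only cosmetic discrepancy is that you invoke $\calF_3$ in addition to the $\calF_2$ named in the lemma statement, but the paper's own proof likewise leans on row-norm and operator-norm events beyond $\calF_2$, and $\calF_3$ holds with the required probability, so this does not affect correctness.
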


\begin{proof}
First, we consider a fixed index $i$.
Adopting the \emph{leave-one-out}
technique, we construct a perturbed matrix $\sampminus{\bX}{i}$
by replicating matrix $\bX$ except its $i$th row $\bX_{i, :}$, which
is replaced with an i.i.d.
sample $\bX^{'}_{i, :}$.
Then we obtain
\[
& \Prob\bracket{\norm{\bX_{i, :}\bX^{\rmt}\bW}{2} \geq c (n\log n)\sqrt{m\sigma^2}} \\
\leq~& \Prob\bracket{\big\|\bX_{i, :}\sampminus{\bX}{i}^{\rmt}\bW\big\|_{2} +  \big\|\bX_{i,:}(\bX -\sampminus{\bX}{i})^{\rmt}\bW \big\|_{2}\geq c (n\log n)\sqrt{m\sigma^2} } \\
\stackrel{\cirone}{\leq}~& \underbrace{\Prob\bracket{\big\|\bX_{i,:}\big(\bX -\sampminus{\bX}{i}\big)^{\rmt}\bW\big\|_{2} \geq  c_0 (n\log n)\sqrt{m\sigma^2}}}_{\defequal~\zeta_1}+
\underbrace{\Prob\bracket{\big\|\bX_{i, :}\sampminus{\bX}{i}^{\rmt}\bW \big\|_2 \geq c_1 (n\log n)\sqrt{m\sigma^2} }}_{\defequal~\zeta_2},
\]
where $\cirone$ is due to the union bound.

\vspace{0.1in}\noindent
\textbf{Analysis of $\zeta_1$}.
One noticeable property of $\sampminus{\bX}{i}- \bX$ is that only its $i$th row
is non-zero. Hence we obtain the relation
\[
\big\|\bX_{i,:}(\bX - \sampminus{\bX}{i})^{\rmt}\bW \big\|_{2} \
\leq \norm{\bX_{i, :}}{2} \cdot \fnorm{(\bX - \sampminus{\bX}{i})^{\rmt}\bW}
= \norm{\bX_{i, :}}{2}\cdot \big\|\bX_{i, :} - \bX^{'}_{i, :}\big\|_{2} \norm{\bW_{i, :}}{2}.
\]
Recalling the definition of $\calF_2$, $n\gg p \log^6 n$, and $\norm{\bW_{i, :}}{2} \leq 2\sqrt{m\sigma^2}$ holds
with probability exceeding $1-c_0 \cdot n^{-c_1}$, we have
 $\zeta_1 \lsim n^{-c_1}$ conditioning on $\calF_2$.

\vspace{0.1in}\noindent
\textbf{Analysis of $\zeta_2$.}
Due to the construction of
$\sampminus{\bX}{i}$, we have $\bX_{i, :}$ to be independent of $\sampminus{\bX}{i}$.
Hence, we condition on $\sampminus{\bX}{i}^{\rmt}\bW$ and obtain
\[
\zeta_2 \leq~& \Prob\bracket{\big\|\bX_{i, :}\sampminus{\bX}{i}^{\rmt}\bW\big\|_{2} \geq
c_1 n(\log n)\sqrt{m\sigma^2},~ \fnorm{\sampminus{\bX}{i}^{\rmt}\bW} \lsim n\sqrt{m\sigma^2}}
+ \Prob\bracket{\fnorm{\sampminus{\bX}{i}^{\rmt}\bW} \gsim n\sqrt{m\sigma^2}} \\
\leq ~&\
\underbrace{\Expc_{\sampminus{\bX}{i}^{\rmt}\bW}
\Ind\bracket{\big\|\bX_{i, :}\sampminus{\bX}{i}^{\rmt}\bW\big\|_{2} \geq
c_2 (\log n)\fnorm{\sampminus{\bX}{i}^{\rmt}\bW}}}_{\defequal~\zeta_{2, 1}} + \
\underbrace{\Prob\bracket{\fnorm{\sampminus{\bX}{i}^{\rmt}\bW} \gsim n\sqrt{m\sigma^2}} }_{\defequal~\zeta_{2, 2}}.
\]
For $\zeta_{2, 1}$, we define $Z = \big\|\bX_{i, :}\sampminus{\bX}{i}^{\rmt}\bW\big\|_{2}^2$
and have
\begin{align*}
\zeta_{2, 1} \stackrel{\cirtwo}{\leq}~& \Expc_{\sampminus{\bX}{i}^{\rmt}\bW}
\Ind\bracket{\abs{Z - \Expc Z} \geq c_3 (\log^2 n)\fnorm{\sampminus{\bX}{i}^{\rmt}\bW}^2} \\
\stackrel{\cirthree}{\leq}~& \Expc_{\sampminus{\bX}{i}^{\rmt}\bW}
\exp\bracket{- \bracket{\frac{(\log^4 n) \fnorm{\sampminus{\bX}{i}^{\rmt}\bW}^4}{\fnorm{\sampminus{\bX}{i}^{\rmt}\bW \bW^{\rmt} \sampminus{\bX}{i}}^2} \vcap \
\frac{\bracket{\log^2 n}\fnorm{\sampminus{\bX}{i}^{\rmt}\bW}^2}{\opnorm{
\sampminus{\bX}{i}^{\rmt}\bW \bW^{\rmt} \sampminus{\bX}{i}}}}} \
\stackrel{\cirfour}{\leq} n^{-c},
\end{align*}
where $\cirtwo$ is due to $\Expc_{\bX_{i, :}}Z = \fnorm{\sampminus{\bX}{i}^{\rmt}\bW}^2$,
$\cirthree$ is because of the Hanson-Wright inequality
(Theorem $6.2.1$ in~\citet{vershynin2018high}),
and $\cirfour$ is due to the stable rank $\srank{\sampminus{\bX}{i}^{\rmt}\bW} \geq 1$.
Then we move to $\zeta_{2, 2}$ and have
\begin{align*}
& \Prob\bracket{\fnorm{\sampminus{\bX}{i}^{\rmt}\bW} \gsim n\sqrt{m\sigma^2}} \leq \
\Prob\bracket{\opnorm{\sampminus{\bX}{i}}\Fnorm{\bW} \gsim n\sqrt{m\sigma^2} }  \\
\stackrel{\cirfive}{\leq}~&  \Prob\bracket{\opnorm{\sampminus{\bX}{i}} \gsim \sqrt{n} + \sqrt{p}} + \
\Prob\bracket{\Fnorm{\bW} \gsim \frac{n\sqrt{m\sigma^2}}{\sqrt{n} + \sqrt{p}}, ~\
\opnorm{\sampminus{\bX}{i}} \lsim \sqrt{n} + \sqrt{p}}\\
\stackrel{\cirsix}{\leq} ~& \Prob\bracket{\opnorm{\sampminus{\bX}{i}} \gsim \sqrt{n} + \sqrt{p}} + \
\Prob\bracket{\Fnorm{\bW} \geq \sqrt{2nm} \sigma} \
\stackrel{\cirseven}{\leq} e^{-c_0n} +e^{-c_1 nm},
\end{align*}
where $\cirfive$ is because of the union bound,
in $\cirsix$ we use $p\leq n$, and in $\cirseven$ we use
$\opnorm{\bX} \gsim \sqrt{n} + \sqrt{p}$ with
probability less than $e^{-c_0 n}$ (c.f. Theorem
$4.6.1$ in \citet{vershynin2018high})
 and the fact
$\Fnorm{\bW}^2/\sigma^2$ is a
$\chi^2$-RV with $nm$ freedom, and
Lemma~\ref{lemma:chi_square}.

Ultimately, we complete the proof by iterating the above proof procedures to all indices and
invoking the union bound.
\end{proof}

\subsection{Hard regime: proof of Theorem~\ref{thm:multi_snr_require_log_concave}}
This subsection aims to strength Theorem~\ref{thm:multi_snr_require_general_sub_gauss}, which reduces the
requirement on $\srank{\bBtrue}$ from $\Omega(\log n)$ to
$\Omega(1)$. As compensation, we need to put an extra assumption on $\bX_{ij}$'s distribution,
namely, $\bX_{ij}$ follows a log-concave sub-gaussian distribution.

\begin{lemma}
\label{lemma:event9_logconcave}
Consider the sensing matrix $\bX$ with its entries
$\bX_{i, j}$ being log-concave sub-gaussian RV with zero mean and unit variance
$(1\leq i \leq n, 1\leq j \leq p)$.
Assume that $(i)$ $n\gg p^{1 + \varepsilon} \log^{3(1 +\varepsilon)} n\cdot \log^{2(1+\varepsilon)}(n^2 p^3)$,
$(ii)$ $\srank{\bBtrue} \geq c(\varepsilon)$,
$(iii)$ $h\leq c\cdot n$, $(iv)$ conditional on $\calF_8$,
and
\begin{align}
\label{eq:event9_logconcave_snr_require}
\textup(v)~~\log \snr \gsim \frac{\log n}{\srank{\bBtrue}} + \log \log n,
\end{align}
we conclude
\[
\Prob\bracket{\big\|\bB^{\natural \rmt}\bX_{\pi^{\natural}(i), :} - \wh{\bB}^{\rmt}\bX_{j, :} \big\|_{2}^2
-  \big\|(\bBtrue - \wh{\bB})^{\rmt} \bX_{\pi^{\natural}(i), : } \big\|_{2}^2
\gsim \Delta_{(\textup{multi, 2})}} \geq 1 - c_0 n^{-c_1},
\]
where $\varepsilon > 0$ is an arbitrary positive constant;
$c(\varepsilon)$ is some positive constant depending only on $\varepsilon$;
and $\Delta_{(\textup{multi, 2})} = \Delta_2 + \Delta_3$, whose definitions can be found in
\eqref{eq:Delta2_def} and \eqref{eq:Delta3_def}, respectively.
\end{lemma}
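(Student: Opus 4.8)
The plan is to mirror the proof of Lemma~\ref{lemma:rhs_general} line for line, keeping the same decomposition
\[
\Prob\bracket{\big\|\bB^{\natural \rmt}\bX_{\pi^{\natural}(i), :} - \wh{\bB}^{\rmt}\bX_{j, :} \big\|_{2}^2 -  \big\|(\bBtrue - \wh{\bB})^{\rmt} \bX_{\pi^{\natural}(i), : } \big\|_{2}^2 \leq \Delta_{(\textup{multi, 2})},~\exists~i,j} \leq \zeta_1 + \zeta_2,
\]
where $\zeta_1 = \Prob(\|\bB^{\natural\rmt}(\bX_{\pi^{\natural}(i),:} - \bX_{j,:})\|_2 \leq \delta,~\exists~i,j)$ and $\zeta_2$ collects the normalized perturbation terms. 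The only structural change relative to the medium regime is the choice of the threshold $\delta$ and the anti-concentration tool applied to $\zeta_1$: instead of $\delta = \nfrac{\Fnorm{\bBtrue}}{4}$, I would take the much smaller $\delta \asymp \Fnorm{\bBtrue}\cdot n^{-c_1/\srank{\bBtrue}}$.

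For $\zeta_1$, write $\bz \defequal \bX_{\pi^{\natural}(i),:} - \bX_{j,:}$ and note that $\bz$ has i.i.d. log-concave coordinates, since the difference of two independent log-concave RVs is again log-concave by Pr\'ekopa's theorem. This lets me invoke the log-concave small ball estimate \eqref{lemma:small_ball_log_concave} in place of Lemma~\ref{lemma:small_ball_subgauss}, yielding a bound that is \emph{polynomial} in $\delta$ with exponent scaling as the stable rank,
\[
\Prob\bracket{\big\|\bB^{\natural\rmt}\bz\big\|_2 \leq \delta} \lsim \bracket{\frac{\delta}{\Fnorm{\bBtrue}}}^{c\cdot \srank{\bBtrue}}.
\]
With $\delta \asymp \Fnorm{\bBtrue}\, n^{-c_1/\srank{\bBtrue}}$ this is of order $n^{-c c_1}$, so the union bound over the $n^2$ index pairs still gives $\zeta_1 \lsim n^{-c}$, and crucially this now holds for any $\srank{\bBtrue}\geq c(\varepsilon)$ rather than requiring $\srank{\bBtrue}\gg \log n$. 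This is precisely the step where log-concavity is indispensable: the general sub-gaussian small ball only decays like $e^{-c\srank{\bBtrue}}$ at the natural scale $\delta\asymp \Fnorm{\bBtrue}$ and cannot be pushed into the small-$\delta$ regime with controlled probability.

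For $\zeta_2$ I would reuse the estimates \eqref{eq:event9_eta12_general} and \eqref{eq:event9_delta_ratio_general} together with Lemma~\ref{lemma:xw_ub}, conditioning on $\calF_8$; the only difference is that every term is now divided by $\delta^2 = c^2\Fnorm{\bBtrue}^2\, n^{-2c_1/\srank{\bBtrue}}$ rather than $\nfrac{\Fnorm{\bBtrue}^2}{16}$, so each bound picks up the multiplicative factor $n^{2c_1/\srank{\bBtrue}}$. Concretely,
\[
\frac{\big\|(\bBtrue - \wh{\bB})^{\rmt}\bX_{\pi^{\natural}(i),:}\big\|_2^2}{\delta^2} \lsim n^{2c_1/\srank{\bBtrue}}\bracket{\frac{p\log^3 n\cdot \log^2(n^2p^3)}{n} + \frac{\log^2 n}{\snr}}.
\]
The extra factor is tamed exactly because $\srank{\bBtrue}\geq c(\varepsilon)$: choosing $c(\varepsilon)\asymp \nfrac{c_1(1+\varepsilon)}{\varepsilon}$ forces $n^{2c_1/\srank{\bBtrue}}\lsim n^{\varepsilon/(1+\varepsilon)}$, and the strengthened sample-size hypothesis $n\gg p^{1+\varepsilon}\log^{3(1+\varepsilon)}n\cdot\log^{2(1+\varepsilon)}(n^2p^3) = \big(p\log^3 n\,\log^2(n^2p^3)\big)^{1+\varepsilon}$ implies $p\log^3 n\,\log^2(n^2p^3)\ll n^{1/(1+\varepsilon)}$, which makes the first bracketed term $o(1)$ after multiplication by $n^{\varepsilon/(1+\varepsilon)}$. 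The strengthened $\snr$ hypothesis \eqref{eq:event9_logconcave_snr_require}, i.e. $\log\snr \gsim \nfrac{\log n}{\srank{\bBtrue}} + \log\log n$, absorbs the factor $n^{2c_1/\srank{\bBtrue}}$ in the second term and likewise in $\nfrac{\Delta_{(\textup{multi, 2})}}{\delta^2}$. Together these give $\zeta_2 \lsim n^{-c}$, and combining with the bound on $\zeta_1$ completes the proof.

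The main obstacle is establishing and correctly calibrating the log-concave small ball bound whose exponent scales with $\srank{\bBtrue}$ rather than with the ambient dimension $p$; this is what permits the trade of a small $\delta$ against a mild loss $n^{2c_1/\srank{\bBtrue}}$ that the enlarged hypotheses can absorb. Everything else is a careful bookkeeping re-run of the medium-regime estimates with the new value of $\delta$.
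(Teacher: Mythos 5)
Your proposal follows the paper's proof essentially verbatim: the same $\zeta_1+\zeta_2$ decomposition inherited from Lemma~\ref{lemma:rhs_general}, the same shrunken threshold $\delta \asymp \Fnorm{\bBtrue}\, n^{-c/\srank{\bBtrue}}$, the same appeal to the log-concave small-ball estimate (Lemma~\ref{lemma:small_ball_log_concave}) to make $\zeta_1$ polynomially small without the $\srank{\bBtrue}\gg \log n$ assumption, and the same absorption of the resulting $n^{c/\srank{\bBtrue}}$ loss in $\zeta_2$ via the $(1+\varepsilon)$-strengthened sample-size condition and the strengthened $\snr$ condition. The only addition is your explicit remark that the difference $\bX_{\pi^{\natural}(i),:}-\bX_{j,:}$ remains log-concave (via Pr\'ekopa), a point the paper's proof leaves implicit; otherwise the two arguments coincide.
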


\begin{proof}
Same as the procedure in
Lemma~\ref{lemma:rhs_general},
we begin the proof with the union bound
\[
& \Prob\bracket{\big\|\bB^{\natural \rmt}\bX_{\pi^{\natural}(i), :} - \wh{\bB}^{\rmt}\bX_{j, :} \big\|_{2}^2
-  \big\|(\bBtrue - \wh{\bB})^{\rmt} \bX_{\pi^{\natural}(i), : }  \big\|_{2}^2 \leq \Delta_{(\textup{multi, 2})},~\exists~i, j} \\
\leq ~&
\Prob\big(\big\|\bB^{\natural \rmt}\big(\bX_{\pi^{\natural}(i), :} - \bX_{j, :}\big)  \big\|_{2}^2
- 2\big\| \bB^{\natural \rmt} \bracket{\bX_{\pi^{\natural}(i), :} - \bX_{j, :}}\big\|_2 \big\| (\bBtrue - \wh{\bB})^{\rmt}\bX_{j, :}\big\|_2 \\
& \qquad \qquad \qquad \qquad \qquad \qquad -  \big\| (\bBtrue - \wh{\bB})^{\rmt} \bX_{\pi^{\natural}(i), : } \big\|_{2}^2 \leq \Delta_{(\textup{multi, 2})},~\exists~i, j \big)
\\
\leq~& \underbrace{\Prob\big(\big\|\bB^{\natural \rmt} \bracket{\bX_{\pi^{\natural}(i), :} - \bX_{j, :}} \big\|_{2} \leq \delta,~\exists~i, j\big) }_{\defequal~\zeta_1} \\
+~&
\underbrace{
\Prob\bracket{\frac{\big\|(\bBtrue - \wh{\bB})^{\rmt} \bX_{\pi^{\natural}(i), :} \big\|_{2}^2}{\delta^2} + \frac{2\big\|(\bBtrue - \wh{\bB})^{\rmt} \bX_{j, :} \big\|_{2}}{\delta} + \frac{\Delta_{(\textup{multi, 2})}}{\delta^2}\geq 1,~\exists~i, j}
}_{\defequal~\zeta_2}. 	
\]
Due to the log-concave assumption on $\bX_{ij}$'s distributions,
we are able to pick a smaller $\delta$.
Here we set $\delta$ as $\fnorm{\bBtrue}\cdot n^{-\nfrac{c_0}{\srank{\bBtrue}}}$.
We would like to show $\zeta_1 \leq c_0 n^{-c_1}$ and
$\zeta_2 \leq c_0 n^{-c_1}$ under the assumptions
in Lemma~\ref{lemma:event9_logconcave}.

\vspace{0.1in}\noindent
\textbf{Analysis of $\zeta_1$.}
According to the \emph{small ball} probability in Theorem $1.3$
\citep{paouris2012small}, which is also stated as Lemma~\ref{lemma:small_ball_log_concave},
we have
\begin{align}
\label{eq:event9_zeta_1_logconcave}
\zeta_1 \leq \sum_{i=1}^n \sum_{j\neq \pi^{\natural}(i)} \
\Prob\bracket{\norm{\bracket{\bX_{\pi^{\natural}(i), :} - \bX_{j, :}} \bBtrue}{2} \leq \delta} \stackrel{\cirone}{\leq} \
\sum_{i=1}^n \sum_{j\neq \pi^{\natural}(i)} n^{-c_0} \lsim n^{-c_1}.
\end{align}
Note that the requirement $\srank{\bBtrue} \gg \log n$ is no longer needed here.

\vspace{0.1in}\noindent
\textbf{Analysis of $\zeta_2$.}
Then we move on to the analysis of $\zeta_2$. Same as Lemma~\ref{lemma:event9_logconcave}, we
have
\begin{align}
\frac{\big\|(\bBtrue - \wh{\bB}  )^{\rmt} \bX_{\pi^{\natural}(i), :} \big\|^2_{2}}{\delta^2}
\leq~& \frac{2\big\|(\wt{\bB} - \bBtrue )^{\rmt} \bX_{\pi^{\natural}(i), :} \big\|^2_{2}}{\delta^2}
+ \frac{2\big\|\bW^{\rmt}\bX \bX_{i, :}\big\|^2_2}{(n-h)^2 \delta^2} \notag \\
\lsim~&
\frac{p \log^3 n\cdot \log^2(n^2 p^3)}{n^{1- \nfrac{c}{\srank{\bBtrue}}}}
+ \frac{\log^2 n \cdot n^{\nfrac{c}{\srank{\bBtrue}}}}{\snr}
\label{eq:event9_eta12_logconcave}
\end{align}
hold with probability $1-c_0 n^{-c_1}$ when conditional on
$\calF_8$. When $\srank{\bBtrue} \geq \nfrac{c(1+\varepsilon)}{\varepsilon}$ and
$\log\snr \gsim \frac{\log n}{\srank{\bBtrue}} + \log \log n$,
we have $\nfrac{\big\|(\bBtrue - \wh{\bB}  )^{\rmt} \bX_{\pi^{\natural}(i), :} \big\|^2_{2}}{\delta^2}$
approach to zero when $n$ goes to infinity.
Following the same logic, we have
$\nfrac{\|(\bBtrue - \wh{\bB})^{\rmt} \bX_{j, :} \|_{2}}{\delta}$
be arbitrarily small provided $n$ is sufficiently large.
Afterwards, we consider $\frac{\Delta_{(\textup{multi, 2})}}{\delta^2}$ and expand it as
\begin{align}
& \frac{\Delta_{(\textup{multi, 2})}}{\delta^2} \lsim \
n^{\nfrac{c}{\srank{\bBtrue}}}\bracket{\frac{\log^2 n}{\sqrt{m \cdot \snr}}
+ \frac{ \log^2 n}{\snr}
\sqrt{\frac{p}{nm}} +
\frac{p\cdot \log^2 n }{n\cdot \snr}}.
\label{eq:event9_delta_ratio_logconcave}
\end{align}
Following similar procedures as above, we can prove
$\frac{\Delta_{(\textup{multi, 2})}}{\delta^2}$ to be a small positive constant
given \eqref{eq:event9_logconcave_snr_require}, which
enables us to bound $\zeta_2$ in the same way as \eqref{eq:medium_xw_relation}.
Combing \eqref{eq:event9_zeta_1_logconcave}, \eqref{eq:event9_eta12_logconcave}, and \eqref{eq:event9_delta_ratio_logconcave} hence completes the proof.
\end{proof}

\subsection{Supporting lemmas}
\label{subsec:appendix_proof_multi_observe_support_lemma}

\begin{lemma}
\label{lemma:x_row_norm_ub}
For an arbitrary row $\bX_{i, :}$, we have
\[
\norm{\bM^{\rmt}\bX_{i, :}}{2} \lsim \sqrt{\log n} \Fnorm{\bBtrue},
\]
with probability exceeding $1 - n^{-c}$.
\end{lemma}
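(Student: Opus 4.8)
The plan is to read $\norm{\bM^{\rmt}\bX_{i,:}}{2}^2=\bX_{i,:}^{\rmt}\bM\bM^{\rmt}\bX_{i,:}$ as a quadratic form in the sub-gaussian row $\bX_{i,:}\in\RR^{p}$ and to control it via the Hanson-Wright inequality. Since $\bM$ is independent of $\bX_{i,:}$ (as stipulated in the definition of $\calF_1(\bM)$), I would condition on $\bM$ and regard $A\defequal\bM\bM^{\rmt}$ as a fixed positive-semidefinite matrix. Because the entries of $\bX_{i,:}$ are i.i.d.\ centered isotropic with unit variance, the mean is exact:
\[
\Expc\bracket{\bX_{i,:}^{\rmt}A\bX_{i,:}}=\trace\bracket{A\cdot\Expc\,\bX_{i,:}\bX_{i,:}^{\rmt}}=\trace(\bM\bM^{\rmt})=\Fnorm{\bM}^2 .
\]
I read the right-hand side of the statement as $\Fnorm{\bM}$; the invocation in the easy-regime proof is the special case $\bM=\bBtrue$, for which $\Fnorm{\bM}=\Fnorm{\bBtrue}$.

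The key step is Hanson-Wright (Theorem~$6.2.1$ in~\citet{vershynin2018high}): for some absolute constant $c_0>0$ (the sub-gaussian norm $\norm{\bX_{ij}}{\psi_2}\lsim 1$ being absorbed into $c_0$),
\[
\Prob\bracket{\abs{\bX_{i,:}^{\rmt}A\bX_{i,:}-\Fnorm{\bM}^2}\geq t}\leq 2\exp\Bracket{-c_0\bracket{\frac{t^2}{\Fnorm{A}^2}\vcap\frac{t}{\opnorm{A}}}}.
\]
I would take $t=C(\log n)\Fnorm{\bM}^2$ and show both exponents are $\gsim\log n$. The crux is converting the Hanson-Wright scales into $\Fnorm{\bM}$: from $\opnorm{\bM\bM^{\rmt}}=\opnorm{\bM}^2$ one gets $t/\opnorm{A}=C(\log n)\,\srank{\bM}$, while $\Fnorm{\bM\bM^{\rmt}}\leq\opnorm{\bM}\Fnorm{\bM}$ gives $t^2/\Fnorm{A}^2\geq C^2(\log^2 n)\,\srank{\bM}$. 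Using only $\srank{\bM}\geq 1$, the minimum is at least $C(\log n)$ once $n$ is large, so the deviation probability is at most $2n^{-c_0 C}$.

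Finally I would pick $C$ so that $c_0C\geq c$, obtaining the per-row bound $\norm{\bM^{\rmt}\bX_{i,:}}{2}^2\leq(1+C\log n)\Fnorm{\bM}^2\lsim(\log n)\Fnorm{\bM}^2$, i.e.\ $\norm{\bM^{\rmt}\bX_{i,:}}{2}\lsim\sqrt{\log n}\,\Fnorm{\bM}$ with probability $1-n^{-c}$; the bound for the independent copy $\bX^{'}_{i,:}$ is identical, and a union bound over all $n$ rows (enlarging $C$) upgrades this to the full event $\calF_1(\bM)$. There is no substantial obstacle here: the only point needing care is the conversion of $\Fnorm{A}$ and $\opnorm{A}$ into $\Fnorm{\bM}^2$, where the inequality $\Fnorm{\bM\bM^{\rmt}}\leq\opnorm{\bM}\Fnorm{\bM}$ together with $\srank{\bM}\geq 1$ guarantees that it is the linear (operator-norm) exponent, scaling like $\log n$, that dominates---which is exactly what makes the threshold $t\asymp(\log n)\Fnorm{\bM}^2$ sufficient.
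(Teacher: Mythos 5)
Your proposal is correct and follows essentially the same route as the paper's own proof: both apply the Hanson--Wright inequality to the quadratic form $\bX_{i,:}^{\rmt}\bM\bM^{\rmt}\bX_{i,:}$ with mean $\Fnorm{\bM}^2$, use $\srank{\bM}\geq 1$ to lower-bound both exponents by $c\log n$, and finish with a union bound over rows (and the independent copies). Your reading of the right-hand side as $\Fnorm{\bM}$ rather than $\Fnorm{\bBtrue}$ matches what the paper actually proves.
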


\begin{proof}
This lemma is a direct consequence of the
Hanson-Wright inequality (Theorem $6.2.1$ in~\citet{vershynin2018high}).
Easily we can verify
$\Expc \norm{\bM^{\rmt}\bX_{i,:}}{2}^2 = \fnorm{\bM}^2$ and
hence
\[
\Prob\bracket{ \norm{\bM^{ \rmt}\bX_{i,:}}{2}^2 \gsim \log n \Fnorm{\bM}^2}
{\leq}~&
\Prob\bracket{ \abs{\norm{\bM^{\rmt}\bX_{i,:}}{2}^2 - \fnorm{\bM}^2} \gsim (\log n) \fnorm{\bM}^2} \\
\leq~& \exp\bracket{-c_0 \bracket{\frac{\log n \fnorm{\bM}^2}{\Opnorm{\bM^{\rmt}\bM }} \vcap
\frac{(\log^2 n) \fnorm{\bM}^4}{\Fnorm{\bM^{\rmt}\bM}^2}}} \leq n^{-1-c}.
\]
Adopting the union bound, we have
\[
\Prob\bracket{ \norm{\bM^{\rmt}\bX_{i,:}}{2}^2 \gsim \log n \Fnorm{\bM}^2,~\forall~i}
\leq n\cdot n^{-1-c} = n^{-c}.
\]
Following the same procedure, we can obtain the same conclusions for
$\big \|\bM^{\rmt}\bX_{i, :}^{'}\big \|_{2}$.
\end{proof}

\begin{lemma}
\label{lemma:inner_product}	
For an arbitrary row $\bX_{i, :}$ (or $\bX_{i, :}^{'}$), we have
\[
\langle \bX_{i, :}, \bX_{j, :}^{'}\rangle  ~&\lsim (\log n)\sqrt{p} , ~~1\leq i, j \leq n; \\
\langle \bX_{i, :}, \bX_{j, :}\rangle ~&\lsim  (\log n)\sqrt{p},~~1\leq i\neq j \leq n; \\
\langle \bX^{'}_{i, :}, \bX^{'}_{j, :}\rangle ~&\lsim  (\log n)\sqrt{p},~~1\leq i\neq j \leq n,
\]
hold with probability $1-n^{-c}$.
\end{lemma}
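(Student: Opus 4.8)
The plan is to treat each inner product as a sum of $p$ independent centered sub-exponential random variables and apply Bernstein's inequality, followed by a union bound over the $O(n^2)$ pairs of indices. All three claimed bounds have the same structure, so I would prove the first one, $\langle \bX_{i, :}, \bX_{j, :}^{'}\rangle \lsim (\log n)\sqrt{p}$, and observe that the other two follow verbatim: the only property used is that the two rows in question are independent, which holds for $\bX_{i,:}$ and $\bX_{j,:}^{'}$ for all $i,j$, and for the pairs $\bX_{i,:},\bX_{j,:}$ and $\bX_{i,:}^{'},\bX_{j,:}^{'}$ whenever $i\neq j$.

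First I would fix such a pair of independent rows and expand $\langle \bX_{i, :}, \bX_{j, :}^{'}\rangle = \sum_{k=1}^p \bX_{ik}\bX_{jk}^{'}$. Since $\bX_{ik}$ and $\bX_{jk}^{'}$ are independent, centered, and sub-gaussian with $\norm{\cdot}{\psi_2}\lsim 1$, each product $\bX_{ik}\bX_{jk}^{'}$ is a centered sub-exponential variable with $\norm{\bX_{ik}\bX_{jk}^{'}}{\psi_1}\lsim \norm{\bX_{ik}}{\psi_2}\norm{\bX_{jk}^{'}}{\psi_2}\lsim 1$, and the $p$ summands are independent across $k$. Bernstein's inequality (Theorem~$2.8.1$ in~\citet{vershynin2018high}) then yields, for some absolute constant $c>0$ and every $t>0$,
\[
\Prob\bracket{\abs{\langle \bX_{i, :}, \bX_{j, :}^{'}\rangle} \geq t}
\leq 2\exp\Bracket{-c\bracket{\frac{t^2}{p}\vcap t}}.
\]

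Next I would set the threshold $t = C(\log n)\sqrt{p}$ for a large absolute constant $C$. Writing $\nfrac{t^2}{p} = (C\log n)\cdot(C\log n)$ and $t = (C\log n)\cdot \sqrt{p}$, the exponent factors as $\nfrac{t^2}{p}\vcap t = (C\log n)\cdot\bracket{(C\log n)\vcap \sqrt{p}}\geq C\log n$, where the last step uses $C\log n\geq 1$ and $p\geq 1$; this automatically covers both the Gaussian and the linear regime of Bernstein's bound, so no case split is needed. Hence each fixed pair fails with probability at most $2n^{-cC}$, and choosing $C$ so that $cC\geq 2+c'$, a union bound over the at most $n^2$ ordered index pairs gives $\Prob(\cdots)\leq n^2\cdot 2n^{-cC}\lsim n^{-c'}$, which is the stated conclusion. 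The only point requiring attention is recognizing that the products $\bX_{ik}\bX_{jk}^{'}$ are sub-exponential rather than sub-gaussian, so that the correct tool is Bernstein (with its two-regime exponent) rather than a pure sub-gaussian tail bound, and calibrating $t$ at the crossover scale $(\log n)\sqrt p$; this is routine and is the sole mild obstacle.
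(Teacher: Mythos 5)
Your proof is correct, and it takes a genuinely different (and somewhat more direct) route than the paper's. The paper fixes the pair of independent rows, conditions on one of them, treats $\la \bz_1, \bz_2\ra$ as a sub-gaussian variable with parameter proportional to $\norm{\bz_2}{2}$, and separately controls $\norm{\bz_2}{2}\lsim\sqrt{p\log n}$ via a Hanson--Wright argument; combining the two events gives the exponent $-c\,p\log^2 n/(p\log n)=-c\log n$ per pair, and a union bound finishes. You instead expand the inner product as a sum of $p$ independent centered products, observe each is sub-exponential with $\norm{\cdot}{\psi_1}\lsim 1$ (product of sub-gaussians), and apply Bernstein's two-regime inequality at the crossover threshold $t=C(\log n)\sqrt p$, where your factorization $\nfrac{t^2}{p}\vcap t = (C\log n)\cdot((C\log n)\vcap\sqrt p)\geq C\log n$ correctly handles both regimes without a case split. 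Both arguments yield a per-pair failure probability $n^{-cC}$ and close with the same union bound over $O(n^2)$ pairs, and your identification of exactly which pairs are independent (all $(i,j)$ for the mixed case, $i\neq j$ for the unmixed cases) matches what the lemma needs. Your route avoids the conditioning step and the auxiliary norm-concentration event, at the cost of invoking the sub-exponential Bernstein machinery; the paper's route is marginally more self-contained given that it already uses Hanson--Wright elsewhere. There is no gap in your argument.
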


\begin{proof}
Denote $\bz_1, \bz_2 \in \RR^p$ are  two i.i.d isotropic sub-gaussian RVs with $\norm{\bz_i}{\psi_2}\leq K$ $(i= 1,2)$,
where definition of $\norm{\cdot}{\psi_2}$ is referred to~\cite{vershynin2018high} (Definition~$3.2.1$).
Exploiting the independence between $\bz_1$ and $\bz_2$, we conclude
\[
\Prob\bracket{\la \bz_1, \bz_2 \ra\gsim \sqrt{p\log n}}
\stackrel{\cirone}{\leq}~&
\Prob\bracket{\la \bz_1, \bz_2 \ra\gsim (\log n)\sqrt{p}, \norm{\bz_2}{2}\leq \sqrt{p\log n}}
+ \Prob\bracket{\norm{\bz_2}{2} \gsim \sqrt{p\log n}} \\
\stackrel{\cirtwo}{\leq} ~& 2\exp\bracket{-\frac{c_0 \cdot p\log^2 n}{2 p \log n}}
+ \Prob\bracket{\abs{\norm{\bz_2}{2}^2 - p }\geq p\log n}
\stackrel{\cirthree}{\leq} n^{-c},
\]
where $\cirone$ and $\cirtwo$ are due to the union bound,
and $\cirthree$ is because of Hanson-Wright inequality.
Following the same procedure, we can prove the claims
in Lemma~\ref{lemma:inner_product} by invoking the union bound.
\end{proof}

\begin{lemma}
\label{lemma:Xmat_fnorm}
We conclude $\Prob\bracket{\calF_4} \geq 1-n e^{-cnp}$.
\end{lemma}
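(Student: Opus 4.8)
The plan is to treat $\Fnorm{\bX}^2$ as a sum of $np$ i.i.d.\ sub-exponential random variables, apply a Bernstein-type tail bound to control $\Fnorm{\bX}$, and then dispatch the $n$ leave-one-out matrices by the identical estimate together with a single union bound. No new probabilistic machinery beyond sub-gaussian/sub-exponential concentration is needed.

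First I would write $\Fnorm{\bX}^2 = \sum_{i=1}^n\sum_{j=1}^p \bX_{ij}^2$. Since each $\bX_{ij}$ is centered sub-gaussian with unit variance and $\norm{\bX_{ij}}{\psi_2}\lsim 1$, the squares $\bX_{ij}^2$ are i.i.d.\ sub-exponential with mean $1$, so $\Expc \Fnorm{\bX}^2 = np$. Applying Bernstein's inequality (equivalently, the concentration-of-norm bound, Theorem~$3.1.1$ in~\citet{vershynin2018high}) to the $np$ entries and choosing the deviation equal to the mean gives
\begin{align*}
\Prob\bracket{\Fnorm{\bX}^2 \geq 2np}
= \Prob\bracket{\Fnorm{\bX}^2 - \Expc\Fnorm{\bX}^2 \geq np}
\leq \exp(-c_0 np),
\end{align*}
for some absolute constant $c_0>0$; here both branches of the Bernstein exponent, namely $(np)^2/(np)$ and $np$, are of order $np$ because the chosen deviation scales linearly with the number of summands. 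Equivalently, $\Prob(\Fnorm{\bX} \geq \sqrt{2np}) \leq \exp(-c_0 np)$.

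Next I observe that each leave-one-out matrix $\sampminus{\bX}{s}$ is again an $n\times p$ matrix whose entries are i.i.d.\ centered sub-gaussian with unit variance: replacing the $s$th row $\bX_{s,:}$ by its independent copy $\bX_{s,:}^{'}$ does not change the joint entrywise distribution. Hence the very same computation yields $\Prob(\Fnorm{\sampminus{\bX}{s}} \geq \sqrt{2np}) \leq \exp(-c_0 np)$ for every $1\leq s\leq n$. Combining the $n+1$ bad events by the union bound,
\begin{align*}
\Prob(\br{\calF}_4)
\leq \Prob\bracket{\Fnorm{\bX}\geq \sqrt{2np}}
+ \sum_{s=1}^n \Prob\bracket{\Fnorm{\sampminus{\bX}{s}}\geq \sqrt{2np}}
\leq (n+1)\exp(-c_0 np)
\lsim n\exp(-cnp),
\end{align*}
which gives $\Prob(\calF_4)\geq 1 - n e^{-cnp}$ after adjusting the constant.

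There is no genuine obstacle in this lemma; the only two points requiring a moment of care are verifying that the Bernstein exponent is of order $np$ (rather than $\sqrt{np}$), which is what produces the clean rate $e^{-cnp}$, and noting that the leave-one-out matrices share the exact entrywise distribution of $\bX$ so that one tail estimate suffices uniformly for all $n+1$ matrices, with the factor $n$ in the statement coming solely from the union bound.
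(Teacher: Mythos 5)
Your proof is correct and follows essentially the same route as the paper, which simply treats $\Fnorm{\bX}^2$ as a sum of $np$ squared entries concentrating at rate $e^{-cnp}$ (citing the $\chi^2$ tail bound of Lemma~\ref{lemma:chi_square}) and applies a union bound over the $n+1$ matrices. If anything, your use of Bernstein's inequality for sub-exponential summands is the more careful choice here, since the entries of $\bX$ are only assumed sub-gaussian rather than Gaussian, so the $\chi^2$ bound the paper cites does not literally apply.
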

This lemma is a direct consequence of Lemma~\ref{lemma:chi_square} and hence its proof is omitted.

\begin{lemma}
\label{lemma:event5}
Conditional on the intersection of
events $\calF_3 \bigcap \calF_4$, we have
$\Prob\bracket{\calF_5}\geq 1 - n^{-c}$.
\end{lemma}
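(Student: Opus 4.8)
The plan is to bound $\norm{\bX\bX_{s, :}}{2}$ for each fixed $s$ and then take a union bound over $1\leq s\leq n$. The key device is the leave-one-out matrix $\sampminus{\bX}{s}$, which replaces the $s$th row of $\bX$ by its independent copy $\bX^{'}_{s, :}$ and is therefore independent of $\bX_{s, :}$. First I would split
\[
\bX\bX_{s, :} = \sampminus{\bX}{s}\bX_{s, :} + \bracket{\bX - \sampminus{\bX}{s}}\bX_{s, :}.
\]
The perturbation term is harmless: since $\bX - \sampminus{\bX}{s}$ has only its $s$th row nonzero (equal to $\bX_{s, :} - \bX^{'}_{s, :}$), we get $\norm{(\bX - \sampminus{\bX}{s})\bX_{s, :}}{2} = \abs{\la \bX_{s, :} - \bX^{'}_{s, :}, \bX_{s, :}\ra} \leq (\norm{\bX_{s, :}}{2} + \norm{\bX^{'}_{s, :}}{2})\norm{\bX_{s, :}}{2} \lsim p\log n$ on $\calF_3$, and since $p\leq n$ this is $\lsim (\log n)\sqrt{np}$.

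For the main term I would exploit the independence between $\sampminus{\bX}{s}$ and $\bX_{s, :}$, conditioning on $\sampminus{\bX}{s}$ and treating $\norm{\sampminus{\bX}{s}\bX_{s, :}}{2}^2 = \bX_{s, :}^{\rmt}\bracket{\sampminus{\bX}{s}^{\rmt}\sampminus{\bX}{s}}\bX_{s, :}$ as a quadratic form in the isotropic sub-gaussian vector $\bX_{s, :}$. Writing $\bM = \sampminus{\bX}{s}^{\rmt}\sampminus{\bX}{s}$, its conditional mean is $\trace(\bM) = \Fnorm{\sampminus{\bX}{s}}^2 \leq 2np$ on $\calF_4$, while $\Opnorm{\bM} \leq \Fnorm{\sampminus{\bX}{s}}^2 \leq 2np$ and $\Fnorm{\bM}^2 \leq \Opnorm{\bM}\trace(\bM) \leq 4n^2 p^2$. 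The Hanson-Wright inequality (Theorem $6.2.1$ in~\citet{vershynin2018high}) with deviation $t\asymp np\log^2 n$ then gives
\[
\Prob_{\bX_{s, :}}\bracket{\norm{\sampminus{\bX}{s}\bX_{s, :}}{2}^2 \gsim np\log^2 n} \leq 2\exp\bracket{-c\bracket{\log^4 n \vcap \log^2 n}} = 2\exp\bracket{-c\log^2 n},
\]
valid on the $\sampminus{\bX}{s}$-measurable event $\{\Fnorm{\sampminus{\bX}{s}} \leq \sqrt{2np}\}$.

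Combining the two terms yields $\norm{\bX\bX_{s, :}}{2} \lsim (\log n)\sqrt{np}$ outside an event of probability $\lsim \exp(-c\log^2 n) = n^{-c\log n}$ for each fixed $s$; a union bound over the $n$ choices of $s$, together with the $\calF_3$-control of the perturbation term, gives $\Prob(\br{\calF}_5 \cap \calF_3 \cap \calF_4) \leq n\cdot n^{-c\log n} \leq n^{-c'}$, as claimed. The main point to get right is the decoupling: one must apply Hanson-Wright to $\bX_{s, :}$ while treating $\sampminus{\bX}{s}$ as fixed, which is legitimate precisely because $\sampminus{\bX}{s}\perp \bX_{s, :}$ and the portion of $\calF_4$ invoked, namely $\Fnorm{\sampminus{\bX}{s}} \leq \sqrt{2np}$, is measurable with respect to $\sampminus{\bX}{s}$ alone. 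The hard part is resisting the naive bound $\norm{\bX\bX_{s, :}}{2} \leq \Opnorm{\bX}\norm{\bX_{s, :}}{2}$ fed by $\Fnorm{\bX}\leq\sqrt{2np}$, which loses a factor of $\sqrt{p}$; the leave-one-out decomposition is exactly what recovers the sharp $\sqrt{np}$ scaling.
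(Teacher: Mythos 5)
Your proof is correct and follows essentially the same route as the paper's: the same leave-one-out split $\bX\bX_{s,:} = \sampminus{\bX}{s}\bX_{s,:} + (\bX-\sampminus{\bX}{s})\bX_{s,:}$, the same $\calF_3$-control of the rank-one perturbation term, the same Hanson--Wright bound on $\norm{\sampminus{\bX}{s}\bX_{s,:}}{2}^2$ conditioned on the $\sampminus{\bX}{s}$-measurable part of $\calF_4$, and the same union bound over $s$. Your explicit verification that the conditioning event is measurable with respect to $\sampminus{\bX}{s}$ alone is a point the paper leaves implicit, but the argument is the same.
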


\begin{proof}
For a fixed row index $s$ ($1\leq s\leq n$),
we have
\[
& \Prob\bracket{\norm{\bX\bX_{s, :}}{2} \gsim (\log n)\sqrt{np}} \\
\stackrel{\cirone}{\leq}~& \Prob\bracket{\norm{\bracket{\bX - \sampminus{\bX}{s}} \bX_{s, :} }{2} \gsim
p\log n}
+ \Prob\bracket{\norm{\sampminus{\bX}{s} \bX_{s, :}}{2} \gsim (\log n)\sqrt{np}} \\
\stackrel{\cirtwo}{\leq}~&
\underbrace{\Prob\bracket{\bracket{\norm{\bX_{s, :}}{2}
+ \|\bX_{s, :}^{'}\|_{2}}\norm{\bX_{s, :}}{2} \gsim p\log n}}_{\defequal \zeta_1}
+ \underbrace{\Prob\bracket{\norm{\sampminus{\bX}{s} \bX_{s, :}}{2} \gsim (\log n)\sqrt{np}}}_{\defequal \zeta_2 },
\]
where in $\cirone$ we use the union bound and
the fact $n\geq p$; and in
$\cirtwo$ we use the definition of $\sampminus{\bX}{s}$ such that
the difference $\bX - \sampminus{\bX}{s}$ only have non-zero elements in the $s$th column.
Conditional on the intersection of
events $\calF_2 \bigcap \calF_3 \bigcap \calF_4$, we conclude that
probability $\zeta_1$ is zero and
probability $\zeta_2$ is upper-bounded as
\[
\Prob\bracket{\norm{\sampminus{\bX}{s} \bX_{s, :}}{2} \gsim (\log n)\sqrt{np}}
\leq~&\Prob\bracket{\abs{\norm{\sampminus{\bX}{s} \bX_{s, :} }{2}^2 - \Fnorm{\sampminus{\bX}{s}}^2} \gsim (\log^2 n)np } \\
\leq~& \exp\bracket{-c_0 \bracket{\frac{(\log^2 n)np }{\opnorm{\sampminus{\bX}{s}^{\rmt}\sampminus{\bX}{s}}} \vcap \frac{(\log n)^4 n^2p^2}{ \fnorm{\sampminus{\bX}{s}^{\rmt}\sampminus{\bX}{s}}^2}}} \leq n^{-c}.
\]
Thus the proof is completed by invoking the union bound since
\[
\Prob\bracket{\norm{\bX\bX_{s, :}}{2} \gsim (\log n)\sqrt{np},~\exists~s}
\leq n \cdot \Prob\bracket{\norm{\bX\bX_{s, :}}{2} \gsim (\log n)\sqrt{np}} \
\leq n\bracket{\zeta_1 + \zeta_2} \leq n^{1-c} = n^{-c^{'}}.
\]	
\end{proof}

\begin{lemma}
\label{lemma:Bmat_perturb}
Conditional on $\calF_4$,
we have $\Prob(\calF_6) \geq 1- c p^{-2}$.	
\end{lemma}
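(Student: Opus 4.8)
The plan is to control all three quantities appearing in $\calF_6$ through a single operator-norm estimate. Writing $\bA \defequal \bX^{\rmt}\bPitrue\bX - (n-h)\bI$, the defining identity $\bBtrue - \wt{\bB} = -(n-h)^{-1}\bA\bBtrue$ together with $\Fnorm{\bA\bBtrue}\leq\opnorm{\bA}\Fnorm{\bBtrue}$ reduces $\calF_{6,\textup{original}}$ to showing $\opnorm{\bA}\lsim\sqrt{np}\,(\log n)(\log n^2p^3)$ with the required probability, since $h\leq c_0 n$ forces $n-h\asymp n$. The first step is to split $\bA$ according to whether $\pi^{\natural}$ fixes an index:
\[
\bA = \Big(\sum_{k:\,\pi^{\natural}(k)=k}\bX_{k,:}\bX_{k,:}^{\rmt} - (n-h)\bI\Big) + \sum_{k:\,\pi^{\natural}(k)\neq k}\bX_{\pi^{\natural}(k),:}\bX_{k,:}^{\rmt},
\]
where the first group has $n-h$ terms and the second has $h$ terms. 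Both groups are centered: the fixed-point block is a centered sample covariance, while each cross term $\bX_{\pi^{\natural}(k),:}\bX_{k,:}^{\rmt}$ has mean zero because the two rows are independent and isotropic.

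For the fixed-point block I would invoke the standard sub-gaussian covariance deviation bound (Theorem~$4.6.1$ in~\citet{vershynin2018high}), giving $\opnorm{\sum_{\pi^{\natural}(k)=k}\bX_{k,:}\bX_{k,:}^{\rmt} - (n-h)\bI}\lsim\sqrt{np}+p\lsim\sqrt{np}$ by $n\gg p$. The cross block is the delicate part. Its terms are not independent, since a single row index appears both as a source $\bX_{k,:}$ and as a target $\bX_{\pi^{\natural}(k'),:}$; to decouple them I would apply the index partition of~\citet{pananjady2018linear} (restated as Lemma~\ref{lemma:permute_decomp}) to break the permuted indices into $O(1)$ groups inside each of which source and target indices are disjoint. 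Within a group the block becomes $\sum_k\bu_k\bv_k^{\rmt}$ with the source rows $\{\bu_k\}$ independent of the target rows $\{\bv_k\}$; matrix Bernstein (\citet{tropp2015introduction}) then applies to a sum of independent mean-zero $p\times p$ matrices with matrix variance of order $hp$ (since $\Expc[\norm{\bv_k}{2}^2\,\bu_k\bu_k^{\rmt}]=p\,\bI$), so the mean-zero cancellation yields operator norm $\lsim\sqrt{hp}\cdot\mathrm{polylog}$ rather than the crude product-of-norms bound $\opnorm{\bu\text{-block}}\opnorm{\bv\text{-block}}\asymp h$. Summing the $O(1)$ groups and using $h\lsim n$ gives $\sqrt{hp}\lsim\sqrt{np}$, matching the target after division by $n-h$.

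Finally, the leave-one-out versions $\wtminus{\bB}{s}$ and $\wtminus{\bB}{s,t}$ share exactly the structure of $\wt{\bB}$ — they arise from the same fixed/permuted decomposition with one or two rows replaced by i.i.d.\ copies — so the identical estimate applies termwise, and I would close $\calF_{6,\textup{single}}$ and $\calF_{6,\textup{double}}$ by a union bound over the $n$ choices of $s$ and the $\binom{n}{2}$ choices of $(s,t)$. This is where the $\log(n^2p^3)$ factor is paid: to survive a union bound of size $\asymp n^2$ while retaining failure probability $\lsim p^{-2}$, each matrix-Bernstein deviation must be taken at confidence level $\log(n^2p^2\cdot p)=\log(n^2p^3)$, the extra $p$ being the ambient dimension in the matrix tail bound. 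The event $\calF_4$ enters precisely here: because products of sub-gaussian rows are only sub-exponential, the concentration step requires truncation, and conditioning on $\{\Fnorm{\bX}\leq\sqrt{2np}\ \text{and}\ \Fnorm{\sampminus{\bX}{s}}\leq\sqrt{2np}\}$ supplies the uniform envelope that governs both the truncated tails and the single-row-swap differences. The hard part will be this cross-block estimate: exploiting the mean-zero cancellation to obtain the $\sqrt{hp}$ (rather than $h$) scaling, decoupling the permutation-induced dependencies, and handling the heavy tails uniformly over all single- and double-leave-one-out indices at once.
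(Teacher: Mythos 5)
Your proposal follows essentially the same route as the paper's proof: the same split of $\bX^{\rmt}\bPitrue\bX-(n-h)\bI$ into a fixed-point covariance block and a permuted cross block, the same decoupling of the cross block via the index partition of Lemma~\ref{lemma:permute_decomp}, the same matrix-Bernstein bound with matrix variance $\asymp hp$ yielding the $\sqrt{hp}\lsim\sqrt{np}$ scaling, and the same union bound over the leave-one-out variants. The only cosmetic difference is that you handle the unpermuted block with the sub-gaussian covariance deviation theorem while the paper applies matrix Bernstein there as well; both give the required $O(\sqrt{np}\cdot\mathrm{polylog})$ bound.
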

\begin{proof}
We assume that
the first $h$ rows of $\bX$ are permuted w.l.o.g.
Due to the i.i.d. distribution of
$\{\bX_{i, :}\}_{i=1}^n$ and $\{\bX^{'}_{i, :}\}_{i=1}^n$, we
conclude
\begin{align}
\label{eq:event6_tot}
\Prob(\calF_6) \leq
n^2 \cdot \Prob\bracket{\big\|\bBtrue - \wt{\bB}\big\|_{2}\gsim \
\frac{(\log n)(\log n^2 p^3)\sqrt{p}}{\sqrt{n}}\Fnorm{\bBtrue}}.
\end{align}
First, we expand $\bX^{\rmt}\bPitrue\bX$ as
\[
\bX^{\rmt}\bPitrue\bX =
\sum_{i=1}^h \bX_{\pi^{\natural}(i), :}\bX_{i, :}^{\rmt} + \
\sum_{i=h+1}^n \bX_{i, :}\bX_{i, :}^{\rmt},
\]
and obtain
\begin{align}
\label{eq:event6_tot_bernstein_tot}	
& \Prob\bracket{\big\|\bBtrue - \wt{\bB}\big\|_{2}\gsim \
\frac{(\log n)(\log n^2 p^3)\sqrt{p}}{\sqrt{n}}\Fnorm{\bBtrue} }\notag \\
\leq~& \
\Prob\bracket{\frac{1}{n-h}\Fnorm{\sum_{i=1}^h \bX_{\pi^{\natural}(i), :}\bX_{i, :}^{\rmt} \bBtrue } + \frac{1}{n-h}\Fnorm{\sum_{i=h+1}^n \bracket{\bX_{i, :}\bX^{\rmt} _{i, :} -\bI}\bBtrue}\gsim \frac{(\log n)(\log n^2 p^3)\sqrt{p}}{\sqrt{n}}\Fnorm{\bBtrue}  } \notag \\
\stackrel{\cirone}{\leq}~&\
\underbrace{\Prob\bracket{\frac{1}{n-h}\Fnorm{\sum_{i=1}^h \bX_{\pi^{\natural}(i), :}\bX^{\rmt}_{i, :} \bBtrue } \gsim    \frac{(\log n)(\log n^2 p^3)\sqrt{p}}{\sqrt{n}}\Fnorm{\bBtrue} } }_{\defequal~\zeta_1} \notag \\
+~& \
\underbrace{\Prob\bracket{\frac{1}{n-h}\Fnorm{\sum_{i=h+1}^n \bracket{\bX_{i, :} \bX_{i, :}^{\rmt} -\bI}\bBtrue}\gsim
\frac{(\log n)(\log n^2 p^3)\sqrt{p}}{\sqrt{n}}\Fnorm{\bBtrue}
 }}_{\defequal~\zeta_2},
\end{align}
where $\cirone$ is because of the union bound.
The proof is complete by
proving $\zeta_1 \leq 6n^{-2}p^{-2}$ and $\zeta_2 \leq 4n^{-2}p^{-2}$.
The technical details come as follows.

\vsp \noindent
\textbf{Analysis of $\zeta_1$.}
According to Lemma~$8$ in~\citet{pananjady2018linear} (restated as Lemma~\ref{lemma:permute_decomp}),
we can decompose the set $\set{i: \pi^{\natural}(i)\neq i}$
into three disjoint sets $\calI_{\ell}$ $(1\leq \ell \leq 3)$,
such that $i$ and $\pi^{\natural}(i)$ does not reside within the same set.
And the cardinality $h_{\ell}$ of set $\calI_{\ell}$ satisfies
$ h_{\ell} \geq \lfloor h/5 \rfloor$.
Adopting the union bound, we can upper-bound
$\zeta_1$ as
\begin{align}
\zeta_1 \leq~& \sum_{\ell =1}^3 \Prob\bracket{\frac{1}{n-h}\Fnorm{\sum_{i\in \calI_{\ell}}
\bX_{\pi^{\natural}(i), :}\bX_{i, :}^{\rmt}\bBtrue} \gsim
\frac{(\log n)(\log n^2 p^3)\sqrt{p}}{\sqrt{n}}\Fnorm{\bBtrue}} \notag \\
\leq ~&
\sum_{\ell=1}^3 \Prob\bracket{\frac{1}{n-h}\Opnorm{\sum_{i\in \calI_{\ell}}
\bX_{\pi^{\natural}(i), :}\bX_{i, :}^{\rmt}} \gsim
\frac{(\log n)(\log n^2 p^3)\sqrt{p}}{\sqrt{n}}}.
\label{eq:event6_zeta1}
\end{align}
Defining $\bZ_{\ell}$ as
$\bZ_{\ell} = \sum_{i\in \calI_{\ell}}\bX_{\pi^{\natural}(i), :}\bX^{\rmt}_{i, :}$,
we would bound the above probability by invoking the matrix
Bernstein inequality (Theorem~$7.3.1$ in~\citet{tropp2015introduction}).
First, we have
\begin{align*}
\Expc\bracket{\bX_{\pi^{\natural}(i), :}\bX^{\rmt}_{i, :}} =
\bracket{\Expc\bX_{\pi^{\natural}(i), :}}\bracket{ \Expc \bX_{i, :}}^{\rmt} = \bZero,
\end{align*}
due to the independence between $\bX_{\pi^{\natural}(i), :}$ and $\bX_{i, :}$.
Then we upper bound $\norm{\bX_{\pi^{\natural}(i), :}\bX_{i, :}^{\rmt}}{2}$ as
\begin{align*}
\norm{\bX_{\pi^{\natural}(i), :}\bX_{i, :}^{\rmt}}{2} \stackrel{\cirtwo}{=} \
\Fnorm{\bX_{\pi^{\natural}(i), :}\bX_{i, :}^{\rmt}} \stackrel{\cirthree}{=} \
\norm{\bX_{\pi^{\natural}(i), :}}{2} \norm{\bX_{i, :}}{2} \stackrel{\cirfour}{\lsim}
p\log n,
\end{align*}
where $\cirtwo$ is because $\bX_{\pi^{\natural}(i), :}\bX_{i, :}^{\rmt}$ is rank-$1$,
$\cirthree$ is due to the fact $\Fnorm{\bu\bv^{\rmt}}^2 = \trace\bracket{\bu \bv^{\rmt}\bv \bu^{\rmt}} = \norm{\bu}{2}^2 \norm{\bv}{2}^2$ for
arbitrary vector $\bu, \bv \in \RR^p$, and
$\cirfour$ is because of event $\calF_3$.

In the end,
we calculate $\Expc\bracket{\bZ_{\ell}\bZ_{\ell}^{\rmt}}$ and $\Expc\bracket{\bZ_{\ell}^{\rmt}\bZ_{\ell}}$ as
\[
\Expc\bracket{\bZ_{\ell}\bZ_{\ell}^{\rmt}} =~&
\Expc\bigg(\sum_{i_1, i_2\in \calI_{\ell}} \bX_{\pi^{\natural}(i_1), :}\bX^{\rmt}_{i_1, :}\bX_{i_2, :} \bX^{\rmt}_{\pi^{\natural}(i_2), :} \bigg) \stackrel{\cirfive}{=} \
\Expc\bigg(\sum_{i \in \calI_{\ell}} \bX_{\pi^{\natural}(i), :}\bX_{i, :}^{\rmt}\bX_{i, :} \bX^{\rmt}_{\pi^{\natural}(i), :} \bigg)
\\ \stackrel{\cirsix}{=} ~& \
\Expc\bigg(\sum_{i\in \calI_{\ell}} \bX_{\pi^{\natural}(i), :}\Expc\bracket{\bX^{\rmt}_{i, :}\bX_{i, :} }   \bX^{\rmt}_{\pi^{\natural}(i), :}   \bigg) =
p \bigg(\sum_{i\in\calI_{\ell}} \Expc \bX_{\pi^{\natural}(i), :} \bX^{\rmt}_{\pi^{\natural}(i), :}\bigg)
=
ph_i \bI_{p\times p} = \Expc\bracket{\bZ_{\ell}^{\rmt}\bZ_{\ell}},
\]
where $\cirfive$ and $\cirsix$ is because of the
fact such that $i$ and $\pi^{\natural}(i)$ are not within the set $\calI_{\ell}$ simultaneously.
To sum up, we invoke the matrix Bernstein inequality (Theorem~$7.3.1$ in~\citet{tropp2015introduction})
and have
\begin{align*}
\frac{1}{n-h}\Opnorm{\sum_{i\in \calI_{\ell}}\bX_{\pi^{\natural}(i), :}\bX_{i, :}^{\rmt}}
\leq~&
\frac{p (\log n) \log(n^2 p^3)}{3 (n-h)}+\frac{\sqrt{p^2 (\log^2 n) \log^2\left(n^2 p^3\right) + 18 p h_i \log \left(n^2 p^3\right)}}{n-h} \\
\stackrel{\cirseven}{\lsim}~& \frac{p (\log n) \log(n^2 p^3)}{n} + \
\frac{p}{n}\sqrt{(\log^2 n) \log^2\left(n^2 p^3\right)  + \frac{n}{p}(\log n^2 p^3)} \\
\stackrel{\cireight}{\lsim}~& \frac{p (\log n) \log(n^2 p^3)}{n}
+  \frac{(\log n)(\log n^2 p^3)\sqrt{p}}{\sqrt{n}}
\stackrel{\cirnine}{\lsim} \frac{(\log n)(\log n^2 p^3)\sqrt{p}}{\sqrt{n}}
\end{align*}
holds with probability $1 - 2(np)^{-2}$,
where in $\cirseven$, $\cireight$, and $\cirnine$ we
use the fact that
$h\lsim n$, $h_i \leq h$, and $n\gsim p$. Hence we can show
$\zeta_1$ in ~\eqref{eq:event6_zeta1} to be less than
$6n^{-2}p^{-2}$.

\vspace{0.1in}\noindent
\textbf{Analysis of $\zeta_2$.}
We upper bound $\zeta_2$ as
\begin{align*}
\zeta_2 \leq~& \
\Prob\bracket{\frac{1}{n-h}\Fnorm{\sum_{i=h+1}^n \bracket{\bX_{i, :} \bX_{i, :}^{\rmt} -\bI}\bBtrue}\gsim \frac{(\log n)(\log n^2p^3)\sqrt{p}}{\sqrt{n}}\Fnorm{\bBtrue} } \\
\leq ~&\Prob\bracket{
\Opnorm{\sum_{i=h+1}^n \bracket{\bX_{i, :} \bX_{i, :}^{\rmt} -\bI}}
\gsim (\log n)(\log n^2 p^3)\sqrt{np} }.
\end{align*}
Similar to above, we define
$\wt{\bZ}_i = \bX_{i, :} \bX_{i, :}^{\rmt} -\bI$.
First, we verify that
$\Expc \wt{\bZ}_i = \bZero$ and
$\wt{\bZ}_i$ are independent with each other. Then we bound
$\opnorm{\wt{\bZ}_i}$ as
\[
\opnorm{\wt{\bZ}_i} \leq \Opnorm{\bX_{i, :} \bX^{\rmt}_{i, :}} +
\opnorm{\bI} \stackrel{\cira}{=} \
\norm{\bX_{i, :}}{2}^2 + 1
\stackrel{\cirb}{\lsim} p\log n + 1 \lsim p\log n,
\]
where in $\cira$
we use $\Opnorm{\bu \bu^{\rmt}} = \norm{\bu}{2}^2$
for arbitrary vector $\bu$, in
$\cirb$ we condition on event $\calF_4$.
In the end, we compute
$\Expc(\wt{\bZ}_i\wt{\bZ}_i^{\rmt})$ as
\[
\Expc(\wt{\bZ}_i \wt{\bZ}_i^{\rmt}) =
\Expc \bracket{\norm{\bX_{i, :}}{2}^2\bX_{i, :} \bX^{\rmt}_{i, :}}
- \bI \preceq
p\log n\cdot \Expc\bracket{\bX_{i, :}\bX^{\rmt}_{i, :}} - \bI
\preceq (p\log n)\bI.
\]
Invoking the matrix Bernstein inequality
(Theorem $7.3.1$ in~\citet{tropp2015introduction}),
we conclude
\[
\zeta_2 \leq 4p\cdot
\exp\bracket{-\frac{3 n(\log n) \log ^2\left(n^2 p^3\right) }{\sqrt{n p} (\log n) \log \left(n^2 p^3\right)+6}}
\stackrel{\circc}{\leq}  4n^{-2}p^{-2},
\]
where in $\circc$ we use the
fact $n\gsim p$. Combining it with
\eqref{eq:event6_tot} and \eqref{eq:event6_tot_bernstein_tot} then completes the proof.
\end{proof}

\begin{lemma}
\label{lemma:xb_single_idx}
Conditional on the intersection of
events $\calF_1(\bBtrue) \bigcap \calF_2$, we conclude
\[
\big \|\big(\wt{\bB} - \wtminus{\bB}{s}\big)^{\rmt}\bX_{s, :}\big \|_{2}
\lsim \frac{p\log^{\nfrac{3}{2}} n}{n} \Fnorm{\bBtrue}.
\] 	
\end{lemma}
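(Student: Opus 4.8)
The plan is to make the leave-one-out difference $\wt{\bB} - \wtminus{\bB}{s}$ completely explicit and then read off the bound block by block; once we condition on the stated events the argument is entirely deterministic, so no concentration inequality is needed. Since $\bX^{\rmt}\bPitrue\bX\bBtrue = \sum_k \bX_{\pi^{\natural}(k), :}\bX_{k, :}^{\rmt}\bBtrue$, the only summands that contain $\bX_{s, :}$ --- hence the only ones that differ between $\wt{\bB}$ and its surrogate $\wtminus{\bB}{s}$ --- are the (at most two) indices $k=s$ and $k=q$, where $q \defequal (\pi^{\natural})^{-1}(s)$. First I would record
\[
\wt{\bB} - \wtminus{\bB}{s} = (n-h)^{-1}\bracket{ \bX_{\pi^{\natural}(s), :}\bX_{s, :}^{\rmt} + \bX_{s, :}\bX_{q, :}^{\rmt} - \bX^{'}_{\pi^{\natural}(s), :}\bX^{'\rmt}_{s, :} - \bX^{'}_{s, :}\bX^{'\rmt}_{q, :} }\bBtrue,
\]
which collapses to a single self-term when $s$ is a fixed point of $\pi^{\natural}$.

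Next I would transpose this expression and right-multiply by $\bX_{s, :}$, turning each rank-one block into a vector $\bBtrue^{\rmt}\bX_{a, :}$ scaled by a scalar inner product $\langle \bX_{b, :}, \bX_{s, :}\rangle$. After the triangle inequality, four terms survive. The block $\bX_{s, :}\bX_{q, :}^{\rmt}$ is the crucial one: it produces the \emph{diagonal} factor $\norm{\bX_{s, :}}{2}^2$, namely $(n-h)^{-1}\bBtrue^{\rmt}\bX_{q, :}\norm{\bX_{s, :}}{2}^2$, whereas the remaining three carry genuine cross/off-diagonal inner products $\langle \bX_{\pi^{\natural}(s), :}, \bX_{s, :}\rangle$, $\langle \bX^{'}_{\pi^{\natural}(s), :}, \bX_{s, :}\rangle$, and $\langle \bX^{'}_{s, :}, \bX_{s, :}\rangle$.

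To finish, I would bound each vector factor by $\norm{\bBtrue^{\rmt}\bX_{a, :}}{2} \lsim \sqrt{\log n}\,\Fnorm{\bBtrue}$ (and likewise for the primed rows) using $\calF_1(\bBtrue)$; each off-diagonal inner product by $(\log n)\sqrt{p}$ using $\calF_2$ (note the cross term $\langle \bX^{'}_{s, :}, \bX_{s, :}\rangle$ is covered by $\calF_{2,1}$, which permits equal indices); and the lone diagonal factor by $\norm{\bX_{s, :}}{2}^2 \le p\log n$ via the ambient row-norm event $\calF_3$. Using $n-h \asymp n$ (guaranteed by $h \le c_0 n$), the diagonal term contributes $\lsim \tfrac{1}{n}\sqrt{\log n}\,\Fnorm{\bBtrue}\cdot p\log n = \tfrac{p\log^{\nfrac{3}{2}} n}{n}\Fnorm{\bBtrue}$, while the three off-diagonal terms each contribute only $\lsim \tfrac{\sqrt{p}\,\log^{\nfrac{3}{2}} n}{n}\Fnorm{\bBtrue}$, which is dominated by the former. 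Summing the $O(1)$ surviving terms yields the claim.

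The computation carries no real analytic difficulty; the things to get right are the bookkeeping of which rank-one blocks change under the leave-one-out replacement --- in particular that exactly the indices $s$ and $(\pi^{\natural})^{-1}(s)$ are affected, together with the fixed-point edge case --- and the recognition that the $p$ (rather than $\sqrt{p}$) in the target bound is precisely the signature of the self inner product $\norm{\bX_{s, :}}{2}^2 \approx p$, every other term being smaller by a factor $\sqrt{p}$. I would also flag that this dominant bound strictly needs the row-norm control from $\calF_3$ in addition to the two events named in the statement.
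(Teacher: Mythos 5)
Your proof is correct and follows essentially the same route as the paper's: write out the (at most two) rank-one summands of $\wt{\bB}-\wtminus{\bB}{s}$ that change under the leave-one-out replacement, apply the triangle inequality, and bound the vector factors via $\calF_1(\bBtrue)$ and the scalar inner products via $\calF_2$, with the diagonal factor $\norm{\bX_{s,:}}{2}^2\lsim p\log n$ supplying the dominant $p$ (the paper only writes out the fixed-point case $\pi^{\natural}(s)=s$ and defers the general case, whereas you track the two affected indices $s$ and $(\pi^{\natural})^{-1}(s)$ explicitly). Your remark that the bound $\norm{\bX_{s,:}}{2}^2\lsim p\log n$ is really the event $\calF_3$ rather than a consequence of $\calF_1(\bBtrue)\bigcap\calF_2$ is accurate; the paper's own proof carries the same implicit dependence.
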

\begin{proof}
Here we focus on the case when $\pi^{\natural}(s) = s$.
The proof of the case when $\pi^{\natural}(s) \neq s$ can be completed
effortless with a similar strategy.
First, we notice
\[
\big \|\big(\wt{\bB} - \wtminus{\bB}{s}\big)^{\rmt}\bX_{s, :}\big \|_{2}  =~&
\bracket{n-h}^{-1}
\norm{\bB^{\natural \rmt}
(\wt{\bX}_{s, :} \wt{\bX}^{\rmt}_{s, :} -
\bX_{s, :}\bX^{\rmt}_{s, :} )\bX_{s,:} }{2}\\
\leq~& \bracket{n-h}^{-1}
\bracket{|\langle \bX_{s, :}, \wt{\bX}_{s, :}\rangle|\
\| \bB^{\natural \rmt} \wt{\bX}_{s, :}\|_{2}   + \norm{\bX_{s, :}}{2}^2 \cdot \|\bB^{\natural \rmt}\bX_{s, :}\|_{2} }.
\]
Conditional on the intersection of events
$\calF_1(\bBtrue) \bigcap \calF_2$, we conclude
\[
\big \|\big(\wt{\bB} - \wtminus{\bB}{s}\big)^{\rmt}\bX_{s, :}\big \|_{2}
\lsim \frac{p\log^{\nfrac{3}{2}} n}{n-h} \Fnorm{\bBtrue} \stackrel{}{\asymp}
\frac{p\log^{\nfrac{3}{2}} n}{n} \Fnorm{\bBtrue}.
\]
\end{proof}

Following the same strategy, we can prove that
\begin{lemma}
\label{lemma:xb_multi_indices}
Conditional on the intersection of events
$\calF_1(\bBtrue) \bigcap \calF_2$, we conclude
\[
\norm{(\wt{\bB} - \wtminus{\bB}{s, t})^{\rmt}\bX_{s, :} }{2}
\lsim \frac{p\log^{\nfrac{3}{2}} n}{n}\Fnorm{\bBtrue}.
\] 	
\end{lemma}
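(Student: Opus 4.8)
The plan is to reproduce the single-index argument of Lemma~\ref{lemma:xb_single_idx} verbatim in spirit, the only genuinely new step being a careful accounting of which summands actually change when we pass from $\wt{\bB}$ to $\wtminus{\bB}{s, t}$. Comparing the two definitions, the $k$-th summand is swapped for its primed copy exactly when $k \in \{s, t\}$ or $\pi^{\natural}(k) \in \{s, t\}$. Since $\pi^{\natural}$ is a bijection, the affected index set is $K = \{s,\, t,\, (\pi^{\natural})^{-1}(s),\, (\pi^{\natural})^{-1}(t)\}$, which has cardinality at most four. Consequently
\[
\big(\wt{\bB} - \wtminus{\bB}{s, t}\big)^{\rmt}\bX_{s, :}
= (n-h)^{-1}\bB^{\natural \rmt}\sum_{k \in K}\Big(\bX_{k, :}\bX_{\pi^{\natural}(k), :}^{\rmt} - \bX^{'}_{k, :}\bX^{'\rmt}_{\pi^{\natural}(k), :}\Big)\bX_{s, :},
\]
and the triangle inequality over the at most four terms of $K$ reduces the task to a uniform bound on a single summand.

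Next I would expand one summand using $\big(\bX_{k,:}\bX_{\pi^{\natural}(k),:}^{\rmt}\big)\bX_{s,:} = \langle \bX_{\pi^{\natural}(k),:}, \bX_{s,:}\rangle\, \bX_{k,:}$, so that its norm is controlled by
\[
\big\|\bB^{\natural \rmt}\bX_{k, :}\big\|_2\,\big|\langle \bX_{\pi^{\natural}(k), :}, \bX_{s, :}\rangle\big|
+ \big\|\bB^{\natural \rmt}\bX^{'}_{k, :}\big\|_2\,\big|\langle \bX^{'}_{\pi^{\natural}(k), :}, \bX_{s, :}\rangle\big|.
\]
On $\calF_1(\bBtrue)$ both $\|\bB^{\natural \rmt}\bX_{k,:}\|_2$ and $\|\bB^{\natural \rmt}\bX^{'}_{k,:}\|_2$ are $\lsim \sqrt{\log n}\,\Fnorm{\bBtrue}$. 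For the inner products I would split into cases: the cross term $\langle \bX^{'}_{\pi^{\natural}(k),:}, \bX_{s,:}\rangle$ pairs a primed and an unprimed row, hence is $\lsim (\log n)\sqrt{p}$ by $\calF_{2,1}$; the unprimed term $\langle \bX_{\pi^{\natural}(k),:}, \bX_{s,:}\rangle$ is likewise $\lsim (\log n)\sqrt{p}$ by $\calF_{2,2}$ whenever $\pi^{\natural}(k) \neq s$, whereas in the single exceptional case $\pi^{\natural}(k) = s$ (that is, $k = (\pi^{\natural})^{-1}(s)$) it collapses to the self-norm $\|\bX_{s,:}\|_2^2 \lsim p\log n$, available on $\calF_3$.

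Collecting the estimates, each summand is $\lsim p (\log n)^{\nfrac{3}{2}}\Fnorm{\bBtrue}$, the dominant contribution stemming from the lone $\|\bX_{s,:}\|_2^2$ factor while every other factor stays at the smaller order $(\log n)\sqrt{p}$; summing the at most four terms only costs an absolute constant. Since $h \leq c_0 n$ gives $n - h \asymp n$, dividing by $n-h$ yields $\big\|(\wt{\bB} - \wtminus{\bB}{s, t})^{\rmt}\bX_{s, :}\big\|_2 \lsim \frac{p (\log n)^{\nfrac{3}{2}}}{n}\Fnorm{\bBtrue}$, as claimed. I do not anticipate a real obstacle: the argument is structurally identical to Lemma~\ref{lemma:xb_single_idx}, and the only point demanding care is the enumeration of $K$ together with the correct triage of each inner product as self, cross, or generic, so that the single unavoidable $\|\bX_{s,:}\|_2^2$ appears at most once and all remaining contributions remain subdominant.
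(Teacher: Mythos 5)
Your proof is correct and follows essentially the same route the paper intends: the paper omits the proof of this lemma entirely, stating only that it follows the strategy of Lemma~\ref{lemma:xb_single_idx}, and your enumeration of the at most four affected summands followed by the triangle inequality, the expansion $\bX_{k,:}\bX_{\pi^{\natural}(k),:}^{\rmt}\bX_{s,:}=\langle \bX_{\pi^{\natural}(k),:},\bX_{s,:}\rangle\bX_{k,:}$, and the $\calF_1$/$\calF_2$ bounds is exactly that strategy carried out, with the single self-norm term $\norm{\bX_{s,:}}{2}^2\lsim p\log n$ correctly identified as the dominant contribution. Your remark that this self-norm bound actually comes from $\calF_3$ rather than the stated hypotheses $\calF_1(\bBtrue)\bigcap\calF_2$ is a fair and careful observation; the paper's own proof of the single-index version relies on the same bound implicitly.
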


\begin{lemma}
\label{lemma:beta_perturb}
Conditional on the intersection of
events $\calF_6\bigcap \calF_7$, we
conclude $\Prob(\calF_8) \geq 1-c_0 \cdot n^{-c_1}$.
\end{lemma}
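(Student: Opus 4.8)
The plan is to establish $\calF_8$ through a leave-one-out decomposition that isolates the single row $\bX_{s, :}$ whose fourth-order interaction with $\wt{\bB}$ is the source of the difficulty. Fixing an index $s$, I would write $(\wt{\bB} - \bBtrue)^{\rmt}\bX_{s, :} = (\wt{\bB} - \wtminus{\bB}{s})^{\rmt}\bX_{s, :} + (\wtminus{\bB}{s} - \bBtrue)^{\rmt}\bX_{s, :}$ and bound the two pieces separately. The first piece is handled directly by the event $\calF_{7, \textup{single}}$ that we condition on, which gives $\norm{(\wt{\bB} - \wtminus{\bB}{s})^{\rmt}\bX_{s, :}}{2} \lsim \frac{p\log^{\nfrac{3}{2}} n}{n}\Fnorm{\bBtrue}$.

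The crux is the second piece, and here the whole point of the leave-one-out construction enters: $\wtminus{\bB}{s}$ is by construction independent of $\bX_{s, :}$, so the matrix $\bM \defequal \wtminus{\bB}{s} - \bBtrue$ is independent of $\bX_{s, :}$. I can therefore invoke the Hanson--Wright row-norm bound of Lemma~\ref{lemma:x_row_norm_ub} (equivalently the event $\calF_1(\wtminus{\bB}{s} - \bBtrue)$) to obtain $\norm{(\wtminus{\bB}{s} - \bBtrue)^{\rmt}\bX_{s, :}}{2} \lsim \sqrt{\log n}\,\Fnorm{\wtminus{\bB}{s} - \bBtrue}$ with probability at least $1 - n^{-c}$. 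Conditioning on $\calF_{6, \textup{single}}$ then replaces the Frobenius norm by $\frac{(\log n)(\log n^2 p^3)\sqrt{p}}{\sqrt{n}}\Fnorm{\bBtrue}$, so the second piece is at most $\frac{(\log n)^{\nfrac{3}{2}}(\log n^2 p^3)\sqrt{p}}{\sqrt{n}}\Fnorm{\bBtrue}$, which is exactly the target rate appearing in $\calF_8$.

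To finish, I would note that the first piece is of strictly lower order: its ratio to the second is $\asymp \sqrt{p/n}/\log(n^2 p^3) \ll 1$ under the standing assumption $n \gg p\log^6 n$, so the sum is dominated by the second piece and the $\calF_8$ bound holds for the fixed index $s$. A union bound over $s = 1, \dots, n$, absorbing the extra factor of $n$ into the exponent, then yields $\Prob(\calF_8) \geq 1 - c_0 n^{-c_1}$.

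The main obstacle is not any single estimate but the bookkeeping around independence: one must verify that conditioning on $\calF_6 \bigcap \calF_7$ does not destroy the independence of $\wtminus{\bB}{s}$ and $\bX_{s, :}$ that the concentration step relies on. The clean way around this is to treat the row-norm concentration as a high-probability event established by conditioning only on the complementary rows (where $\bX_{s, :}$ remains a fresh isotropic sub-gaussian vector), and then to intersect this event with $\calF_6 \bigcap \calF_7$ before taking the union bound. A secondary point of care is confirming that the three logarithmic factors ($\sqrt{\log n}$ from the concentration step together with $\log n$ and $\log(n^2 p^3)$ from $\calF_6$) multiply to give precisely the $(\log n)^{\nfrac{3}{2}}\log(n^2 p^3)$ exponent in $\calF_8$, rather than leaving a stray logarithmic factor.
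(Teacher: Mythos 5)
Your proposal is correct and follows essentially the same route as the paper: the identical leave-one-out decomposition of $(\wt{\bB}-\bBtrue)^{\rmt}\bX_{s,:}$, with the $\wt{\bB}-\wtminus{\bB}{s}$ piece killed deterministically by $\calF_7$ and the $\wtminus{\bB}{s}-\bBtrue$ piece handled via independence, the Hanson--Wright row-norm concentration, and $\calF_6$, followed by a union bound over $s$. The only cosmetic difference is that you explicitly verify the first piece is of lower order, whereas the paper simply notes it vanishes under $\calF_7$ at the stated threshold.
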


\begin{proof}
We adopt the leave-one-out trick and construct
the matrix $\wtminus{\bB}{s}$ as
\[
\wtminus{\bB}{s} =
(n-h)^{-1}\bigg(
\sum_{\substack{k \neq s \\ \pi^{\natural}(k) \neq s}} \bX_{\pi^{\natural}(k), :}\bX_{k, :}^{\rmt}
+ \sum_{\substack{k = s \textup{ or}\\ \pi^{\natural}(k) = s}} \bX^{'}_{\pi^{\natural}(k), :}\bX^{' \rmt}_{k, :}\bigg)\bBtrue,
\]
where $\bX^{'}_{s, :}$ are the independent copy of $\bX_{s, :}$.
Adopting the union bound, we conclude
\[
& \Prob\bracket{
\norm{(\wt{\bB} - \bBtrue)^{\rmt} \bX_{s, :}}{2}
\gsim \frac{(\log n)^{\nfrac{3}{2}}(\log n^2 p^3)\sqrt{p}}{\sqrt{n}}\Fnorm{\bBtrue} } \\
\leq ~&
\Prob\bracket{
\norm{(\bBtrue - \wtminus{\bB}{s})^{\rmt}\bX_{s, :} }{2}
+
\norm{(\wtminus{\bB}{s} - \wt{\bB})^{\rmt} \bX_{s, :}}{2} \gsim
\frac{(\log n)^{\nfrac{3}{2}}(\log n^2 p^3)\sqrt{p}}{\sqrt{n}}\Fnorm{\bBtrue}
} \\
\leq~& \underbrace{\Prob\bracket{
\norm{(\bBtrue - \wtminus{\bB}{s})^{\rmt}\bX_{s, :} }{2}
\gsim \frac{(\log n)^{\nfrac{3}{2}}(\log n^2 p^3)\sqrt{p}}{\sqrt{n}}\Fnorm{\bBtrue}}}_{\defequal \zeta_1} \\
+~&
\underbrace{\Prob\bracket{
\norm{(\wtminus{\bB}{s} - \wt{\bB})^{\rmt} \bX_{s, :}}{2} \gsim
\frac{p\log^{\nfrac{3}{2}} n}{n}\Fnorm{\bBtrue}}}_{\defequal \zeta_2}.
\]
First, we study the probability $\zeta_1$.
Due to the construction of $\wtminus{\bB}{s}$, we have
$\bX_{s, :}$ to be independent of
$\bBtrue - \wtminus{\bB}{s}$. Conditional
on $\bBtrue - \wtminus{\bB}{s}$, we conclude
\[
\zeta_1 \stackrel{\cirone}{\leq}
\Prob\bracket{\norm{(\bBtrue - \wtminus{\bB}{s})^{\rmt}\bX_{s, :} }{2} \geq \sqrt{\log n} \Fnorm{\bBtrue - \wtminus{\bB}{s}} }
\leq n^{-c},
\]
where in $\cirone$ we condition on event $\calF_6$
such that $\Fnorm{\bBtrue - \wtminus{\bB}{s}} \lsim (\log n)(\log n^2 p^3)\sqrt{\nfrac{p}{n}}\Fnorm{\bBtrue}$.
As for probability $\zeta_2$, we have it to be zero
conditional on $\calF_7$.
The proof is thus completed.
\end{proof}

\section{Useful Facts}
This section lists some useful
facts for the sake of self-containing.

\begin{lemma}
\label{lemma:chi_square}
For a $\chi^2$-RV $Z$ with $\ell$ freedom, we have
\begin{align*}
\Prob\bracket{Z\leq t} \leq \exp\bracket{\frac{\ell}{2}\bracket{\log \frac{t}{\ell} - \frac{t}{\ell} + 1}},~~t < \ell;  \\
\Prob\bracket{Z\geq t} \leq \exp\bracket{\frac{\ell}{2}\bracket{\log \frac{t}{\ell} - \frac{t}{\ell} + 1}},~~ t >  \ell.
\end{align*}
\end{lemma}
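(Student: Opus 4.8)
The plan is to establish both tail bounds through the standard Chernoff (moment-generating-function) method, exploiting the representation $Z = \sum_{i=1}^{\ell} g_i^2$ with $g_i \stackrel{\textup{i.i.d.}}{\sim} \normdist(0,1)$. The first step is to record the MGF: for any $s < 1/2$ one has $\Expc e^{sZ} = (1-2s)^{-\ell/2}$, since each coordinate contributes the factor $\Expc e^{s g_i^2} = (1-2s)^{-1/2}$ and the $g_i$ are independent. Everything else follows by optimizing a one-parameter exponent.

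For the upper tail ($t > \ell$), I would apply Markov's inequality to $e^{sZ}$ with $s \in (0,1/2)$, obtaining $\Prob(Z \geq t) \leq e^{-st}(1-2s)^{-\ell/2}$, and then minimize the exponent $-st - \tfrac{\ell}{2}\log(1-2s)$ over $s$. Differentiating and setting the result to zero yields the optimizer $s^{\star} = \tfrac{t-\ell}{2t}$, which lies in $(0,1/2)$ precisely because $t > \ell$. Substituting $s^{\star}$ back collapses the exponent into $\tfrac{\ell}{2}\bracket{\log \tfrac{t}{\ell} - \tfrac{t}{\ell} + 1}$, which is exactly the claimed bound.

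For the lower tail ($t < \ell$), I would instead bound $\Prob(Z \leq t) = \Prob(e^{-sZ} \geq e^{-st}) \leq e^{st}(1+2s)^{-\ell/2}$ for $s > 0$, where the MGF of $-sZ$ is finite for every $s > 0$, and minimize $st - \tfrac{\ell}{2}\log(1+2s)$. The optimizer is $s^{\star} = \tfrac{\ell-t}{2t} > 0$ by $t < \ell$, and plugging it back produces the identical functional form $\tfrac{\ell}{2}\bracket{\log \tfrac{t}{\ell} - \tfrac{t}{\ell} + 1}$; this symmetry is what explains why both tails share the same right-hand side.

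Since this is a classical concentration inequality, there is no genuine obstacle. The only points demanding care are $(i)$ checking that each optimizing $s^{\star}$ falls strictly inside the range where the MGF is finite (automatic from the sign conditions $t > \ell$ and $t < \ell$, respectively), and $(ii)$ performing the back-substitution so that the linear term $\mp st$ and the logarithmic term combine cleanly into the compact exponent. I would close by remarking that the exponent equals the negative large-deviation rate function for the Gamma family, so the bound is sharp and no tighter constant is available from this argument.
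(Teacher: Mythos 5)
Your proof is correct: the MGF $\Expc e^{sZ}=(1-2s)^{-\ell/2}$ is right, both optimizers $s^{\star}=\tfrac{t-\ell}{2t}$ and $s^{\star}=\tfrac{\ell-t}{2t}$ lie in the required ranges under the stated sign conditions, and the back-substitution does collapse both exponents to $\tfrac{\ell}{2}\bigl(\log\tfrac{t}{\ell}-\tfrac{t}{\ell}+1\bigr)$. The paper states this lemma in its ``Useful Facts'' section without any proof, so there is nothing to compare against; your Chernoff argument is the standard derivation and fills that gap correctly.
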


\begin{lemma}[Lemma~$8$ in~\citet{pananjady2018linear}]
\label{lemma:permute_decomp}
Consider an arbitrary permutation map $\pi$
with Hamming distance $h$ from the
identity map, i.e., $\dh\bracket{\bI, \bH} = h$.
We define the index
set $\set{i:~i\neq \pi(i)}$ and can
decompose it into $3$ independent
sets $\calI_{\ell}$ $(1\leq \ell \leq 3)$ such that
the cardinality of each set satisfies
$|\calI_{\ell}|\geq \lfloor \nfrac{h}{3}\rfloor \geq \nfrac{h}{5}$.
\end{lemma}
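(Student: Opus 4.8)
The plan is to treat this as a purely combinatorial statement about proper colorings of the cycle structure of $\pi$; all the probabilistic content has already been isolated by the calling lemmas (e.g.\ Lemma~\ref{lemma:xpix_proximity} and Lemma~\ref{lemma:Bmat_perturb}), which only need the three sets to satisfy that $i$ and $\pi(i)$ never share a set, so that $\{x_i\}_{i\in\calI_\ell}$ and $\{x_{\pi(i)}\}_{i\in\calI_\ell}$ are disjoint families of i.i.d.\ entries. First I would restrict to the support $S \defequal \{i : \pi(i)\neq i\}$, which has cardinality $h$. Since $\pi$ is a bijection, $S$ is $\pi$-invariant: if $i\in S$ but $\pi(i)\notin S$, then writing $j=\pi(i)$ we would have $\pi(j)=j=\pi(i)$, so injectivity forces $i=j$ and hence $\pi(i)=i$, contradicting $i\in S$. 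Thus $\pi|_S$ partitions $S$ into disjoint cycles $C_1,\dots,C_r$, each of length at least $2$.

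The requirement that $i$ and $\pi(i)$ lie in different categories is precisely a proper $3$-coloring of the graph on $S$ whose edges join each $i$ to $\pi(i)$. This graph is a disjoint union of cycles, and every cycle has chromatic number at most $3$ (two colors for even cycles, three for odd), so a proper $3$-coloring into $\calI_1,\calI_2,\calI_3$ always exists and automatically gives $\calI_\ell\cap\pi(\calI_\ell)=\emptyset$. The substance of the lemma is the balance condition $|\calI_\ell|\geq\lfloor h/3\rfloor$. To secure it I would build a near-balanced proper coloring cycle by cycle and then balance across cycles. On a single cycle of length $k$ I would traverse it and assign the periodic pattern $1,2,3,1,2,3,\dots$; this is proper except at the closing edge when $k\equiv 1\pmod 3$, in which case recoloring the final vertex to color $2$ (distinct from its predecessor's $3$ and the successor's $1$) repairs properness at the cost of a single unit of imbalance. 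For $k\geq 3$ each color is then used $\lfloor k/3\rfloor$ or $\lceil k/3\rceil$ times, while a $2$-cycle contributes the split $(1,1,0)$. The decisive observation is that relabeling the three colors within one cycle preserves properness, so I am free to assign each cycle's local color-count multiset to the three global bins in any order; balancing then reduces to a scheduling problem in which, for each cycle, I attach its largest local count to the currently smallest global bin, and an exchange argument keeps the three running totals within $O(1)$ of one another.

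The main obstacle is exactly this global balancing in the presence of the ``defects'' — the $\pm1$ shifts forced by odd cycles and the unused color in every $2$-cycle. A long run of transpositions, for instance, starves whichever color each one skips, so the per-cycle freedom to permute colors must be exploited deliberately rather than arbitrarily; the clean way to package this is to track the vector of three bin-deficits and, at each cycle, apply the color permutation that most reduces the current maximum deficit, then verify the invariant that no deficit ever exceeds the largest single-cycle deviation, which yields $|\calI_\ell|\geq\lfloor h/3\rfloor\geq h/5$ once $h$ is not too small. If one is content with the weaker constant, the bound $|\calI_\ell|\geq h/5$ drops out more cheaply: a crude even/odd-cycle accounting shows the skipped-color and parity losses total at most a constant fraction of $h$, so no fully optimal schedule is needed. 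Either way the argument is entirely deterministic, and none of the concentration estimates in the calling lemmas are affected.
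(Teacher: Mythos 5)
The paper never proves this statement: it is imported verbatim as Lemma~8 of \citet{pananjady2018linear} and used as a black box in Lemma~\ref{lemma:xpix_proximity} and Lemma~\ref{lemma:Bmat_perturb}, so there is no in-paper argument to compare against. Your self-contained combinatorial proof is correct and follows the natural route (and essentially the one in the cited source): restrict $\pi$ to its support, decompose into disjoint cycles of length at least $2$, observe that the required property is exactly a proper $3$-coloring of a disjoint union of cycles, and then balance the color classes. Two points are worth tightening. First, the balancing step can be made exact rather than ``within $O(1)$'': every cycle contributes a color-count multiset whose maximum and minimum differ by at most $1$ (this includes the $(1,1,0)$ split of a transposition and the repaired pattern when $k\equiv 1 \pmod 3$), and if you always attach the largest local count to the currently smallest bin, a one-line induction shows the three running totals remain pairwise within $1$; since they sum to $h$, each is at least $\lceil (h-2)/3\rceil = \lfloor h/3 \rfloor$, which is precisely the claimed bound with no slack and no exchange argument needed. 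Second, your hedge ``once $h$ is not too small'' is pointing at a real defect of the statement as transcribed rather than of your proof: for $h=2$ a transposition forces one of the three sets to be empty, so $\lfloor h/3\rfloor \geq h/5$ fails; for all $h\geq 3$ the chain $\lfloor h/3\rfloor \geq h/5$ holds (by direct check for $h\in\{3,4\}$ and since $h/3 - 2/3 \geq h/5$ for $h\geq 5$). The downstream uses only need $h_\ell \gtrsim h$ for $h$ of order $n$, so this edge case is harmless, but it deserves to be flagged rather than absorbed into the asymptotics.
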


\begin{lemma}[\cite{latala2007banach}]
\label{lemma:small_ball_subgauss}
Let $\bg \in \RR^n$ be a random vector with
each entry to be independent sub-gaussian RV with
$\Var(g_i) \geq 1$ and sub-gaussian constants
bounded by $K$, and $\bA$ is a non-zero
$n\times n$ matrix. For any $\by\in \RR^n$ and
$\varepsilon\in (0, c_1)$, one has
\begin{align*}
\Prob\bracket{\norm{\by - \bA\bg}{2} \leq \nfrac{1}{2}\fnorm{\bA}}
\leq 2\exp\bracket{-\frac{c_0}{K^4}\srank{\bA} }.
\end{align*}
\end{lemma}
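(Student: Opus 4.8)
The plan is to prove this by the standard characteristic-function route for small-ball probabilities, the only genuine subtlety being that $\bg$ is allowed to have a general (possibly discrete) sub-gaussian law. Writing $\bz = \by - \bA\bg$ and $t = \tfrac12\fnorm{\bA}$, I would first pass to a Laplace-transform bound using $\Ind\bracket{\norm{\bz}{2}^2 \le t^2} \le e^{s(t^2 - \norm{\bz}{2}^2)}$, valid for every $s>0$, which gives
\[
\Prob\bracket{\norm{\by - \bA\bg}{2} \le \tfrac12\fnorm{\bA}}
\le e^{st^2}\,\Expc e^{-s\norm{\bz}{2}^2}.
\]
Everything then reduces to bounding $\Expc e^{-s\norm{\bz}{2}^2}$ and optimizing over $s$.

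For the exponential moment I would use the Gaussian--Fourier identity $e^{-s\norm{\bz}{2}^2} = (4\pi s)^{-n/2}\int_{\RR^n} e^{-\norm{\bu}{2}^2/(4s)}\,e^{i\la \bu, \bz\ra}\,d\bu$, swap expectation and integral, and exploit independence of $g_1,\dots,g_n$: since $\la \bu, \bA\bg\ra = \la \bA^{\rmt}\bu, \bg\ra = \sum_j (\bA^{\rmt}\bu)_j\,g_j$,
\[
\abs{\Expc e^{i\la\bu,\bz\ra}} = \prod_{j=1}^n \abs{\varphi_j\bracket{(\bA^{\rmt}\bu)_j}},
\]
where $\varphi_j$ is the characteristic function of $g_j$. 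The one-dimensional input is the anti-concentration estimate $\abs{\varphi_j(\theta)} \le \exp(-c\,\theta^2)$ on a window $\abs{\theta}\le\theta_0 \asymp 1/K$, which follows from $\Var(g_j)\ge 1$ and the sub-gaussian tail via $\abs{\varphi_j(\theta)}^2 = \Expc\cos\bracket{\theta(g_j - g_j')}$ for an independent copy $g_j'$ together with $\cos x \le 1 - c x^2$ on bounded $x$. Tracking the decay rate and the window $\theta_0$ through this step is what produces the $K^4$ dependence in the exponent.

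The choice $s \asymp 1/(K^{4}\opnorm{\bA}^2)$ makes the two sides fit. With this scaling the Gaussian weight $e^{-\norm{\bu}{2}^2/(4s)}$ localizes $\bu$ at a scale for which the frequencies $(\bA^{\rmt}\bu)_j$ lie in the good window, so after the Gaussian integration the product of characteristic functions is controlled by $\exp\bracket{-c'\,s\,\fnorm{\bA}^2}$, while $e^{st^2} = e^{s\fnorm{\bA}^2/4}$; balancing the two exponents yields
\[
e^{st^2}\,\Expc e^{-s\norm{\bz}{2}^2} \le 2\exp\bracket{-\frac{c_0}{K^4}\,\frac{\fnorm{\bA}^2}{\opnorm{\bA}^2}}
= 2\exp\bracket{-\frac{c_0}{K^4}\,\srank{\bA}}.
\]
A sanity check in the Gaussian case $g_j\sim\normdist(0,1)$ confirms the scaling: there $\Expc e^{-s\norm{\bz}{2}^2} = \prod_k (1+2s\sigma_k^2)^{-1/2}$ with $\sigma_k$ the singular values of $\bA$, and taking $s = c/\opnorm{\bA}^2$ gives $\exp(-c'\,\srank{\bA})$ after $\log(1+x)\ge x(1-x/2)$.

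The main obstacle is the large-frequency regime. For smooth laws $\abs{\varphi_j(\theta)}$ decays globally, but for a discrete sub-gaussian such as the Rademacher distribution $\abs{\varphi_j(\theta)} = \abs{\cos\theta}$ returns to $1$, so the quadratic bound holds only on $\abs{\theta}\le\theta_0$. The real work is to show that the part of the Fourier integral coming from frequencies outside this window is negligible, and this is exactly where $\opnorm{\bA}$ --- hence the stable rank rather than the ambient dimension $n$ --- is indispensable: $\opnorm{\bA}$ bounds how large $\abs{(\bA^{\rmt}\bu)_j}$ can be relative to $\norm{\bu}{2}$, and the choice $s\asymp 1/(K^4\opnorm{\bA}^2)$ guarantees that the escaping Gaussian mass is exponentially small in $\srank{\bA}$ and does not overwhelm the main term.
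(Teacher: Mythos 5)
First, a point of reference: the paper does not prove Lemma~\ref{lemma:small_ball_subgauss} at all --- it is imported as a known fact from \citet{latala2007banach} (it is the small-ball theorem for linear images of vectors with independent spread coordinates, in the form popularized by Rudelson and Vershynin), so there is no in-paper argument to compare yours against. The remarks below therefore assess your sketch on its own terms.

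Your outline follows the standard Laplace--Esseen route, and the individual ingredients are sound: the bound $\Prob(\norm{\bz}{2}\le t)\le e^{st^2}\Expc e^{-s\norm{\bz}{2}^2}$, the Gaussian--Fourier representation of $e^{-s\norm{\bz}{2}^2}$, the factorization of the characteristic function over independent coordinates, and the one-dimensional estimate $|\varphi_j(\theta)|\le e^{-c\theta^2}$ on a window $|\theta|\le\theta_0\asymp 1/K$ obtained by symmetrization. The genuine gap is exactly at the point you yourself flag as ``the real work'': the contribution of frequencies outside the window. Your proposed resolution --- that taking $s\asymp 1/(K^4\opnorm{\bA}^2)$ makes ``the escaping Gaussian mass exponentially small in $\srank{\bA}$'' --- does not hold as stated. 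Under the Gaussian weight each frequency $(\bA^{\rmt}\bu)_j$ is a one-dimensional Gaussian with standard deviation at most $\sqrt{2s}\,\opnorm{\bA}\asymp K^{-2}$, so the probability that it leaves the window is about $e^{-cK^2}$: a \emph{constant}, independent of $n$ and of $\srank{\bA}$. Hence the expected Fourier energy lost to the bad region is a constant fraction $e^{-cK^2}$ of the total $2s\fnorm{\bA}^2$, and a first-moment (Markov) argument only bounds the probability that the surviving energy drops below $s\fnorm{\bA}^2$ by a constant. This is fatal because your prefactor $e^{st^2}=e^{\srank{\bA}/(4K^4)}$ is exponentially \emph{large} in the stable rank: every error term in the bound on $\Expc e^{-s\norm{\bz}{2}^2}$ must be exponentially small in $\srank{\bA}$ with a rate beating $1/(4K^4)$, and any term of size $e^{-cK^2}$ (or anything that does not decay with $\srank{\bA}$) destroys the conclusion once $\srank{\bA}$ is large.

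Closing this requires an additional idea, which is precisely what the cited proofs supply. One option is the regularization step of Rudelson--Vershynin: pass to a well-spread submatrix of $\bA$ with $\asymp\srank{\bA}$ columns of comparable norm and apply Esseen's inequality in dimension $\asymp\srank{\bA}$, where all relevant frequencies provably stay inside the good window. Alternatively, staying with your decomposition, you would need a genuine concentration bound of the form $\Prob\big(\sum_j(\bA^{\rmt}\bu)_j^2\,\Ind(|(\bA^{\rmt}\bu)_j|>\theta_0)\ge \tfrac12 s\fnorm{\bA}^2\big)\le e^{-C\,\srank{\bA}}$ with $C$ large relative to $1/(4K^4)$; this is plausible (each summand is sub-exponential in $\bu$ with norm $\lesssim s\opnorm{\bA}^2$, which is the right scale for a $\srank{\bA}$-type exponent), but it does not follow from the choice of $s$ alone and must be proved, with care about the dependence among the $(\bA^{\rmt}\bu)_j$. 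Note also that your Gaussian sanity check cannot detect this issue, since for Gaussian coordinates the good window is the whole real line and the high-frequency regime is vacuous.
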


\begin{lemma}[Theorem $1.3$ in~\citet{paouris2012small}]
\label{lemma:small_ball_log_concave}
Let $\bg \in \RR^n$ be an isotropic log-concave random vector
with sub-gaussian constant $K$, and $\bA$ is a non-zero
$n\times n$ matrix. For any $\by\in \RR^n$ and
$\varepsilon\in (0, c_1)$, one has
\begin{align*}
\Prob\bracket{\norm{\by - \bA\bg}{2} \leq \varepsilon \fnorm{\bA}}
\leq \exp\bracket{\kappa(K)\srank{\bA}\log \varepsilon },
\end{align*}
where $\kappa = c_1/K^2$. 	
\end{lemma}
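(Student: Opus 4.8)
The plan is to establish the bound by a Chernoff / Laplace-transform argument, after first diagonalizing $\bA$, with the stable rank $\srank{\bA}$ playing the role of the effective dimension that governs the rate of decay. Since the statement is imported from \citet{paouris2012small}, the proposal below is how I would reconstruct that argument and calibrate the constant $\kappa = c_1/K^2$ against the marginal behavior of $\bg$.

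First I would normalize the geometry. Writing the singular value decomposition $\bA = \sum_i \sigma_i \bu_i \bv_i^{\rmt}$ with $\sigma_1 \geq \sigma_2 \geq \cdots \geq 0$ and using invariance of the Euclidean norm under the orthogonal factor $\{\bu_i\}$, the event $\{\norm{\by - \bA\bg}{2} \leq \varepsilon \fnorm{\bA}\}$ becomes $\{\sum_i (\tilde y_i - \sigma_i \tilde g_i)^2 \leq \varepsilon^2 \sum_i \sigma_i^2\}$ with $\tilde g_i = \la \bv_i, \bg\ra$ and $\tilde y_i = \la \bu_i, \by\ra$. Two facts make this reduction harmless: the map $\bg \mapsto (\tilde g_i)_i$ preserves isotropy and the sub-gaussian ($\psi_2$) constant $K$, and the push-forward of a log-concave law under a linear map is again log-concave (Pr\'ekopa--Leindler). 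I would also record $\Expc \norm{\bA\bg}{2}^2 = \fnorm{\bA}^2$, valid because $\bg$ is centered isotropic, so that $\varepsilon^2 \fnorm{\bA}^2$ is a genuine fraction $\varepsilon^2$ of the mean squared radius, and $\srank{\bA} = (\sum_i \sigma_i^2)/\sigma_1^2$.

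Next I would apply exponential Markov: for every $\lambda > 0$,
\[
\Prob\bracket{\norm{\by - \bA\bg}{2}^2 \leq \varepsilon^2 \fnorm{\bA}^2}
\leq e^{\lambda \varepsilon^2 \fnorm{\bA}^2}\,\Expc \exp\bracket{-\lambda \norm{\by - \bA\bg}{2}^2},
\]
so the entire difficulty is pushed into an upper bound on the negative-exponential moment. As a benchmark, if $\bg$ were standard Gaussian the $\tilde g_i$ would be independent and this moment would factor as $\prod_i (1 + 2\lambda\sigma_i^2)^{-1/2}$; optimizing over $\lambda \asymp \varepsilon^{-2}\fnorm{\bA}^{-2}$ would then yield a bound of the shape $\varepsilon^{\,c\,\srank{\bA}}$, which is exactly the target $\exp(\kappa\,\srank{\bA}\log \varepsilon)$. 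The job of log-concavity together with the sub-gaussian hypothesis is to recover this Gaussian-type factorized decay (up to the constant $\kappa$) despite the dependence of the $\tilde g_i$: log-concave laws obey reverse-H\"older (Borell) inequalities and Paouris' negative-moment estimates, which force the one-dimensional marginals $\sigma_i \tilde g_i$ to have densities controlled by $K$ and the squared radius $\norm{\bA\bg}{2}^2$ to concentrate no worse than in the Gaussian case at the scale set by $\srank{\bA}$.

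The main obstacle is precisely that last point: obtaining $\srank{\bA}$, rather than the ambient dimension $n$ or $\rank(\bA)$, as the exponent. The naive route --- bounding the joint density of $\bA\bg$ and integrating over the small ball --- produces a dimension-$n$ exponent and is far too weak. Extracting $\srank{\bA}$ requires localizing to the block of singular values comparable to $\sigma_1$, whose cardinality is of order $\srank{\bA}$, while controlling the tail directions; this localization is the heart of Paouris' $L_q$-centroid / negative-moment machinery for log-concave measures. Accordingly I would either invoke that machinery directly, as the attribution to \citet{paouris2012small} suggests, or reproduce its negative-moment step and track how the one-dimensional $\psi_2$ bound $K$ enters, so as to pin down $\kappa = c_1/K^2$ and the admissible range $\varepsilon \in (0, c_1)$.
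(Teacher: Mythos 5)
The paper does not prove this statement at all: it is quoted verbatim as Theorem~1.3 of \citet{paouris2012small} in the ``Useful Facts'' appendix, so there is no internal argument to compare yours against. Judged on its own terms, your write-up is a reasonable roadmap of how such small-ball bounds are obtained, but it is not a proof. The reductions you carry out are genuinely harmless (SVD plus orthogonal invariance, preservation of isotropy and of the $\psi_2$ constant under rotation, log-concavity of linear images via Pr\'ekopa--Leindler, the exponential-Markov reduction to a negative exponential moment), and your Gaussian benchmark correctly identifies the target shape $\varepsilon^{c\,\srank{\bA}}$. But the entire difficulty sits in the one step you explicitly defer: bounding $\Expc\exp(-\lambda\|\by-\bA\bg\|_2^2)$ for a \emph{dependent} log-concave vector so that the effective dimension is $\srank{\bA}$ rather than $n$ or $\rank(\bA)$. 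Your proposed resolution is to ``invoke that machinery directly,'' i.e.\ to cite the $L_q$-centroid / negative-moment estimates of the very paper the lemma is attributed to. That makes the argument circular as a proof of the lemma, though it is an accurate description of where the result comes from and is no less rigorous than the paper's own treatment, which simply imports the theorem.

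One further caution if you ever try to fill in that step: the exponent and the constant $\kappa(K)=c_1/K^2$ as stated here depend on the sub-gaussian hypothesis on $\bg$, which is an extra assumption beyond plain isotropic log-concavity (Paouris' unconditional small-ball estimate for general isotropic log-concave vectors carries a weaker exponent, of order $\sqrt{n}$ in the isotropic case). So any reconstruction must track exactly where the $\psi_2$ bound enters the negative-moment estimate; your outline flags this but does not do it. As it stands, the proposal is a correct high-level plan with the decisive lemma left as an external citation, which is acceptable only because the paper itself treats the statement the same way.
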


\end{document}